\newcommand{\actlip}{L_{\sigma}}
\newcommand{\rholip}{L_{\psi}}
\newcommand{\heslb}{\underline{\kappa_0}}
\newcommand{\hesub}{\overline{\kappa_0}}
\newcommand{\gradlb}{\underline{L_0}}
\newcommand{\gradub}{\overline{L_0}}
\newcommand{\gradlip}{\underline{T_0}}
\newcommand{\bddsubg}{C_{sg}}
\newcommand{\bddsube}{C_{se}}
\newcommand{\derivdecay}{C_{\sigma}}
\newcommand{\derivdecayrho}{C_{\psi}}
\newcommand{\covlb}{{\underline{\gamma}}}
\def\rb{{r_{b}}}
\def\<{\langle}
\def\>{\rangle}
\def\sT{{\sf T}}
\def\BT{{\sf T}}
\def\bT{{\boldsymbol T}}
\def\rad{r}
\def\Trace{{\rm Tr}}
\def\hVar{\widehat{\rm Var}}
\def\ud{{\rm d}}
\def\de{{\rm d}}
\def\hR{\widehat{R}}
\def\prob{{\mathbb P}}
\def\E{{\mathbb E}}
\def\reals{{\mathbb R}}
\def\naturals{{\mathbb N}}
\def\bpsi{{\boldsymbol \psi}}
\def\bx{{\boldsymbol x}}
\def\bz{{\boldsymbol z}}
\def\mbX{\mathbb X}
\def\bX{{\boldsymbol X}}
\def\bY{{\boldsymbol Y}}
\def\bu{{\boldsymbol u}}
\def\bv{{\boldsymbol v}}
\def\bw{{\boldsymbol w}}
\def\blambda{{\boldsymbol \lambda}}
\def\btheta{{\boldsymbol \theta}}
\def\hbtheta{\hat{\boldsymbol \theta}}
\def\id{{\rm I}}
\def\Ball{{\sf B}}
\def\Psucc{{\widehat{\rm P}_{\rm succ}}}
\def\normal{{\sf N}}
\def\bzero{{\boldsymbol 0}}
\def\bfone{{\boldsymbol 1}}
\def\bM{{\boldsymbol M}}
\def\sp{{\sf p}}
\def\Rd{R^{(d)}}
\def\Rt{R^{(3)}}
\def\hR{\widehat{R}}
\def\prob{{\mathbb P}}
\def\E{{\mathbb E}}
\def\reals{{\mathbb R}}
\def\bx{{\boldsymbol x}}
\def\bz{{\boldsymbol z}}
\def\ba{{\boldsymbol a}}
\def\bv{{\boldsymbol v}}
\def\bn{{\boldsymbol n}}
\def\bM{{\boldsymbol M}}
\def\bX{{\boldsymbol X}}
\def\bY{{\boldsymbol Y}}
\def\bw{{\boldsymbol w}}
\def\bW{{\boldsymbol W}}
\def\bZ{{\boldsymbol Z}}
\def\cC{{\mathcal C}}
\def\bu{{\boldsymbol u}}
\def\bU{{\boldsymbol U}}
\def\bv{{\boldsymbol v}}
\def\be{{\boldsymbol e}}
\def\blambda{{\boldsymbol \lambda}}
\def\id{{\rm I}}
\def\Ball{{\sf B}}
\def\cZ{{\mathcal Z}}
\def\cA{{\mathcal A}}
\def\Psucc{{\widehat{\rm P}_{\rm succ}}}
\def\bzero{{\boldsymbol 0}}
\def\bt{{\boldsymbol t}}
\def\bx{{\boldsymbol x}}
\def\bxi{{\boldsymbol \xi}}
\def\hbxi{\hat{\boldsymbol \xi}}
\def\bz{{\boldsymbol z}}
\def\bzero{{\boldsymbol 0}}
\def\reals{{\mathbb R}}
\def\inde{{\rm ind}}
\def\deg{{\rm deg}}
\def\eps{\varepsilon}
\def\by{{\boldsymbol y}}
\def\Cn{C_n}
\def\Cl{C_\lambda}
\def\Cs{C_s}
\def\Csp{C_{sp}}
\def\Ch{c_h}
\def\HUB{H}
\title{The Landscape of Empirical Risk for Non-convex Losses}
\author{Song~Mei\footnote{Institute for Computational and Mathematical Engineering, Stanford University}
\and
Yu~Bai\footnote{Department of Statistics, Stanford University}
\and
Andrea~Montanari\footnote{Department of Electrical
Engineering and Department of Statistics, Stanford University}
}
\begin{document}

\maketitle

\begin{abstract}
Most high-dimensional estimation and prediction methods propose to minimize a cost function (empirical risk) that is written as a sum of losses associated to each data point (each example). In this paper we focus on the case of non-convex losses, which is practically important but still poorly understood. Classical empirical process theory implies uniform convergence of the empirical (or sample) risk to the population risk. While --under additional assumptions-- uniform convergence implies consistency of the resulting M-estimator, it does not ensure that the latter can be computed efficiently.


In order to capture the complexity of computing M-estimators, we propose to study the landscape of the empirical risk, namely its stationary points and their properties. We establish uniform convergence of the gradient and Hessian of the empirical risk to their population counterparts, as soon as the number of samples becomes larger than the number of unknown parameters (modulo logarithmic factors). Consequently, good properties of the population risk can be carried to the empirical risk, and we are able to establish one-to-one correspondence of their stationary points. We demonstrate that in several problems such as non-convex binary classification, robust regression, and Gaussian mixture model, this result implies a complete characterization of the landscape of the empirical risk, and of the convergence properties of descent algorithms.

We extend our analysis to the very high-dimensional setting in which the number of parameters exceeds the number of samples, and provide a characterization of the empirical risk landscape under a nearly information-theoretically minimal condition.  Namely, if the number of samples exceeds the sparsity of the unknown parameters vector (modulo logarithmic factors), then a suitable uniform convergence result takes place. We apply this result to non-convex binary classification and robust regression in very high-dimension.
\end{abstract}

\section{Introduction }

M-estimation is arguably the most popular approach to high-dimensional estimation. Given data-points $\{\bz_1,\bz_2, \dots,\bz_n\}$,
$\bz_i\in \reals^d$, we estimate a parameter vector $\btheta\in\reals^p$ via%
\begin{align}
\hbtheta_n & = \arg\min_{\btheta\in \Theta_{n,p}}\hR_n(\btheta) \, ,\label{eq:GeneralERM}\\
 \hR_n(\btheta) &\equiv \frac{1}{n}\sum_{i=1}^n\ell(\btheta;\bz_i)\, . \label{eq:Risk}
\end{align}
Here $\ell:\reals^p\times\reals^d\to\reals$ is a loss function, and $\Theta_{n,p}$ is a constraint set. Prominent examples of this general framework include
maximum likelihood (ML) estimation \cite{fisher1922mathematical} and empirical risk minimization \cite{vapnik1998statistical}.

Once the objective (\ref{eq:GeneralERM}) is formed, it remains to define a computationally efficient scheme 
to approximate it. Gradient descent is the most frequently applied idea. Assuming --for the moment-- $\Theta_{n,p} = \reals^p$, this takes the form
\begin{align}
\hbtheta_n(k+1) = \hbtheta_n(k) - h_k\,\nabla \hR_n(\hbtheta_n(k))\, . \label{eq:GradientDescent}
\end{align}
While a large number of variants and refinements have been developed over the years (projected gradient,
accelerated gradient \cite{nesterov2013introductory}, stochastic
gradient \cite{robbins1951stochastic}, distributed gradient 
\cite{tsitsiklis1984distributed}, and so on), these share many of the strengths and weaknesses of the elementary iteration (\ref{eq:GradientDescent}).

If gradient descent is adopted, the only  freedom is in the choice of the loss function $\ell(\,\cdot\,;\, \cdot\,)$. Convexity 
has been a major guiding principle in this respect. If  the function $\ell(\,\cdot\,;\bz):\reals^p\to\reals$ is convex,
then the empirical risk $\hR_n(\,\cdot\,)$ is convex as well and hence gradient descent is globally convergent to an M-estimator
(the latter is unique under strict convexity). 
Also, strong convexity of $\hR_n(\,\cdot\,)$ can be used to prove optimal statistical guarantees for the M-estimator $\hbtheta_n$.
This line of thought can be traced back as far as Fisher's argument for the asymptotic efficiency of maximum likelihood estimators 
\cite{fisher1922mathematical,fisher1925theory},
and originated many beautiful contributions. In recent years,  a flourishing line of research addresses the very high-dimensional regime $p\gg n$, 
by leveraging on suitable restricted strong convexity assumptions \cite{candes2005decoding,candes2007dantzig,bickel2009simultaneous,negahban2012unified}.

Despite these successes, many problems of practical interest call for non-convex loss functions. Let us briefly mention a few 
examples of non-convex M-estimators that are often preferred by practitioners to their convex counterparts. 
We will revisit these examples in Section \ref{sec:Applications}.

In binary linear classification we are given $n$ pairs $\bz_1=(y_1,\bx_1),\dots, \bz_n=(y_n,\bx_n)$ with $y_i\in \{0,1\}$, $\bx_i\in\reals^d$,
and would like to learn a model of the form $\prob(Y_i=1|\bX_i=\bx) = \sigma(\<\btheta_0,\bx\>)$ with 
$\btheta_0\in\reals^d$ a parameter vector and $\sigma:\reals\to [0,1]$ a threshold function. 
The non-linear square loss $\ell(\btheta;y,\bx) = (y-\sigma(\<\btheta,\bx\>))^2$ is commonly used in practice
\begin{equation}\label{eqn:erm}
  \hR_n(\btheta)\equiv \frac{1}{n}\sum_{i=1}^{n} \Big(y_i - \sigma(\<\btheta,\bx_i\>)\Big)^2 \, .
\end{equation}
Several empirical studies \cite{chapelle2009tighter,wu2012robust,nguyen2013algorithms} demonstrate superior robustness and
classification accuracy of non-convex losses in contrast to convex losses (e.g. hinge or logistic loss). The same loss function is commonly used used in neural-network
models \cite{lecun2015deep}.

A similar scenario arises in robust regression. In this case,
we are given $n$ pairs $\bz_1=(y_1,\bx_1)$,\dots ,$\bz_n=(y_n,\bx_n)$ with $y_i\in \reals$, $\bx_i\in\reals^d$,
and we assume the linear model $y_i =  \<\btheta_0,\bx_i\> + \eps_i$, where the noise terms $\eps_i$ are i.i.d. 
with mean zero. Since Huber's seminal work \cite{huber1973robust}, M-estimators are the method of choice for this problem:
\begin{equation}\label{eqn:erm-robust}
  \hR_n(\btheta)\equiv \frac{1}{n}\sum_{i=1}^{n} \rho\big(y_i - \<\btheta,\bx_i\>\big) \, .
\end{equation}
Robustness naturally suggests to investigate the use of a non-convex function $\rho:\reals\to\reals$, either bounded or increasing
slowly at infinity.

Finally, missing data problems famously lead to non-convex optimization formulations. Consider for instance a mixture-of-Gaussians
problems in which we are given data points $\bz_1,\dots,\bz_n\in\reals^d$, $\bz_i \sim_{iid} \sum_{a=1}^kp_a\normal(\btheta_a,\id_{d\times d})$
(for the sake of simplicity we assume identity covariance and known proportions). The maximum-likelihood problem 
requires to minimize\footnote{Here and below $\phi_d(\bx) \equiv \exp\{-\|\bx\|_2^2/2\}/(2\pi)^{d/2}$ denotes the $d$-dimensional standard Gaussian density.}
\begin{equation}\label{eqn:erm-mixture-0}
  \hR_n(\btheta)\equiv -\frac{1}{n}\sum_{i=1}^{n} \log\left( \sum_{a=1}^k p_a\; \phi_d\big(\bz_i-\btheta_a\big)\right) \, ,
\end{equation}
with respect to the cluster centers $\btheta=(\btheta_1,\dots,\btheta_k)\in\reals^{d\times k}$.
Other examples include low-rank matrix completion \cite{keshavan2009matrix}, phase retrieval \cite{sun2016geometric}, tensor estimation problems 
\cite{richard2014statistical}, and so on.

M-estimation with non-convex loss functions $\ell(\,\cdot\,;\bz):\reals^p\to\reals$ is far less understood than in the convex case.
Empirical process theory guarantees uniform convergence of the sample risk $\hR_n(\,\cdot\,)$ to the population risk
$R(\btheta) \equiv \E[\hR_n(\btheta)]$ \cite{boucheron2013concentration}. However, this does not provide a computationally practical scheme, since 
gradient descent can get stuck in stationary points that are not global minimizers.

In this paper, we present several general 
results on non-convex M-estimation and apply them to develop new analysis in each of the three problems mentioned above.
We next overview our main results and the paper's organization, referring to Section \ref{sec:Related} for a discussion of related work.
\begin{description}
\item[Uniform convergence of gradient and Hessian.] We prove that, under technical conditions on the loss function $\ell(\,\cdot\,;\,\cdot\,)$,
$\sup_{\btheta}\|\nabla\hR_n(\btheta)-\nabla R(\btheta)\|_2\lesssim \sqrt{p(\log n)/n}$ and $\sup_{\btheta}\|\nabla^2\hR_n(\btheta)-\nabla^2 R(\btheta)\|_{\op}
\lesssim \sqrt{p(\log n)/n}$ (we use $\lesssim$ to hide constant factors). We refer to Section \ref{sec:HighDim} for formal statements.

These results complement  the classical analysis that implies uniform convergence of the risk itself, but allow us to control the behavior
of stationary points.
Note that they guarantee uniform convergence of the gradient and  Hessian provided $n,p\to\infty$
with $p(\log p)/n\to 0$. Apart from logarithmic factors, this is the optimal condition.

(In this paper we will refer to the asymptotics $n,p\to\infty$ with $n$ roughly of the same order as $p$ as \emph{high-dimensional 
regime}\footnote{The specific asymptotics $n,p\to\infty$ with $n/p$ converging to a constant is also known as `Kolmogorov asymptotics'
\cite{serdobolskii2013multivariate}.},
to contrast it with the low-dimensional analysis for $n\gg p$. We will refer to the asymptotics $n\ll p$ under sparsity assumptions as 
\emph{very high-dimensional regime}.)
\item[Topology of the empirical risk.] As an immediate consequence of the previous result, the structure of the empirical risk
function $\btheta\mapsto \hR_n(\btheta)$ is --in many cases-- surprisingly simple. Recall that a Morse function is a twice differentiable function 
whose stationary points are non-degenerate (i.e. have an invertible Hessian). In particular, stationary points are isolated, and have a well-defined index.
Assume that the population risk $R(\btheta)$ is \emph{strongly Morse} (i.e., at any stationary point $\btheta$, all the eigenvalues of the Hessian
are bounded away from zero $|\lambda_i(\nabla^2R(\btheta))|\ge \delta$). Then, for $n\gtrsim p\log p$, the stationary points of the empirical risk $\hR_n(\btheta)$
are in one-to-one correspondence with those of the population risk and have the same index (minima correspond to minima, saddles to saddles, and so on).
Weaker conditions ensure this correspondence for local minima alone.
\item[Very high-dimensional regime.] We then extend the above picture to the case in which the number of parameters $p$ exceeds the number of samples $n$,
under the assumption that the true parameter vector $\btheta_0$ is $s_0$-sparse. This setting is relevant to a large number of applications, 
ranging from genomics \cite{peng2010regularized} to signal processing \cite{donoho2006compressed}.
In order to promote  sparse estimates, we study the following $\ell_1$-regularized non-convex problem, cf. Section \ref{sec:VeryHighDim}:
\begin{equation}
\begin{aligned}
\mbox{\rm minimize}&\phantom{AA}\hR_n(\btheta)+\lambda_n \Vert \btheta\Vert_1\, ,\\
\mbox{\rm subject to}&\phantom{AA} \|\btheta\|_2\le \rad\, .
\end{aligned}
\end{equation}
We introduce a \emph{generalized gradient linearity} condition on the loss function $\ell(\,\cdot\,,\,\cdot\,)$ and prove that -- under this condition-- the
above problem has a unique local minimum for $n\gtrsim s_0\log p$. Again this is a nearly optimal scaling since no consistent estimation is possible when $n\lesssim s_0$.
\item[Applications.] Given a particular M-estimation problem with a suitable statistical model, we combine the above results with an 
analysis of the population risk $R(\btheta)$ to derive precise characterizations of the empirical risk. In Section \ref{sec:Applications} we demonstrate that this program can be carried out by studying the three problems outlined below:
\begin{enumerate}
\item \emph{Binary linear classification.} We prove that, for\footnote{Recall that, in this case, the number of parameters $p$ is equal to the ambient dimension $d$.} 
$n\gtrsim d\log d$, the empirical risk has a unique local minimum, that is also the global minimum.
Further, gradient descent converges exponentially to this minimizer: $\|\hbtheta_n (k)-\hbtheta_n\|_2\le C\|\hbtheta(0)-\hbtheta_n\|_2\, (1- h/C)^k$, and enjoys nearly optimal estimation error guarantees: $\|\hbtheta_n-\btheta_0\|_2\le C\sqrt{(d\log n)/n}$. If the true parameter $\btheta_0$ is $s_0$-sparse, 
for  $n \gtrsim s_0 \log d$, the $\ell_1$-regularized empirical risk has a unique local minimum, that is also the global minimum. The minimizer enjoys nearly optimal estimation error guarantees: $\|\hbtheta_n-\btheta_0\|_2\le C\sqrt{(s_0\log n)/n}$. 
\item \emph{Robust regression.} We establish similar results for the robust regression model, under technical assumptions on the loss function $\rho:\reals\to \reals$ and on the distribution of the noise $\eps_i$. 
Namely, we prove that the empirical risk has a unique local minimum, that can be found efficiently via gradient descent,
provided $n\gtrsim d\log d$. If the true parameter $\btheta_0$ is $s_0$-sparse, for $n \gtrsim s_0 \log d$, the $\ell_1$-regularized empirical risk has a unique local minimum. 
\item \emph{Mixture of Gaussians.} We consider the special case of two Gaussians with equal proportions, i.e. $k=2$ with $p_1=p_2=1/2$.
We prove that, for $n\gtrsim d\log d$, the empirical risk has two global minima that are related by exchange of the two Gaussian components 
$(\hbtheta_{1},\hbtheta_2)$ and $(\hbtheta_{2},\hbtheta_1)$, connected via saddle points. The trust region algorithm converges to one of these two minima when 
initialized at random. Also the two minima are within nearly optimal statistical errors from the true centers.
\end{enumerate}
\end{description}

\subsection{Notations}

We use normal font for scalars (e.g.  $a,b,c\dots$) and boldface for 
vectors ($\bx,\bw,\dots$). We will typically reserve capital letters for random variables (and capital bold for random vectors). Given $\bu,\bv\in\reals^m$, 
their  standard scalar product is denoted by $\<\bu,\bv\> \equiv \sum_{i=1}^m u_iv_i$. The $\ell_p$ norm of a vector is --as usual-- 
indicated by $\|\bx\|_p$. 
The $m\times m$ identity matrix is denoted by $\id_{m\times m}$. 

Given a matrix $\bM\in\reals^{m\times m}$,  we denote by $\lambda_i(\bM)$, $i\in \{1,\dots,m\}$ its eigenvalues in decreasing order,
and by $\|\bM\|_{\op} = \max\{\lambda_1(\bM),-\lambda_m(\bM)\}$ its
operator norm. Finally, we shall occasionally consider third order tensors $\bT\in\reals^{m\times m\times m}$. In this case the operator (or injective)
norm is defined as $\|\bT\|_{\op} = \max\{ \vert \<\bT,\bx^{\otimes 3}\> \vert\; :\;\; \|\bx\|_2=1\}$, where $\<\bT,\bx^{\otimes 3}\>=\sum_{i,j,k} T_{ijk}x_ix_jx_k$.

We let $\Ball_q^d(\ba,\rho)\equiv\{\bx\in\reals^d:\; \|\bx-\ba\|_q\le \rho\}$ be the $\ell_q$ ball in $\reals^d$
with center $\ba$ and radius $\rho$. We will often omit the dimension superscript $d$ when clear from the context,
the subscript $q$ when $q=2$, and the center $\ba$ when $\ba = \bzero$. In particular $\Ball(\rho)$ is the euclidean ball of radius $\rho$. For any set $D \subset \R^d$, we let $\partial D$ be the boundary of the set.  

We will generally use upper case letters for random variables and lower case for deterministic values (unless the latter are matrices).

\section{Related literature}
\label{sec:Related}

While developing a theory on non-convex M-estimators is an outstanding challenge, several important facts are by now
well understood thanks to a stream of beautiful works. We will provide a necessarily incomplete summary in the next paragraphs.

\vspace{0.25cm}

\noindent\emph{Uniform convergence of the empirical risk.} Let $R(\btheta) = \E\hR_n(\btheta)$ denote the population risk. 
Under mild conditions on the loss function $\ell$ and on the sample size, it is known that with high probability
\begin{align}
\sup_{\btheta\in\Theta_{n,p}}\big|\hR_n(\btheta)-R(\btheta)\big|\le \eps_n\, ,\label{eq:StandardUniform}
\end{align}
for some small $\eps_n\to 0$ \cite{van2000applications,boucheron2013concentration}. 
This immediately implies guarantees for the M-estimator $\hbtheta$ in $\ell$-loss (or prediction error). 
Under additional conditions on the population risk $R(\btheta)$, bounds in estimation error can be derived as well.

For general non-convex losses, uniform convergence results of the form (\ref{eq:StandardUniform}) do not preclude the existence of multiple local minima 
of the sample risk $\hR_n(\btheta)$. Hence, this theory does not provide --by itself-- computationally practical methods to compute $\hbtheta$.

\vspace{0.25cm}

\noindent\emph{Algorithmic convergence to the `statistical neighborhood'.} In general, gradient descent and other local optimization procedures
are expected to converge to local minima of the empirical risk $\hR_n(\btheta)$. In several cases, it is proved that every local minimizer $\hbtheta^{\rm loc}$
is `statistically good'. More precisely, the estimation error  (e.g. the $\ell_2$ error  $\|\hbtheta^{\rm loc}-\btheta_0\|_2$) is within a constant from the minimax rate
for the problem at hand.
Also, gradient descent converges to such a neighborhood of the true $\btheta_0$ within a small number of iterations. Results of this type
have been proved, among others, for linear regression with noisy covariates \cite{loh2012high}, generalized linear models with non-convex regularizers 
\cite{loh2013regularized}, robust regression \cite{lozano2013minimum}, and sparse regression \cite{yang2015sparse}.

While these results are very important, they are not completely satisfactory. For instance, one natural question is whether the statistical error 
might be improved by finding a better local minimum. If, for instance, the estimation error could be improved by a factor $2$
by finding a better local minimum, it would be worth in many applications to restart gradient descent at multiple initializations.
 Also, since convergence to a fixed point is not guaranteed, these approaches come without a clear stopping criterion.
Finally these proofs make use of  the restricted strong convexity (RSC) 
assumption introduced \cite{negahban2012unified,loh2013regularized}, but do not provide any general tool to establish this condition.
In contrast, we prove uniform convergence results that can be used to ensure a condition similar to RSC. 

To the best of our knowledge, the only proof of unique local minimum of the regularized empirical risk
is obtained in a recent paper by Po-Ling Loh \cite{loh2015statistical}. This works assumes the linear regression model $y_i = \<\btheta,\bx_i\>+\eps_i$,
and establishes uniqueness for penalized regression with a certain class of bounded regularizers. This result is comparable to our Theorem \ref{thm:SparseRobust}, see
Section \ref{sec:Robust_VHD}, which uses $\ell_1$ regularization instead.  Note that, in  \cite{loh2015statistical}, the sample size is required to scale quadratically
in the sparsity: $n\gtrsim s_0^2$. Our proof technique is substantially different from the one of  \cite{loh2015statistical}, and we only require $n\gtrsim s_0\log d$.

\vspace{0.25cm}

\noindent\emph{Hybrid optimization methods.} It is often difficult to ensure global convergence to a minimizer of the sample risk
$\hR_n(\,\cdot\,)$ or even to a statistical neighborhood of the true parameters. Several papers develop  two-stage procedures to overcome this
problem. The first stage constructs a smart initialization $\hbtheta(0)$ that is within a certain large neighborhood of the true parameters.
Spectral methods are often used to implement this step. In the second stage, the estimate is refined by gradient descent (or another local procedure) 
initialized at $\hbtheta(0)$.  This general approach was studied in a number of problems including matrix completion 
\cite{keshavan2009matrix}, phase retrieval \cite{chen2015solving}, tensor decomposition \cite{anandkumar2015learning}.

In some cases, the local optimization stage is only proved to converge to a statistical neighborhood of $\btheta_0$, and hence this style of
analysis shares the shortcomings emphasized in the previous paragraph. In others, it is proven to converge to a single point. 
Further, in practice, the smart initialization is often not needed, and descent algorithms converge from random initialization as well.
Finally, as mentioned above, these analyses are typically carried on in a case-by-case manner.

\section{Uniform convergence results}

In this section we develop our key tools, that are uniform convergence results  on the gradient and Hessian of
the empirical risk. We also establish some of the direct implications of our results. 
Throughout, the data consists of the i.i.d. random variables $\{\bZ_1,\dots,\bZ_n\}$. We will use
$\{\bz_1,\dots,\bz_n\}$ if we want to refer to the corresponding realization.
The empirical risk is defines by Eq.~(\ref{eq:Risk}) and the corresponding population risk is  $R(\btheta) = \E\hR_n(\btheta) = \E\ell(\btheta;\bZ)$. 
The true parameter vector $\btheta_0$ satisfies the condition $\nabla R(\btheta_0) = \E[ \nabla \ell(\btheta_0; \bZ)] = \bzero$.

We consider two regimes, a \emph{high dimensional regime} in which the number of parameters $p$ is allowed to diverge 
roughly in proportion with the number of samples $n$, and a \emph{very high-dimensional regime} in which the true parameters' vector $\btheta_0$
is sparse and the number  of parameters $p$ can be much larger than $n$.
We treat these two cases separately because the theory is simpler and more general in the first regime.

\subsection{High-dimensional regime}
\label{sec:HighDim}

In order to avoid technical complications, we will limit optimization to a bounded set, i.e.
we will let  $\Theta_{n,p} =\Ball^p(\rad)\equiv \{\btheta \in\reals^p,\; \|\btheta\|_2\le \rad\}$ to be the Euclidean ball in $p$ dimensions.

We begin by stating our assumptions. Assumptions \ref{ass:GradientSub} and \ref{ass:Hessianub} below quantify the amount of statistical noise 
in the gradient and Hessian of the loss function.
\begin{assumption}[Gradient statistical noise]\label{ass:GradientSub}
The gradient of the loss is  $\tau^2$-sub-Gaussian. Namely, for any $\blambda\in\reals^p$, and $\btheta\in\Ball^p(\rad)$
\begin{align}
\E\Big\{\exp\Big(\<\blambda,\nabla\ell(\btheta;\bZ)-\E [\nabla\ell(\btheta;\bZ)]\>\Big)\Big\}\le \exp\left(\frac{\tau^2\|\blambda\|_2^2}{2}\right)\, .
\end{align}
\end{assumption}
\begin{assumption}[Hessian statistical noise]\label{ass:Hessianub}
The Hessian of the loss, evaluated on a unit vector, is  $\tau^2$-sub-exponential. Namely, for any $\blambda\in\Ball^p(1)$, and $\btheta\in\Ball^p(\rad)$
\begin{align}
&\cZ_{\blambda,\btheta}\equiv \<\blambda,\nabla^2\ell(\btheta;\bZ) \blambda\>\, ,\\
&\E\left\{\exp\Big(\frac{1}{\tau^2}\big\vert\cZ_{\blambda,\btheta}-\E \cZ_{\blambda,\btheta}\big\vert\Big)\right\}\le 2\, .
\end{align}
\end{assumption}

Our third assumption requires the Hessian of the loss to be a Lipschitz function of the vector of parameters $\btheta$.
\begin{assumption}[Hessian regularity]\label{ass:BoundedHessian}
The Hessian of the population risk is bounded at one point. Namely, there exists $\theta_*\in\Ball^p(\rad)$ and $\HUB$ such that
$\|\nabla^2 R(\btheta_*)\|_{\op}\le \HUB$. 

Further, the Hessian of the loss function is Lipschitz continuous with integrable Lipschitz constant. Namely, there 
exists $J_*$ (potentially diverging polynomially in $p$) such that
\begin{align}
J(\bz) \equiv& \sup_{\btheta_1\neq\btheta_2\in\Ball^p(\rad)} \frac{\big\|\nabla^2\ell(\btheta_1;\bz) -\nabla^2\ell(\btheta_2;\bz) \big\|_{\op}}{\|\btheta_1-\btheta_2\|_2} \, ,\\
&\E\big\{J(\bZ)\big\} \le J_*\, .
\end{align}
Further, there exists a constant  $\Ch$ such that $\HUB \leq \tau^2 p^{\Ch}$, $J_*\leq \tau^3 p^{\Ch}$. 
\end{assumption}
\begin{remark}
Note that $\nabla \ell$ has the same units\footnote{By this we mean that the two quantities behave in the same way under a rescaling of the parameters $\btheta$.} as $1/\rad$, and $\nabla^2 \ell$ has the same units as $1/\rad^2$. Thus, $\tau$ has the same units as $1/\rad$, $\HUB$ has the same units as $\tau^2$, and $J_*$ has the same units as $\tau^3$. This is the reason why we bound $\HUB$ and $J_*$ in the form as in Assumption \ref{ass:BoundedHessian}. In this way, $(\rad \cdot \tau)$ and $\Ch$ are dimensionless. 
\end{remark}

Discrete loss functions (e.g. the $0-1$ loss) are common within the statistical learning literature, but do not satisfy the above assumption because the gradient and Hessian are not defined everywhere. Note however that these can be well approximated by differentiable losses, with little --if any-- practical difference.

We are now in position to state our uniform convergence result. 
\begin{theorem} \label{thm:uniformconvergence1}
Under Assumptions \ref{ass:GradientSub}, \ref{ass:Hessianub}, and \ref{ass:BoundedHessian} stated above, 
there exists a universal constant $C_0$, such that letting $C = C_0\cdot (\Ch\vee\log(\rad\tau/\delta)\vee 1)$,
the following hold:
\begin{enumerate}
\item[$(a)$] The sample gradient converges uniformly to the population gradient in Euclidean norm. Namely, if $n\ge  C p\log p$, we have 
\begin{align}
\P \left(\sup_{\btheta\in\Ball^p(\rad)}\big\|\nabla \hR_n(\btheta)-\nabla R(\btheta)\big\|_2\le \tau \sqrt{\frac{C p\log n}{n}} \right) \geq 1 - \delta\, .
\end{align}
\item[$(b)$]  The sample Hessian converges uniformly to the population Hessian in operator norm. Namely, if $n\ge  C p\log p$, we have 
\begin{align}
\P \left( \sup_{\btheta\in\Ball^p(\rad)}\big\|\nabla^2 \hR_n(\btheta)-\nabla^2R(\btheta)\big\|_{\op}\le \tau^2 \sqrt{\frac{C p\log n}{n}} \right) \geq 1-\delta\, .
\end{align}
\end{enumerate}
\end{theorem}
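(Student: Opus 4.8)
The plan is to prove both parts by the standard two-step recipe — pointwise concentration at a fixed $\btheta$, followed by a covering argument over $\Ball^p(\rad)$ whose continuity input is supplied by Assumption \ref{ass:BoundedHessian}. I would establish part $(b)$ first, because the uniform Hessian bound is needed to control the Lipschitz constant appearing in part $(a)$. For the pointwise Hessian estimate, fix $\btheta$ and write the operator norm through a $1/4$-net $\mathcal N$ of the unit sphere (with $|\mathcal N|\le 9^p$), so that $\|\nabla^2\hR_n(\btheta)-\nabla^2 R(\btheta)\|_{\op}\le 2\max_{\blambda\in\mathcal N}|\<\blambda,(\nabla^2\hR_n(\btheta)-\nabla^2 R(\btheta))\blambda\>|$. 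For each fixed $\blambda$ the inner quantity equals $\frac1n\sum_i(\cZ_{\blambda,\btheta}^{(i)}-\E\cZ_{\blambda,\btheta})$, an average of i.i.d. centered $\tau^2$-sub-exponential variables by Assumption \ref{ass:Hessianub}, so Bernstein's inequality gives tail $\exp(-cn(t/\tau^2)^2)$ in the relevant regime $t\lesssim\tau^2$; a union bound over $\mathcal N$ controls the single-$\btheta$ operator norm. The pointwise gradient estimate is analogous and simpler: $\|\nabla\hR_n(\btheta)-\nabla R(\btheta)\|_2$ is discretized over the sphere, and for each direction $\blambda$ the average $\frac1n\sum_i\<\blambda,\nabla\ell(\btheta;\bz_i)-\E\nabla\ell(\btheta;\bZ)\>$ is $\tau^2/n$-sub-Gaussian by Assumption \ref{ass:GradientSub}, with tail $\exp(-cn(t/\tau)^2)$.

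Next I would pass from a fixed $\btheta$ to the uniform bound via an $\eps$-net $\{\btheta_j\}$ of $\Ball^p(\rad)$, of cardinality $(1+2\rad/\eps)^p$. Assumption \ref{ass:BoundedHessian} makes the empirical Hessian Lipschitz in $\btheta$ with constant $\frac1n\sum_i J(\bz_i)$, which concentrates around $J_*\le\tau^3 p^{\Ch}$, and the population Hessian Lipschitz with constant $J_*$; combining the anchor bound $\|\nabla^2 R(\btheta_*)\|_{\op}\le\HUB$ with this Lipschitz continuity gives $\sup_\btheta\|\nabla^2 R(\btheta)\|_{\op}\le\HUB+2\rad J_*$, and part $(b)$ then yields the same control on $\sup_\btheta\|\nabla^2\hR_n(\btheta)\|_{\op}$ up to the fluctuation. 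Hence the gradient difference $\btheta\mapsto\nabla\hR_n(\btheta)-\nabla R(\btheta)$ is Lipschitz with a deterministic, polynomial-in-$p$ constant $L\lesssim\tau^2 p^{\Ch}(1+\rad\tau)$ on the high-probability event. Choosing $\eps$ inverse-polynomially small in $(n,p)$ renders the discretization error $L\eps$ negligible against the target $\tau\sqrt{Cp\log n/n}$ (and $\tau^2\sqrt{Cp\log n/n}$ for the Hessian), while inflating the net only to $\log|\{\btheta_j\}|\sim p\log(\rad/\eps)\sim p\log n$. A union bound of the pointwise tails over the $\btheta$-net requires $\exp(-cn(t/\tau^2)^2)$ to beat $\exp(p\log n)$; setting $t=\tau^2\sqrt{Cp\log n/n}$, taking $C\gtrsim\Ch\vee\log(\rad\tau/\delta)\vee1$ and $n\ge Cp\log p$ closes both bounds at confidence $1-\delta$.

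The main obstacle is keeping the covering argument self-consistent: the Lipschitz constant that governs the gradient net is exactly the operator norm of the empirical Hessian that part $(a)$ does not see directly but part $(b)$ controls, which is why the order of proof matters. The second delicate point is bookkeeping units and polynomial factors so that all the $p^{\Ch}$ and $\rad\tau$ dependence is absorbed into $\log|\{\btheta_j\}|$, hence into the constant $C=C_0(\Ch\vee\log(\rad\tau/\delta)\vee1)$, rather than degrading the $\sqrt{p\log n/n}$ rate — this is precisely the role of the caps $\HUB\le\tau^2 p^{\Ch}$ and $J_*\le\tau^3 p^{\Ch}$ in Assumption \ref{ass:BoundedHessian}. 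Everything else — the sphere nets, Bernstein and sub-Gaussian tails — is routine once these two points are handled.
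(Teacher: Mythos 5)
Your proposal is correct and follows the same skeleton as the paper's proof: pointwise concentration at net points (sub-Gaussian tails for the gradient, Bernstein for the Hessian), sphere nets to discretize the Euclidean and operator norms, and an inverse-polynomially fine $\eps$-net of $\Ball^p(\rad)$ whose discretization error is controlled through Assumption \ref{ass:BoundedHessian}, with the polynomial caps $\HUB\le\tau^2p^{\Ch}$, $J_*\le\tau^3p^{\Ch}$ absorbed logarithmically into $C$. The one genuine difference is how the random Lipschitz constant is tamed. You prove $(b)$ first and feed the resulting high-probability bound on $\sup_{\btheta}\|\nabla^2\hR_n(\btheta)\|_{\op}$ into the covering step of $(a)$; the paper instead keeps $(a)$ and $(b)$ logically independent by bounding the discretization terms \emph{in expectation}: it shows $\E\big[\sup_{\btheta}\|\nabla^2\hR_n(\btheta)\|_{\op}\big]\le \HUB+2\rad J_*$ directly from the anchor point and the Lipschitz hypothesis, then applies Markov's inequality to the entire discretization event, paying a $1/\delta$ factor that vanishes into the logarithm because $\eps$ is taken of order $\delta\tau/((\HUB+2\rad J_*)np)$. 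This expectation route is what lets the argument run on the bare first-moment hypothesis $\E[J(\bZ)]\le J_*$, and it exposes the one overstatement in your write-up: you assert that $\frac{1}{n}\sum_i J(\bZ_i)$ ``concentrates'' around $J_*$, but Assumption \ref{ass:BoundedHessian} provides no tail control on $J(\bZ)$, only its mean, so exponential concentration is unavailable. The fix is harmless --- Markov gives $\frac{1}{n}\sum_i J(\bZ_i)\le 4J_*/\delta$ with probability $1-\delta/4$, and since the Lipschitz constant enters only through $\log(\rad/\eps)$ in the net cardinality, the extra $1/\delta$ is absorbed into $C_0\cdot(\Ch\vee\log(\rad\tau/\delta)\vee 1)$ exactly as in the paper --- but the wording should be corrected, and your ordering requires splitting the failure probability between $(b)$ and $(a)$ rather than treating the two parts separately. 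Both routes deliver the same rate; the paper's is marginally cleaner in that neither part presupposes the other.
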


The above theorem immediately implies that the structure of stationary points of the sample risk $\hR_n(\,\cdot\,)$ must reflect that of the population risk. In order to formalize this intuition, we introduce the notion of \emph{strongly Morse function}.
Given a differentiable function $F:\Ball^d(\rad)\to\reals$, we say that $\bx$ in the interior of the ball $\Ball^d(\rad)$ is
critical (or stationary) if $\nabla F(\bx) = 0$. 

Recall that a twice differentiable function $F:\reals^d\to\reals$ is Morse if  all its critical points are non-degenerate, i.e. have an invertible Hessian.
In other words $\nabla F (\bx)=0$ implies $\lambda_i(\nabla^2 F(\bx))\neq 0$ for all $i\in\{1,\dots, d\}$. Our next definition
provides a quantitative version of this notion.
\begin{definition}
We say that a twice differentiable function $F:\Ball^d(\rad)\to\reals$ is \emph{$(\eps,\eta)$-strongly Morse} if 
$\|\nabla F(\bx)\|_2>\eps$ for $\|\bx\|_2=\rad$ and, for any $\bx\in\reals^d$, $\|\bx\|_2<\rad$, the following holds
\begin{align}
\big\|\nabla F(\bx)\big\|_2\le \eps \;\;\; \Rightarrow\;\;\; \min_{i\in [d]}\big|\lambda_i\big(\nabla^2 F(\bx)\big)\big|\ge \eta\, .
\end{align}
\end{definition}
Note that, analogously to a Morse function on a compact domain, a strongly Morse function can have only a finite number of 
critical points which are in the interior of $\Ball^d(\rad)$. Also recall that the index of a non-degenerate critical point $\bx$ is the number of negative eigenvalues 
of the Hessian at $\bx$ (assuming $F$ to be twice differentiable).

\begin{theorem}\label{thm:morse}
Under  Assumptions \ref{ass:GradientSub}, \ref{ass:Hessianub}, and \ref{ass:BoundedHessian}, let $n\ge 4 C p\log n\cdot ((\tau^2/\eps^2) \vee (\tau^4 /\eta^2))$,
where $C=C(\tau^2,\delta,\rad,\Ch)$ is as in the statement of Theorem \ref{thm:uniformconvergence1}. Then the following happens with probability at least $1-\delta$.

If the population risk $R:\btheta\to R(\btheta)$ is $(\eps, \eta)$-strongly Morse in $\Ball^p(\rad)$, then the sample risk $\hR_n:\btheta \mapsto \hR_n(\btheta)$
is $(\eps/2, \eta/2)$-strongly Morse in $\Ball^p(\rad)$. Further  there is a one-to-one correspondence 
between the set of critical points of $R(\,\cdot\,)$, $\cC = \{\btheta^{(1)},\dots,\btheta^{(k)}\}$ and 
the set of critical points of $\hR_n(\,\cdot\,)$, $\cC_n= \{\hbtheta_n^{(1)},\dots,\hbtheta_n^{(k)}\}$  such that (letting 
$\hbtheta_n^{(j)}$ be the point in correspondence with $\btheta^{(j)}$, for any $j\in [k]$)
\begin{enumerate}
\item[$(a)$] The index of $\hbtheta_n^{(j)}$ coincides with the index of $\btheta^{(j)}$. (In particular, local minima correspond to local minima, and saddles to saddles.)
\item[$(b)$] If we further let $L = \sup_{\btheta \in \Ball^p(\rad)} \Vert \nabla^3 R(\btheta) \Vert_{\op}$,  and assume $n \geq 4 C p \log n /\eta_*^2$
where $\eta_*^2=(\eps^2/\tau^2) \wedge (\eta^2/\tau^4)\wedge (\eta^4/(L^2\tau^2))$,
we have, for each $j\in\{1,\dots,k\}$,
\begin{align}
\|\hbtheta_n^{(j)} - \btheta^{(j)}\|_2\le \frac{2 \tau}{\eta}\sqrt{\frac{C p\log n}{n}}\, .
\end{align}
\end{enumerate}
\end{theorem}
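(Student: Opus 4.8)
Throughout I work on the probability-$(1-\delta)$ event of Theorem \ref{thm:uniformconvergence1}, abbreviating the two uniform errors as $G\equiv\tau\sqrt{Cp\log n/n}$ and $H\equiv\tau^2\sqrt{Cp\log n/n}$, so that $\sup_{\btheta}\|\nabla\hR_n(\btheta)-\nabla R(\btheta)\|_2\le G$ and $\sup_{\btheta}\|\nabla^2\hR_n(\btheta)-\nabla^2R(\btheta)\|_{\op}\le H$. The sample-size hypothesis $n\ge 4Cp\log n\cdot((\tau^2/\eps^2)\vee(\tau^4/\eta^2))$ is engineered precisely so that $G\le\eps/2$ and $H\le\eta/2$. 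My first step is to check that $\hR_n$ is itself $(\eps/2,\eta/2)$-strongly Morse. On the sphere $\|\btheta\|_2=\rad$ the reverse triangle inequality gives $\|\nabla\hR_n(\btheta)\|_2\ge\|\nabla R(\btheta)\|_2-G>\eps-\eps/2=\eps/2$. For an interior point with $\|\nabla\hR_n(\btheta)\|_2\le\eps/2$ the triangle inequality yields $\|\nabla R(\btheta)\|_2\le\eps$, so the strong Morse property of $R$ forces $\min_i|\lambda_i(\nabla^2R(\btheta))|\ge\eta$; Weyl's inequality together with $\|\nabla^2\hR_n(\btheta)-\nabla^2R(\btheta)\|_{\op}\le H\le\eta/2$ then gives $\min_i|\lambda_i(\nabla^2\hR_n(\btheta))|\ge\eta/2$.

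The core of the proof is the correspondence, for which I would run a homotopy argument. Set $R_t\equiv(1-t)R+t\hR_n$ for $t\in[0,1]$, so that $\nabla R_t=(1-t)\nabla R+t\nabla\hR_n$. Exactly the computation above, with the crude bounds $\|\nabla R_t-\nabla R\|_2\le G$ and $\|\nabla^2R_t-\nabla^2R\|_{\op}\le H$, shows that \emph{every} $R_t$ is $(\eps/2,\eta/2)$-strongly Morse. Now consider $\Phi:\Ball^p(\rad)\times[0,1]\to\reals^p$, $\Phi(\btheta,t)=\nabla R_t(\btheta)$, and its zero set $\mathcal{S}=\Phi^{-1}(\bzero)$. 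The $\btheta$-block of $D\Phi$ is $\nabla^2R_t(\btheta)$, which is invertible at every point of $\mathcal{S}$ (eigenvalues $\ge\eta/2$ in modulus), so $\bzero$ is a regular value and $\mathcal{S}$ is a compact one-dimensional manifold with boundary. Because $\|\nabla R_t\|_2\ge\eps/2>0$ on the sphere for every $t$, $\mathcal{S}$ never meets $\partial\Ball^p(\rad)\times[0,1]$, so its boundary lies entirely in the two slices $t\in\{0,1\}$. The projection $(\btheta,t)\mapsto t$ is a submersion on $\mathcal{S}$: a tangent vector $(\dot\btheta,\dot t)$ satisfies $\nabla^2R_t\,\dot\btheta+\dot t\,\partial_t\nabla R_t=\bzero$, and $\dot t=0$ would force $\dot\btheta=\bzero$ by invertibility, contradicting nontriviality. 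Hence $t$ is strictly monotone on each connected component; a compact $1$-manifold is a union of arcs and circles, circles are excluded by monotonicity, and each arc must therefore join $t=0$ to $t=1$. This produces the bijection $\btheta^{(j)}\leftrightarrow\hbtheta_n^{(j)}$.

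Part $(a)$ is then immediate: along the arc $t\mapsto\btheta^{(j)}(t)$ the eigenvalues of $\nabla^2R_t(\btheta^{(j)}(t))$ vary continuously while staying $\ge\eta/2$ in modulus, so none can cross zero and the number of negative eigenvalues — the index — is constant from $t=0$ to $t=1$. The part I expect to be most delicate is exactly this differential-topology bookkeeping: verifying that $\mathcal{S}$ is a clean manifold confined to the interior and that the $t$-projection has no critical points, so that arcs cannot turn around or close up. The interior confinement rests on the boundary gradient lower bound and the non-degeneracy on the strong Morse property, both holding uniformly in $t$.

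Finally, for the quantitative bound $(b)$ I would differentiate the arc. By the implicit function theorem $\btheta^{(j)}(t)$ is $C^1$ and solves $\dot\btheta^{(j)}(t)=-[\nabla^2R_t(\btheta^{(j)}(t))]^{-1}(\nabla\hR_n-\nabla R)(\btheta^{(j)}(t))$; bounding $\|[\nabla^2R_t]^{-1}\|_{\op}\le 2/\eta$ and $\|\nabla\hR_n-\nabla R\|_2\le G$ gives $\|\dot\btheta^{(j)}(t)\|_2\le 2G/\eta$, and integrating over $[0,1]$ yields $\|\hbtheta_n^{(j)}-\btheta^{(j)}\|_2\le 2G/\eta=(2\tau/\eta)\sqrt{Cp\log n/n}$. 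An equivalent and more elementary route expands $\bzero=\nabla\hR_n(\hbtheta_n^{(j)})=\nabla R(\hbtheta_n^{(j)})+(\nabla\hR_n-\nabla R)(\hbtheta_n^{(j)})$ and writes $\nabla R(\hbtheta_n^{(j)})=\bar{\bM}(\hbtheta_n^{(j)}-\btheta^{(j)})$ with $\bar{\bM}=\int_0^1\nabla^2R(\btheta^{(j)}+s(\hbtheta_n^{(j)}-\btheta^{(j)}))\,\ud s$; controlling $\sigma_{\min}(\bar{\bM})\ge\eta-L\|\hbtheta_n^{(j)}-\btheta^{(j)}\|_2/2$ by a bootstrap needs the Hessian-Lipschitz constant $L=\sup_{\btheta}\|\nabla^3R(\btheta)\|_{\op}$, which is exactly why the hypothesis of $(b)$ invokes $L$ and the sharper sample size $n\ge 4Cp\log n/\eta_*^2$.
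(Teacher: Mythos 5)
Your proposal is correct, but it reaches the theorem by a genuinely different mechanism than the paper. The paper proceeds in two localized steps: first a decomposition lemma (Lemma \ref{lemma:Decomposition}) shows, via a Morse-theoretic argument applied to $\frac12\|\nabla R\|_2^2$, that the near-critical region $\{\|\nabla R(\btheta)\|_2<\eps\}$ splits into connected components each containing at most one critical point; then a degree-theoretic stability lemma (Lemma \ref{lemma:Stability}, resting on the vector-field index theorem, Lemma \ref{lem:Index}) matches critical points component-by-component, using that the interpolation $t\nabla \hR_n+(1-t)\nabla R$ never vanishes on a component's boundary, so $\nabla R$ and $\nabla\hR_n$ share the boundary degree, and constancy of the Hessian index on the component converts equal degrees into equal counts. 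You instead run one global homotopy-continuation argument: the zero set of $(\btheta,t)\mapsto\nabla R_t(\btheta)$ is a compact $C^1$ one-manifold, confined to the interior and with $t$ strictly monotone along each component, and its arcs deliver the bijection, the index preservation (no eigenvalue can cross zero along an arc, by the uniform $(\eps/2,\eta/2)$-strong Morse property of every $R_t$), and, by integrating the continuation ODE, the bound $\|\hbtheta_n^{(j)}-\btheta^{(j)}\|_2\le (2\tau/\eta)\sqrt{Cp\log n/n}$, all in one stroke. The trade-off runs in both directions. Your route needs joint $C^1$ regularity of the homotopy, i.e.\ a continuous sample Hessian; this holds almost surely here because Assumption \ref{ass:BoundedHessian} makes $\frac1n\sum_i J(\bZ_i)$ an a.s.-finite Lipschitz constant for $\nabla^2\hR_n$, but it is a genuine smoothness input that the paper's degree argument avoids. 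In exchange, your ODE proof of part $(b)$ is actually stronger than the paper's: since every point of an arc is an exact critical point of some $R_t$, the strong Morse property of $R_t$ gives $\|[\nabla^2R_t]^{-1}\|_{\op}\le 2/\eta$ along the entire path, so the estimation bound follows without ever invoking $L=\sup_\btheta\|\nabla^3R(\btheta)\|_{\op}$ or the sharper sample-size condition; in the paper those hypotheses arise only because part $(b)$ is proved via the third-order Taylor bound $\|\nabla R(\btheta^{(j)}+\bv)\|_2\ge \eta\|\bv\|_2-\frac{L}{2}\|\bv\|_2^2$, which confines the component $D_j(\eps_n)$ containing both $\btheta^{(j)}$ and $\hbtheta_n^{(j)}$ to a ball of radius $2\eps_n/\eta$ only when $\eps_n<\eta^2/(2L)$. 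One small note: in your sketched ``more elementary'' alternative for $(b)$, the paper resolves the two-root ambiguity of the quadratic inequality not by a continuity bootstrap but by connectedness of $D_j(\eps_n)$; since your primary ODE argument is complete, this gap in the alternative is immaterial. A final minor point worth recording if you write this up: the preimage-theorem step on the manifold with boundary $\Ball^p(\rad)\times[0,1]$ requires checking that $\bzero$ is a regular value of the restrictions at $t\in\{0,1\}$ as well (immediate here, since the Hessians of $R$ and $\hR_n$ are invertible at their zeros), and the corner set $\partial\Ball^p(\rad)\times\{0,1\}$ causes no trouble because your boundary gradient bound keeps $\Phi^{-1}(\bzero)$ uniformly away from the sphere.
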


The strong Morse assumption imposes conditions on all the eigenvalues of the Hessian $\nabla^2 R(\btheta)$ at near-critical points, and implies
a detailed characterization of the empirical risk. If only weaker properties can be established for the population risk, Theorem \ref{thm:uniformconvergence1} 
can nevertheless be very useful. For instance, in Section \ref{sec:GaussianMixture} we consider an example  in which near critical points have a Hessian whose smallest
eigenvalue is either positive or negative, but in both cases bounded away from $0$. This weaker condition is sufficient to obtain a characterization of the local minima of the empirical risk.

\subsection{Very high-dimensional regime}
\label{sec:VeryHighDim}

In the  very-high dimensional regime $n\ll p$, we will solve the $\ell_1$-penalized risk minimization problem
\begin{equation}
\begin{aligned}\label{eq:RiskConstrained}
\mbox{\rm minimize}\;\;\; & \hR_n(\btheta) + \lambda_n \Vert \btheta \Vert_1 \, ,\\
\mbox{\rm subject to}\;\;\; & \|\btheta\|_2\le \rad\, . 
\end{aligned}
\end{equation}
We need some additional assumptions. It is fairly straightforward to check them in specific cases, see e.g. Section \ref{sec:Class}.
The first assumption is mainly technical, and not overly restrictive: it requires the loss function to have almost surely bounded gradient, in a suitable sense.
\begin{assumption}[Gradient bounds]\label{ass:BoundedGradient}
There exists a constant $T_*$ such that $\bZ$-almost surely, for all $\btheta \in \Ball^p_2(\rad)$,
\begin{align}
\big\| \nabla \ell(\btheta; \bZ) \big\|_\infty &\le  T_*\,.
\end{align}
\end{assumption}

Our key structural assumption is stated next. It requires the gradient of the loss function to depend on the parameters only through a linear function of $\btheta$,
possibly dependent on the feature vector $\bz$. Note that $\btheta_0$ is regarded here as fixed, and hence omitted from the arguments. 
\begin{assumption}[Generalized gradient linearity]\label{ass:GGL}
There exist functions $g:\reals\times\reals^d\to\reals$, $(t,\bz)\mapsto g(t; \bz)$ and $\bpsi:\reals^d\to\reals^p$, $\bz\mapsto \bpsi_{2}(\bz)$,  such that 
\begin{align}
\<\nabla \ell(\btheta; \bz), \btheta - \btheta_0 \> = g(\langle \btheta - \btheta_0, \bpsi(\bz) \rangle; \bz)\, . 
\end{align}
In addition, $g(t; \bz)$ is $L_*$-Lipschitz to its first argument, $g(0; \bz) = 0$, and $\bpsi(\bZ)$ is mean-zero and $\tau^2$-sub-Gaussian. 
\end{assumption}
As an example, in the case of  binary linear classification and robust regression, the data is given as a pair $\bz = (y, \bx)$, and there exists a function $f(t;\bz)$ such that $\nabla \ell (\btheta; \bz) = f(\langle \btheta - \btheta_0, \bx \rangle; \bz)\bx$. Assumption \ref{ass:GGL} is satisfied with $g(t;\bz) = t\, f(t;\bz)$ provided the latter is Lipschitz as a function of $t\in\reals$.

\begin{theorem} \label{thm:uniformconvergence2}
Under Assumptions \ref{ass:Hessianub}, \ref{ass:BoundedHessian}, \ref{ass:BoundedGradient} and \ref{ass:GGL} stated above, 
there exists a constant $C_1$ that depends on $(\rad, \tau^2, \Ch, \delta)$, and a universal constant $C_0$ such that letting $C_2 = C_0 \cdot (\Ch\vee\log(\rad\tau/\delta)\vee 1)$, the following hold:
\begin{enumerate}
\item[$(a)$]  The sample directional gradient converges uniformly to the population directional gradient,
along the direction $(\btheta-\btheta_0)$. Namely, we have 
\begin{align}
\P \left( \sup_{\btheta\in\Ball_2^p(\rad) \setminus \{ \bzero\}} \frac{\big \vert \big\langle \nabla \hR_n(\btheta)-\nabla R(\btheta), \btheta - \btheta_0 \big\rangle \big \vert}{\Vert \btheta - \btheta_0 \Vert_1 } \le (T_* + L_* \tau)  \sqrt{ \frac{  C_1 \log (np)}{n}} \right) \geq 1-\delta\, .
\end{align} 
\item[$(b)$] The sample restricted Hessian converges uniformly to the population restricted Hessian in the set $\Ball_2^p(\rad) \cap \Ball_0^p(s_0)$ for any $s_0 \leq p$. Namely, as $n \geq C_2 s_0 \log (np)$ we have 
\begin{align}
\P \left( \sup_{\btheta\in\Ball_2^p(\rad) \cap \Ball_0^p(s_0), \bv \in \Ball_2^p(1) \cap \Ball_0^p(s_0)} \left\vert \left\langle \bv, \left(\nabla^2\hR_n(\btheta)-\nabla^2 R(\btheta)\right) \bv\right \rangle \right\vert \le  \tau^2 \sqrt{\frac{  C_2 s_0 \log (np)}{n}} \right) \geq 1 -\delta\, .
\end{align}

\end{enumerate}
\end{theorem}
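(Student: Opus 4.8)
The plan is to use the generalized gradient linearity (Assumption \ref{ass:GGL}) to collapse the $p$-dimensional empirical process to an essentially one-dimensional one. Writing $\bu=\btheta-\btheta_0$, the numerator of the ratio equals
\[
\langle\nabla\hR_n(\btheta)-\nabla R(\btheta),\bu\rangle=\frac1n\sum_{i=1}^n g(\langle\bu,\bpsi(\bZ_i)\rangle;\bZ_i)-\E\big[g(\langle\bu,\bpsi(\bZ)\rangle;\bZ)\big],
\]
a centered empirical process in which $\bu$ enters only through the scalars $\langle\bu,\bpsi(\bZ_i)\rangle$ and the $\bZ_i$-dependent $L_*$-Lipschitz maps $g(\cdot;\bZ_i)$, which vanish at $0$. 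First I would bound the expected supremum at a fixed $\ell_1$-scale $\|\bu\|_1\le r$: symmetrizing (introducing independent Rademacher signs $\eps_i$) and applying the Ledoux--Talagrand contraction to strip off the Lipschitz maps $g(\cdot;\bZ_i)$ reduces the Rademacher complexity to that of linear functionals, giving
\[
\E\sup_{\|\bu\|_1\le r}\Big|\frac1n\sum_i\eps_i\,g(\langle\bu,\bpsi(\bZ_i)\rangle;\bZ_i)\Big|\le 2L_*\,r\,\E\Big\|\frac1n\sum_i\eps_i\bpsi(\bZ_i)\Big\|_\infty\lesssim L_*\,r\,\tau\sqrt{\frac{\log p}{n}},
\]
where the last step uses that $\bpsi(\bZ)$ is mean-zero and $\tau^2$-sub-Gaussian, so each of the $p$ coordinates of the average is $\tau^2/n$-sub-Gaussian. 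The decisive point is that no $\epsilon$-net of $\reals^p$ is needed: the $\ell_\infty$ norm supplies the $\log p$ factor directly, which is what lets the bound survive into the regime $n\ll p$.

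To upgrade to the high-probability, $\|\bu\|_1$-normalized statement I would run a peeling argument over dyadic shells $2^k\le\|\bu\|_1\le 2^{k+1}$. On each shell Assumption \ref{ass:BoundedGradient} gives $|g(\langle\bu,\bpsi(\bZ)\rangle;\bZ)|=|\langle\nabla\ell(\btheta;\bZ),\bu\rangle|\le T_*\|\bu\|_1$, so after dividing by $\|\bu\|_1$ the summands are bounded by $T_*$; a bounded-differences inequality then yields, on the shell, a deviation of the normalized supremum of order $T_*\sqrt{\log(1/\delta')/n}$ (the shell radius cancels against the normalization). Combining the $L_*\tau\sqrt{\log p/n}$ expectation term with this $T_*\sqrt{\log(1/\delta')/n}$ fluctuation, choosing $\delta'=\delta/N$ where $N=O(\log(np\rad))$ is the number of shells, gives the claimed $(T_*+L_*\tau)\sqrt{C_1\log(np)/n}$.

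\noindent\textbf{Part (b).} Here the plan is to combine the net-plus-concentration argument underlying Theorem \ref{thm:uniformconvergence1}(b) with a union bound over sparse supports. Since $\btheta$ and $\bv$ are each $s_0$-sparse, $\langle\bv,\nabla^2\ell(\btheta;\bz)\bv\rangle$ depends only on coordinates in $S=\mathrm{supp}(\btheta)\cup\mathrm{supp}(\bv)$ with $|S|\le 2s_0$. I would (i) fix a support $S$ of size $2s_0$ and cover $\{\btheta:\|\btheta\|_2\le\rad,\ \mathrm{supp}\subseteq S\}$ and $\{\bv:\|\bv\|_2\le 1,\ \mathrm{supp}\subseteq S\}$ by $\epsilon$-nets of size $\exp(O(s_0\log(\rad/\epsilon)))$; (ii) for a fixed net pair, control the tail of $\langle\bv,(\nabla^2\hR_n(\btheta)-\nabla^2R(\btheta))\bv\rangle$ — an average of the centered $\tau^2$-sub-exponential variables $\cZ_{\bv,\btheta}$ of Assumption \ref{ass:Hessianub} — by Bernstein's inequality; (iii) absorb the discretization error using the Lipschitz-Hessian bound of Assumption \ref{ass:BoundedHessian}, taking $\epsilon$ polynomially small in $(n,p)$ at the cost of a $\log n$ factor; and (iv) take a single grand union bound over the $\binom{p}{2s_0}$ supports and the nets.

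The main obstacle — and the place where care is needed — is the bookkeeping in step (iv). One cannot simply invoke Theorem \ref{thm:uniformconvergence1}(b) as a black box within each support with confidence $\delta/\binom{p}{2s_0}$: in that statement $\log(1/\delta)$ sits inside the constant $C$ that multiplies the dimension, so a per-support application would make the support-counting term $s_0\log(p/s_0)$ appear \emph{multiplicatively} against the in-subspace dimension and cost a spurious factor $\sqrt{s_0}$. Instead the net-plus-Bernstein estimate must be assembled so that the support term $s_0\log(ep/s_0)$, the net term, and $\log(1/\delta)$ all enter \emph{additively} inside the square root; the total log-cardinality is then $\lesssim s_0\log(ep/s_0)+s_0\log(\rad n)\lesssim s_0\log(np)$. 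Finally, in the regime $n\ge C_2 s_0\log(np)$ the target deviation $t=\tau^2\sqrt{C_2 s_0\log(np)/n}$ satisfies $t\lesssim\tau^2$, placing it in the sub-Gaussian branch of Bernstein's inequality and yielding the stated $\sqrt{\,\cdot\,}$ rate; this is precisely the role of the sample-size condition $n\gtrsim s_0\log(np)$.
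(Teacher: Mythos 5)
Your part (b) is essentially the paper's own proof: a union bound over sparse supports combined with $\eps$-nets inside each support, Bernstein's inequality for the centered $\tau^2$-sub-exponential quadratic forms of Assumption \ref{ass:Hessianub}, discretization error absorbed through the Lipschitz-Hessian constant $J_*$ of Assumption \ref{ass:BoundedHessian} with $\eps$ polynomially small in $(n,p)$, and all log-cardinalities entering additively under the square root so that $n\gtrsim s_0\log(np)$ places the target deviation in the sub-Gaussian branch. Your warning against invoking Theorem \ref{thm:uniformconvergence1}$(b)$ per support (which would cost a spurious $\sqrt{s_0}$) is precisely the bookkeeping the paper performs. The core of your part (a) also matches the paper: symmetrization plus the Ledoux--Talagrand contraction through Assumption \ref{ass:GGL} gives the $L_*\tau\, r\sqrt{(\log p)/n}$ expectation bound with no net over $\reals^p$, and dyadic peeling with McDiarmid's inequality via the $T_*$ bound of Assumption \ref{ass:BoundedGradient} gives the scale-free fluctuation term.

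However, part (a) has a genuine gap: the region near $\btheta_0$. The supremum in the theorem ranges over all $\btheta\in\Ball_2^p(\rad)\setminus\{\bzero\}$, so $\|\btheta-\btheta_0\|_1$ can be arbitrarily small, while your peeling uses only $N=O(\log(np\rad))$ shells and therefore stops at an inner radius of order $1/\mathrm{poly}(np)$. You cannot simply peel down to radius zero: with infinitely many shells the union bound forces per-shell budgets $\delta_l$ that are summable, so the normalized fluctuation $T_*\sqrt{\log(1/\delta_l)/n}$ on shell $l$ grows without bound as $l\to-\infty$; and because $g(\,\cdot\,;\bz)$ depends on $\btheta$ nonhomogeneously, a point in a tiny shell cannot be rescaled to a point in a moderate one. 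The paper closes this with a separate step that your proposal is missing: stop peeling at $r_b=1/(\tau C_{\rm hess}p^2)$, and for $\|\btheta-\btheta_0\|_1<r_b$ write $\btheta=\btheta_0+s\bn$ with $\|\bn\|_1=1$, compare the normalized directional error at radius $s$ to that at radius $r_b$ along the same direction $\bn$ via the mean value theorem, and bound the difference by $\sup_{\btheta}\|\nabla^2\hR_n(\btheta)-\nabla^2 R(\btheta)\|_{\op}\cdot|s-r_b|$. The Hessian uniform-convergence estimate from the proof of Theorem \ref{thm:uniformconvergence1}$(b)$ (which holds with no sample-size condition) multiplied by the tiny radius $r_b$ then contributes only an extra $\tau\sqrt{\log(np)/n}$, the factor $p^{-2}$ in $r_b$ absorbing the $\sqrt{p}$ in the Hessian rate. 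Without some interior argument of this kind, your bound does not cover the full supremum claimed in the theorem.
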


\section{Applications}
\label{sec:Applications}

\subsection{Binary linear classification: High dimensional regime}
\label{sec:Class}
\begin{figure}
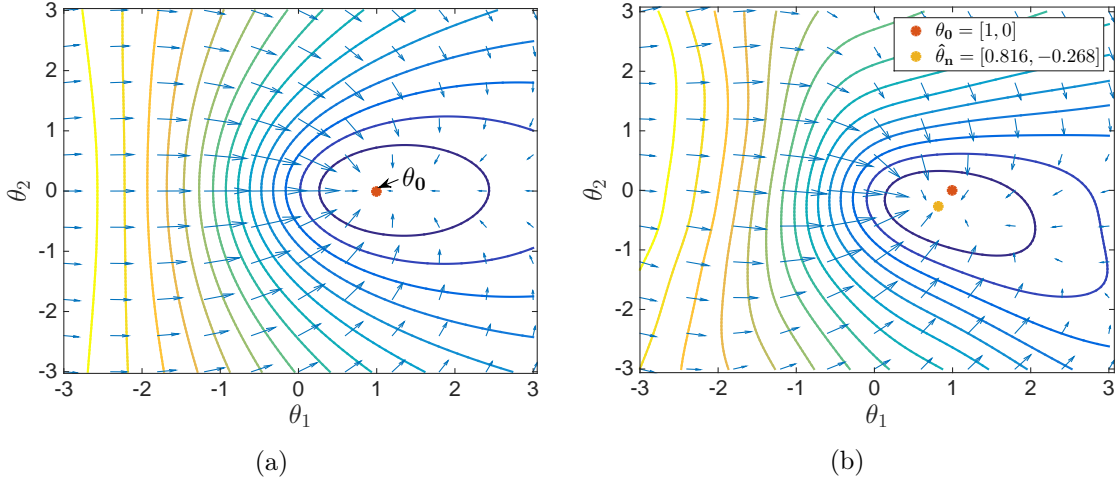

\centering
\begin{subfigure}{0.45\linewidth}
\centering
\includegraphics[width=0.95\linewidth]{poprisk_bc.eps}
\caption{}\label{fig:poprisk_bc}
\end{subfigure}\hspace{0.1cm}
\begin{subfigure}{0.45\linewidth}
\centering
\includegraphics[width=0.95\linewidth]{emprisk_bc.eps}
\caption{}\label{fig:emprisk}
\end{subfigure}\label{fig:emprisk_bc}
\caption{Binary linear classification: $(a)$ Population risk for $d = 2$. $(b)$ A realization of the empirical risk for $d = 2$, and $n/d = 20$. }\label{fig:risk_bc}
\end{figure}

As mentioned in the introduction, in this case we are given $n$ pairs $\bz_1=(y_1,\bx_1), \dots, \bz_n=(y_n,\bx_n)$ with $y_i\in \{0,1\}$, $\bx_i\in\reals^d$,
whereby $\prob(Y_i=1|\bX_i=\bx) = \sigma(\<\btheta_0,\bx\>)$  (hence $p=d$ in this case).  We estimate $\btheta_0$ by minimizing the non-linear square loss
(\ref{eqn:erm}), which we copy here for the reader's convenience:
%
\begin{equation}
\begin{aligned}\label{eq:nonconvex1}
\mbox{\rm minimize} \;\;\;\ &\hR_n(\btheta)\equiv \frac{1}{n}\sum_{i=1}^{n} \Big(y_i - \sigma(\<\btheta,\bx_i\>)\Big)^2\,, \\
\mbox{\rm subject to} \;\;\; &\|\btheta\|_2\le \rad\, .
\end{aligned}
\end{equation}
This can be regarded as a smooth version of the $0-1$ loss.

We collect below the technical assumptions on this model.
\begin{assumption}[Binary linear classification]\label{ass:classification}
\begin{enumerate}
\item[$(a)$]  The activation $z\mapsto \sigma(z)$ is three times  differentiable with $\sigma'(z)>0$ for all $z$, and has bounded first, second and third derivatives. Namely,  for some constant $\actlip>0$:
\begin{equation}
\max\Big\{ \|\sigma'\|_\infty, \|\sigma''\|_\infty , \|\sigma'''\|_\infty \Big\} \le \actlip\, . \label{ass:activation}
\end{equation}
\item[$(b)$]
The feature vector $\bX$ has zero mean and is $\tau^2$-sub-Gaussian, that is
$\E[e^{\langle \blambda,\bX\rangle}] \le e^{\frac{\tau^2\|\blambda\|_2^2}{2}}$
for all $\blambda\in\R^d$.
\item[$(c)$] The feature vector  $\bX$ spans all directions in $\R^d$,
  that is, $\E[\bX\bX^{\sT}]\succeq \covlb \tau^2\id_{d\times d}$ for some $0 < \covlb < 1$.
\end{enumerate}
\end{assumption}
Assumption \ref{ass:classification}.$(a)$ is satisfied by many classical activation functions, a prominent example being the logistic
(or sigmoid) function $\sigma_L(z) = (1+e^{-z})^{-1}$.

Our main results on binary linear classification are summarized in the theorem below.
\begin{theorem}\label{thm:MainClassification}
Under Assumption \ref{ass:classification}, further assume $\|\btheta_0\|_2\le \rad/3$. There exist positive constants $C_1$, $C_2$ and $h_{\max}$ depending on parameters $(L_{\sigma},\rad,\tau^2,\covlb, \delta)$ and the activation function $\sigma(\cdot)$, but independent of $n$ and $d$, 
such that, if $n\ge C_1 d \log d$, the following hold with probability at least $1-\delta$:
\begin{itemize}
\item[$(a)$] The empirical risk function $\btheta\mapsto \hR_n(\btheta)$ has a unique local  minimizer 
in $\Ball^d(\bzero,\rad)$, that is the global minimizer $\hbtheta_n$.
\item[$(b)$] Gradient descent with fixed  step size $h_k=h \leq h_{\max}$ converges exponentially fast to the global minimizer, for any initialization $\btheta_s \in \Ball^d(\btheta_0, 2\rad/3)$:
$\|\hbtheta_n (k)-\hbtheta_n\|_2\le C_1\|\btheta_s-\hbtheta_n\|_2\, (1- h/C_1)^k$.  
\item[$(c)$] We have $\|\hbtheta_n-\btheta_0\|_2\le C_2 \sqrt{(d\log n)/n}$.
\end{itemize} 
\end{theorem}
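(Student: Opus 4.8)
\textbf{Proof plan for Theorem \ref{thm:MainClassification}.}

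The plan is to first pin down the geometry of the population risk $R(\btheta)=\E[(Y-\sigma(\langle\btheta,\bX\rangle))^2]$ and then transport that picture to $\hR_n$ via Theorems \ref{thm:uniformconvergence1} and \ref{thm:morse}. The starting point is the decomposition $R(\btheta)=R(\btheta_0)+\E[(\sigma(\langle\btheta_0,\bX\rangle)-\sigma(\langle\btheta,\bX\rangle))^2]$, valid because $\E[Y\mid\bX]=\sigma(\langle\btheta_0,\bX\rangle)$ annihilates the cross term. Writing $u=\langle\btheta,\bX\rangle$ and $u_0=\langle\btheta_0,\bX\rangle$ and differentiating, a mean-value computation gives $\langle\nabla R(\btheta),\btheta-\btheta_0\rangle=2\,\E[\sigma'(\xi)\sigma'(u)(u-u_0)^2]$ with $\xi$ between $u$ and $u_0$. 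Since $\sigma'>0$ everywhere (Assumption \ref{ass:classification}$(a)$), the integrand is nonnegative and vanishes in expectation only if $\langle\btheta-\btheta_0,\bX\rangle=0$ almost surely, which by the spanning condition \ref{ass:classification}$(c)$ forces $\btheta=\btheta_0$. Thus $\btheta_0$ is already seen to be the unique stationary point.

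The crucial quantitative step --- and the one I expect to be the main obstacle --- is to upgrade this to a one-point strong convexity (restricted secant) inequality $\langle\nabla R(\btheta),\btheta-\btheta_0\rangle\ge\mu\|\btheta-\btheta_0\|_2^2$ holding \emph{uniformly} over $\Ball^d(\rad)$, with $\mu=\mu(L_\sigma,\rad,\tau^2,\covlb)$. On the event $\{|u|,|u_0|\le M\}$ both derivative factors exceed $c_M\equiv\min_{|t|\le M}\sigma'(t)>0$, so the integrand dominates $2c_M^2(u-u_0)^2$; it then suffices to show $\E[(u-u_0)^2\,\mathbf{1}\{|u|,|u_0|\le M\}]\ge\tfrac12\E[(u-u_0)^2]$ for $M$ large. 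This is the delicate part: the spanning bound $\E[\bX\bX^\sT]\succeq\covlb\tau^2\id$ lower bounds $\E[(u-u_0)^2]\ge\covlb\tau^2\|\btheta-\btheta_0\|_2^2$, while a sub-Gaussian tail estimate (Cauchy--Schwarz against $\E[(u-u_0)^4]\lesssim\tau^4\|\btheta-\btheta_0\|_2^4$ and $\prob(|u|>M)\le 2e^{-M^2/(2\rad^2\tau^2)}$) shows the truncation discards at most half the mass once $M=M(\rad,\tau,\covlb)$ is taken large enough. The same truncation at $\btheta=\btheta_0$, where the residual term of $\nabla^2 R$ vanishes so that $\nabla^2 R(\btheta_0)=2\,\E[\sigma'(u_0)^2\bX\bX^\sT]$, yields $\nabla^2 R(\btheta_0)\succeq\kappa\id$; and boundedness of $\sigma',\sigma'',\sigma'''$ together with sub-Gaussianity of $\bX$ gives a global bound $\|\nabla^3 R\|_{\op}\le L$, so the Hessian remains positive definite on a fixed-radius neighborhood of $\btheta_0$.

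These facts show $R$ is $(\eps,\eta)$-strongly Morse on $\Ball^d(\rad)$ with a single critical point of index $0$: the one-point inequality forces $\|\nabla R(\btheta)\|_2\ge\mu\,(2\rad/3)=:\eps$ on $\{\|\btheta\|_2=\rad\}$ (using $\|\btheta_0\|_2\le\rad/3$) and confines every near-critical point to $\{\|\btheta-\btheta_0\|_2\le\eps/\mu\}$, where the Hessian bound gives $\min_i|\lambda_i(\nabla^2 R)|\ge\eta\equiv\kappa/2$ once $\eps/\mu<\kappa/(2L)$. I then verify Assumptions \ref{ass:GradientSub}--\ref{ass:BoundedHessian} for this loss (the gradient is a bounded multiple of $\bX$, the Hessian a bounded combination of $\bX\bX^\sT$, and third derivatives are $O(\|\bX\|^3)$, all controlled by sub-Gaussianity of $\bX$ and bounded derivatives of $\sigma$), so Theorem \ref{thm:morse} applies for $n\ge C_1 d\log d$: $\hR_n$ has a unique interior critical point, which is a local minimum, and it is the global minimum since the uniform gradient bound of Theorem \ref{thm:uniformconvergence1}$(a)$ keeps $\langle\nabla\hR_n(\btheta),\btheta-\btheta_0\rangle>0$ on the boundary, ruling out boundary minimizers. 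This proves $(a)$; part $(c)$ follows from Theorem \ref{thm:morse}$(b)$, or directly from $\mu\|\hbtheta_n-\btheta_0\|_2^2\le\langle\nabla R(\hbtheta_n)-\nabla\hR_n(\hbtheta_n),\hbtheta_n-\btheta_0\rangle\le\eps_n\|\hbtheta_n-\btheta_0\|_2$ with $\eps_n=\tau\sqrt{Cd\log n/n}$.

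Finally, for $(b)$ I would transport one-point convexity and smoothness to $\hR_n$. Theorem \ref{thm:uniformconvergence1} gives $\langle\nabla\hR_n(\btheta),\btheta-\btheta_0\rangle\ge\mu\|\btheta-\btheta_0\|_2^2-\eps_n\|\btheta-\btheta_0\|_2$ and a uniform operator-norm bound on $\nabla^2\hR_n$, hence an $L'$-Lipschitz gradient with $\|\nabla\hR_n(\btheta)\|_2\le L'\|\btheta-\hbtheta_n\|_2$. Re-centering at $\hbtheta_n$ (using $\|\hbtheta_n-\btheta_0\|_2\le\eps_n/\mu$) yields the restricted secant inequality $\langle\nabla\hR_n(\btheta),\btheta-\hbtheta_n\rangle\ge\mu'\|\btheta-\hbtheta_n\|_2^2$ for $\btheta$ not too close to $\hbtheta_n$, while on the small inner ball the strongly convex Hessian $\nabla^2\hR_n\succeq\tfrac{\kappa}{2}\id$ gives the same inequality; together they cover all of $\Ball^d(\btheta_0,2\rad/3)$. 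The standard one-step estimate $\|\btheta^+-\hbtheta_n\|_2^2\le(1-2h\mu'+h^2L'^2)\|\btheta-\hbtheta_n\|_2^2$ then gives geometric contraction for $h\le h_{\max}$, and the iterates stay inside the ball by contraction combined with the slack $\|\btheta_0\|_2\le\rad/3$; absorbing $\mu',L'$ into $C_1$ gives the stated rate.
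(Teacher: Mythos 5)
Your plan is correct in substance and follows the paper's declared two-step strategy (characterize the population landscape, then transfer to $\hR_n$ via uniform convergence); your population-risk analysis is essentially the paper's Lemma \ref{lemma:popriskClassification} — the same truncation-plus-sub-Gaussian-tail argument yielding the one-point convexity $\langle \nabla R(\btheta),\btheta-\btheta_0\rangle \ge \mu\|\btheta-\btheta_0\|_2^2$ on all of $\Ball^d(\rad)$, and the same perturbation of $\nabla^2 R(\btheta_0)\succ 0$ (the paper controls the perturbation through Lipschitzness of the Hessian coefficient $\beta(\btheta)$ in $\langle\btheta,\bX\rangle$ rather than a global $\|\nabla^3 R\|_{\op}\le L$ bound; the two are interchangeable here). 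Where you genuinely diverge is in the transfer. For part $(a)$ you verify the strongly Morse property and invoke Theorem \ref{thm:morse}, whereas the paper never uses Theorem \ref{thm:morse} for this application: in Lemma \ref{lemma:empriskClassification} it argues directly that, on the good event, $\langle \nabla\hR_n(\btheta),\btheta-\btheta_0\rangle>0$ outside $\Ball^d(\btheta_0,\eps_0/2)$ (ruling out interior stationary points there and, via the KKT condition, boundary minimizers) while $\nabla^2\hR_n\succeq(\heslb/2)\id$ inside $\Ball^d(\btheta_0,\eps_0)$ (ruling out multiplicity). Both routes are valid; the paper's is more elementary and avoids the degree-theoretic machinery, while yours gets the index correspondence for free and would generalize to landscapes with several critical points. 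For part $(b)$ you run a single-phase contraction centered at $\hbtheta_n$ via a re-centered secant inequality; the paper (Lemma \ref{lemma:GDalgorithm}) instead concatenates two phases — contraction of $\|\btheta_k-\btheta_0\|_2$, centered at $\btheta_0$, outside the small ball, then standard strongly convex contraction toward $\hbtheta_n$ inside. Your single-phase version is cleaner and yields one uniform rate, and your direct derivation of $(c)$ from $\mu\|\hbtheta_n-\btheta_0\|_2^2\le \eps_n\|\hbtheta_n-\btheta_0\|_2$ is a slight simplification of the paper's Taylor-expansion bound $\|\hbtheta_n-\btheta_0\|_2\le 4\|\nabla\hR_n(\btheta_0)\|_2/\heslb$; both give the stated rate.

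Two small repairs are needed. First, in your strongly Morse verification you set $\eps:=\mu(2\rad/3)$ and then require $\eps/\mu<\kappa/(2L)$; as written these can conflict, so take instead $\eps=\tfrac12\min\{\mu\cdot 2\rad/3,\;\mu\kappa/(2L)\}$, noting any smaller $\eps$ still satisfies the boundary requirement $\|\nabla R\|_2>\eps$ on $\partial\Ball^d(\rad)$. Second, your ball-invariance claim in $(b)$ leaks: contraction toward $\hbtheta_n$ only gives $\|\btheta_k\|_2\le\|\hbtheta_n\|_2+\|\btheta_s-\hbtheta_n\|_2\le \rad+2\eps_n/\mu$, so iterates may exit $\Ball^d(\bzero,\rad)$ by $O(\eps_n)$, where none of your estimates are stated. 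The paper's choice to center the outer phase at $\btheta_0$ is not incidental — with $\|\btheta_0\|_2\le\rad/3$ and initialization in $\Ball^d(\btheta_0,2\rad/3)$ it keeps iterates in $\Ball^d(\bzero,\rad)$ exactly. Your version is rescued by running the landscape and uniform-convergence estimates on a slightly larger ball (the model assumptions hold for any fixed radius), or by adopting the paper's centering for the iterates outside the strong-convexity region.
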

The proof of this theorem can be found in Appendix \ref{app:Class_High}, and  is based on the following two-step strategy. First,
we study the population risk $R(\btheta)$, and establish its qualitative properties using  analysis.
Second, we use our uniform convergence result (Theorem \ref{thm:uniformconvergence1}) to prove that the same properties carry over to the sample risk $\hR_n(\btheta)$.
Figure \ref{fig:risk_bc}  presents a small numerical example that illustrates how the qualitative features of the population risk apply to the empirical risk as well.

A few remarks are in order.
First of all, the convergence rate of gradient descent (at point $(b)$) is 
\emph{independent of the dimension $d$ and number of samples $n$}.
In other words, $O(\log(1/\eps))$ iterations are sufficient to converge within distance $\eps$ from the global minimizer.
Classical theory of empirical risk minimization only concerns the statistical properties of the optimum, but does not 
provide efficient algorithms.

Next, note that our condition on the sample size $n$ is nearly optimal. Indeed, it is information-theoretically 
impossible to estimate $\btheta_0$  from less than $n<d$ binary samples.
Finally, the convergence rate at point $(c)$ also nearly matches the optimal (parametric) rate $\sqrt{d/n}$.

\subsection{Binary linear classification: Very high-dimensional regime}

As in the previous section, we are given $n$ pairs $\bz_1=(y_1,\bx_1), \dots, \bz_n=(y_n,\bx_n)$ with $y_i\in \{0,1\}$, $\bx_i\in\reals^d$,
and  $\prob(Y_i=1|\bX_i=\bx) = \sigma(\<\btheta_0,\bx\>)$. However $\btheta_0$ is assumed to be sparse, and the number of samples $n$
is allowed to be much smaller than the ambient dimension $d=p$. 
We adopt again the non-linear square loss (\ref{eqn:erm}), but now use a $\ell_2$-constrained $\ell_1$-regularized risk minimization,
as per Eq.~(\ref{eq:RiskConstrained}), which we rewrite here explicitly for the reader's ease
\begin{equation}
\begin{aligned}\label{eq:SparseClass_1}
\mbox{\rm minimize}\;\;\; & \frac{1}{n}\sum_{i=1}^{n} \Big(y_i - \sigma(\<\btheta,\bx_i\>)\Big)^2 + \lambda_n\|\btheta\|_1 \, ,\\
\mbox{\rm subject to}\;\;\; & \|\btheta\|_2\le \rad\, . 
\end{aligned}
\end{equation}

The very high-dimensional regime $d\gg n$ is of interest in many contexts.
In machine learning, the number of parameters $p$ can increase when a large number of additional features are added to the model
(for instance, nonlinear functions of an original set of features). In signal processing, $\btheta_0$ represents an unknown signal, of which we measure
noisy random linear projections $\<\bx_i,\btheta_0\>$, $i\in[n]$, quantized to \emph{one single bit}. This 
scenario is relevant to group testing \cite{atia2012boolean} 
and analog-to-digital conversion \cite{laska2011trust,laska2012regime},  and has been studied under the name of
`one-bit compressed sensing'; see \cite{plan2013one} and references therein.

In the very high-dimensional regime we need additional assumptions on the distribution of $\bX$ as well as the activation function $\sigma$.
\begin{assumption}[Fast-decaying activation]\label{ass:derivdecay}
	The activation function $\sigma$ satisfy $\sup_{t\in\R}\{\vert\sigma'(t)t\vert, \vert \sigma''(t)t\vert\} \le \derivdecay$ for some absolute constant $\derivdecay$.
\end{assumption}
\begin{assumption}[Continuous and bounded features]\label{ass:continuous}
	The feature vector $\bX$ has a density $p(\,\cdot\,)$ in $\R^d$, that is, $\P(\bX\in A) =\int_{A} p(\bx)\,\de\bx$ for all Borel sets $A\subseteq\R^d$. In addition, the feature vector is bounded: $\Vert \bX \Vert_\infty \leq M \tau$, and $\vert \langle \bX, \btheta_0 / \Vert \btheta_0 \Vert_2 \rangle \vert \leq M \tau$ almost surely. Here $M$ is a dimensionless constant greater than $1$. 
\end{assumption}

\begin{remark}
Assumption \ref{ass:derivdecay} holds popular examples of activation functions, such as the logistic $\sigma_L(z) = (1+e^{-z})^{-1}$ or probit $\sigma_{P}(z) = \Phi(z)$. 
For unbounded sub-Gaussian feature vectors, the next theorem can be supplemented by a truncation argument at level 
$M = C \sqrt{\log(nd)}$. Hence, the conclusions of this theorem hold,  with an additional $\log(nd)$ factor. 
\end{remark}

In the statement of the following theorem, for convenience, we will also assume $n \leq d^{100}$. This is a technical assumption so that we can bound $\log(nd) \leq 101 \log(d)$. And since we are considering the very high dimensional regime, it is not meaningful to discuss $n > d^{100}$.

\begin{theorem}\label{thm:SparseClass}
Under Assumptions \ref{ass:classification}, \ref{ass:derivdecay} and \ref{ass:continuous}, further assume $\|\btheta_0\|_0\le s_0$, $\|\btheta_0\|_2\le \rad/2$, and $n\leq d^{100}$. Then there exist constants $\Cn$, $\Cl$, $\Cs$, and $\eps_0$ depending on $(\actlip,\derivdecay,\rad,\tau^2, \covlb, \delta)$ and the activation function $\sigma(\cdot)$, but independent of $n$, $d$, $s_0$, and $M$, such that as $n\ge \Cn \, s_0\, \log d$ and $\lambda_n \geq \Cl M \sqrt{(\log d)/n}$, the following hold with probability at least $1-\delta$:
\begin{enumerate}
\item[$(a)$] Any stationary point of problem (\ref{eq:SparseClass_1}) is in $\Ball_2^d(\btheta_0, \Cs \sqrt{(M^2 s_0\log d)/n + s_0 \lambda_n^2})$.
\item[$(b)$]  As long as $n$ is large enough such that $n\ge \Cn \, s_0\, \log^2 d$ and $\Cs \sqrt{(M^2 s_0\log d)/n + s_0 \lambda_n^2} \leq \eps_0$, the problem has a unique local minimizer $\hbtheta_n$ which is also the global minimizer.
\end{enumerate}
\end{theorem}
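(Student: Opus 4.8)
The plan is to combine a population-level analysis of the classification risk $R(\btheta)=\E[(Y-\sigma(\<\btheta,\bX\>))^2]$ with the restricted uniform convergence of Theorem~\ref{thm:uniformconvergence2}, through a restricted-strong-convexity (RSC) argument in the spirit of the Lasso. The two structural facts I would first isolate at the population level are: (i) a first-order RSC bound $\<\nabla R(\btheta),\btheta-\btheta_0\>\ge \mu\|\btheta-\btheta_0\|_2^2$, and (ii) a second-order RSC bound $\<\bv,\nabla^2 R(\btheta)\bv\>\ge \mu\|\bv\|_2^2$ valid for $\btheta\in\Ball_2^d(\btheta_0,\eps_0)$ and all $\bv$. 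Writing $R(\btheta)=\mathrm{const}+\E[(\sigma(\<\btheta,\bX\>)-\sigma(\<\btheta_0,\bX\>))^2]$ and applying the mean value theorem, the directional derivative equals $2\E[\sigma'(c)\,\sigma'(\<\btheta,\bX\>)\<\bX,\btheta-\btheta_0\>^2]$; I would lower bound this by restricting to the event $A=\{|\<\btheta_0,\bX\>|\le K,\ |\<\bX,\btheta-\btheta_0\>|\le\eta\}$, on which both factors $\sigma'(\cdot)$ are bounded below by positivity and continuity of $\sigma'$, and then invoke $\E[\<\bX,\bv\>^2]\ge\covlb\tau^2\|\bv\|_2^2$ together with a sub-Gaussian tail estimate to obtain $\E[\<\bX,\bv\>^2\mathbf 1_A]\ge(\covlb\tau^2/2)\|\bv\|_2^2$. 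The same truncation handles the Hessian, whose curvature-correction term $(\sigma(\<\btheta,\bX\>)-\sigma(\<\btheta_0,\bX\>))\sigma''(\<\btheta,\bX\>)$ is at most $\actlip^2\eta$ in absolute value on $A$; this is exactly why $\eps_0$ must be a small dimension-free constant, so that $|\<\bX,\btheta-\btheta_0\>|\le\eta$ holds with overwhelming probability.

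For part~(a), I would write the stationarity (KKT) condition for problem~(\ref{eq:SparseClass_1}) as $\nabla\hR_n(\hbtheta_n)+\lambda_n\hat{\boldsymbol s}+\nu\hbtheta_n=\bzero$ with $\hat{\boldsymbol s}\in\partial\|\hbtheta_n\|_1$ and $\nu\ge0$, and take its inner product with $\hbtheta_n-\btheta_0$. Since $\|\btheta_0\|_2\le\rad/2$, the constraint term $\nu\<\hbtheta_n,\hbtheta_n-\btheta_0\>$ has a favorable nonnegative sign and can be dropped. Splitting the subgradient over $S=\mathrm{supp}(\btheta_0)$ and $S^c$ gives $-\<\hat{\boldsymbol s},\hbtheta_n-\btheta_0\>\le\|(\hbtheta_n-\btheta_0)_S\|_1-\|(\hbtheta_n-\btheta_0)_{S^c}\|_1$. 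Replacing the sample directional gradient by the population one at the cost of the error in Theorem~\ref{thm:uniformconvergence2}$(a)$, and using nonnegativity of $\<\nabla R(\hbtheta_n),\hbtheta_n-\btheta_0\>$, yields the cone condition $\|(\hbtheta_n-\btheta_0)_{S^c}\|_1\le 3\|(\hbtheta_n-\btheta_0)_S\|_1$ as soon as $\lambda_n$ dominates twice the directional-gradient error; this is where $\lambda_n\ge\Cl M\sqrt{(\log d)/n}$ enters, since Assumption~\ref{ass:BoundedGradient} gives $T_*\lesssim\actlip M\tau$. On the cone $\|\hbtheta_n-\btheta_0\|_1\le 4\sqrt{s_0}\|\hbtheta_n-\btheta_0\|_2$, and feeding this back into (i) produces $\|\hbtheta_n-\btheta_0\|_2\lesssim(\lambda_n+\text{error})\sqrt{s_0}/\mu$, which is the claimed radius $\Cs\sqrt{M^2 s_0\log d/n+s_0\lambda_n^2}$.

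For part~(b), I would show strict convexity of the objective along cone directions inside $\Ball_2^d(\btheta_0,\eps_0)$. Combining the population bound (ii) with the restricted Hessian convergence of Theorem~\ref{thm:uniformconvergence2}$(b)$—applied with a constant multiple of $s_0$ and extended from exactly-sparse to cone vectors via the standard sparsification estimate $|\<\bv,(\nabla^2\hR_n-\nabla^2 R)\bv\>|\lesssim \delta_n(\|\bv\|_2^2+\tfrac{1}{s_0}\|\bv\|_1^2)$—gives $\<\bv,\nabla^2\hR_n(\btheta)\bv\>\ge(\mu/2)\|\bv\|_2^2$ for all $\btheta\in\Ball_2^d(\btheta_0,\eps_0)$ and all $\bv$ in the cone, as soon as $n\gtrsim s_0\log^2 d$. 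Now suppose $\hbtheta_n^{(1)}\neq\hbtheta_n^{(2)}$ were two local minimizers; by part~(a) both lie in $\Ball_2^d(\btheta_0,\eps_0)$ and obey the cone condition, hence so does $\bv=\hbtheta_n^{(1)}-\hbtheta_n^{(2)}$ (with a larger cone constant), and the entire segment stays in the ball. Along it, the map $t\mapsto\hR_n(\btheta_t)+\lambda_n\|\btheta_t\|_1$ is the sum of the convex $\ell_1$ part and a smooth part whose second derivative $\<\bv,\nabla^2\hR_n(\btheta_t)\bv\>\ge(\mu/2)\|\bv\|_2^2>0$, so it is strictly convex on $[0,1]$ and cannot have two distinct local minima at its endpoints—a contradiction. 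Uniqueness of the stationary point, together with the fact (from part~(a) and $\eps_0<\rad/2$) that it is interior and hence the $\ell_2$ constraint is inactive, then forces it to be the global minimizer.

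The main obstacle I anticipate is establishing the second-order population RSC (ii) uniformly over the ball: the fast-decaying activation (Assumption~\ref{ass:derivdecay}) makes $\sigma'$ small and lets the curvature-correction term be negative, so the truncation event $A$ must be tuned so that the positive $\sigma'(\cdot)^2$ contribution survives, which is precisely what fixes $\eps_0$ as a dimension-free constant and dictates the smallness requirement in part~(b). A secondary technical point is the clean passage from the exactly-$s_0$-sparse Hessian control of Theorem~\ref{thm:uniformconvergence2}$(b)$ to cone directions, which costs the extra logarithmic factor responsible for the condition $n\gtrsim s_0\log^2 d$.
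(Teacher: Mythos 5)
Your part $(a)$ is essentially the paper's own argument (its Lemma \ref{lem:coneSparseClass}): take the inner product of the KKT condition with $\hbtheta_n-\btheta_0$, use the population lower bound $\langle \nabla R(\btheta),\btheta-\btheta_0\rangle \ge \gradlip\|\btheta-\btheta_0\|_2^2$ from Lemma \ref{lemma:popriskClassification}, control the directional-gradient error via Theorem \ref{thm:uniformconvergence2}$(a)$ with $T_*,L_*\lesssim M$, deduce the cone condition once $\lambda_n$ dominates twice the error, and convert $\|\hbtheta_n-\btheta_0\|_1\le 4\sqrt{s_0}\|\hbtheta_n-\btheta_0\|_2$ into the radius $r_s$. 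Part $(b)$, however, has a genuine gap at the step ``both obey the cone condition, hence so does $\bv=\hbtheta_n^{(1)}-\hbtheta_n^{(2)}$ (with a larger cone constant).'' This is false: if $\Delta^{(i)}=\hbtheta_n^{(i)}-\btheta_0$ satisfy $\|\Delta^{(i)}_{S^c}\|_1\le 3\|\Delta^{(i)}_S\|_1$, the supported parts can cancel in the difference, e.g.\ $\bv_S=\Delta^{(1)}_S-\Delta^{(2)}_S=\bzero$ while $\bv_{S^c}\neq\bzero$, so no cone constant works. The only bound you actually have is $\|\bv\|_1\le 8\sqrt{s_0}\,r_s$, which compares $\|\bv\|_1$ to the statistical radius rather than to $\|\bv\|_2$. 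Feeding this into your sparsification estimate gives $\langle\bv,\nabla^2\hR_n(\btheta)\bv\rangle \ge (\mu-\delta_n)\|\bv\|_2^2 - 64\,\delta_n r_s^2$ with $\delta_n\asymp\tau^2\sqrt{s_0\log(np)/n}$, an additive error that is \emph{not} dominated by $\mu\|\bv\|_2^2$ when the two putative minimizers are close ($\|\bv\|_2\ll r_s$); your segment-convexity argument then only shows stationary points are within $O(\sqrt{\delta_n}\,r_s)$ of each other, not that they coincide. This is precisely the obstruction that forces $n\gtrsim s_0^2$ in \cite{loh2015statistical}, which this theorem is explicitly designed to beat.

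The missing idea is the paper's Lemma \ref{lem:supportSparseClass}: with high probability \emph{every} stationary point $\hbtheta$ in $\Ball_2(\btheta_0,r_s)\cap\mathbb{C}$ is exactly sparse, with $|{\rm supp}(\hbtheta)|\le \Csp\, s_0\log d$. The proof uses the KKT \emph{equalities} $(\nabla\hR_n(\hbtheta))_j=\pm\lambda_n$ for $j\in{\rm supp}(\hbtheta)$: after choosing $\lambda_n\ge 2\|\nabla\hR_n(\btheta_0)\|_\infty$, squaring and summing gives $\lambda_n^2|{\rm supp}(\hbtheta)| \le 4\|(\nabla\hR_n(\hbtheta)-\nabla\hR_n(\btheta_0))_{{\rm supp}(\hbtheta)}\|_2^2$, and the right side is controlled by restricted eigenvalue bounds for the design on the cone, together with two auxiliary facts: an almost-sure bound $|{\rm supp}(\hbtheta)|\le n$, which exploits the continuous feature distribution of Assumption \ref{ass:continuous} (this is the real reason that assumption appears), and $\sup_{|S|\le n}\lambda_{\max}(\mbX P_S\mbX^{\sT}/n)\lesssim\log d$. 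Once all stationary points are exactly $O(s_0\log d)$-sparse, the difference of any two is exactly $O(s_0\log d)$-sparse, so Theorem \ref{thm:uniformconvergence2}$(b)$ applies at that inflated sparsity level --- this, not a cone-transfer principle, is the source of the $n\gtrsim s_0\log^2 d$ requirement --- and restricted strong convexity along the segment combined with monotonicity of $\partial\|\cdot\|_1$ (Lemma \ref{lem:restricted_hessian_sparseclass}) rules out two distinct stationary points. Without a support-size bound of this kind, your part $(b)$ does not go through.
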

As in the previous section, our proof makes a crucial use of the sparse uniform convergence result, Theorem \ref{thm:uniformconvergence2},
together with an analysis of the population risk.

\begin{remark}
Let us emphasize that Theorem \ref{thm:SparseClass} leaves open the existence of a fast algorithm to find the global optimizer
$\hbtheta_n$. However \cite[Theorem 3]{nesterov2013gradient} implies that, by running $k$ steps of 
projected gradient descent, we can find an estimate $\hbtheta_n(k)$ which has a subgradient of order $O(1/k)$. 
While we expect this sequence to converge to $\hbtheta_n$, we defer this question to future work.
\end{remark}

Theorem  \ref{thm:SparseClass} establishes a nearly optimal upper bound on the $\ell_2$ estimation
error $\|\hbtheta_n-\btheta_0\|_2$.  Indeed this error is within a logarithmic factor from the error achieved by an oracle estimator that
is given the exact support of $\btheta_0$.
For comparison, \cite{plan2013one,plan2013robust}  proves $\|\hbtheta^{\mbox{\tiny \rm LP}}_n-\btheta_0\|_2
\lesssim (s_0/n)^{1/4}(\log p/s_0)^{1/4}$ for a linear programming formulation, under the
more restrictive assumption of Gaussian feature vectors $\bx_i\sim\normal(\bzero,\id_{d\times d})$. This analysis was generalized in \cite{ai2014one}
to feature vectors with i.i.d. entries, although  with the same estimation error bound. The optimal rate 
$\|\hbtheta^{\mbox{\tiny \rm cvx}}_n-\btheta_0\|_2\lesssim (s_0/n)\log(p/s_0)$
was obtained only recently in  \cite{plan2014high}, again for standard Gaussian feature vectors. 

Let us finally emphasize that the estimator defined here uses a bounded loss function and is potentially more robust to outliers than  other approaches that use a convex loss (e.g. logistic loss).

\subsection{Robust regression: High-dimensional regime}

In robust regression we are given data $\bz_1=(y_1,\bx_1), \dots, \bz_n=(y_n,\bx_n)$ with $y_i\in \reals$, $\bx_i\in\reals^d$,
and we assume the linear model $y_i =  \<\btheta_0,\bx_i\> + \eps_i$, where the noise terms $\eps_i$ are i.i.d. 
with mean zero. Also in this case we have $p=d$. We use the loss (\ref{eqn:erm-robust}), which we copy here for the reader's convenience:
\begin{equation}
\begin{aligned}\label{eq:RobustDefinition}
\mbox{\rm minimize} \;\;\; & \frac{1}{n}\sum_{i=1}^{n} \rho\big(y_i - \<\btheta,\bx_i\>\big) \, , \\
\mbox{\rm subject to} \;\;\; & \|\btheta\|_2\le \rad\, .
\end{aligned}
\end{equation}
Classical choices for loss function $t\mapsto \rho(t)$ are the Huber loss \cite{huber1973robust} which is convex 
with $\rho_{\mbox{\rm\tiny Huber}}(t) = |t|-{\rm const.}$ for $t$ large enough, and Tukey's bisquare loss, which is bounded and defined as
\begin{align} 
\rho_{\mbox{\rm\tiny Tukey}}(t) = \begin{cases}
1-\big(1-(t/t_0)^2)^3\, \;\;& \mbox{ for $|t|\le t_0$,}\\
1  \;\;& \mbox{ for $|t|\ge t_0$.}
\end{cases}\label{eq:Tukey}
\end{align}
It is common to define the associated score function as $\psi(t)  =\rho'(t)$.

We next formulate our assumptions. 
\begin{assumption}[Robust regression]\label{ass:Robust}
\begin{enumerate}
\item[$(a)$]  The score function $z\mapsto \psi(z)$ is twice differentiable and odd in $z$
with $\psi(z)\ge 0$ for all $z\ge 0$, and has bounded zero, first, and second derivatives. Namely,  for some constant $\rholip>0$:
\begin{equation}
\max\Big\{ \|\psi\|_\infty, \|\psi'\|_\infty, \|\psi''\|_\infty \Big\} \le \rholip\, . \label{ass:score}
\end{equation}
\item[$(b)$] The noise $\eps$ has a symmetric distribution, i.e. is such that $\eps$ is distributed as $-\eps$.
Further, defining $g(z) \equiv \E_{\eps}\{\psi(z+\eps)\}$ we have $g(z)>0$ for all $z>0$, as well as $g'(0) > 0$. 
\item[$(c)$]
The feature vector $\bX$ has zero mean and is $\tau^2$-sub-Gaussian, that is
$\E[e^{\langle \blambda,\bX\rangle}] \le e^{\frac{\tau^2\|\blambda\|_2^2}{2}}$
for all $\blambda\in\R^d$.
\item[$(d)$] The feature vector  $\bX$ spans all directions in $\R^d$,
  that is, $\E[\bX\bX^{\sT}]\succeq \covlb \tau^2\id_{d\times d}$ 
for some $0 < \covlb < 1$.
\end{enumerate}
\end{assumption}

Note that the condition $g(z) \equiv \E_{\eps}\{\psi(z+\eps)\}>0$ for all $z>0$ and $g'(0) > 0$ are quite mild, and holds --for instance-- 
if the noise has a density that is strictly positive for all $\eps$, and decreasing for $\eps>0$. 
\begin{theorem}\label{thm:MainRobustRegression}
Under Assumption \ref{ass:Robust}, further assume $\|\btheta_0\|_2\le \rad/3$. Then 
there exist positive constants $C_1$, $C_2$ and $h_{\max}$ depending on parameters $(\rholip, \rad, \tau^2, \covlb, \delta)$, the loss function $\rho(\cdot)$, and the law of noise $\P_\eps$ but independent of $n$ and $d$, such that as $n\ge C_1 d\log d$, the robust regression estimator satisfies the following with probability at least $1-\delta$:
\begin{enumerate}
\item[$(a)$] The empirical risk function $\bw\mapsto \hR_n(\btheta)$ has a unique local minimizer 
in $\Ball^d(\rad)$, that is the global minimizer $\hbtheta_n$.
\item[$(b)$] Gradient descent with fixed  step size $h_k=h \leq h_{\max}$ converges exponentially fast to the global minimizer, for any initialization $\btheta_s \in \Ball^d(\btheta_0, 2\rad/3)$:
$\|\hbtheta_n (k)-\hbtheta_n\|_2\le C_1\|\btheta_s-\hbtheta_n\|_2\, (1- h/C_1)^k$.  
\item[$(c)$] We have $\|\hbtheta_n-\btheta_0\|_2\le C_2 \sqrt{(d\log n)/n}$.
\end{enumerate}
\end{theorem}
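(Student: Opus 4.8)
The plan is to follow the same two-step strategy used for Theorem~\ref{thm:MainClassification}: first establish the relevant qualitative and quantitative properties of the population risk $R(\btheta)=\E[\rho(Y-\langle\btheta,\bX\rangle)]$ by direct analysis, and then transport them to the empirical risk $\hR_n$ via the uniform convergence results of Theorems~\ref{thm:uniformconvergence1} and~\ref{thm:morse}. The first (mostly bookkeeping) task is to verify that the robust-regression loss satisfies Assumptions~\ref{ass:GradientSub}--\ref{ass:BoundedHessian}. Writing $\nabla\ell(\btheta;\bz)=-\psi(y-\langle\btheta,\bx\rangle)\,\bx$ and $\nabla^2\ell(\btheta;\bz)=\psi'(y-\langle\btheta,\bx\rangle)\,\bx\bx^{\sT}$, the bounds $\|\psi\|_\infty,\|\psi'\|_\infty,\|\psi''\|_\infty\le\rholip$ from Assumption~\ref{ass:Robust}(a) turn the gradient into a bounded scalar times the sub-Gaussian vector $\bX$ (sub-Gaussian gradient), the quadratic form $\langle\bv,\nabla^2\ell\,\bv\rangle=\psi'(\cdots)\langle\bv,\bX\rangle^2$ into a bounded multiple of a sub-exponential variable (sub-exponential Hessian), and give $J(\bz)\le\rholip\|\bx\|_2^3$, so that $\E[J(\bX)]$ is finite and at most polynomial in $d$, as required.

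Next I would analyze the population risk. Setting $\bw=\btheta-\btheta_0$ and $u=\langle\bw,\bX\rangle$, and using that $\psi$ is odd and $\eps$ is symmetric and independent of $\bX$, one gets $\nabla R(\btheta_0)=-\E[\psi(\eps)\bX]=\bzero$, so $\btheta_0$ is stationary. The key computation is the directional derivative along $\bw$:
\begin{align}
\langle\nabla R(\btheta),\btheta-\btheta_0\rangle
=\E\big[g(\langle\bw,\bX\rangle)\,\langle\bw,\bX\rangle\big],
\qquad g(z)=\E_\eps[\psi(z+\eps)].\nonumber
\end{align}
Since $g$ is odd and $g(z)>0$ for $z>0$ (Assumption~\ref{ass:Robust}(b)), we have $g(u)u\ge0$, and the spanning condition $\E[\bX\bX^{\sT}]\succeq\covlb\tau^2\id$ forces $u\ne0$ with positive probability whenever $\bw\ne\bzero$; hence the directional derivative is strictly positive off $\btheta_0$, which identifies $\btheta_0$ as the unique stationary point. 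Quantitatively, $g'(0)>0$ gives $g(u)u\ge\tfrac12 g'(0)u^2$ for small $u$, so near the origin $\langle\nabla R(\btheta),\btheta-\btheta_0\rangle\gtrsim g'(0)\covlb\tau^2\|\bw\|_2^2$, and a compactness argument on the shell $\{r_0\le\|\bw\|_2\le 4\rad/3\}$ upgrades this to one-point strong convexity $\langle\nabla R(\btheta),\btheta-\btheta_0\rangle\ge\mu\|\btheta-\btheta_0\|_2^2$ on the whole ball. For the Hessian, $\nabla^2R(\btheta_0)=\E[\psi'(\eps)\bX\bX^{\sT}]=g'(0)\,\E[\bX\bX^{\sT}]\succeq g'(0)\covlb\tau^2\id$ is positive definite, and the Lipschitz bound $\|\nabla^2R(\btheta)-\nabla^2R(\btheta_0)\|_{\op}\le\rholip\sup_{\|\bv\|_2=1}\E[|\langle\bw,\bX\rangle|\,\langle\bv,\bX\rangle^2]\le C\rholip\tau^3\|\bw\|_2$ (by sub-Gaussian moment estimates, crucially dimension-free) shows the Hessian stays $\succeq\tfrac12 g'(0)\covlb\tau^2\id$ on a ball $\|\bw\|_2\le r_0$. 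Combining the gradient lower bound off this ball with the Hessian lower bound inside it yields that $R$ is $(\eps,\eta)$-strongly Morse with $d$-independent constants, and the constraint $\|\btheta_0\|_2\le\rad/3$ ensures $\|\nabla R\|_2>\eps$ on the boundary sphere.

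With the population side in hand, Theorem~\ref{thm:morse} applies once $n\ge C_1 d\log d$: the empirical risk is $(\eps/2,\eta/2)$-strongly Morse, its critical points are in one-to-one correspondence with those of $R$, and indices are preserved. Since $R$ has the single critical point $\btheta_0$, a local minimum, $\hR_n$ has a unique critical point $\hbtheta_n$, which is its unique local (hence global) minimizer, proving $(a)$; part~$(c)$ is the correspondence bound $\|\hbtheta_n-\btheta_0\|_2\le(2\tau/\eta)\sqrt{Cd\log n/n}$ from Theorem~\ref{thm:morse}$(b)$. For $(b)$, I would transport the one-point strong convexity to $\hR_n$ using the uniform gradient convergence of Theorem~\ref{thm:uniformconvergence1}, giving $\langle\nabla\hR_n(\btheta),\btheta-\hbtheta_n\rangle\ge(\mu/2)\|\btheta-\hbtheta_n\|_2^2$ throughout the relevant region, while the Hessian upper bound supplies gradient-Lipschitz smoothness; the standard contraction estimate for gradient descent then yields the geometric rate $\|\hbtheta_n(k)-\hbtheta_n\|_2\le C_1\|\btheta_s-\hbtheta_n\|_2(1-h/C_1)^k$ for step sizes $h\le h_{\max}$ and initializations $\btheta_s\in\Ball^d(\btheta_0,2\rad/3)$, after checking the iterates remain in the ball.

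I expect the main obstacle to be the quantitative population analysis rather than the transfer: specifically, converting the qualitative positivity $g(u)u>0$ into uniform, dimension-free lower bounds---one-point strong convexity of $\nabla R$ away from $\btheta_0$ and strict positive-definiteness of $\nabla^2 R$ in a neighborhood---whose constants must not degrade with $d$ (this is what the sub-Gaussian moment estimates secure) and must be compatible so that the near-stationary set $\{\|\nabla R\|_2\le\eps\}$ lies inside the region of positive-definite Hessian. Because the non-convexity of $\rho$ allows $\psi'=\rho''$ to change sign, the Hessian of $R$ genuinely has negative eigenvalues far from $\btheta_0$, so this containment---and not mere convexity---is the crux that makes the strong Morse machinery applicable.
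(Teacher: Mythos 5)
Your overall architecture and your population-risk analysis coincide with the paper's: the verification of Assumptions \ref{ass:GradientSub}--\ref{ass:BoundedHessian} from $\nabla\ell(\btheta;\bz)=-\psi(y-\langle\btheta,\bx\rangle)\bx$, the directional bound via $g(z)=\E_\eps[\psi(z+\eps)]$, the identity $\nabla^2R(\btheta_0)=g'(0)\,\E[\bX\bX^{\sT}]\succeq g'(0)\covlb\tau^2\id$, and the Lipschitz Hessian perturbation are exactly Lemma \ref{lemma:popriskRobustRegression}. Where you genuinely diverge is the transfer step: the paper never invokes Theorem \ref{thm:morse} here. It replays the classification argument (Lemmas \ref{lemma:empriskClassification} and \ref{lemma:GDalgorithm}): uniform convergence of gradient and Hessian makes $\hR_n$ strongly convex on $\Ball^d(\btheta_0,\eps_0)$ and keeps $\langle \nabla\hR_n(\btheta),\btheta-\btheta_0\rangle>0$ outside $\Ball^d(\btheta_0,\eps_0/2)$; this kills interior stationary points away from $\btheta_0$, explicitly rules out constrained local minima on $\partial\Ball^d(\rad)$ (at such a point the KKT condition forces $\nabla\hR_n(\btheta)$ to be a nonpositive multiple of $\btheta$, whence $\langle\nabla\hR_n(\btheta),\btheta-\btheta_0\rangle\le 0$, a contradiction), and yields part $(c)$ from the optimality inequality $\hR_n(\hbtheta_n)\le\hR_n(\btheta_0)$, giving $\|\hbtheta_n-\btheta_0\|_2\le 4\|\nabla\hR_n(\btheta_0)\|_2/\heslb$. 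Your Morse route is viable and buys the interior critical-point correspondence for free, but it costs you extra verifications the paper's route avoids: a dimension-free bound on $L=\sup_{\btheta}\|\nabla^3R(\btheta)\|_{\op}$ for Theorem \ref{thm:morse}$(b)$ (it does hold, since $|\langle\nabla^3R(\btheta),\bu^{\otimes 3}\rangle|\le\rholip\,\E|\langle\bu,\bX\rangle|^3\lesssim\rholip\tau^3$, but you never state it), and --- more importantly --- Theorem \ref{thm:morse} tracks only \emph{interior} critical points, so it does not by itself exclude local minimizers of the constrained problem sitting on the sphere $\|\btheta\|_2=\rad$, where the gradient need not vanish. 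You must add the boundary-exclusion step above, which fortunately follows from the same transferred directional-gradient inequality you already have.

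Two further soft spots in your population argument. First, a ``compactness argument on the shell'' is the wrong mechanism for one-point strong convexity: an infimum over a shell in $\reals^d$ produces a constant that can depend on $d$ and on the law of $\bX$, whereas the theorem requires constants depending only on $(\rho,\P_\eps,\rholip,\rad,\tau^2,\covlb)$. The paper instead scalarizes and truncates: it sets $L(s)=\inf_{0<z\le s}g(z)/z>0$ and conditions on the event $A_s=\{\|\bU\bX\|_2\le 2s/(3\rad)\}$, where $\bU$ projects onto the two-dimensional span of $\{\btheta,\btheta_0\}$, so that $\P(A_s^c)\le 4e^{-s^2/(9\rad^2\tau^2)}$ is dimension-free; combining with $\E[\langle\btheta-\btheta_0,\bX\rangle^2]\ge\covlb\tau^2\|\btheta-\btheta_0\|_2^2$ and fourth-moment Cauchy--Schwarz yields $\langle\nabla R(\btheta),\btheta-\btheta_0\rangle\ge\gradlip\|\btheta-\btheta_0\|_2^2$ on the whole ball with distribution-free $\gradlip$. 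Second, your claimed inequality $\langle\nabla\hR_n(\btheta),\btheta-\hbtheta_n\rangle\ge(\mu/2)\|\btheta-\hbtheta_n\|_2^2$ ``throughout the relevant region'' is not what uniform convergence delivers: the transferred bound is centered at $\btheta_0$ and is only useful outside a small ball around $\btheta_0$. The paper's convergence proof is accordingly two-phase: outside $\Ball^d(\btheta_0,\eps_0/2)$ it contracts $\|\btheta_n(k)-\btheta_0\|_2$ using the $\btheta_0$-centered bound, absorbing the $h^2\|\nabla\hR_n\|_2^2$ term via $h\le\gradlip\eps_0^2/(8\gradub^2)$, and only inside $\Ball^d(\btheta_0,\eps_0)$ does it switch to plain strong convexity around $\hbtheta_n$. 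Your version can be patched (the cross term $\langle\nabla\hR_n(\btheta),\btheta_0-\hbtheta_n\rangle$ is $O(\sqrt{(d\log n)/n})$ and can be absorbed for $n$ large), but as written the uniform $\hbtheta_n$-centered inequality is unjustified.
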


\subsection{Robust regression: Very high-dimensional regime}
\label{sec:Robust_VHD}

As in the previous section, we are given $n$ pairs $\bz_1=(y_1,\bx_1),\dots, \bz_n=(y_n,\bx_n)$ with $y_i\in \reals$, $\bx_i\in\reals^d$,
and we assume the linear model $y_i =  \<\btheta_0,\bx_i\> + \eps_i$, where the noise terms $\eps_i$ are i.i.d. 
with mean zero. However $\btheta_0$ is assumed to be sparse, while the number of samples $n$
is much smaller than the ambient dimension $d=p$. 
We adopt again the loss (\ref{eqn:erm-robust}), but now use a $\ell_2$-constrained $\ell_1$-regularized risk minimization,
as per Eq.~(\ref{eq:RiskConstrained}), which we rewrite here explicitly for the reader's ease
\begin{equation}
\begin{aligned}\label{eq:SparseClass_rho}
\mbox{\rm minimize}\;\;\; & \frac{1}{n}\sum_{i=1}^{n} \rho\big(y_i - \<\btheta,\bx_i\>\big) + \lambda_n\|\btheta\|_1 \, ,\\
\mbox{\rm subject to}\;\;\; & \|\btheta\|_2\le \rad\, . 
\end{aligned}
\end{equation}

Like the case of very high dimensional binary classification, we also need continuous and bounded feature assumptions, i.e. Assumption \ref{ass:continuous}, and need a fast decaying assumption on $\psi = \rho'$.
\begin{assumption}[Fast-decaying score function]\label{ass:derivdecayrho}
The score function $\psi$ satisfies $\sup_{t\in\R}\{\vert \psi(t)t \vert\} \le \derivdecayrho$ for some absolute constant $\derivdecayrho$.
\end{assumption}

\begin{theorem}\label{thm:SparseRobust}
Under Assumptions \ref{ass:classification},  \ref{ass:continuous} and \ref{ass:derivdecayrho}, further assume $\|\btheta_0\|_0\le s_0$, $\|\btheta_0\|_2\le \rad/2$, and $n\leq d^{100}$. Then there exist constants $\Cn$, $\Cl$, $\Cs$, and $\eps_0$ depending on $(\rholip,\derivdecayrho,\rad,\tau^2, \covlb, \delta)$, the loss function $\rho$, and the law of noise $\P_\eps$, but independent of $n$, $d$, $s_0$ and $M$, such that as $n\ge \Cn \, s_0\, \log d$ and $\lambda_n \geq \Cl M \sqrt{(\log d)/n}$, the following hold with probability at least $1-\delta$:
\begin{enumerate}
\item[$(a)$] Any stationary point of problem (\ref{eq:SparseClass_rho}) is in $\Ball_2^d(\btheta_0, \Cs \sqrt{(M^2 s_0\log d)/n + s_0 \lambda_n^2})$.
\item[$(b)$]  As long as $n$ is large enough such that $n\ge \Cn \, s_0\, \log^2 d$ and $\Cs \sqrt{(M^2 s_0\log d)/n + s_0 \lambda_n^2} \leq \eps_0$, the problem has a unique local minimizer $\hbtheta_n$ which is also the global minimizer.
\end{enumerate}
\end{theorem}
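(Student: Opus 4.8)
The plan is to follow the same two-step template used for the high-dimensional theorems, but now anchored on the very-high-dimensional uniform convergence result, Theorem \ref{thm:uniformconvergence2}. First I would cast the robust-regression model into the form required by that theorem. Writing $\bz=(y,\bx)$, $\ell(\btheta;\bz)=\rho(y-\langle\btheta,\bx\rangle)$ and $y=\langle\btheta_0,\bx\rangle+\eps$, one computes $\nabla\ell(\btheta;\bz)=-\psi(\eps-\langle\btheta-\btheta_0,\bx\rangle)\bx$ and $\nabla^2\ell(\btheta;\bz)=\psi'(\eps-\langle\btheta-\btheta_0,\bx\rangle)\bx\bx^{\sT}$. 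This exhibits the generalized-gradient-linearity structure of Assumption \ref{ass:GGL} with $\bpsi(\bz)=\bx$ and $g(t;\bz)=-t\,\psi(\eps-t)$, where $g(0;\bz)=0$ and $\bpsi(\bZ)=\bX$ is mean-zero and sub-Gaussian; the $L_*$-Lipschitz property of $g$ in $t$ (with $L_*$ allowed to depend on $\P_\eps$, consistent with the theorem's constants) follows from the bounded derivatives $\|\psi\|_\infty,\|\psi'\|_\infty\le\rholip$ of Assumption \ref{ass:Robust} together with the fast-decay Assumption \ref{ass:derivdecayrho} and the bounded features of Assumption \ref{ass:continuous}; the latter two also give the $\ell_\infty$ gradient bound $T_*$ of Assumption \ref{ass:BoundedGradient} and the Hessian controls of Assumptions \ref{ass:Hessianub}--\ref{ass:BoundedHessian}. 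With the hypotheses verified, Theorem \ref{thm:uniformconvergence2} supplies a directional-gradient bound $|\langle\nabla\hR_n(\btheta)-\nabla R(\btheta),\btheta-\btheta_0\rangle|\le\Delta_n\|\btheta-\btheta_0\|_1$ with $\Delta_n\lesssim (T_*+L_*\tau)\sqrt{(\log(np))/n}\lesssim M\sqrt{(\log d)/n}$ (using $n\le d^{100}$), and a restricted-Hessian bound $|\langle\bv,(\nabla^2\hR_n(\btheta)-\nabla^2 R(\btheta))\bv\rangle|\le\delta_{s_0}\|\bv\|_2^2$ with $\delta_{s_0}=\tau^2\sqrt{(C_2 s_0\log(np))/n}$, uniformly over $s_0$-sparse $\bv$ and $\btheta\in\Ball_2^d(\rad)$.

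Next I would analyze the population risk $R(\btheta)=\E[\rho(\eps-\langle\btheta-\btheta_0,\bX\rangle)]$. Setting $\bw=\btheta-\btheta_0$ and $g_0(z)=\E_\eps[\psi(z+\eps)]$, symmetry of $\eps$ and oddness of $\psi$ make $g_0$ odd with $g_0(z)>0$ for $z>0$ and $g_0'(0)=\E[\psi'(\eps)]>0$ (Assumption \ref{ass:Robust}(b)). Conditioning on $\bX$ then gives $\langle\nabla R(\btheta),\bw\rangle=\E_\bX[\langle\bw,\bX\rangle\,g_0(\langle\bw,\bX\rangle)]\ge0$, strictly positive for $\bw\neq\bzero$ because $\bX$ spans all directions (Assumption \ref{ass:Robust}(d)). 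Using $z\,g_0(z)\gtrsim z^2$ for small $z$ and $\E[\bX\bX^{\sT}]\succeq\covlb\tau^2\id$, this upgrades to a restricted-strong-convexity-type lower bound $\langle\nabla R(\btheta),\bw\rangle\ge\kappa\|\bw\|_2^2$ for $\|\bw\|_2$ up to a constant (with a separate uniform positive floor at larger $\|\bw\|_2$, reflecting that a bounded loss grows only sub-quadratically). Moreover $\nabla^2 R(\btheta)=\E[\psi'(\eps-\langle\bw,\bX\rangle)\bX\bX^{\sT}]$ equals $g_0'(0)\E[\bX\bX^{\sT}]\succeq g_0'(0)\covlb\tau^2\id$ at $\btheta_0$, and by continuity (bounded $\psi'$, sub-Gaussian $\bX$) stays $\succeq(\kappa/2)\id$ throughout $\Ball_2^d(\btheta_0,\eps_0)$; this is precisely where the radius $\eps_0$ of part $(b)$ enters.

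For part $(a)$ I would start from the stationarity condition for (\ref{eq:SparseClass_rho}): at a stationary point $\hbtheta$ there are $\bg\in\partial\|\hbtheta\|_1$ with $\|\bg\|_\infty\le1$ and a multiplier $\mu\ge0$ for the $\ell_2$ constraint with $\nabla\hR_n(\hbtheta)+\lambda_n\bg+\mu\hbtheta=\bzero$. Taking the inner product with $\bw=\hbtheta-\btheta_0$ and using that $\langle\hbtheta,\bw\rangle\ge\rad^2/2>0$ when the constraint is active (since $\|\btheta_0\|_2\le\rad/2$), the multiplier term is nonnegative, so $\langle\nabla\hR_n(\hbtheta),\bw\rangle\le-\lambda_n\langle\bg,\bw\rangle\le\lambda_n(\|\btheta_0\|_1-\|\hbtheta\|_1)$. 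Splitting $\nabla\hR_n=\nabla R+(\nabla\hR_n-\nabla R)$, inserting the population lower bound $\kappa\|\bw\|_2^2$ and the fluctuation bound $\Delta_n\|\bw\|_1$, and running the standard cone decomposition on $S=\mathrm{supp}(\btheta_0)$ with the choice $\lambda_n\ge2\Delta_n$ (guaranteed by $\lambda_n\ge\Cl M\sqrt{(\log d)/n}$) forces $\|\bw_{S^c}\|_1\lesssim\|\bw_S\|_1$ and hence $\kappa\|\bw\|_2^2\lesssim(\lambda_n+\Delta_n)\sqrt{s_0}\,\|\bw\|_2$. Solving yields $\|\hbtheta-\btheta_0\|_2\le\Cs\sqrt{M^2 s_0(\log d)/n+s_0\lambda_n^2}$.

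Finally, for part $(b)$, having confined all stationary points to the $\ell_2$-ball of part $(a)$ (radius $\le\eps_0$) and, by the cone condition, to the approximately-sparse set $\{\|\btheta-\btheta_0\|_1\lesssim\sqrt{s_0}\,\|\btheta-\btheta_0\|_2\}$, I would extend the restricted-Hessian deviation of Theorem \ref{thm:uniformconvergence2}(b) from exactly-sparse to approximately-sparse directions via the standard inequality $|\langle\bv,(\nabla^2\hR_n-\nabla^2 R)\bv\rangle|\lesssim\delta_{s_0}(\|\bv\|_2^2+\|\bv\|_1^2/s_0)$, where $\delta_{s_0}\to0$ once $n\ge\Cn s_0\log^2 d$. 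Combined with $\nabla^2 R\succeq(\kappa/2)\id$ on $\Ball_2^d(\btheta_0,\eps_0)$, this makes $\hR_n$ strongly convex in the relevant directions throughout the localized region; since $\lambda_n\|\cdot\|_1$ is convex, the whole objective is strongly convex there and cannot possess two distinct local minima, so the unique stationary point is the global minimizer. The main obstacle I anticipate is the population analysis for a bounded, non-convex $\rho$: establishing the \emph{dimension-free} quantitative directional-gradient lower bound $\langle\nabla R(\btheta),\bw\rangle\ge\kappa\|\bw\|_2^2$ in a form that survives the cone reduction, and the neighborhood strong convexity of $\nabla^2 R$, are delicate precisely because $\psi$ is bounded and $z\,g_0(z)$ is only asymptotically linear; cleanly matching this to the approximately-sparse Hessian deviation (and verifying the $L_*$-Lipschitz condition for the noise-dependent $g$) is where the real work lies, while the remaining $\ell_1$ cone argument is by now standard.
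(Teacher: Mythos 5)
Your overall template and your part $(a)$ do track the paper's route: the paper proves Theorem \ref{thm:SparseClass} through Lemmas \ref{lem:assSparseClass}--\ref{lem:restricted_hessian_sparseclass} and declares the robust-regression proof identical, and your steps — casting $\nabla\ell(\btheta;\bz)=-\psi(\eps-\langle\btheta-\btheta_0,\bx\rangle)\bx$ into the generalized-gradient-linearity form, invoking Theorem \ref{thm:uniformconvergence2}.$(a)$, proving the population correlation bound $\langle\nabla R(\btheta),\btheta-\btheta_0\rangle\ge\gradlip\|\btheta-\btheta_0\|_2^2$ (as in Lemma \ref{lemma:popriskRobustRegression}), and running the cone decomposition with $\lambda_n\gtrsim M\sqrt{(\log d)/n}$ — mirror Lemma \ref{lem:coneSparseClass}. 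One caveat in your setup: for $g(t;\bz)=-t\,\psi(\eps-t)$ you have $g'(t)=-\psi(\eps-t)+t\,\psi'(\eps-t)$, and the cross term $t\,\psi'(\eps-t)$ is not controlled by $\|\psi\|_\infty,\|\psi'\|_\infty\le\rholip$ and $\sup_t|\psi(t)t|\le\derivdecayrho$ alone (a decay condition on $\psi'$, the analogue of $|\sigma''(t)t|\le\derivdecay$ in Assumption \ref{ass:derivdecay}, is needed), so your claim that the Lipschitz property of $g$ simply ``follows'' is too quick, though this is a repairable bookkeeping point.

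The genuine gap is in part $(b)$. Your uniqueness argument needs restricted strong convexity of $\hR_n$ along the segment joining two putative stationary points $\btheta_1,\btheta_2$, i.e.\ in the direction $\bv=\btheta_2-\btheta_1$. But the approximate-sparsity control you established is relative to $\btheta_0$: it bounds $\|\btheta_i-\btheta_0\|_1\lesssim\sqrt{s_0}\,\|\btheta_i-\btheta_0\|_2$ and says nothing about $\|\bv\|_1/\|\bv\|_2$. Two stationary points can be arbitrarily close to one another while each sits at distance of order $r_s$ from $\btheta_0$, so $\|\bv\|_1^2/\|\bv\|_2^2$ is unbounded and the tolerance term $\delta_{s_0}\|\bv\|_1^2/s_0$ in your Loh--Wainwright-type extension cannot be absorbed into $\kappa\|\bv\|_2^2$; combining that inequality with monotonicity of $\partial\|\cdot\|_1$ yields only $\|\bv\|_2^2\lesssim(\log d/n)\|\bv\|_1^2$, which does not force $\bv=\bzero$. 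This is precisely the obstruction that makes \cite{loh2015statistical} require $n\gtrsim s_0^2$ — the bottleneck the paper explicitly claims to beat. The paper's mechanism, absent from your proposal, is Lemma \ref{lem:supportSparseClass}: every stationary point is \emph{exactly} sparse. The KKT condition forces $|(\nabla\hR_n(\hbtheta))_j|=\lambda_n$ for $j$ in the support; comparing with $\|\nabla\hR_n(\btheta_0)\|_\infty\le\lambda_n/2$, a restricted-eigenvalue bound for the design, and $\sup_{|S|\le n}\lambda_{\max}(n^{-1}\mbX P_S\mbX^{\sT})\lesssim\log d$ yields $|\mathrm{supp}(\hbtheta)|\lesssim s_0\log d$; moreover, the a priori bound $|\mathrm{supp}(\hbtheta)|\le n$ needed to launch this argument uses the fact that $\bX$ has a density — the part of Assumption \ref{ass:continuous} your proof never invokes, a telltale sign the key idea is missing. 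With exact sparsity in hand, $\bv$ is $O(s_0\log d)$-sparse, Theorem \ref{thm:uniformconvergence2}.$(b)$ applies at sparsity level $s_0\log d$ — which is exactly where the requirement $n\gtrsim s_0\log^2 d$, left unexplained in your sketch, comes from — and uniqueness follows along the segment as in Lemma \ref{lem:restricted_hessian_sparseclass}.
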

The proof of this theorem is almost the same as the proof of Theorem \ref{thm:SparseClass}. We will omit the proof to avoid redundancies.

\subsection{Gaussian mixture model}
\label{sec:GaussianMixture}

In the applications considered so far, the population risk has a unique 
stationary point which is also the global minimum. We used our uniform convergence theorems to prove that the empirical
risk has the same property and hence can be optimized efficiently. 

In order to illustrate our approach  on an example with multiple local minima, we consider clustering within a 
simple Gaussian mixture model. We are given data points $\bz_1,\dots,\bz_n\in\reals^d$, with $\bz_i$ drawn from a mixture
of two Gaussians, in equal proportions,
$\bz_i \sim (1/2)\normal(\btheta_{0,1},\id_{d\times d})+(1/2)\normal(\btheta_{0,2},\id_{d\times d})$. Define the separation parameter $D = \Vert \btheta_{0,2} - \btheta_{0,1}\Vert_2/2$. We want to estimate the centers $\btheta_{0,1}$, $\btheta_{0,2}$ by solving the maximum likelihood problem (here $\btheta = (\btheta_1, \btheta_2) \in \R^{2d}$)
\begin{equation}\label{eqn:erm-mixture}
\begin{aligned}
\mbox{\rm minimize}&\;\;\; \hR_n(\btheta)\equiv -\frac{1}{n}\sum_{i=1}^{n} \log\left( \sum_{a=1}^2\; \phi_d\big(\bz_i-\btheta_a\big)\right).\\
\end{aligned}
\end{equation}

In this case, the population risk has at least two global minima related by the symmetry under exchange of the two components:
$\btheta_+ = (\btheta_{0,1},\btheta_{0,2})$ and  $\btheta_- = (\btheta_{0,2},\btheta_{0,1})$, as well as a saddle point $\btheta_s =
((\btheta_{0,1} + \btheta_{0,2})/2,(\btheta_{0,1} + \btheta_{0,2})/2)$. This is a common phenomenon: symmetries lead to
multiple minima of the risk function. In a recent paper, Xu, Hsu and Maleki \cite{xu2016global} prove that 
these are the only critical points. See Figure \ref{fig:risk_gmm} for an illustration. 

\begin{figure}
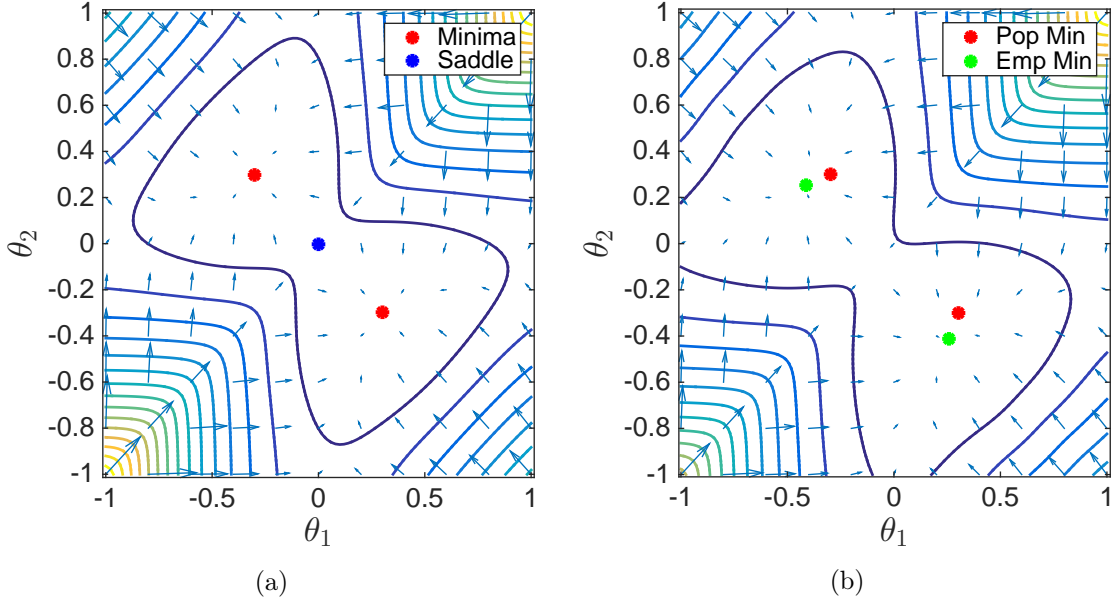

\centering
\begin{subfigure}{0.45\linewidth}
\centering
\includegraphics[width=0.95\linewidth]{poprisk_gmm.eps}
\caption{}\label{fig:poprisk_gmm}
\end{subfigure}\hspace{0.1cm}
\begin{subfigure}{0.45\linewidth}
\centering
\includegraphics[width=0.95\linewidth]{emprisk_gmm.eps}
\caption{}\label{fig:emprisk_gmm}
\end{subfigure}
\caption{Gaussian mixture model: $(a)$ Population risk for $d = 1$. $(b)$ A realization of the empirical risk for $d = 1$, and $n = 30$. }\label{fig:risk_gmm}
\end{figure}

\begin{theorem}\label{thm:Mixture}
Let $\hR_n(\btheta)$ be the empirical risk for an equal-proportion mixture of two Gaussians. Then there exist constants 
$C_1$, $C_2$, and $C_3$ depending on $(D,\delta)$ but independent of $n$ and $d$, such that
as $n\ge C_1\, d\log d$,
the following holds with probability at least $1-\delta$:
\begin{enumerate}
\item[$(a)$] In side $\Ball^{2d}(\btheta_s, C_2)$, the empirical risk has exactly two local minima $\hbtheta_+$, $\hbtheta_-$ related by an exchange of the two classes.
\item[$(b)$] For any initialization $\hbtheta_0 \in \Ball^{2d}(\btheta_s, C_2)$, the trust region algorithm will converge to one of the local minima. 
\item[$(c)$] The local minima satisfy 
\begin{align}
\Vert \hbtheta_+ - \btheta_+ \Vert_2 \leq C_3 \sqrt{\frac{d \log n}{n}},\quad \Vert \hbtheta_- - \btheta_- \Vert_2 \leq C_3 \sqrt{\frac{d \log n}{n}}.
\end{align}
\end{enumerate}
\end{theorem}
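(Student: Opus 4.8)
The plan is to follow the two-step strategy used throughout the paper: first pin down the geometry of the population risk $R(\btheta)=\E[\hR_n(\btheta)]$ on the ball $\Ball^{2d}(\btheta_s,C_2)$, then transport that geometry to $\hR_n$ via Theorem \ref{thm:uniformconvergence1}. For the population risk I would invoke Xu, Hsu and Maleki \cite{xu2016global} to conclude that the only critical points are the two global minima $\btheta_+,\btheta_-$ and the saddle $\btheta_s$. On top of this qualitative fact I would establish the quantitative bounds that drive the perturbation argument: there exist $\eps,\eta>0$ depending only on the separation $D$ such that $\nabla^2 R(\btheta_+),\nabla^2 R(\btheta_-)\succeq\eta\,\id_{2d\times 2d}$, while $\lambda_{2d}(\nabla^2 R(\btheta_s))\le-\eta$, and such that at every near-critical point $\|\nabla R(\btheta)\|_2\le\eps$ forces $\min_i|\lambda_i(\nabla^2 R(\btheta))|\ge\eta$. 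Unlike the applications in earlier sections, here $R$ is not strongly Morse in the sign sense, but the weaker condition on the smallest eigenvalue — positive at the minima, negative at the saddle, bounded away from zero in magnitude near all three critical points — is exactly what is needed to \emph{count} local minima, as noted in the remark following Theorem \ref{thm:morse}.

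Second, I would verify Assumptions \ref{ass:GradientSub}, \ref{ass:Hessianub} and \ref{ass:BoundedHessian} for the negative log-likelihood loss of the two-component mixture. Writing $w_a(\btheta;\bz)\in[0,1]$ for the responsibilities (posterior component weights), one has $\nabla_{\btheta_a}\ell=w_a(\btheta;\bz)(\btheta_a-\bz)$, and the Hessian is a combination of $w_a\,\id$ and $w_a(1-w_a)$ times outer products of the vectors $(\btheta_a-\bz)$. Since the responsibilities are bounded and $\bz-\btheta_{0,a}$ is standard Gaussian, these quantities are sub-Gaussian and sub-exponential with parameter $\tau$ of order $1+D$ on the ball, and the Hessian is Lipschitz in $\btheta$ with integrable Lipschitz constant, so all polynomial-in-$p$ bounds of Assumption \ref{ass:BoundedHessian} hold. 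Applying Theorem \ref{thm:uniformconvergence1} with $p=2d$ then gives, for $n\ge C_1 d\log d$ and with probability at least $1-\delta$, the uniform estimates $\sup_\btheta\|\nabla\hR_n-\nabla R\|_2\lesssim\sqrt{(d\log n)/n}$ and $\sup_\btheta\|\nabla^2\hR_n-\nabla^2 R\|_{\op}\lesssim\sqrt{(d\log n)/n}$ on $\Ball^{2d}(\btheta_s,C_2)$.

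Third, I would run the continuation argument underlying the proof of Theorem \ref{thm:morse}, localized to the ball. On the region where $\|\nabla R\|_2>\eps$ the uniform gradient bound forces $\nabla\hR_n\neq\bzero$, so no critical point of $\hR_n$ can appear there; near each of $\btheta_+,\btheta_-,\btheta_s$ the same local inverse-function argument produces exactly one critical point of $\hR_n$, and the uniform Hessian bound guarantees the sign of the smallest eigenvalue is preserved. Hence the two perturbed minima stay local minima while the perturbed saddle keeps a strictly negative eigenvalue and is therefore not a local minimum, which is part $(a)$. Part $(c)$ follows from the displacement estimate behind Theorem \ref{thm:morse}$(b)$: since $\nabla\hR_n(\hbtheta_\pm)=\bzero$ and $\nabla^2 R(\btheta_\pm)\succeq\eta\,\id$, linearizing $\nabla R$ at $\btheta_\pm$ and using the uniform gradient bound gives $\|\hbtheta_\pm-\btheta_\pm\|_2\le(2\tau/\eta)\sqrt{(2Cd\log n)/n}\le C_3\sqrt{(d\log n)/n}$, using that the population minimizers coincide with the true center pairs $\btheta_\pm$.

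For part $(b)$ I would appeal to classical trust-region convergence theory, which guarantees convergence to a point satisfying the second-order necessary conditions $\nabla\hR_n=\bzero$ and $\nabla^2\hR_n\succeq\bzero$; inside $\Ball^{2d}(\btheta_s,C_2)$ the only such points are $\hbtheta_+$ and $\hbtheta_-$, since the perturbed saddle retains a negative curvature direction. The main obstacle — and the most delicate part of the argument — is twofold. First, one must upgrade the qualitative landscape of \cite{xu2016global} to the explicit $D$-dependent eigenvalue gaps $\eps,\eta$ above, uniformly over the near-critical region, so that they survive the $O(\sqrt{(d\log n)/n})$ perturbation; the saddle, where the negative eigenvalue must be bounded below in magnitude, is the most demanding case. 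Second, one must control the trust-region iterates so that they stay in $\Ball^{2d}(\btheta_s,C_2)$ where this description is valid; I would handle this by choosing $C_2$ so that $\hR_n$ has no critical point in a boundary shell and its negative gradient points inward there, making the ball a trapping region for the descent iterates.
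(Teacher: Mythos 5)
Your proposal follows the same two-step architecture as the paper (population landscape via \cite{xu2016global} plus quantitative gradient/Hessian bounds, then Theorem \ref{thm:uniformconvergence1} with $p=2d$, an absorbing region for the trust-region iterates, and Conn--Gould--Toint for part $(b)$), but it leaves out the one idea that makes the theorem true as stated: a mechanism forcing the population-landscape constants to be independent of $d$. You assert that $\eps,\eta$ and the trapping radius depend only on $D$ and correctly flag this as the delicate point, but compactness-and-continuity arguments run in $\Ball^{2d}(\btheta_s,C_2)$ produce constants that a priori degrade with $d$; since the sample-size requirement scales like $n \gtrsim p\log n\cdot\big((\tau^2/\eps^2)\vee(\tau^4/\eta^2)\big)$, any $d$-dependence in $\eps,\eta$ destroys both the advertised $n\ge C_1 d\log d$ condition and the claim that $C_1,C_2,C_3$ depend only on $(D,\delta)$. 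The paper resolves this by a rotational-symmetry reduction: for any $\btheta=(\btheta_1,\btheta_2)$ one rotates so that $\btheta_1,\btheta_2,\btheta_{0,1}$ lie in a fixed three-dimensional subspace, all the relevant quantities ($\|\nabla R\|_2$, eigenvalues of $\nabla^2R$, the boundary normal derivative) coincide with those of the three-dimensional model $\Rt$, and compactness then runs in fixed dimension. Moreover, the inward-pointing-gradient property on the boundary shell, which you defer to ``choosing $C_2$,'' is itself a nontrivial estimate: the paper proves it for a product region $\BT^{2d}(\btheta_s,\rad/2+\eps_s)$ by an explicit Laplace-method computation showing the outward normal derivative is bounded below by a quantity of order $e^{-\rad^2/4}$ minus an $e^{-\rad^2/2}$ correction, at a radius $\rad$ depending only on $D$. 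Without this reduction (or an equivalent dimension-free argument) your proof does not deliver the theorem.

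There is also a localized error in your third step: the continuation/inverse-function argument underlying Theorem \ref{thm:morse} cannot produce ``exactly one critical point of $\hR_n$'' near $\btheta_s$ under the hypotheses you have. Lemma \ref{lemma:Stability} requires the Hessian to be non-degenerate with \emph{constant index throughout the neighborhood}, i.e., all eigenvalues bounded away from zero there, whereas near the saddle you (like the paper) control only the smallest eigenvalue ($\le -\heslb$); indeed the remark after Theorem \ref{thm:morse} is precisely that this model is \emph{not} handled by the strongly Morse machinery. Fortunately uniqueness at the saddle is neither claimed by the theorem nor needed: a uniformly negative bottom eigenvalue on $\Ball^{2d}(\btheta_s,\eps_0)$ already rules out local minima there, and any approximate second-order stationary point found by the trust-region method must then lie in $\Ball^{2d}(\btheta_+,\eps_0)\cup\Ball^{2d}(\btheta_-,\eps_0)$, where strong convexity gives convergence — this is exactly how the paper argues, with no degree theory at all. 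Similarly, for part $(c)$ you should not invoke Theorem \ref{thm:morse}$(b)$, whose global strongly Morse hypothesis fails here; the paper's elementary route works: optimality $\hR_n(\hbtheta_+)\le\hR_n(\btheta_+)$ plus a second-order Taylor expansion and the lower Hessian bound yield $\|\hbtheta_+-\btheta_+\|_2\le 4\|\nabla\hR_n(\btheta_+)\|_2/\heslb$, and the uniform gradient bound gives the $C_3\sqrt{(d\log n)/n}$ rate. With the saddle-uniqueness claim dropped and the three-dimensional reduction supplied, your outline closes.
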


\section{Numerical experiments}

We carried out extensive numerical experiments in order to verify how accurate is our theory. 
Sections  \ref{sec:NumericalClass} to \ref{sec:MixtureSimulations} present simulations for the models studied
in Section \ref{sec:Applications}. Sections \ref{sec:AustralianData} and \ref{sec:ColonCancerData} present illustrations with real data.

\subsection{Binary linear classification: high-dimensional regime }
\label{sec:NumericalClass}

Figures \ref{fig:success_rate}, \ref{fig:error_bc}, \ref{fig:conv_bc}, \ref{fig:gditer} report our results 
for the non-convex binary classification model of Section \ref{sec:Class}. 
\begin{figure}
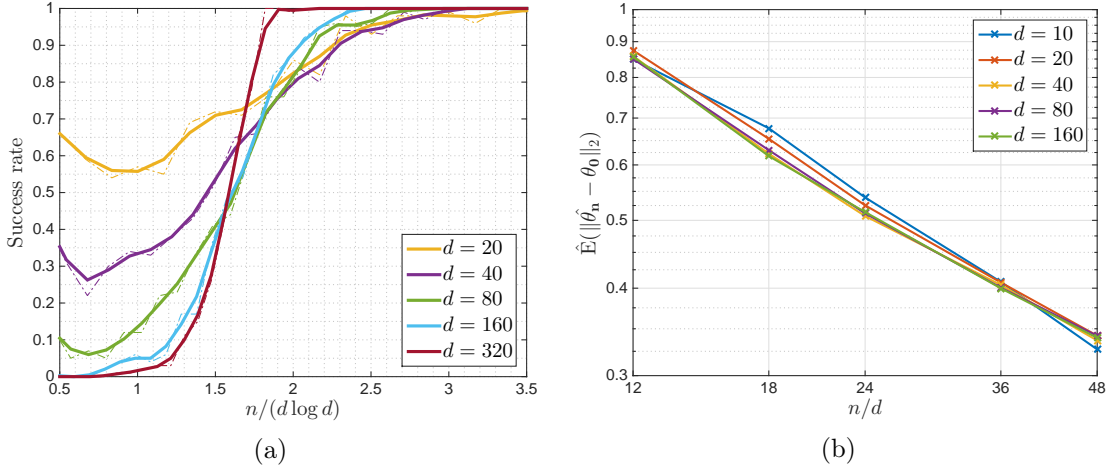

\centering
\begin{subfigure}{0.45\textwidth}
\centering
\includegraphics[width=0.95\linewidth]{success_rate.eps}
\caption{}\label{fig:success_rate}
\end{subfigure}
\begin{subfigure}{0.45\textwidth}
\centering
\includegraphics[width=0.95\linewidth]{error_bc.eps}
\caption{}\label{fig:error_bc}
\end{subfigure}
\caption{Binary linear classification, high dimensional: $(a)$ Success rate versus $n/(d\log d)$ for several ambient dimensions $d$,  with
 $\Vert \btheta_0 \Vert_2=3$ (dashed lines are empirical averages, continuous lines are a smoothed version); $(b)$ Estimation error $\widehat{\rm E} [\Vert \hbtheta_n - \btheta_0 \Vert_2]$ versus $n/d$,
for  $\Vert \btheta_0 \Vert_2 = 1$. }
\end{figure}

We consider i.i.d. predictors $\bX_i \sim \normal(\bzero, \id_{d\times d})$,  and generate labels $Y_i\in\{0,1\}$ with
$\prob(Y_i=1|\bX_i=\bx)= \sigma(\<\btheta_0,\bx\>)$ where  $\sigma(u) = \sigma_{L}(u) = (1+e^{-u})^{-1}$ is the logistic activation. 
We perform gradient descent, cf. Eq~(\ref{eq:GradientDescent}) to
minimize the empirical risk (\ref{eqn:erm}), with a minor revision in practice: we will project the 
points back into $\Ball^d(\rad)$ if the iteration points fall out of the ball, with $\rad = 3 \Vert \btheta_0 \Vert_2$. The step size is fixed to be $h = 1$. 

In order to test the hypothesis that the landscape is simple (i.e. it has a unique local minimum), 
we run projected gradient descent starting from multiple random initializations $\btheta_s \sim \normal(\bzero, \id_{d\times d}/d)$. If the landscape is simple, 
we expect the iterates $\hbtheta_n(k)$ to converge to the same global minimizer with no dependence on the initialization.
 If the landscape is rough, projected gradient descent will converge to different points depending on the initialization. 
Given a maximum number of iterations $k_{\max}$,  we define the following quantity, depending 
on the data $(\bY,\bX)\equiv \{(Y_i,\bX_i)\}_{1\le i\le n}$, 
\begin{align}
S_{\bY,\bX}= \sqrt{{\rm Tr }(\widehat{{\rm Var}}_{\text{init}}( \hbtheta_n(k_{\max}) \vert \bY,\bX))}\, ,
\end{align}
where the variance is taken over the random initializations $\btheta_s$.
In words, $S_{\bY,\bX}$ is the spread of the limit points of projected gradient descent, for the instance $(\bY,\bX)$. 
We then define the empirical success probability as
\begin{align}
\Psucc \equiv \widehat{\prob}(S_{\bY,\bX}\le \eps)\, .
\end{align}

In Figure \ref{fig:success_rate}, we plot our results for the empirical success rate, for several values of $n$, $d$. In this experiment, we take $\Vert \btheta_0 \Vert_2 = 3$. 
For each pair $(n,d)$, we generate $100$ instances $(Y_i,\bX_i)$ and run projected 
gradient descent from $10$ random initializations. We use $k_{\max}=10^4$ iterations and tolerance $\eps = 10^{-2}$ though results seem to be fairly insensitive
 to these parameters. For each dimension $d$, the success rate goes rapidly from $0$ to $1$ as the number of samples $n$ crosses a threshold. 
We plot the success probability as function of the rescaled number of samples $n/(d\log d)$. On this scale, curves for different dimension 
cross each other, and become steeper as $d$ increases. This is consistent with Theorem \ref{thm:MainClassification}. This also suggests a sharp phase transition at $n_*(d)$ which is roughly of order $d\log d$. It is a fascinating open question whether a sharp threshold actually exists\footnote{When convergence to a single global minimum fails, we observe that often 
projected gradient actually convergence to the boundary of $\Ball^d(\rad)$.}. 

Figure \ref{fig:error_bc} illustrates the behavior of the estimation error $\|\hbtheta_n-\btheta_0\|_2$ achieved by gradient descent. 
In all the following experiments, we will take $\Vert \btheta_0 \Vert_2 = 1$. We plot the estimation error (averaged over $100$ random instances) 
$\widehat{\rm E} [ \|\hbtheta_n-\btheta_0\|_2]$ versus  $n/d$.
Curves for different dimensions collapse, and are consistent with the optimal  rate $\|\hbtheta_n-\btheta_0\|_2 = \Theta(\sqrt{d/n})$.

\begin{figure}
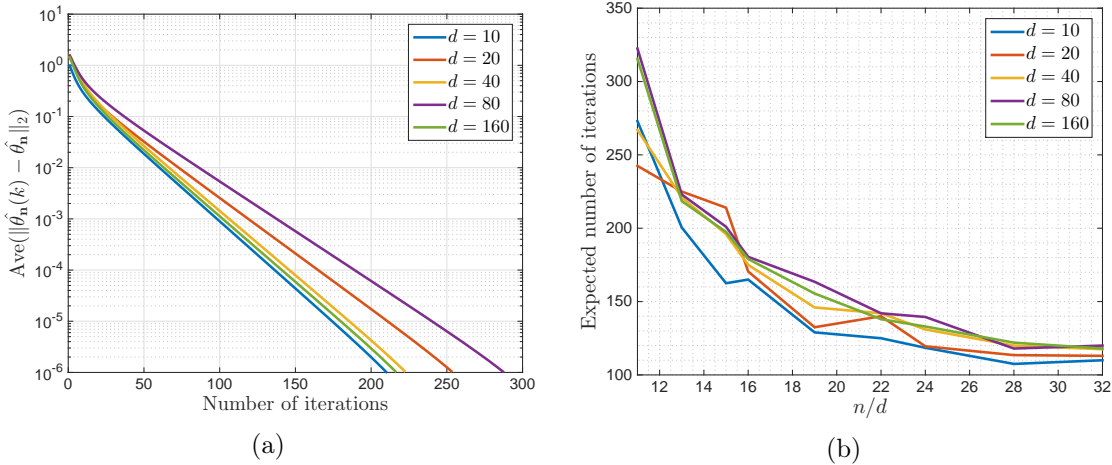

\begin{subfigure}{0.45\linewidth}
\centering
\includegraphics[width=0.95\linewidth]{conv_bc.eps}
\caption{}\label{fig:conv_bc}
\end{subfigure}\hspace{0.1cm}
\begin{subfigure}{0.45\linewidth}
\centering
\includegraphics[width=0.95\linewidth]{gditer.eps}
\caption{}\label{fig:gditer}
\end{subfigure}
\caption{Binary linear classification, high dimensional: $(a)$ The convergence of the gradient descent algorithm.  Here
$\Vert \btheta_0 \Vert_2 = 1$, $n/d = 20$. The y-axis is on a log-scale; $(b)$ Minimum number of iterations needed to achieve average distance $10^{-4}$ from the global optimizer. }
\end{figure}

Figure \ref{fig:conv_bc} shows the convergence of gradient descent for several values of $n$ and $d$, for fixed $n/d = 20$. Namely, we plot the distance from the global minimizer as a function of the number of iterations $k$, estimated using $100$ realizations $(\bY,\bX)$. Since there is a  small probability that gradient descent fails to find unique minimizer, we average the distance from the global 
minimizer
over the results between the $(0.05, 0.95)$ quantiles of these $100$ instances. Convergence to the global minimizer appears to be exponential as predicted by Theorem \ref{thm:MainClassification}. Also,  convergence is fairly independent
of the dimension for fixed $n/d$.  

Finally, Figure \ref{fig:gditer} shows the number of iterations needed to achieve the $\varepsilon = 10^{-4}$ optimization error. We run $100$ instances, and 
we plot the expected number of iteration, by averaging the results between the $(0.05, 0.95)$ quantiles of these $100$ instances. When $n/d$ is small, the landscape is 
not very smooth, and convergence is slower. When $n/d$ grows, the number of iterations decreases and converges  to a constant. 
This is also predicted by Theorem \ref{thm:MainClassification}: the landscape of empirical risk will be as smooth as the landscape of population risk, as 
$n \geq C\, d\log d$. 

\subsection{Binary linear classification: very high-dimensional regime }

In Figures \ref{fig:var_bcl1}, \ref{fig:conv_bcl1}, \ref{fig:error_bcl1}, we present our results on non-convex binary linear classification in the very high-dimensional regime.  
Data $(Y_i,\bX_i)$ were generated as in the previous section, with $\btheta_0$ a vector $k$ non-zero entries all of size $1/\sqrt{k}$. 
We use proximal gradient descent to solve problem (\ref{eq:RiskConstrained}) with $\rad=10$.

\begin{figure}
\centering
\includegraphics[width=0.6\linewidth]{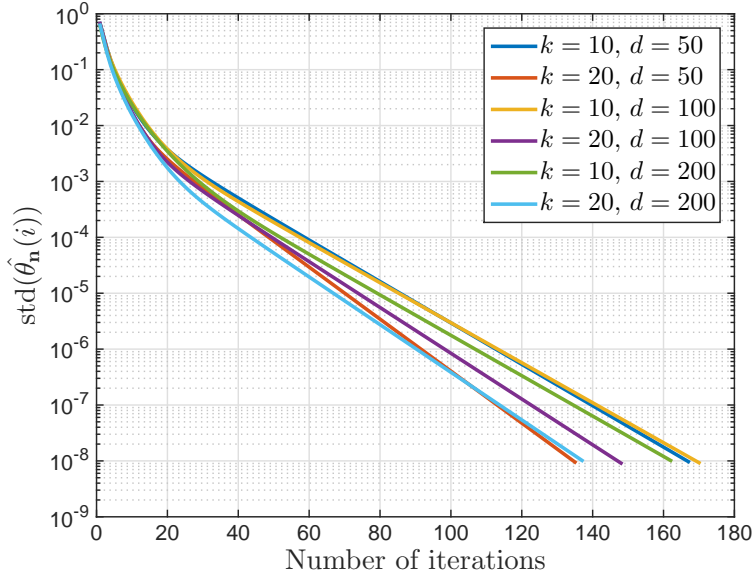}
\caption{Binary linear classification, very high-dimensional. The standard deviation of each iteration point with 
  respect to random initialization.
}\label{fig:var_bcl1}
\end{figure}
In Figure \ref{fig:var_bcl1}, we use random initializations $\btheta_s\sim \normal(\bzero,\id_{d\times d}/d)$, and plot
the empirical standard deviation of the resulting iterates ${\rm std}(\hbtheta_n(i)) = \Trace(\hVar(\hbtheta_n(i)))^{1/2}$. 
Note that the variance is taken over the random initializations, for a same realization of the data $(\bY,\bX)$, and hence captures smoothness 
(or roughness) of the empirical risk landscape.
The standard deviation appears to converge exponentially fast to $0$,
confirming that indeed  proximal gradient is converging to the unique local minimizer, as anticipated by 
Theorem \ref{thm:SparseClass}.

\begin{figure}
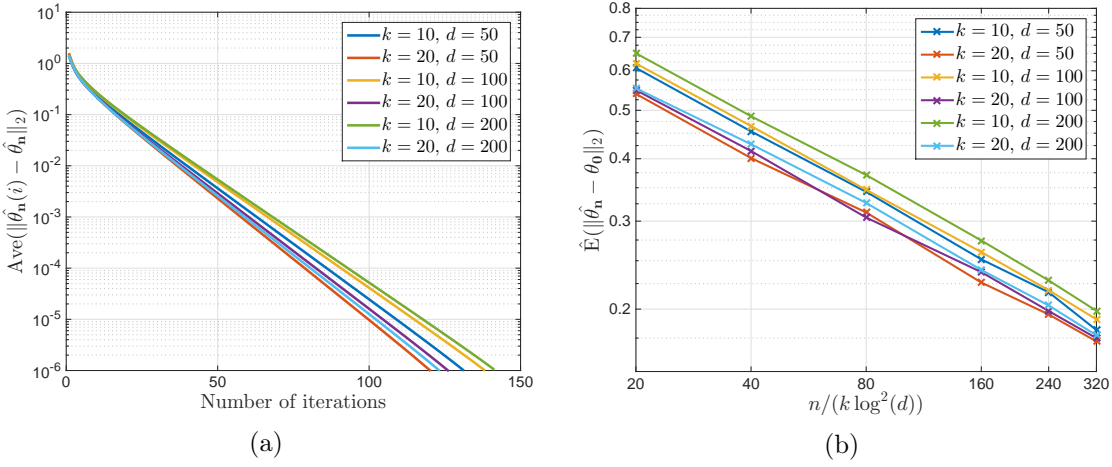

\begin{subfigure}{0.45\linewidth}
\centering
\includegraphics[width=0.95\linewidth]{conv_bcl1.eps}
\caption{}\label{fig:conv_bcl1}
\end{subfigure}\hspace{0.1cm}
\begin{subfigure}{0.45\linewidth}
\centering
\includegraphics[width=0.95\linewidth]{error_bcl1.eps}
\caption{}\label{fig:error_bcl1}
\end{subfigure}
\caption{Binary linear classification, very high-dimensional regime: $(a)$ The convergence of proximal gradient descent.  Here
$\Vert \btheta_0 \Vert_2 = 1$, and $n/(k\log^2(d))= 20$, and $\lambda_n = 1/100 \cdot \sqrt{\log^2(d)/n}$
. $(b)$ Convergence of the statistical error.  }
\end{figure}
In Figure \ref{fig:conv_bcl1}, we plot the expected distance from the global minimizer $\hbtheta_n$ for each iterates.
Proximal gradient appears to converge exponentially fast 
 for $n\gg k\log^2(d)$.

\subsection{Robust linear regression }

In  Figures \ref{fig:var_rr}, \ref{fig:var_outlier_rr}, \ref{fig:error_cmp_rr} we present simulations for robust regression.
We generated random covariates $\bX_i\sim \normal(\bzero,\id_{d\times d})$ and responses $Y_i = \<\btheta_0,\bX_i\>+\eps_i$,
where $\|\btheta_0\|_2 = 1$. Again, we used projected gradient descent to solve the optimization problem (\ref{eq:RobustDefinition})
with $\rad=10$. 
For the loss function we used Tukey's  loss (\ref{eq:Tukey}) with $t_0=4.685$.

\begin{figure}
\centering
\includegraphics[width=0.6\linewidth]{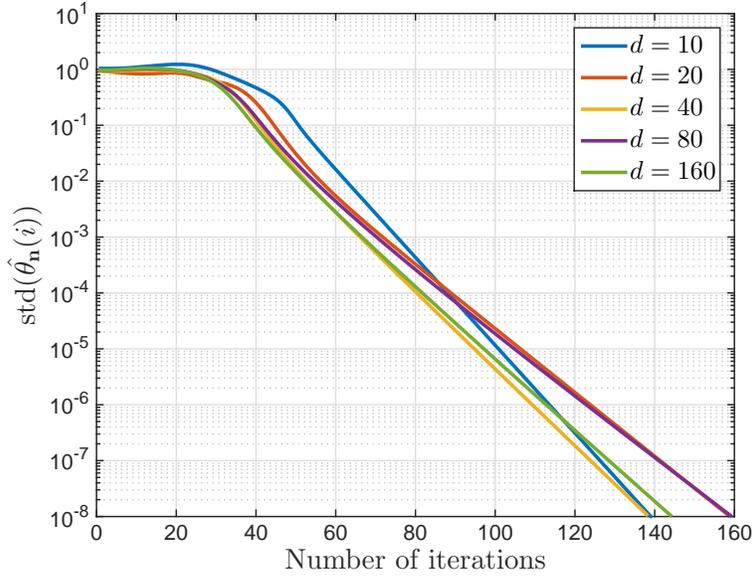}
\caption{Robust regression. The standard deviation of each iteration point with respect to random initialization. }\label{fig:var_rr}
\end{figure}
In Fig. \ref{fig:var_rr}, we plot the standard deviation of the iterates ${\rm std} (\hbtheta_n(i))=\Trace(\hVar(\hbtheta_n(i)))^{1/2}$ over random initializations
$\btheta_s\sim \normal(\bzero,25\, \id_{d\times d}/d)$. In this case $\eps_i \sim \normal(0,1)$. Again, this standard deviation converges exponentially fast to $0$ 
supporting the claim that proximal gradient descent converges to a unique global minimum irrespective of the initialization. 

\begin{figure}
\begin{subfigure}{0.45\linewidth}
\centering
\includegraphics[width=0.95\linewidth]{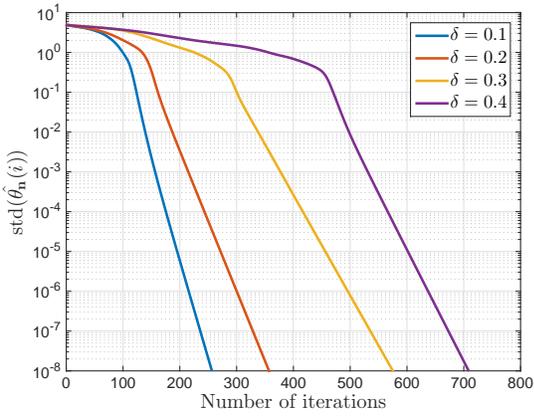}
\caption{}\label{fig:var_outlier_rr}
\end{subfigure}\hspace{0.1cm}
\begin{subfigure}{0.45\linewidth}
\centering
\includegraphics[width=0.95\linewidth]{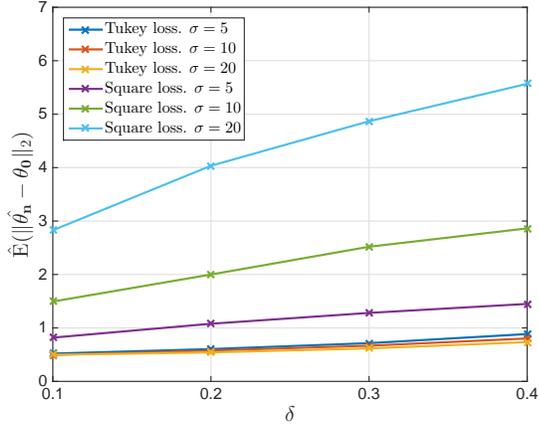}
\caption{}\label{fig:error_cmp_rr}
\end{subfigure}
\caption{Robust regression: $(a)$ The standard deviation of each iteration point with respect to random initialization, for different proportion of contamination.
$(b)$ The robustness of the global minimum between linear regression and Tukey regression.  }
\end{figure}

In Figures \ref{fig:var_outlier_rr}, \ref{fig:error_cmp_rr} we study the a contaminated  model for the noise, namely 
$\eps_i\sim (1-\delta)\normal(0,1) +\delta \normal(0,\sigma^2)$.
In Figure \ref{fig:var_outlier_rr} we plot the standard deviation of the estimates obtained with random initializations 
$\btheta_s\sim \normal(\bzero,25\,\id_{d\times d}/d)$,
for $n=480$, $d=80$. Convergence rate remains exponential even for large contamination fraction. In Figure \ref{fig:error_cmp_rr} we investigated
the dependence of the estimation error on the contamination fraction, and the scale of outliers. Tukey's regression is fairly insensitive
to outliers, while the least squares regression deteriorates as expected.

\subsection{Gaussian mixture model }
\label{sec:MixtureSimulations}

\begin{figure}
\begin{subfigure}{0.45\linewidth}
\centering
\includegraphics[width=0.95\linewidth]{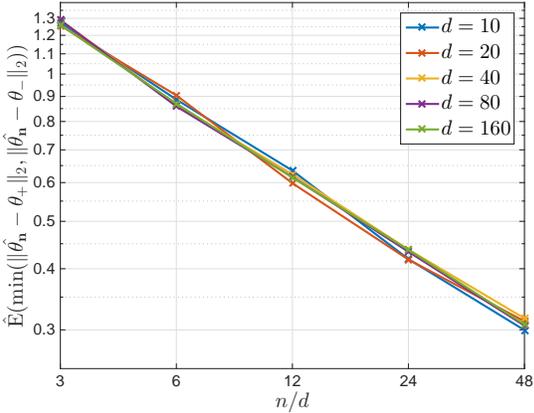}
\caption{}\label{fig:error_gmm}
\end{subfigure}\hspace{0.1cm}
\begin{subfigure}{0.45\linewidth}
\centering
\includegraphics[width=0.95\linewidth]{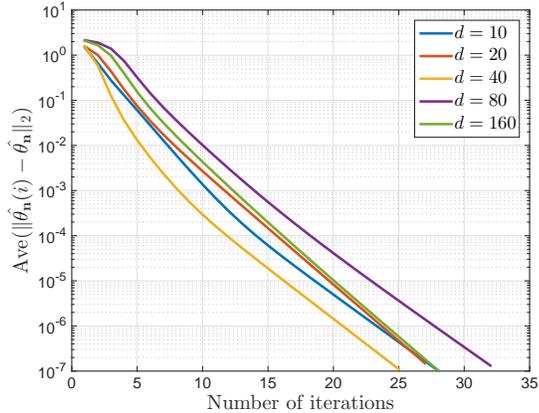}
\caption{}\label{fig:conv_gmm}
\end{subfigure}
\caption{Gaussian mixture model: $(a)$ The convergence of statistical error. Here we use 
$\Vert \btheta_{0,1} - \btheta_{0,2} \Vert_2 = 3$, and $n/d = 6$; $(b)$ The convergence of the gradient descent algorithm. }
\end{figure}

In Figures \ref{fig:error_gmm} and \ref{fig:conv_gmm} we consider the Gaussian mixture model of Section \ref{sec:GaussianMixture}.
We use an equal mixture proportion with $\Vert \btheta_{0,1} - \btheta_{0,2} \Vert_2 = 3$ and compute the maximum likelihood
estimator  (\ref{eqn:erm-mixture}). Instead of using trust region method as suggested in Theorem \ref{thm:Mixture}, we used gradient descent here. We observed that there are only two local minimizers. In Figure  \ref{fig:error_gmm} we plot the convergence of the statistical error, and in Figure \ref{fig:conv_gmm} the convergence of the gradient descent algorithm to one of the only two local minimizers. These results are consistent with Theorem \ref{thm:Mixture}.

\subsection{Australian credit data}
\label{sec:AustralianData}

\begin{figure}
\centering
\includegraphics[width=0.5\linewidth]{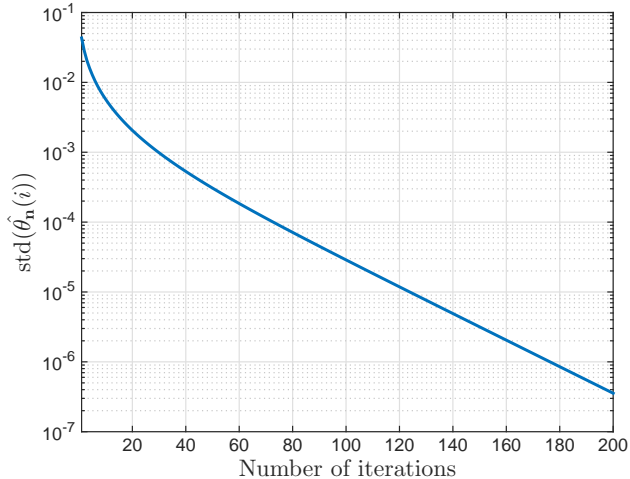}
\caption{Australian credit data: the standard deviation of each iteration point with respect to random initialization. }\label{fig:australian}
\end{figure}

In Figure \ref{fig:australian} we consider the Australian credit dataset from Statlog \cite{Lichman:2013}. The data set contains $n = 690$ entries. Each entries has a binary label,
with $307$ entries labeled $1$ and $383$ labeled $0$.  Each entry also comprises $d = 14$ attributes including both categorical and continuous variables. These variables are normalized with zero mean and unit standard deviation. 

We fit a model of the form $\P(Y_i = 1 \vert \bX_i = \bx) = \sigma(\langle \btheta_0, \bx \rangle)$ with $\sigma(u) = \sigma_L(u)$ the logistic function, by using the non-convex approach (\ref{eq:nonconvex1}) and gradient descent. We also used logistic regression for comparison. Let us emphasize here that our focus here is not on the accuracy of the predictive model, but rather on showing that the non-convex approach is a viable alternative to the standard logistic regression. In particular, the M-estimator appears to be efficiently computable. 

In Figure \ref{fig:australian}, we plot the standard deviation of the estimate $\hat \btheta_n(i)$, over random initializations $\btheta_s \sim \normal(\bzero, \id_{d\times d}/d)$. As for the simulations in the previous section, the standard deviation decreases exponentially fast, suggesting that indeed the optimization problem has a unique local minimum.

\subsection{Colon cancer data}
\label{sec:ColonCancerData}

\begin{figure}
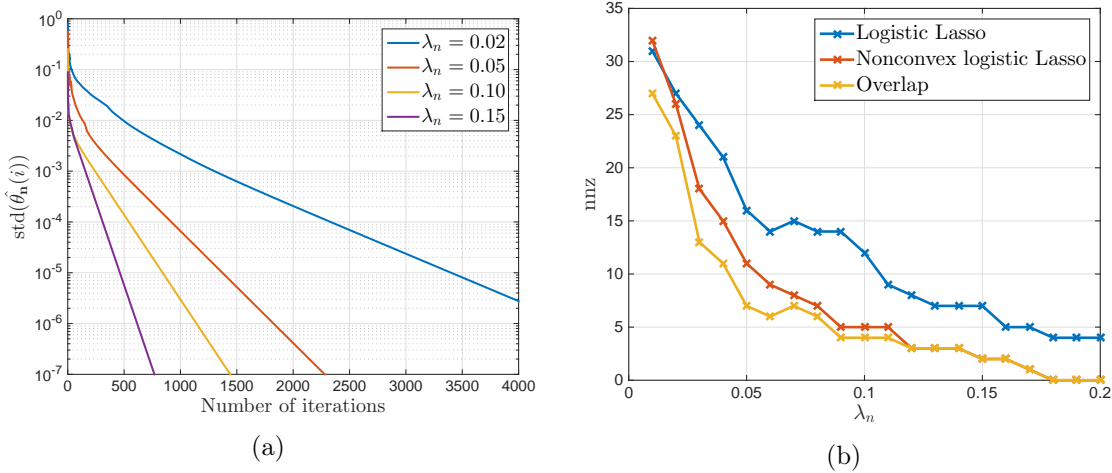

\begin{subfigure}{0.45\linewidth}
\centering
\includegraphics[width=0.95\linewidth]{var_coloncancer.eps}
\caption{}\label{fig:var_coloncancer}
\end{subfigure}\hspace{0.1cm}
\begin{subfigure}{0.45\linewidth}
\centering
\includegraphics[width=0.95\linewidth]{cmp_coloncancer.eps}
\caption{}\label{fig:cmp_coloncancer}
\end{subfigure}
\caption{Colon cancer data: $(a)$ The standard deviation of each iteration point with respect to random initialization, for different regularization parameter. $(b)$ Number of non-zero elements of logistic Lasso and non-convex logistic Lasso.  }
\end{figure}

In Figures \ref{fig:var_coloncancer}, \ref{fig:cmp_coloncancer} we consider a gene-expression dataset from  \cite{alon1999broad}. 
The data set contains expression levels of  of $2,000$ genes in $22$
normal and $40$ tumor colon tissues, hence $n=62$ data points. 
Expression levels are normalized as in \cite{alon1999broad}  to have zero mean and unit standard 
deviation. We use the expression levels to form  feature vectors $\bx_i\in\reals^{d}$,  $d = 2000$ and encode the type of tissue 
using a binary label $y_i=1$ (tumor) or $y_i=0$ (no tissue).

We fit a model of the form $\prob(Y_i=1|\bX_i=\bx) = \sigma(\<\btheta_0,\bx\>)$ with $\sigma(u) = \sigma_L(u)$ the logistic function,
by using the non-convex approach (\ref{eq:SparseClass_1}) and proximal gradient. We also used $\ell_1$-regularized logistic regression, for comparison.
Let us emphasize here that our focus here is not on the accuracy of the predictive model, but rather on showing that the non-convex approach 
is a viable alternative to the more standard regularized logistic regression.

In Figure \ref{fig:var_coloncancer}, we plot the standard deviation of the estimate $\hbtheta_n(i)$, over
random initializations $\btheta_s\sim\normal(\bzero,\id_{d\times d}/d)$. As for the simulations in the previous section, the
standard deviation decreases exponentially fast, suggesting that indeed the optimization problem has a unique local minimum.
In Figure \ref{fig:cmp_coloncancer} we compare the model selected by the non-convex approach (\ref{eq:SparseClass_1}) to the one
from $\ell_1$-regularized logistic regression, and also plot the number of overlaps of their selected variables. Note that most of the covariates selected by the non-convex regression method also 
appear in logistic regression. This suggests that the model produced by the non-convex approach is comparable to that
produced by $\ell_1$-regularized logistic regression.

\section*{Acknowledgments}

A.M. was partially supported by the NSF grant CCF-1319979. S.M. was supported by Office of Technology Licensing Stanford Graduate Fellowship. 

\appendix

\section{Some useful tools}

In this section we collect some well-known definitions and tools from high-dimensional probability, for the reader's convenience.

\subsection{Properties of sub-Gaussian and sub-exponential random variables}

Let us first recall the definition of (not necessarily mean zero) sub-Gaussian and sub-exponential random variables in $\R^d$:
\begin{definition}\label{def:subgaussian}
A random variable $\bX\in\R^d$ is $\tau^2$-sub-Gaussian if for any $\blambda\in\R^d$,
\begin{equation}
\E[e^{\<\blambda,\bX - \E[\bX]\>}] \le e^{\frac{\|\blambda\|_2^2\tau^2}{2}}.
\end{equation}
\end{definition}
\begin{definition}\label{def:subexponential}
A random variable $\bX\in\R^d$ is $K$-sub-exponential if for any $\blambda\in\R^d$, $\|\<\blambda,\bX-\E[\bX]\>\|_{\psi_1}\le K\|\blambda\|_2$, where $\|\cdot\|_{\psi_1}$ is the Orlicz $\psi_1$-norm:
\begin{equation}
\|X\|_{\psi_1} \defeq \sup_{k\ge 1} \frac{1}{k} \E[|X-\E(X)|^k]^{1/k}.
\end{equation}
\end{definition}
Note: we can also define sub-Gaussian random variables via the Orlicz $\psi_2$-norm. We choose to follow the more classic definition there so as to make sub-Gaussian concentration inequalities clearer. 

Proofs of our main theorems rely on some properties about sub-Gaussian and sub-exponential random variables that are well known in the literature, for example \cite{boucheron2013concentration,vershynin2010introduction}. We summarize them here for reference. 

\begin{lemma}\label{lem:subgaussian}
Assume $\bX\in\R^d$ has mean zero and is $\tau^2$-sub-Gaussian, then
\begin{enumerate}[label=$(\alph*)$]
\item There exists numerical constants $C_{2k}\in(0,\infty)$ for all integers $k\ge 1$ such that
\begin{equation}
\E[|\langle \bu,\bX\rangle|^{2k}] \le C_{2k}\|\bu\|_2^{2k}\tau^{2k}
\end{equation}
for all $\bu\in\R^d$. In particular, $C_2=1$, and we can take $C_{2k}=2^{k+1}k!$.
\item Higher moments of $\|\bX\|_2$ are controlled, that is, for all integers $k\ge 1$,
\begin{equation}
\E[\|\bX\|_2^{2k}] \le C_{2k} d^k\tau^{2k},
\end{equation}
where $C_{2k}$ is the same as in $(a)$.
\item $\|\bX\|_2^2$ is $4\tau^2$-sub-exponential. In particular
\begin{equation}
\E[e^{\frac{\|\bX\|_2^2}{4\tau^2}}] \le 2^{\frac{d}{2}} < e^{\frac{d}{2}}.
\end{equation}
\item If $X\in\R$ is zero-mean and $\tau^2$-sub-Gaussian and $\alpha$ is a random variable (that can depend on $X$) with $|\alpha|\le 1$, then there exists some absolute constant $\bddsubg\le 64$ such that $\alpha X$ is $\bddsubg\tau^2$-sub-Gaussian.
\item If $X\in\R$ is zero-mean $K$-sub-exponential and $\beta$ is a random variable (that can depend on $X$) with $|\beta|\le 1$, then there exists some absolute constant $\bddsube\le 2$ such that $\beta X$ is $\bddsube K$-sub-exponential.
\item
\begin{equation}
	\E[\|\bX\|_\infty] \le \sqrt{2\tau^2\log(2d)}.
\end{equation}
\end{enumerate}
\end{lemma}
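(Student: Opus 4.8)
The plan is to reduce the expected $\ell_\infty$-norm to the expected maximum of $2d$ scalar sub-Gaussian variables, and then apply the standard moment-generating-function (MGF) bound for a maximum. First I would observe that each coordinate is a scalar mean-zero $\tau^2$-sub-Gaussian random variable: taking $\blambda = \be_j$ (resp.\ $\blambda = -\be_j$) in Definition \ref{def:subgaussian} gives $\E[e^{sX_j}]\le e^{s^2\tau^2/2}$ and $\E[e^{-sX_j}]\le e^{s^2\tau^2/2}$ for every $s\in\reals$, where $X_j = \langle\be_j,\bX\rangle$. Since $\|\bX\|_\infty = \max_{j\in[d]}|X_j| = \max\{X_1,-X_1,\dots,X_d,-X_d\}$, bounding $\E[\|\bX\|_\infty]$ amounts to bounding the expected maximum of these $2d$ mean-zero $\tau^2$-sub-Gaussian quantities.

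Next, for any fixed $s>0$ I would combine Jensen's inequality with a union-type bound inside the logarithm, using that $e^{s\max_i Y_i}=\max_i e^{sY_i}\le\sum_i e^{sY_i}$ for $s>0$:
\begin{align}
\E\big[\|\bX\|_\infty\big]
&= \frac{1}{s}\log\exp\!\Big(s\,\E[\|\bX\|_\infty]\Big)
\le \frac{1}{s}\log\E\big[e^{s\|\bX\|_\infty}\big] \nonumber\\
&\le \frac{1}{s}\log\Big(\sum_{j=1}^d \big(\E[e^{sX_j}]+\E[e^{-sX_j}]\big)\Big)
\le \frac{\log(2d)}{s}+\frac{s\tau^2}{2},
\end{align}
where the final inequality applies the sub-Gaussian MGF bound to each of the $2d$ terms. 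Finally I would optimize the right-hand side over $s>0$: the minimizer is $s=\sqrt{2\log(2d)/\tau^2}$, at which the bound equals $\sqrt{2\tau^2\log(2d)}$, giving exactly the claimed inequality.

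I do not expect any serious obstacle here, as every step is routine. The only point that deserves a little care is the opening reduction: the vector sub-Gaussianity hypothesis must be specialized to the signed standard basis vectors $\pm\be_j$ so as to produce two-sided scalar MGF bounds, which is precisely what allows $\max_j|X_j|$ to be rewritten as a maximum over $2d$ one-sided quantities and controlled by a single union bound inside the logarithm. The subsequent optimization over $s$ is a one-variable calculus exercise that produces the stated constant without slack.
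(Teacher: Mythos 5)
Your argument for part $(f)$ is correct and essentially identical to the paper's own proof: both reduce to the $2d$ scalar sub-Gaussian variables $\pm X_j$, bound $\E[e^{s\|\bX\|_\infty}]\le 2d\,e^{s^2\tau^2/2}$ via the union-type bound inside the moment generating function, apply Jensen's inequality, and optimize at $s=\sqrt{2\log(2d)}/\tau$ to obtain exactly $\sqrt{2\tau^2\log(2d)}$. The only caveat is scope: the lemma as stated has six parts, and your proposal addresses only $(f)$, so parts $(a)$--$(e)$ (the moment bounds, the sub-exponentiality of $\|\bX\|_2^2$, and the stability under bounded random multipliers) would still need the paper's separate arguments.
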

\begin{proof}
\begin{enumerate}[label=$(\alph*)$]
\item This is known in the literature, for example Theorem 2.1 in \cite{boucheron2013concentration}.

\item This is a direct consequence of part $(a)$. From the generalized mean inequality, we have
\begin{equation}
\E[\|\bX\|_2^{2k}] = \E\Big[ (\sum_{j=1}^{d}X_j^2)^{k} \Big] \le d^{k-1} \E\Big[ \sum_{j=1}^{d}X_j^{2k} \Big].
\end{equation}
Applying part $(a)$ with the standard basis $\be_j\in\R^d$, we get $\E[X_j^{2k}]\le C_{2k}\tau^{2k}$. Summing over $j$ gives $\E[\|\bX\|_2^{2k}]\le d^{k}C_{2k}\tau^{2k}$.

\item We can assume $\tau^2=1$ by scale invariance. Let $\bW\sim\normal(0,\id_{d\times d})$ be independent of $\bX$. It is known that $\E[e^{\lambda Z^2}]=\frac{1}{\sqrt{1-2\lambda}}$ for $Z\sim\normal(0,1)$ and $\lambda<\frac{1}{2}$, so we have
\begin{equation}
\E[e^{\lambda\|\bW\|_2^2}] = \prod_{i=1}^{d} \E[e^{\lambda W_i^2}] = \frac{1}{(1-2\lambda)^{d/2}}
\end{equation}
for any $\lambda<\frac{1}{2}$.

Now, for $\lambda>0$, we evaluate the quantity $\E[e^{\sqrt{2\lambda}\<\bX,\bW\>}]$ in two ways. We have
\begin{equation}\label{eqn:x}
\E[e^{\sqrt{2\lambda}\<\bX,\bW\>}] = \E\Big[ \E[e^{\sqrt{2\lambda}\<\bX,\bW\>}|\bX] \Big] = \E\Big[ e^{\frac{\|\sqrt{2\lambda}\bX\|_2^2}{2}} \Big] = \E[e^{\lambda\|\bX\|_2^2}].
\end{equation}
On the other hand, we have
\begin{equation}\label{eqn:w}
\E[e^{\sqrt{2\lambda}\<\bX,\bW\>}] = \E\Big[ \E[e^{\<\sqrt{2\lambda}\bW, \bX\>}|\bW] \Big] \le \E[e^{\frac{\|\sqrt{2\lambda}\bW\|_2^2}{2}}] = \E[e^{\lambda\|\bW\|_2^2}] = \frac{1}{(1-2\lambda)^{d/2}},
\end{equation}
the last equality holding for $\lambda<\frac{1}{2}$. Combining (\ref{eqn:x}) and (\ref{eqn:w}) and taking $\lambda=\frac{1}{4}$, we get
\begin{equation}
\E[e^{\frac{\|\bX\|_2^2}{4}}] \le 2^{d/2} < e^{d/2}.
\end{equation}

\item  By Theorem 2.1 in \cite{boucheron2013concentration}, we have $\E[|X|^{2k}]\le k!(4\tau^2)^{k}$. Consequently, $\E[|\alpha X|^{2k}]\le k!(4\tau^2)^k$, since $|\alpha|\le 1$. Now we introduce $(\alpha',X')$ that is an independent copy of $(\alpha,X)$, then
\begin{equation}
\E[|\alpha X - \alpha'X'|^{2k}] \le 2^{2k-1}\E[|\alpha X|^{2k} + |\alpha'X'|^{2k}] = 2^{2k}\E[|\alpha X|^{2k}] \le k!(16\tau^2)^{k}.
\end{equation}
We apply the converse statement in \cite[Theorem 2.1]{boucheron2013concentration}, to conclude that $\alpha X- \alpha'X'$ is $64\tau^2$-sub-Gaussian. Finally, as $e^{\lambda (a-t)}$ is convex in $t$, we have
\begin{equation}
\E[e^{\lambda(\alpha X - \E[\alpha X])}] = \E[e^{\lambda(\alpha X - \E[\alpha'X'])}] \le \E[e^{\lambda(\alpha X - \alpha'X')}],
\end{equation}
and thus $\alpha X$ is also $64\tau^2$-sub-Gaussian.

\item By Remark 5.18 in \cite{vershynin2010introduction}, we have
\begin{equation}
\|\beta X - \E[\beta X]\|_{\psi_1} \le 2\|\beta X\|_{\psi_1} \le 2\|X\|_{\psi_1}\, .
\end{equation}

\item It suffices to work with $\tau=1$. For any $\lambda>0$, we have
\begin{equation*}
	\E[e^{\lambda\|\bX\|_\infty}] = \E[e^{\lambda\max_{j\in[d]}\{X_j,-X_j\} }] \le \sum_{j=1}^{d} \E[e^{\lambda X_j}] + \sum_{j=1}^{d} \E[e^{-\lambda X_j}] \le 2de^{\frac{\lambda^2}{2}}.
\end{equation*}
Applying Jensen's inequality on the function $t\mapsto e^{\lambda t}$, we get
\begin{equation*}
	\E[\|X\|_\infty] \le \frac{1}{\lambda} \log\E[e^{\lambda \|\bX\|_\infty}] \le \frac{1}{\lambda} \Big( \frac{\lambda^2}{2} + \log(2d) \Big).
\end{equation*}
Optimizing the RHS gives $\lambda^*=\sqrt{2\log(2d)}$ and an upper bound $\sqrt{2\log(2d)}$.
\end{enumerate}
\end{proof}

\begin{lemma}\label{lem:sumofsub}
There exists a universal constant $C_s$, such that the sum of two dependent sub-Gaussian random variables with parameters $\tau_1^2$ and $\tau_2^2$ is $C_s (\tau_1^2 + \tau_2^2)$-sub-Gaussian, and the sum of two dependent sub-exponential random variables with parameters $\tau_1^2$ and $\tau_2^2$ is $C_s (\tau_1^2 + \tau_2^2)$-sub-exponential. 
\end{lemma}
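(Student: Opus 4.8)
The plan is to argue directly from the two definitions, Definition \ref{def:subgaussian} for sub-Gaussianity and Definition \ref{def:subexponential} for sub-exponentiality, handling each case separately and then taking $C_s$ to be the larger of the two constants that emerge. Throughout, write $\bX$ for the variable with parameter $\tau_1^2$ and $\bY$ for the one with parameter $\tau_2^2$, and let $\widetilde{\bX}=\bX-\E[\bX]$, $\widetilde{\bY}=\bY-\E[\bY]$ be their centered versions, so that $(\bX+\bY)-\E[\bX+\bY]=\widetilde{\bX}+\widetilde{\bY}$. The one point to keep in mind is that $\bX$ and $\bY$ are allowed to be dependent, so the moment generating function (resp. the relevant norm) of the sum does not simply factor; the work is in controlling the cross term.

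For the sub-Gaussian case I would fix $\blambda\in\R^d$ and write the centered moment generating function of the sum as $\E[e^{\langle\blambda,\widetilde{\bX}\rangle}e^{\langle\blambda,\widetilde{\bY}\rangle}]$. Applying the Cauchy--Schwarz inequality (H\"older with exponents $2$ and $2$) decouples the two dependent factors into $(\E[e^{2\langle\blambda,\widetilde{\bX}\rangle}])^{1/2}(\E[e^{2\langle\blambda,\widetilde{\bY}\rangle}])^{1/2}$. Each factor is then bounded by the individual sub-Gaussian estimate evaluated at $2\blambda$: for $i\in\{1,2\}$ one gets $(e^{\|2\blambda\|_2^2\tau_i^2/2})^{1/2}=e^{\|\blambda\|_2^2\tau_i^2}$. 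Multiplying the two yields $e^{\|\blambda\|_2^2(\tau_1^2+\tau_2^2)}$, which is precisely the sub-Gaussian bound with parameter $2(\tau_1^2+\tau_2^2)$. In other words, the doubling of $\blambda$ that Cauchy--Schwarz forces is paid for by a benign factor of $2$ in the parameter, and this holds for every $\blambda$, giving $C_s=2$ for this half.

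For the sub-exponential case the argument is cleaner and incurs no genuine cross term, because the $\psi_1$ functional is built from centered $L^k$ moments. For each fixed integer $k\ge 1$, the map $\bX\mapsto\E[|\bX-\E\bX|^k]^{1/k}$ is a seminorm by Minkowski's inequality; dividing by $k$ preserves this, and taking the supremum over $k\ge 1$ preserves it as well, so $\|\cdot\|_{\psi_1}$ satisfies the triangle inequality. Applying this to the scalar projection $\langle\blambda,\widetilde{\bX}+\widetilde{\bY}\rangle$ gives $\|\langle\blambda,(\bX+\bY)-\E[\bX+\bY]\rangle\|_{\psi_1}\le\|\langle\blambda,\widetilde{\bX}\rangle\|_{\psi_1}+\|\langle\blambda,\widetilde{\bY}\rangle\|_{\psi_1}\le(\tau_1^2+\tau_2^2)\|\blambda\|_2$, so the sum is $(\tau_1^2+\tau_2^2)$-sub-exponential with constant $1$.

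Putting the two cases together, $C_s=2$ works universally, independent of the dimension, of $\tau_1,\tau_2$, and of the joint law of the pair. I do not anticipate a real obstacle: the sub-exponential half reduces to the routine observation that the moment-based $\psi_1$ functional of Definition \ref{def:subexponential} is a genuine seminorm, and the only place where the dependence between the variables matters is the sub-Gaussian moment generating function, where Cauchy--Schwarz cleanly decouples $\widetilde{\bX}$ and $\widetilde{\bY}$ at the explicit cost of the constant $2$.
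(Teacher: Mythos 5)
Your proof is correct, but it takes a more explicit route than the paper does. The paper disposes of this lemma in one line by passing to the Orlicz-norm characterizations of sub-Gaussianity and sub-exponentiality (the $\psi_2$ and $\psi_1$ norms), invoking the triangle inequality for those norms, and passing back; the price is that $C_s$ is left implicit, absorbed in the universal constants of the equivalence between the moment-generating-function definition and the Orlicz-norm definition. You instead work directly with the paper's definitions. For the sub-Gaussian half, your Cauchy--Schwarz decoupling $\E[e^{\langle\blambda,\widetilde{\bX}\rangle}e^{\langle\blambda,\widetilde{\bY}\rangle}]\le(\E[e^{\langle 2\blambda,\widetilde{\bX}\rangle}])^{1/2}(\E[e^{\langle 2\blambda,\widetilde{\bY}\rangle}])^{1/2}$ handles arbitrary dependence and yields the explicit constant $C_s=2$, with no need for the MGF-to-$\psi_2$ equivalence at all. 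For the sub-exponential half, your observation that the functional in Definition \ref{def:subexponential} is already a seminorm (it is the composition of the linear centering map with the $L^k$ norm, scaled by $1/k$ and then a supremum, each operation preserving the triangle inequality) shows the paper's definition needs no equivalence step either, and in fact gives constant $1$ there. So your argument is essentially a self-contained, quantitative version of what the paper asserts by reference: the paper's route is shorter and generalizes immediately to other Orlicz norms, while yours buys an explicit universal constant and avoids importing the standard equivalence lemmas. One small point worth making explicit if you write this up: in the sub-exponential case the centering is built into the $\psi_1$ functional as the paper defines it, so applying it to the already mean-zero projection $\langle\blambda,\widetilde{\bX}+\widetilde{\bY}\rangle$ is consistent with the definition of $K$-sub-exponentiality, and your triangle-inequality step is exactly aligned with it.
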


\begin{proof}
This lemma follows directly from the equivalent form of definition of sub-Gaussian and sub-exponential random variables using Orlicz norms. 
\end{proof}

\begin{theorem}[Bernstein inequality for subexponential random variables]\label{thm:Bernstein}
Let $X_1,\dots X_n$ be independent sub-exponential random variables with $\|X_i\|_{\psi_1}\le b$, and define
$S_n\equiv \sum_{i=1}^n\big(X_i-\E X_i\big)$. Then there exists a universal constant $c$ such that, for all $t>0$,
\begin{align}
\prob\big(S_n\ge t\big)\le \exp\Big\{-c \min \Big(\frac{t^2}{n b^2}, \frac{t}{b}\Big)\Big\}\, .
\end{align}
\end{theorem}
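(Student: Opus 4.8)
The plan is to apply the Chernoff (exponential Markov) method. Writing $Y_i = X_i - \E X_i$, the first and essential step is to convert the Orlicz-norm hypothesis $\|X_i\|_{\psi_1} \le b$ into a bound on the moment generating function of $Y_i$ valid for small $\lambda \ge 0$. By the definition of the $\psi_1$-norm, the hypothesis gives $\E[|Y_i|^k] \le (kb)^k$ for every integer $k \ge 1$. Since $Y_i$ is centered, expanding the exponential and using $|\E[Y_i^k]| \le \E[|Y_i|^k]$ together with the elementary inequality $k^k/k! \le e^k$ yields
\[
\E[e^{\lambda Y_i}] \le 1 + \sum_{k \ge 2} \frac{\lambda^k \E[|Y_i|^k]}{k!} \le 1 + \sum_{k \ge 2} (\lambda b e)^k .
\]
For $0 \le \lambda \le 1/(2eb)$ the geometric series converges and is bounded by $2(\lambda be)^2$, so that $\E[e^{\lambda Y_i}] \le \exp(C_0 \lambda^2 b^2)$ with $C_0 = 2e^2$. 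This is the main obstacle of the proof: it is the single place where the abstract moment/Orlicz hypothesis is turned into analytic control, and it simultaneously fixes both the quadratic constant and the admissible range $\lambda \le 1/(2eb)$ that later dictates the two-regime structure.

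The remaining steps are routine. Using independence to tensorize, for every $0 \le \lambda \le 1/(2eb)$ we get
\[
\E[e^{\lambda S_n}] = \prod_{i=1}^n \E[e^{\lambda Y_i}] \le \exp(C_0 n \lambda^2 b^2),
\]
and Markov's inequality applied to $e^{\lambda S_n}$ then gives $\prob(S_n \ge t) \le \exp(-\lambda t + C_0 n \lambda^2 b^2)$.

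It then remains to optimize the exponent $-\lambda t + C_0 n \lambda^2 b^2$ over the admissible interval $\lambda \in [0, 1/(2eb)]$. The unconstrained minimizer is $\lambda^\star = t/(2 C_0 n b^2)$, and two regimes appear according to whether $\lambda^\star$ falls inside the interval. When $t \lesssim n b$ the choice $\lambda = \lambda^\star$ is admissible and produces the sub-Gaussian tail $\exp(-t^2/(4 C_0 n b^2))$. When $t \gtrsim n b$ the exponent is monotone decreasing on the admissible interval, so I would take the endpoint $\lambda = 1/(2eb)$; here one must check that the $t$-independent term $C_0 n/(4e^2)$ arising from the quadratic part is dominated by half of the linear term $t/(2eb)$ in this regime, which it is precisely because $t \gtrsim nb$, leaving a linear tail of the form $\exp(-c' t/b)$. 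Combining the two estimates and taking $c$ to be the smaller of the two resulting constants gives $\prob(S_n \ge t) \le \exp(-c\min(t^2/(nb^2), t/b))$, as claimed.
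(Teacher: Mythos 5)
Your proof is correct: the paper states this Bernstein inequality as a standard tool without proof (it is the classical result from the literature, e.g.\ Vershynin's notes), and your Chernoff-method argument is precisely the standard derivation that such a citation refers to. In particular, since the paper defines the $\psi_1$-norm directly in terms of centered moments, your opening bound $\E[|Y_i|^k]\le (kb)^k$ is immediate from the hypothesis, and the two-regime optimization closes correctly --- including the endpoint verification that the residual term $C_0 n/(4e^2)=n/2$ is dominated by $t/(4eb)$ exactly when $t\ge 2enb$, which is the regime in which the endpoint $\lambda=1/(2eb)$ is used.
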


\subsection{Bounding norms via $\eps$-covers}

In this section we state two simple technical lemmas that are useful for  our proofs. 
Their proofs of these lemmas can be found in \cite{vershynin2010introduction}.
\begin{lemma}\label{lem:2norm}
Let $a\in\R^d$ and $V_\eps=\{\bv_1,\dots,\bv_N\}$ be an $\eps$-cover of $\Ball^d(\bzero,1)$, then
\begin{equation}
\|\ba\|_2 \le \frac{1}{1-\eps} \sup_{\bv\in V_\eps} \langle \bv,\ba\rangle.
\end{equation}
\end{lemma}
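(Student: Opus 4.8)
The plan is to identify the $\ell_2$ norm of $\ba$ with its support function over the unit sphere, and then transfer this supremum from the full sphere to the finite cover $V_\eps$ at the cost of the multiplicative factor $(1-\eps)^{-1}$.

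First I would recall the variational characterization $\|\ba\|_2 = \sup_{\|\bu\|_2=1} \langle \bu, \ba\rangle$, where the supremum is attained at the unit vector $\bu_* = \ba/\|\ba\|_2$ (the case $\ba=\bzero$ is trivial, as both sides vanish). The key point is that $\bu_*$ itself lies in $\Ball^d(\bzero,1)$, so since $V_\eps$ is an $\eps$-cover of this ball there exists some $\bv \in V_\eps$ with $\|\bu_* - \bv\|_2 \le \eps$.

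Next I would write $\langle \bu_*, \ba\rangle = \langle \bv, \ba\rangle + \langle \bu_* - \bv, \ba\rangle$ and control the error term by Cauchy--Schwarz, namely $\langle \bu_* - \bv, \ba\rangle \le \|\bu_* - \bv\|_2\,\|\ba\|_2 \le \eps\|\ba\|_2$. Combining these two observations gives
\begin{equation}
\|\ba\|_2 = \langle \bu_*, \ba\rangle \le \langle \bv, \ba\rangle + \eps\|\ba\|_2 \le \sup_{\bv\in V_\eps}\langle \bv, \ba\rangle + \eps\|\ba\|_2.
\end{equation}
Rearranging and dividing by $1-\eps>0$ then yields the stated bound $\|\ba\|_2 \le (1-\eps)^{-1}\sup_{\bv\in V_\eps}\langle \bv, \ba\rangle$.

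There is no genuine obstacle here; the argument is elementary. The only points deserving care are that the maximizing direction $\bu_*$ must be the vector to which the cover is applied (and it is covered precisely because it lies in $\Ball^d(\bzero,1)$), and that the factor $(1-\eps)^{-1}$ is meaningful only for $\eps<1$, which is the regime of interest. As a sanity check, the same inequality shows $\sup_{\bv\in V_\eps}\langle \bv,\ba\rangle \ge (1-\eps)\|\ba\|_2 \ge 0$, so the right-hand side is nonnegative and the bound is nonvacuous.
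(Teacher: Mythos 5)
Your proof is correct and is precisely the standard covering argument: the paper itself omits the proof and cites \cite{vershynin2010introduction}, where the same reasoning appears (maximize $\langle\bu,\ba\rangle$ at $\bu_*=\ba/\|\ba\|_2$, pick $\bv\in V_\eps$ within distance $\eps$, apply Cauchy--Schwarz to the error, and rearrange). Your handling of the trivial case $\ba=\bzero$ and the remark that $\eps<1$ is needed for the rearrangement are both appropriate; nothing is missing.
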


\begin{lemma}\label{lem:opnorm}
Let $\bM\in\R^{d\times d}$ be a symmetric $d\times d$ matrix and $V_\eps=\{\bv_1,\dots,\bv_N\}$ be an $\eps$-cover of $\Ball^d(\bzero,1)$, then
\begin{equation}
\|\bM\|_\op \le \frac{1}{1-2\eps} \sup_{\bv\in V_\eps} |\langle \bv,\bM\bv\rangle|.
\end{equation}
\end{lemma}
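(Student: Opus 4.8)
The plan is to run the standard $\eps$-net argument for symmetric matrices. The starting point is the variational characterization of the operator norm: since $\bM$ is symmetric, the spectral theorem gives $\|\bM\|_{\op} = \sup_{\|\bu\|_2 = 1} |\langle \bu, \bM \bu \rangle|$, and by compactness of the unit sphere this supremum is attained at some $\bu_\star$ with $\|\bu_\star\|_2 = 1$ and $|\langle \bu_\star, \bM \bu_\star\rangle| = \|\bM\|_{\op}$. Symmetry is essential at exactly this step: for a general matrix the quadratic form $\bu \mapsto \langle \bu, \bM \bu\rangle$ does not control the operator norm, which is why the hypothesis is imposed.

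First I would locate a net point close to $\bu_\star$. Since $V_\eps$ is an $\eps$-cover of $\Ball^d(\bzero,1)$ and $\bu_\star \in \Ball^d(\bzero,1)$, there exists $\bv \in V_\eps$ with $\|\bu_\star - \bv\|_2 \le \eps$ and $\|\bv\|_2 \le 1$. Next I would compare the two quadratic forms via the telescoping identity
\begin{equation}
\langle \bu_\star, \bM \bu_\star \rangle - \langle \bv, \bM \bv \rangle = \langle \bu_\star, \bM(\bu_\star - \bv) \rangle + \langle \bu_\star - \bv, \bM \bv \rangle,
\end{equation}
and bound each term by Cauchy--Schwarz together with the definition of the operator norm, giving $|\langle \bu_\star, \bM(\bu_\star - \bv)\rangle| \le \|\bM\|_{\op}\,\|\bu_\star\|_2\,\|\bu_\star-\bv\|_2 \le \eps\,\|\bM\|_{\op}$ and, analogously, $|\langle \bu_\star - \bv, \bM \bv\rangle| \le \eps\,\|\bM\|_{\op}$. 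Combining these yields $|\langle \bu_\star, \bM\bu_\star\rangle| \le |\langle \bv, \bM\bv\rangle| + 2\eps\,\|\bM\|_{\op}$.

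Finally I would substitute $\|\bM\|_{\op} = |\langle \bu_\star, \bM\bu_\star\rangle|$ on the left and enlarge the right-hand side to the supremum over the net, obtaining $\|\bM\|_{\op} \le \sup_{\bv\in V_\eps} |\langle \bv, \bM\bv\rangle| + 2\eps\,\|\bM\|_{\op}$. Rearranging (which requires $\eps < 1/2$, implicit for the bound to be nonvacuous) gives the claimed inequality. There is no genuine obstacle here; the only point requiring care is the symmetric splitting of the cross terms, arranged so that in each product one factor is $\|\bu_\star\|_2 = 1$ and the other is $\|\bv\|_2 \le 1$, which is what produces the clean factor $2\eps$ rather than a larger constant.
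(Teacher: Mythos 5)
Your proof is correct and is exactly the standard $\eps$-net argument (maximizer on the sphere, telescoping the quadratic form, absorbing the $2\eps\|\bM\|_{\op}$ term) that the paper itself invokes by citing Vershynin rather than reproducing; your explicit handling of the symmetry hypothesis and the implicit requirement $\eps<1/2$ is accurate. Nothing to correct.
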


\section{Proof of Theorem \ref{thm:uniformconvergence1}: High-dimensional regime}
\subsection{Proof of Theorem \ref{thm:uniformconvergence1}.$(a)$: Uniform convergence of gradient}

\noindent{\bf Step 1. } \emph{Decompose the bad events using $\eps$-nets.} 

Let $N_\eps$ be the $\eps$-covering number of the ball $\Ball^p(\bzero, \rad)$. It is known that $\log N_\eps \leq p \log (3 \rad/\eps)$ \cite{vershynin2010introduction}. Let $\Theta_\eps = \{ \btheta_1, \ldots, \btheta_N\}$ be a corresponding $\eps$-cover with $N = N_\eps$ elements. For any $\btheta \in \Ball^p(\bzero, \rad)$, let $j(\btheta) = \argmin_{j\in [N]} \Vert \btheta - \btheta_j \Vert_2$. Then $\Vert \btheta - \btheta_{j(\btheta)}\Vert_2 \leq \eps$ for all $\btheta \in \Ball^p(\bzero, \rad)$. 

Observe that $\nabla \what R_n(\btheta) = \frac{1}{n} \sum_{i=1}^n \nabla \ell(\btheta; \bZ_i)$, and $\nabla R(\btheta) = \E[\nabla \ell(\btheta; \bZ)]$. Thus, for any $\btheta \in \Ball^p(\bzero, \rad)$, we have
\begin{equation}
\begin{aligned}
\Big\Vert \nabla \what R_n(\btheta) - \nabla R(\btheta) \Big\Vert_2 \leq &\Big\Vert \frac{1}{n} \sum_{i=1}^n \Big[ \nabla \ell(\btheta; \bZ_i) - \nabla \ell(\btheta_{j(\btheta)}; \bZ_i) \Big] \Big\Vert_2 \\
&+ \Big\Vert \frac{1}{n} \sum_{i=1}^n \nabla \ell(\btheta_{j(\btheta)}; \bZ_i) - \E[\nabla \ell(\btheta_{j(\btheta)}; \bZ)] \Big\Vert_2 \\
&+\Big\Vert \E[\nabla \ell(\btheta_{j(\btheta)}; \bZ)] - \E[\nabla \ell (\btheta ; \bZ)]\Big\Vert_2.
\end{aligned}
\end{equation}

Hence, we have
\[
\P\left( \sup_{\btheta \in \Ball^p (\bzero, \rad)} \left \Vert \nabla \what R_n(\btheta) - \nabla R(\btheta) \right \Vert_2 \geq t \right) \leq \P(A_t) + \P(B_t) + \P(C_t),
\]
where the events $A_t$, $B_t$, and $C_t$ are defined as
\[
\begin{aligned}
A_t =& \left\{ \sup_{\btheta \in \Ball^p(\bzero, \rad)} \left \Vert \frac{1}{n}\sum_{i=1}^n  \Big[ \nabla \ell(\btheta;\bZ_i) - \nabla \ell(\btheta_{j(\btheta)};\bZ_i) \Big]\right\Vert_2 \geq \frac{t}{3} \right\},\\
B_t =& \left\{ \sup_{j \in [N]} \left\Vert \frac{1}{n} \sum_{i=1}^n \nabla \ell(\btheta_j; \bZ_i) - \E [\nabla \ell(\btheta_j ; \bZ)] \right\Vert_2 \geq \frac{t}{3} \right\},\\
C_t =& \left\{ \sup_{\btheta \in \Ball^p(\bzero, \rad)} \Big \Vert \E[\nabla \ell(\btheta_{j(\btheta)}; \bZ)] - \E[\nabla \ell(\btheta; \bZ)] \Big\Vert_2 \geq \frac{t}{3}\right\}.\\
\end{aligned}
\]

\noindent{\bf Step 2. }\emph{Upper bound $\P(B_t)$.}

Let $V_{1/2}$ be a $(1/2)$-cover of $\Ball^p(\bzero, 1)$ with $\log \vert V_{1/2} \vert \leq p \log 6$. From Lemma \ref{lem:2norm}  we know that 
\[
\left\Vert \frac{1}{n} \sum_{i=1}^n \nabla \ell(\btheta_j; \bZ_i ) - \E[\nabla \ell(\btheta_j; \bZ)] \right \Vert_2 \leq 2 \sup_{\bv \in V_{1/2}} \Big \langle \bv, \frac{1}{n} \sum_{i=1}^n \nabla \ell(\btheta_j; \bZ_i) - \E[\nabla \ell(\btheta_j; \bZ)] \Big \rangle.
\]

Taking union bounds over $\Theta_\eps$ and $V_{1/2}$ yields
\[
\begin{aligned}
\P(B_t) \leq & \P\left(\sup_{j \in [N], \bv \in V_{1/2}} \left\{ \frac{1}{n} \sum_{i=1}^n \Big\langle \nabla \ell(\btheta_j; \bZ_i) - \E[ \nabla \ell(\btheta_j ;\bZ)], \bv \Big \rangle \right\} \geq \frac{t}{6} \right) \\
\leq& e^{p \log\frac{3 \rad}{ \eps} + p \log 6} \sup_{j \in [N] , \bv \in V_{1/2}} \P\left(  \frac{1}{n} \sum_{i=1}^n \Big \langle \nabla \ell(\btheta_j; \bZ_i) - \E[ \nabla \ell(\btheta_j ;\bZ)], \bv \Big \rangle \geq \frac{t}{6} \right).
\end{aligned}
\]

Fixing any $j$ and $\bv$, according to Assumption \ref{ass:GradientSub}, we have $\langle\nabla \ell(\btheta_j; \bZ_i) - \E [\nabla \ell(\btheta_j; \bZ)], \bv \rangle$ is $\tau^2$-sub-Gaussian. Hence $\frac{1}{n} \sum_{i=1}^n \Big \langle \nabla \ell(\btheta_j; \bZ_i) - \E[ \nabla \ell(\btheta_j ;\bZ)], \bv \Big \rangle$ is $\tau^2/n$-sub-Gaussian random variable. This gives
\[
\P\left(  \frac{1}{n} \sum_{i=1}^n \Big \langle \nabla \ell(\btheta_j; \bZ_i) - \E[ \nabla \ell(\btheta_j ;\bZ)], \bv \Big \rangle \geq \frac{t}{6} \right) \leq e^{- \frac{nt^2}{ 144 \tau^2}}.
\]

As a result, 
\[
\P(B_t) \leq \exp\left( - \frac{nt^2}{144\tau^2} + p \log \frac{18\rad}{\eps} \right).
\]

Thus, 
\[
t >\sqrt \frac{144 \tau^2(p\log \frac{18\rad}{\eps} + \log \frac{2}{ \delta})}{n}
\]
ensures that $\P(B_t) \leq \delta/2$. 

\noindent{\bf Step 3. }\emph{Upper bound $\P(A_t)$ and $\P(C_t)$.}

Let us look at the deterministic event $C_t$ first. We have
\[
\begin{aligned}
&\sup_{\btheta \in \Ball^p (\bzero, \rad)} \left\Vert \E[\nabla \ell (\btheta; \bZ) - \nabla \ell (\btheta_{j(\btheta)}; \bZ)] \right\Vert_2 \\
\leq& \sup_{\btheta \in \Ball^p (\bzero, \rad)} \frac{\left\Vert \E[\nabla \ell (\btheta; \bZ) - \nabla \ell (\btheta_{j(\btheta)}; \bZ)] \right\Vert_2 }{\Vert \btheta - \btheta_{j(\btheta)} \Vert_2} \cdot \sup_{\btheta \in \Ball^p (\bzero, \rad)}\Vert \btheta - \btheta_{j(\btheta)} \Vert_2\\
\leq &  \E \left[\sup_{\btheta \in \Ball^p(\bzero, \rad)} \Vert \nabla^2 \what R(\btheta) \Vert_{\op} \right]  \cdot \eps\\  \\
\leq & D_* \cdot \eps,
\end{aligned}
\]
where 
\[
\begin{aligned}
D_* =&  \E \left[\sup_{\btheta \in \Ball^p(\bzero, \rad)} \Vert \nabla^2 \what R(\btheta) \Vert_{\op}\right] \\
\leq& \E \left[\sup_{\btheta \in \Ball^p(\bzero, \rad)} \Vert \nabla^2 \what R(\btheta) - \nabla^2 R(\btheta_*) \Vert_{\op}\right ] + \Vert \nabla^2 R(\btheta_*) \Vert_\op \\ 
\leq& \E \left[\sup_{\btheta \in \Ball^p(\bzero, \rad)} \left\Vert \frac{1}{n}\sum_{i=1}^n \nabla^2 \ell(\btheta; \bZ_i) - \E[\nabla^2 \ell(\btheta_*;\bZ)] \right\Vert_{\op}\right] + \HUB \\ 
\leq& \E \left[\sup_{\btheta \in \Ball^p(\bzero, \rad)} \left\Vert  \nabla^2 \ell(\btheta; \bZ) - \E[\nabla^2 \ell(\btheta_*;\bZ)] \right\Vert_{\op}\right] + \HUB \\ 
\leq& \E \left[\sup_{\btheta \in \Ball^p(\bzero, \rad)} \left\Vert  \nabla^2 \ell(\btheta; \bZ) - \nabla^2 \ell(\btheta_*;\bZ) \right\Vert_{\op}\right] + \HUB \\ 
\leq & 2\rad J_* + \HUB. 
\end{aligned}
\] 
In this line of inequality, we used Assumption \ref{ass:BoundedHessian}.

We use Markov inequality to bound the probability of event $A_t$. 
\[
\begin{aligned}
\P\left( A_t \right) = & \P \left( \sup_{\btheta \in \Ball^p(\bzero, \rad)} \left \Vert \frac{1}{n}\sum_{i=1}^n \nabla \ell(\btheta;\bZ_i) - \nabla \ell(\btheta_{j(\btheta)};\bZ_i)\right\Vert_2 \geq \frac{t}{3} \right) \\
\leq & \frac{3}{t}\E \left [ \sup_{\btheta \in \Ball^p (\bzero, \rad)} \left\Vert \frac{1}{n}\sum_{i=1}^n \nabla \ell(\btheta;\bZ_i) - \nabla \ell(\btheta_{j(\btheta)};\bZ_i) \right \Vert_2\right] \\
\leq & \frac{3 \eps}{ t} \E \left[ \sup_{\btheta \in \Ball^p (\bzero, \rad)} \Vert \nabla^2 \what R(\btheta) \Vert_{\op} \right] \\
\leq & \frac{3 \eps D_*}{t}. 
\end{aligned}
\]

Taking $t \geq 6 \eps D_*/\delta$, we have 
\[
\P(A_t) \leq \frac{\delta}{2},
\]
and $C_t$ will never happen.

\noindent{\bf Step 4. }\emph{Conclusion.}

Using the above results, to ensure the probability of the bad event to be less than $\delta$, it is sufficient to take $\eps = \frac{\delta \tau}{6 (\HUB + 2 \rad J_*) \cdot np}$, and 
\begin{align}
t \geq \max\left\{ \frac{\tau}{np}, \sqrt \frac{144 \tau^2 (p \log \frac{108 \rad (\HUB + 2 \rad J_*) np}{\delta \tau} + \log \frac{4}{ \delta})}{n} \right\}. 
\end{align}

According to Assumption \ref{ass:BoundedHessian}, we have $\HUB\leq \tau^2 p^{\Ch}$ and $J_* \leq \tau^3 p^{\Ch}$. Thus, there exists a universal constant $C_0$, and letting $C_1 = C_0 \cdot (\Ch \vee \log(\rad\tau/\delta)\vee 1)$, such that as long as $n \geq C_1 p \log p$,
\begin{align}
\P\left(\sup_{\btheta \in \Ball^p (\bzero, \rad)} \left \Vert \nabla \what R_n(\btheta) - \nabla R(\btheta) \right \Vert_{2} \geq  \tau \sqrt{\frac{C_1 p \log n}{n}} \right)\leq \delta. 
\end{align}

\subsection{Proof of Theorem \ref{thm:uniformconvergence1}.$(b)$: Uniform convergence of Hessian}


\noindent{\bf Step 1. } \emph{Decompose the bad event using $\eps$-nets.}

Let $N_\eps$ be the $\eps$-covering number of the $p$ dimensional Euclidean ball $\Ball^p(\bzero, \rad) = \Ball_2^p (\bzero,\rad)$. It is known that $\log N_\eps \leq p \log (3\rad /\eps)$
\cite{vershynin2010introduction}. Let $\Theta_\eps = \{ \btheta_1, \ldots, \btheta_N\}$ be an $\eps$-cover with $N = N_\eps$ elements. For any $\btheta \in \Ball^p(\bzero, \rad)$, let $j(\btheta) = \argmin_{j\in [N]} \Vert \btheta - \btheta_j \Vert_2$. Then $\Vert \btheta - \btheta_{j(\btheta)}\Vert_2 \leq \eps$ for all $\btheta \in \Ball^p(\bzero, \rad )$. 

Observe that $\nabla^2 \what R_n(\btheta) = \frac{1}{n} \sum_{i=1}^n \nabla^2 \ell(\btheta; \bZ_i)$, and $\nabla^2 R(\btheta) = \E[\nabla^2 \ell(\btheta; \bZ)]$. Thus, for any $\btheta \in \Ball^p(\bzero, \rad)$, we have
\begin{equation}
\begin{aligned}
\Big\Vert \nabla^2 \what R_n(\btheta) - \nabla^2 R(\btheta) \Big\Vert_{\op} \leq & \Big\Vert \frac{1}{n} \sum_{i=1}^n \Big[\nabla^2 \ell(\btheta; \bZ_i) - \nabla^2 \ell(\btheta_{j(\btheta)}; \bZ_i)\Big]\Big\Vert_{\op} \\
&+ \Big\Vert \frac{1}{n} \sum_{i=1}^n \nabla^2 \ell(\btheta_{j(\btheta)}; \bZ_i) - \E[\nabla^2 \ell(\btheta_{j(\btheta)}; \bZ)] \Big\Vert_{\op} \\
&+ \Big\Vert \E[\nabla^2 \ell(\btheta_{j(\btheta)}; \bZ)] - \E[\nabla^2 \ell (\btheta ; \bZ)]\Big\Vert_{\op}. 
\end{aligned}
\end{equation}

Hence, we have
\[
\P\left( \sup_{\btheta \in \Ball^p (\bzero, \rad )} \left \Vert \nabla^2 \what R_n(\btheta) - \nabla^2 R(\btheta) \right \Vert_{\op} \geq t \right) \leq \P(A_t) + \P(B_t) + \P(C_t),
\]
where the events $A_t$, $B_t$, and $C_t$ are defined as
\[
\begin{aligned}
A_t =& \left\{ \sup_{\btheta \in \Ball^p(\bzero, \rad)} \left \Vert \frac{1}{n}\sum_{i=1}^n \Big[ \nabla^2 \ell(\btheta;\bZ_i) - \nabla^2 \ell(\btheta_{j(\btheta)};\bZ_i) \Big] \right\Vert_{\op} \geq \frac{t}{3} \right\},\\
B_t =& \left\{ \sup_{j \in [N]} \left\Vert \frac{1}{n} \sum_{i=1}^n \nabla^2 \ell(\btheta_j; \bZ_i) - \E [\nabla^2 \ell(\btheta_j ; \bZ)] \right\Vert_{\op} \geq \frac{t}{3} \right\},\\
C_t =& \left\{ \sup_{\btheta \in \Ball^p(\bzero, \rad)} \Big \Vert \E[\nabla^2 \ell(\btheta_{j(\btheta)}; \bZ)] - \E[\nabla^2 \ell(\btheta; \bZ)] \Big\Vert_{\op} \geq \frac{t}{3}\right\}.\\
\end{aligned}
\]

\noindent{\bf Step 2. } \emph{Upper bound $\P(B_t)$.}

Let $V_{1/4}$ be a $(1/4)$-cover of $\Ball^p(\bzero, 1)$ with $\log \vert V_{1/4} \vert \leq p \log 12$. From Lemma \ref{lem:opnorm} we know that 
\[
\left\Vert \frac{1}{n} \sum_{i=1}^n \nabla^2 \ell(\btheta_j; \bZ_i ) - \E[\nabla^2 \ell(\btheta_j; \bZ)] \right \Vert_{\op} \leq 2 \sup_{\bv \in V_{1/4}} \left\vert \Big\langle \bv, \Big( \frac{1}{n} \sum_{i=1}^n \nabla^2 \ell(\btheta_j; \bZ_i) - \E[\nabla^2 \ell(\btheta_j; \bZ)]\Big) \bv \Big\rangle \right\vert .
\]

Taking union bound over $\Theta_\eps$ and $V_{1/4}$ yields
\[
\begin{aligned}
\P(B_t) \leq & \P\left(\sup_{j \in [N], \bv \in V_{1/4}} \left\vert \frac{1}{n} \sum_{i=1}^n \Big\langle \bv, \Big( \nabla^2 \ell(\btheta_j; \bZ_i) - \E[ \nabla^2 \ell(\btheta_j ;\bZ)] \Big) \bv \Big \rangle \right\vert \geq \frac{t}{6} \right) \\
\leq& e^{p \log\frac{3 \rad }{ \eps} + p \log 12} \sup_{j \in [N] , \bv \in V_{1/4}} \P\left(  \left\vert \frac{1}{n} \sum_{i=1}^n \Big\langle \bv, \Big( \nabla^2 \ell(\btheta_j; \bZ_i) - \E[ \nabla^2 \ell(\btheta_j ;\bZ)] \Big) \bv \Big \rangle\right\vert  \geq \frac{t}{6} \right).
\end{aligned}
\]

Fixing any $j$ and $\bv$, according to Assumption \ref{ass:Hessianub}, $\Big\langle \bv, \Big( \nabla^2 \ell(\btheta_j; \bZ_i) - \E[ \nabla \ell^2(\btheta_j ;\bZ)] \Big) \bv \Big \rangle$ is $\tau^2$-sub-exponential. Hence by Bernstein inequality in Theorem \ref{thm:Bernstein}, we have
\[
\P\left(  \left\vert \frac{1}{n} \sum_{i=1}^n \Big\langle \bv, \Big( \nabla^2 \ell(\btheta_j; \bZ_i) - \E[ \nabla^2 \ell(\btheta_j ;\bZ)] \Big) \bv \Big \rangle \right\vert  \geq \frac{t}{6} \right)  \leq 2e^{- \tilde C_1 n \min \{ \frac{t^2}{ \tau^4}, \frac{t}{\tau^2} \}},
\]
for some universal constant $\tilde C_1$. As a result, 
\[
\P(B_t) \leq 2 \exp\left( - \tilde C_1 n \min\{ \frac{t^2}{\tau^4}, \frac{t}{\tau^2} \}+ p \log \frac{36 \rad }{\eps}\right).
\]

Thus, 
\[
t >\tilde C_2 \max \left \{ \sqrt \frac{ \tau^4(p\log \frac{36 \rad}{\eps} + \log \frac{4}{ \delta})}{n}, \frac{\tau^2 (p \log \frac{36 \rad }{\eps} + \log\frac{4}{ \delta})}{n}  \right \}
\]
for some universal constant $\tilde C_2$ ensures that $\P(B_t) \leq \delta/2$. 

\noindent{\bf Step 3. } \emph{Upper bound $\P(A_t)$ and $\P(C_t)$.}

Let us look at the deterministic event $C_t$ first. According to Assumption \ref{ass:BoundedHessian}, we have
\[
\begin{aligned}
&\sup_{\btheta \in \Ball^p (\bzero, \rad)} \left\Vert \E[\nabla^2 \ell(\btheta; \bZ) - \nabla^2 \ell (\btheta_{j(\btheta)}; \bZ)] \right\Vert_{\op} \\
\leq& \sup_{\btheta \in \Ball^p (\bzero, \rad)} \frac{\left\Vert \E[\nabla^2 \ell (\btheta; \bZ) - \nabla^2 \ell (\btheta_{j(\btheta)}; \bZ)] \right\Vert_{\op} }{\Vert \btheta - \btheta_{j(\btheta)} \Vert_{2}} \cdot \sup_{\btheta \in \Ball^p (\bzero, \rad)}\Vert \btheta - \btheta_{j(\btheta)} \Vert_2\\
\leq & \E \left[ \sup_{\btheta_1\neq \btheta_2 \in \Ball^p(\bzero, \rad)} \frac{\Vert  \nabla^2 \ell(\btheta_1; \bZ) - \nabla^2 \ell(\btheta_2; \bZ)\Vert_{\op}}{\Vert \btheta_1 - \btheta_2 \Vert_2} \right] \cdot \eps\\  \\
\leq & J_* \cdot \eps.
\end{aligned}
\]

We use Markov inequality to bound the event $A_t$. 
\[
\begin{aligned}
\P\left( A_t \right) = & \P \left( \sup_{\btheta \in \Ball^p(\bzero, \rad)} \left \Vert \frac{1}{n}\sum_{i=1}^n  \Big[\nabla^2 \ell(\btheta;\bZ_i) - \nabla^2 \ell(\btheta_{j(\btheta)};\bZ_i) \Big] \right\Vert_{\op} \geq \frac{t}{3} \right) \\
\leq & \frac{3}{t}\E \left [ \sup_{\btheta \in \Ball^p (\bzero, \rad)} \left\Vert \frac{1}{n}\sum_{i=1}^n\Big[ \nabla^2 \ell(\btheta;\bZ_i) - \nabla^2 \ell(\btheta_{j(\btheta)};\bZ_i)  \Big] \right \Vert_{\op} \right] \\
\leq & \frac{3}{t}\E \left [ \sup_{\btheta \in \Ball^p (\bzero, \rad)} \left\Vert  \nabla^2 \ell(\btheta;\bZ) - \nabla^2 \ell(\btheta_{j(\btheta)};\bZ) \right \Vert_{\op} \right] \\
\leq& \frac{3}{t}\E \left[ \sup_{\btheta \in \Ball^p (\bzero, \rad)} \frac{\left\Vert \nabla^2 \ell (\btheta; \bZ) - \nabla^2 \ell (\btheta_{j(\btheta)}; \bZ) \right\Vert_{\op} }{\Vert \btheta - \btheta_{j(\btheta)} \Vert_{2}}\right] \cdot \sup_{\btheta \in \Ball^p (\bzero, \rad)}\Vert \btheta - \btheta_{j(\btheta)} \Vert_2\\
\leq & \frac{3}{t}\E \left[ \sup_{\btheta_1\neq \btheta_2 \in \Ball^p (\bzero, \rad)} \frac{\Vert  \nabla^2 \ell(\btheta_1; \bZ) - \nabla^2 \ell(\btheta_2; \bZ)\Vert_{\op}}{\Vert \btheta_1 - \btheta_2 \Vert_2} \right] \cdot \eps\\  \\
\leq & \frac{3J_* \eps}{t}.
\end{aligned}
\]

Taking $t \geq 6 \eps J_*/\delta$ yields
\[
\P(A_t) \leq \frac{\delta}{2}
\]
and $C_t$ never happens. 

\noindent{\bf Step 4.} \emph{Conclusion.}

Using the above inequalities, to bound the probability of the bad event less than $\delta$, noting that $J_* \leq \tau^3 p^{\Ch}$ by Assumption \ref{ass:BoundedHessian}, it is sufficient to take $\eps = \delta\tau^2 /(6 J_*\cdot np)$ and taking
\begin{align}\label{eq:hessiandelta1}
t \geq  \tau^2 \cdot \max \left\{ \frac{1}{np},  \tilde C_2\sqrt \frac{(p\log \frac{36 \rad \tau np\cdot p^{\Ch}}{\delta} + \log \frac{4}{ \delta})}{n}, \tilde C_2 \frac{ (p \log \frac{36 \rad \tau np \cdot p^{\Ch}}{\delta} + \log\frac{4}{ \delta})}{n}  \right\}
\end{align}
for some universal constant $\tilde C_2$. 

Thus, there exists a universal constant $C_0$, such that letting $C_1 = C_0 \cdot (\Ch \vee \log(\rad \tau /\delta)\vee 1)$,  as long as $n \geq C_1 p \log p$, we have
\begin{align}
\P\left(\sup_{\btheta \in \Ball^p (\bzero, \rad)} \left \Vert \nabla^2 \what R_n(\btheta) - \nabla^2 R(\btheta) \right \Vert_{\op} \geq  \tau^2 \sqrt{\frac{C_1 p \log n}{n}}\right)\leq \delta. 
\end{align}

\section{Proof of Theorem \ref{thm:morse}}

\subsection{Two structural lemmas}

In the following, the index of a symmetric non-degenerate matrix is the number of its negative eigenvalues,
and the index of a non-degenerate critical point $\bx$ of a smooth function $F$ is simply the index of its Hessian 
$\nabla^2 F(\bx)$. 
\begin{lemma}\label{lemma:Stability}
Let $D\subseteq \reals^m$ be a compact set with a $C^2$ boundary $\partial D$, and $f,g:A\to\reals$ be $C^2$ functions
defined on an open set $A$, with $D\subseteq A$. Assume that, for all $\bx\in \partial D$, and all $t\in [0,1]$, 
$t\nabla f(\bx)+(1-t)\nabla g(\bx)\neq \bzero$. Finally, assume that the Hessian $\nabla^2f(\bx)$ is non-degenerate and 
has index equal to $r$ for all $\bx\in D$. Then the following hold:
\begin{enumerate}
\item[$(a)$] If $g$ has no critical point in $D$, then $f$ has no critical point in $D$.
\item[$(b)$] If $g$ has a  unique critical $\bx_0$ point in $D$,  that is non-degenerate with index $r$,
then $f$ also has a unique critical point $\bx_1$ in $D$, with index equal to $r$.
\end{enumerate}
\end{lemma}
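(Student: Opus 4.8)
The plan is to translate the statement into a computation of the topological (Brouwer) degree of the gradient vector fields $\nabla f$ and $\nabla g$ over $D$, and then to exploit the hypothesis on $\nabla^2 f$ to turn the \emph{signed} degree into an honest count of critical points. First I would note that setting $t=1$ and $t=0$ in the boundary hypothesis shows that neither $\nabla f$ nor $\nabla g$ vanishes on $\partial D$, so the degrees $\deg(\nabla f, D, \bzero)$ and $\deg(\nabla g, D, \bzero)$ are well defined (one works on the open bounded interior of $D$, which is legitimate since $D$ is compact with $C^2$ boundary). The map $H_t(\bx) = t\nabla f(\bx) + (1-t)\nabla g(\bx)$ is jointly continuous in $(\bx,t)$ because $f,g\in C^2$, and by assumption it never vanishes on $\partial D$ for any $t\in[0,1]$. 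Homotopy invariance of the degree then yields
\[
\deg(\nabla f, D, \bzero) = \deg(\nabla g, D, \bzero).
\]

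Next I would evaluate the left-hand side. Since $\nabla^2 f$ is non-degenerate throughout $D$, every critical point of $f$ is isolated and non-degenerate; combined with compactness of $D$ and $\nabla f\neq\bzero$ on $\partial D$, this forces the number $N_f$ of critical points of $f$ to be finite and all contained in the interior. The crucial observation is that at each such point $\bx$ the Jacobian of the vector field $\nabla f$ is exactly the Hessian $\nabla^2 f(\bx)$, whose determinant has sign $(-1)^r$ because its index (number of negative eigenvalues) equals $r$ \emph{uniformly} over $D$. Hence, by the standard representation of the degree as a signed count of non-degenerate zeros,
\[
\deg(\nabla f, D, \bzero) = \sum_{\bx:\,\nabla f(\bx)=\bzero} \mathrm{sign}\,\det \nabla^2 f(\bx) = (-1)^r N_f .
\]

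Both conclusions then follow by reading off $N_f$. For part $(a)$, if $g$ has no critical point in $D$ then $\deg(\nabla g, D, \bzero)=0$, so $(-1)^r N_f = 0$ and therefore $N_f=0$. For part $(b)$, if $g$ has a single non-degenerate critical point $\bx_0$ of index $r$, then $\deg(\nabla g, D, \bzero) = \mathrm{sign}\,\det\nabla^2 g(\bx_0) = (-1)^r$, whence $(-1)^r N_f = (-1)^r$ gives $N_f = 1$; the unique critical point $\bx_1$ of $f$ has index $r$ immediately from the hypothesis that $\nabla^2 f$ has index $r$ everywhere on $D$.

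The main obstacle is not the homotopy step, which is routine, but the passage from a statement about the signed degree to a statement about the actual existence or uniqueness of critical points: in general $\deg(\nabla f, D, \bzero)=0$ does \emph{not} preclude zeros of $\nabla f$. What makes the argument work is precisely the uniform-index assumption, which guarantees that every critical point of $f$ contributes the \emph{same} sign $(-1)^r$, so that $|\deg(\nabla f, D, \bzero)| = N_f$ with no cancellation. I would therefore take care to (i) justify rigorously that all critical points of $f$ are interior and finite in number, and (ii) invoke the signed-zero-count formula for the degree in the non-degenerate case, rather than relying on homotopy invariance alone.
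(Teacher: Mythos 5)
Your proof is correct and takes essentially the same route as the paper: both reduce the claim to a degree computation over $D$ and use the uniform index-$r$ hypothesis on $\nabla^2 f$ to turn the signed degree into an honest count of critical points, with the boundary hypothesis on $t\nabla f+(1-t)\nabla g$ ruling out zeros on $\partial D$. The only difference is cosmetic: where you invoke homotopy invariance of the Brouwer degree for $H_t=t\nabla f+(1-t)\nabla g$, the paper constructs an explicit cutoff-interpolated vector field agreeing with $\nabla g$ on $\partial D$ and with $\nabla f$ on the interior bulk, and applies a Poincar\'e--Hopf-type theorem (sum of indices equals the degree of the Gauss map restricted to $\partial D$, which depends only on boundary values) twice---an equivalent mechanism for the same step.
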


The proof is based on a classical result in differential topology (restated here from \cite{dubrovin2012modern}).
Recall that given a smooth vector field $\bxi:D\to\reals^m$, defined on $D\subseteq \reals^m$, a critical point is a point $\bx_0\in D$
such that $\bxi(\bx_0) = \bzero$. If $\bx_0$ is non-degenerate (i.e. the Jacobian matrix of $\bxi$ at $\bx_0$ is full rank), then  index of $\bx_0$ can be defined 
as the sign of the Jacobian determinant\footnote{Note a possible source of confusion with this standard terminology. The index of a critical point of a vector field is in $\{+1,-1\}$, while the index of a
critical point of a scalar function is in $\{0,1,\dots, m\}$.}
\begin{align}
\inde_{\bx_0}(\bxi) = \sign\det\left(\, \frac{\partial\bxi}{\partial\bx}(\bx_0)\,\right)\, . \label{eq:IndexDef}
\end{align}
\begin{lemma}[Theorem 14.4.4 in \cite{dubrovin2012modern}]\label{lem:Index}
Let $D\subseteq \reals^m$ be a compact set with a $C^2$ boundary $\partial D$, and $\bxi:D\to\reals^m$ a $C^1$ vector field,
with a finite number of critical points $\bx_1,\dots,\bx_k\in D$ (possibly $k=0$) and no critical point on the boundary $\partial D$. Define
the Gauss map $\hbxi(\bx) = \bxi(\bx)/\|\bxi(\bx)\|_2$ wherever $\bxi(\bx)\neq \bzero$.

Then 
\begin{align}
\sum_{i=1}^k\inde_{\bx_i}(\bxi) = \deg\big(\hbxi\big|_{\partial D}\big)\, ,
\end{align}
where $\deg(\hbxi|_{\partial D})$ is the degree of the Gauss map restricted to the boundary of $D$.
\end{lemma}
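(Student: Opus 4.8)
The plan is to recognize this as a Poincar\'e--Hopf-type index theorem and to prove it by the standard excision argument of degree theory: surround each zero of $\bxi$ by a small sphere, show that the degree of the Gauss map is additive across the boundary of the excised region, and finally identify the local degree around each zero with the sign of its Jacobian determinant, i.e.\ with $\inde_{\bx_i}(\bxi)$. Throughout I would regard $\deg(\hbxi|_\Sigma)$, for a closed oriented hypersurface $\Sigma$, as $\int_\Sigma \hbxi^*\omega$, where $\omega$ is the rotation-invariant volume form on $S^{m-1}$ normalized so that $\int_{S^{m-1}}\omega = 1$; this coincides with the signed-preimage-counting definition at regular values.

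First I would excise the zeros. Since the $\bx_i$ are isolated and lie in the interior, I can choose $\eps>0$ small enough that the closed balls $B_i = \{\bx:\|\bx-\bx_i\|_2\le\eps\}$ are disjoint, contained in the interior of $D$, and contain no zero of $\bxi$ other than $\bx_i$. On the compact manifold-with-boundary $D' = D\setminus\bigcup_i B_i$ the field $\bxi$ is nowhere zero, so $\hbxi:D'\to S^{m-1}$ is a well-defined $C^1$ map and $\hbxi^*\omega$ is an $(m-1)$-form on $D'$. Because $\omega$ is a top-degree form on $S^{m-1}$ it is closed, hence $d(\hbxi^*\omega)=\hbxi^* d\omega = 0$. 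With the outward-normal convention, the oriented boundary of $D'$ is $\partial D$ minus the spheres $\partial B_i$ (each carrying its own outward-from-$B_i$ orientation), so Stokes' theorem gives
\[
0 = \int_{D'} d(\hbxi^*\omega) = \int_{\partial D}\hbxi^*\omega - \sum_{i=1}^k \int_{\partial B_i}\hbxi^*\omega,
\]
that is, $\deg(\hbxi|_{\partial D}) = \sum_{i=1}^k \deg(\hbxi|_{\partial B_i})$. This reduces the global statement to a purely local computation of each $\deg(\hbxi|_{\partial B_i})$.

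Next I would carry out that local computation. Writing $A_i = (\partial\bxi/\partial\bx)(\bx_i)$, which is invertible by the nondegeneracy behind the definition \eqref{eq:IndexDef}, Taylor expansion gives $\bxi(\bx) = A_i(\bx-\bx_i) + r(\bx)$ with $\|r(\bx)\|_2 = o(\|\bx-\bx_i\|_2)$. For $\eps$ small the homotopy $H_t(\bx) = A_i(\bx-\bx_i) + t\,r(\bx)$ stays nonzero on $\partial B_i$, since $\|A_i(\bx-\bx_i)\|_2 \ge \sigma_{\min}(A_i)\,\eps$ dominates the remainder there; homotopy invariance of degree then shows $\deg(\hbxi|_{\partial B_i})$ equals the degree of the Gauss map of the linear field $\bx\mapsto A_i(\bx-\bx_i)$. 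Finally, since $\mathrm{GL}^{+}(m,\reals)$ and $\mathrm{GL}^{-}(m,\reals)$ are each path-connected and the degree is locally constant along paths of invertible matrices (whose Gauss maps never degenerate), it suffices to evaluate two model cases: $A=\id$ gives the identity of $S^{m-1}$ (degree $+1$), and a single reflection $A=\mathrm{diag}(-1,1,\dots,1)$ gives a reflection of $S^{m-1}$ (degree $-1$). Hence $\deg(\hbxi|_{\partial B_i}) = \sign\det A_i = \inde_{\bx_i}(\bxi)$, and summing over $i$ yields the claim.

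The main obstacle is not any single hard estimate but the careful bookkeeping of orientations in the Stokes step --- fixing the relative sign between $\partial D$ and the inner spheres --- together with the foundational facts of degree theory (well-definedness via Sard's theorem and regular values, homotopy invariance, and the passage from the $C^1$ field to a smooth one by approximation) that make $\int_\Sigma \hbxi^*\omega$ an integer equal to the combinatorial degree. Since the statement is exactly Theorem 14.4.4 of \cite{dubrovin2012modern}, in the paper it is legitimate to cite it rather than reproduce the argument; the sketch above is the route I would take to reconstruct it from first principles.
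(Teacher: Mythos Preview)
Your sketch is correct and is the standard excision/Stokes argument for the Poincar\'e--Hopf index formula; the orientation bookkeeping and the homotopy-to-linearization step are handled properly. Note, however, that the paper does not prove this lemma at all: it is simply quoted as Theorem~14.4.4 of \cite{dubrovin2012modern} and used as a black box in the proof of Lemma~\ref{lemma:Stability}. You yourself observe this at the end of your proposal, so there is no discrepancy---your argument is a valid reconstruction of a result the paper merely cites.
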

For what follows it is not needed to recall the definition of degree $\deg\big(\hbxi\big|_{\partial D}\big)$. It is only important to note that this depends only
on the restriction of $\bxi$ to $\partial D$.

We are now in position to prove Lemma \ref{lemma:Stability}.
\begin{proof}[Proof of Lemma \ref{lemma:Stability}.]
As a preliminary remark, since the Hessian of $f$ is non-degenerate on $D$, it follows that the 
critical points of $f$ are all isolated. Since $D$ is compact, there can be only a finite number of them, call
them $\bx_1,\bx_2,\dots,\bx_k$.

For $\eps>0$, let $D_{-\eps} = \{\bx\in D:\; d(\bx,D^c)\ge \eps\}$, where $d(\bx,S)\equiv \inf\{\|\bx-\by\|_2:\; \by\in S\}$. 
Also, let $w:A\to[0,1]$ be a $C^1$ function such that $w(\bx)=0$ for $\bx\in A\setminus D$ and $w(\bx) = 1$ for $\bx\in D_{-\eps}$.
Define the following $C^1$ vector fields
\begin{align}
\bxi_0(\bx) & = \nabla g(\bx)\, ,\\
\bxi(\bx) & = (1-w(\bx))\, \nabla g(\bx)+ w(\bx) \, \nabla f(\bx)\, .
\end{align}
Note that $\bxi_0|_{\partial D}  =\bxi|_{\partial D}$. Further, since by assumption 
\begin{align}
\inf_{\bx\in\partial D}\inf_{t\in [0,1]}\|(1-t) \nabla g(\bx)+t \nabla f(\bx)\|_2>0\, ,
\end{align}
by a continuity argument, we can (and will) take $\eps>0$ small enough so that  $\bxi(\bx) \neq \bzero$, $\nabla f(\bx) \neq \bzero$
for all $\bx\in D\setminus D_{-\eps}$. This implies that the critical points of of $\bxi$ are in $D_{-\eps}$
and coincide with the critical points of $f$, i.e. $\bx_1$, \dots, $\bx_k$. Further,  by Eq.~(\ref{eq:IndexDef}),
and since $\bxi(\bx)=\nabla f(\bx)$ for $\bx\in D_{-\eps}$, we have $\inde_{\bx_i}(\bxi)=(-1)^r$.
By applying Lemma \ref{lem:Index}, we get
\begin{align}
(-1)^rk &=\sum_{i=1}^k\inde_{\bx_i}(\bxi) = \deg\big(\hbxi\big|_{\partial D}\big)\\
&= \deg\big(\hbxi_0\big|_{\partial D}\big)\, ,
\end{align}
where the last equality follows because $\bxi|_{\partial D} = \bxi_0|_{\partial D}$. Applying  Theorem \ref{lem:Index}
once more to $\bxi_0$, we get
\begin{align}
(-1)^rk = \deg\big(\hbxi_0\big|_{\partial D}\big) =
\begin{cases}
0 & \mbox{ if $g$ has no critical points,}\\
\inde_{\bx_0}(\bxi_0) = (-1)^r& \mbox{ if $g$ has a unique critical point with index $r$.}
\end{cases}
\end{align}
In the first case we conclude $k=0$, i.e. $f$ has no critical points.
In the second case $k=1$, i.e. $f$ has exactly one critical point. Its index is $r$, because the Hessian
of $f$ has index $r$ at all points in $D$.
\end{proof}

\begin{lemma}\label{lemma:Decomposition}
Let $F:\Ball^m(\rad)\to\reals$ be an $(\eps,\eta)$-strongly Morse function. Denote by $\bx_1,\dots, \bx_k$ its critical points and let
$D = \{\bx\in \Ball^m(\rad):\; \Vert \nabla F(\bx) \Vert_2 < \eps\}$. Then $D$ decomposes into (at most) countably many open connected components,
with each component containing either exactly one critical point, or no critical point.

Explicitly, there exists disjoint open sets $\{D_i\}_{i\in \naturals}$, with $D_i$ possibly empty for $i\ge k+1$, such that
\begin{align}
D = \cup_{i=1}^{\infty} D_i\, .
\end{align}
Further, $\bx_i\in D_i$ for $1\le i\le k$, and each $D_i$, $i\ge k+1$ contains no stationary points.
\end{lemma}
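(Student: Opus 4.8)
The plan is to first pin down the topology of $D$, then reduce the ``at most one critical point per component'' claim to a min--max (mountain pass) statement for the auxiliary function $h(\bx)\equiv \tfrac12\|\nabla F(\bx)\|_2^2$.

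First I would observe that $D$ is open and lies strictly inside the open ball. Since $F$ is $(\eps,\eta)$-strongly Morse, $\|\nabla F(\bx)\|_2>\eps$ on the sphere $\{\|\bx\|_2=\rad\}$, and by continuity of $\nabla F$ this persists in a neighborhood of the sphere; hence $D=\{\bx:\|\nabla F(\bx)\|_2<\eps\}$ is contained in the open ball $\{\|\bx\|_2<\rad\}$, and it is the preimage of $(-\infty,\eps)$ under the continuous map $\bx\mapsto\|\nabla F(\bx)\|_2$, so it is open in $\reals^m$. Writing $h(\bx)=\tfrac12\|\nabla F(\bx)\|_2^2$ we have $D=\{h<\eps^2/2\}$. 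As $D$ is an open subset of the locally connected space $\reals^m$, its connected components are open, and there are at most countably many (each contains a point of $\mathbb{Q}^m$). This yields $D=\cup_i D_i$ with each $D_i$ open; it remains only to show each component carries at most one critical point of $F$.

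Next I would record the needed facts about $h$ on a fixed component $U$. On $\bar D$ the strong Morse hypothesis gives $\min_i|\lambda_i(\nabla^2F)|\ge\eta>0$, so $\nabla^2F$ is invertible there; since $\nabla h=\nabla^2F\,\nabla F$, the critical points of $h$ in $\bar U$ coincide exactly with the critical points of $F$. Moreover $h\ge 0$ with $h(\bx)=0$ iff $\nabla F(\bx)=0$, so each such point is a global minimum of $h$ at level $0$, and these are isolated by non-degeneracy. Finally, on $\partial U$ one has $\|\nabla F\|_2=\eps$, hence $\nabla F\ne \bzero$ and $\nabla h\ne \bzero$: there are no critical points of $h$ on $\partial U$, and $h\equiv\eps^2/2$ there. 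The heart of the argument --- and the step I expect to be the main obstacle --- is the min--max contradiction. Suppose $\bx^{(1)}\ne\bx^{(2)}$ both lie in $U$. As $U$ is open and connected it is path-connected, so set $c=\inf_{\gamma}\max_{s}h(\gamma(s))$ over continuous paths $\gamma$ in $\bar U$ joining $\bx^{(1)}$ to $\bx^{(2)}$. On one hand $c<\eps^2/2$: some path has compact image inside the open set $\{h<\eps^2/2\}$, where $h$ attains a maximum strictly below $\eps^2/2$. On the other hand $c>0$: since $h>0$ off the critical points, a small sphere around $\bx^{(1)}$ not containing $\bx^{(2)}$ forces every path up to a positive level. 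The strict inequality $c<\eps^2/2$ confines the sublevel sets $\{h\le c+\delta\}$ to the interior of $U$ for small $\delta$, supplying compactness and keeping the negative-gradient flow inside $\bar U$; the deformation lemma then makes $c$ a critical value of $h$ attained at an interior point. This contradicts the fact that every critical point of $h$ in $\bar U$ sits at level $0$, so $U$ contains at most one critical point of $F$.

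To finish, each of the $k$ critical points $\bx_1,\dots,\bx_k$ lies in $D$ (there $\|\nabla F\|_2=0<\eps$) and, by the previous step, in distinct components; relabel these as $D_1,\dots,D_k$ so that $\bx_i\in D_i$, and enumerate the remaining (at most countably many) components as $D_{k+1},D_{k+2},\dots$, padding with $\emptyset$ if there are only finitely many. Each $D_i$ with $i\ge k+1$ then contains no stationary point, which is precisely the asserted decomposition.
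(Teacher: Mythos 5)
Your proof is correct, and it shares the paper's skeleton: both arguments study the auxiliary function $G(\bx)=\tfrac12\|\nabla F(\bx)\|_2^2$ on a connected component, use that $G\equiv\eps^2/2$ on the component's boundary while every critical point of $F$ is a zero of $G$, and derive the final contradiction from the identity $\nabla G=\nabla^2F\,\nabla F$ together with the strong Morse bound $\min_i|\lambda_i(\nabla^2F)|\ge\eta$ on the component (so a critical point of $G$ at positive level would force a zero eigenvalue of $\nabla^2 F$). Where you genuinely diverge is in how the intermediate critical point of $G$ at a level in $(0,\eps^2/2)$ is produced: the paper counts connected components of the sublevel sets $Q_\alpha(t)=\{G\le t\}$ and invokes Morse theory (citing Milnor) to locate a saddle at the first level where components merge, whereas you run a mountain-pass argument --- the min--max value $c=\inf_\gamma\max_s h(\gamma(s))$ satisfies $0<c<\eps^2/2$ by your small-sphere and compact-path estimates, and the deformation lemma (valid here since the sublevel sets $\{h\le c+\delta\}$ with $c+\delta<\eps^2/2$ are compact and confined to the open component, as you verify) makes $c$ a critical value. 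Your route has a mild technical advantage: the deformation-lemma machinery needs only $h\in C^1$, i.e.\ $F\in C^2$, and never requires the auxiliary function $G$ to be Morse or its saddle to be non-degenerate, whereas the paper's appeal to ``Morse theory'' for $G$ is somewhat informal on exactly this point ($G$ need not be a Morse function a priori). The paper's component-counting argument, in exchange, is shorter and more geometrically transparent, and directly exhibits the merging level at which the critical point sits on the border of a sublevel set. One small checkpoint you handle correctly and should keep explicit: points of $\bar U$ with $h<\eps^2/2$ belong to $U$ itself (openness of $D$ plus maximality of the component), which is what keeps both the sublevel sets and the negative-gradient flow inside $U$.
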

\begin{proof}
Let $D = \cup_{\alpha\in \cA} Q_{\alpha}$ be the decomposition of $D$ into maximal connected components (i.e. the $Q_{\alpha}$ are disjoint and connected).
Note that each $Q_{\alpha}$ is open (because, if $\bx\in Q_{\alpha}$, then there exists a ball centered at $\btheta$ that is also in $D$, and hence in $Q_{\alpha}$).
Hence there must be at most countably many components $Q_{\alpha}$, because each $Q_{\alpha}$ contains a ball of non-zero radius, and $\Ball^m(\rad)$ can contain
only a countable number of balls of non-zero radius. 

We claim that each connected component $Q_{\alpha}$ can contain at most one critical point. Indeed consider the function
$G: \text{Closure}(Q_\alpha) \to\reals_{\ge 0}$ defined by
\begin{align}
G(\bx) =\frac{1}{2}\big\| \nabla F(\bx)\|_2^2\, ,
\end{align}
Note that, by continuity of $\nabla F(\bx)$, we have $\|\nabla F(\bx)\|_2=\eps$  and hence $G(\bx) = \eps^2/2$ for $\bx\in\partial Q_{\alpha}$.
Each critical point of $F(\bx)$ corresponds to a non-degenerate
local minimum of $G(\bx)$ with value $G(\bx) = 0$. If $G$ has
at least two critical  points on $Q_\alpha$, by Morse theory
it must have at least one saddle $\bx_*$ with value
$0<G(\bx_*)<\eps^2/2$. Indeed if this is the case, for all $t$ small enough, the level set $Q_{\alpha}(t) \equiv\{\bx\in Q_{\alpha}: G(\bx)\le t\}$ must have as many
connected components as critical points of $F$, say $m_{\alpha}\ge 2$. However $Q_{\alpha}(\eps^2/2)=Q_\alpha$ has only one connected component.  At the lowest $t$
such that $Q_{\alpha}(t)$ has less than $m_{\alpha}$ components, a critical point must exist on the border of $Q_{\alpha}(t)$ \cite{milnor1963morse}.

At $\bx_*$ we have  $\nabla G(\bx_*) =\bzero$ and $\nabla F(\bx_*)\neq \bzero$. On the other hand, $\nabla G(\bx) = \nabla^2 F(\bx)\, \nabla F(\bx)$, which implies
that $\nabla^2 F(\bx_*)$ has a zero eigenvalue, in contradiction with the 
assumption that implies $\inf_{\bx \in Q_\alpha} \vert \lambda_i(\nabla^2 F(\bx))\vert \geq \eta$.
\end{proof}

\subsection{Proof of Theorem \ref{thm:morse}}

Let $\btheta^{(1)}, \ldots, \btheta^{(k)}$ be the $k$ critical points of $R(\btheta)$, and define
$D = \{\btheta\in \Ball^p(\rad):\; \Vert \nabla R(\btheta) \Vert_2 < \eps\}$. By Lemma \ref{lemma:Decomposition},
$D = \cup_{i=1}^{\infty} D_i$ where each $D_i$ is an open connected component with $\btheta^{(i)}\in D_i$ for $i\le k$  and 
$D_i$ does not contain any critical point of $R(\btheta)$ for $i\ge k+1$.
By continuity of $\nabla R(\btheta)$, we have $\Vert \nabla R(\btheta) \Vert_2 = \eps$ for $\btheta \in \partial D_i$. 

Due to Theorem \ref{thm:uniformconvergence1}, and under the stated condition on $n$ with probability at least $1-\delta$, we have
\begin{align}
\sup_{\btheta\in\Ball^p(\rad)} \big\|\nabla \hR_n(\btheta)- \nabla R(\btheta)\big\|_2& \le \frac{\eps}{2}\, ,\\
\sup_{\btheta\in\Ball^p(\rad)} \big\|\nabla^2 \hR_n(\btheta)- \nabla^2 R(\btheta)\big\|_2&\le \frac{\eta}{2}\, .
\end{align}
We will hereafter assume that this event holds. In particular, we have 
\begin{align}
\inf_{\btheta \in \partial D_i}\Vert t\nabla \hR_n(\btheta) + (1-t) \nabla R(\btheta) \Vert_2 \geq \eps/2,&\quad \forall t \in [0,1],\\
\inf_{\btheta \in D_i} \vert \lambda_i(\nabla^2\what R_n(\btheta))\vert \geq \eta/2.&\label{eq:UniformHessianDi}
\end{align}

By the strong Morse property, the Hessian $\nabla^2 R(\btheta)$, is non-degenerate and has the same index for all $\btheta\in D_i$. 
Denote this index by  $r_i$. By Eq.~(\ref{eq:UniformHessianDi}) the Hessian $\nabla^2 \hR_n(\btheta)$ is also non-degenerate and has index equal to $r_i$ 
for all $\btheta \in D_i$. Due to Lemma \ref{lemma:Stability}, $\hR_n(\btheta)$ the same number of critical points as $R(\btheta)$ in $D_i$.
Namely:
\begin{itemize}
\item For $1\le i\le k$, $\hR_n(\btheta)$ has a unique critical point $\hbtheta^{(i)}$ in $D_i$, with index equal to $r_i$. 
\item For $i\ge k+1$, $\hR_n(\btheta)$ has no critical points in $D_i$.
\end{itemize}
This concludes the proof of part $(a)$ of the theorem.

In order to prove part $(b)$, let $D(\eps_n) =  \{\btheta\in \Ball^p(\rad):\; \|\nabla R(\btheta)\|_2\le \eps_n\}$, with 
$\eps_n = \tau\sqrt{(C p\log n)/n}$ and $C = C_1\cdot (\Ch\vee\log(\rad\tau/\delta)\vee 1)$, with $C_1$ a suitably large absolute constant.
Without loss of generality $\eps_n\le \eps$.
We can repeat the argument of part $(a)$, yielding the decomposition $D(\eps_n)=\cup_{i=1}^{\infty} D_i(\eps_n)$, with $\btheta^{(i)} ,\hbtheta^{(i)} \in D_i(\eps_n)$,
we will next bound the radius of $D_i(\eps_n)$.
For any fixed unit vector $\bu \in \partial \Ball^p(1)$, by Taylor expansion with third order remainder,
for each $\bv\in \reals^p$, with $\bv+\btheta^{(i)} \in\Ball^p(\rad)$, there exists $t\in[0,1]$ such that
\begin{align}
\langle \bu, \nabla R(\btheta^{(i)}+\bv)\rangle = \langle \bu, \nabla R(\btheta^{(i)})\rangle +\langle \bu, \nabla^2 R(\btheta^{(i)})\bv \rangle +\frac{1}{2}\langle\nabla^3 R(\btheta^{(i)}+t\,\bv), \bv\otimes \bv\otimes \bu\rangle\, .
\end{align}
Using $\nabla R(\btheta^{(i)}) =0$ and  $\|\nabla^3 R(\btheta)\|_{\op}\le L$, we get
\begin{align}
\big\|\nabla R(\btheta^{(i)}+\bv)\big\|_2 &\ge \<\bv,\big[\nabla^2 R(\btheta^{(i)})\big]^2\bv\rangle^{1/2}- \frac{1}{2} L\|\bv\|_2^2
\ge \eta \|\bv\|_2 -  \frac{1}{2} L \|\bv\|_2^2\, .
\end{align}
Hence 
\begin{align}
D_i(\eps_n)\subseteq\overline{D}_i(\eps_n)\equiv\Big\{\btheta\in\reals^p:\;\; \eta \|\bv\|_2 -  \frac{1}{2} L \|\bv\|_2^2\le \eps_n\Big\}
\end{align}
Note that for $\eps_n<\eta^2/(2L)$, we have $\overline{D}_i(\eps_n) = \Ball^p(r_0)\cup \Ball^p(r_1)^c$, with $r_0<r_1$. 
Since $D_i(\eps_n)$ is connected by construction, we must have $D_i(\eps_n)\subseteq  \Ball^p(r_0)$. The thesis follows by recalling that
$\hbtheta^{(i)}\in D_i(\eps_n)$, and noting that $r_0\le 2\eps_n/\eta$.

\section{Proof of Theorem \ref{thm:uniformconvergence2}: Very high-dimensional regime}\label{sec:proof_uc2}

\subsection{Proof of Theorem \ref{thm:uniformconvergence2}.$(a)$: Uniform convergence of directional gradient}\label{sec:proof_uc2_a}

We adopt the following general strategy: we identify a radius $r_b>0$ and an associated $\ell_1$-ball $\Ball_1^p(\btheta_0,r_b)$. Outside the ball we use the peeling method, that is, we decompose the set $\Ball_2^p(\bzero,\rad) \setminus \Ball_1^p(\btheta_0, \rb)$ into a finite sequence of regions $\mathbb{K}_l$. In each region, we first bound the expectation of the quantities of interest, and then use concentration inequalities to bound deviations from the expectation. Inside $\Ball_1^p(\btheta_0, \rb)$, we take $\rb$ small enough to ensure a small discretization error. 

\prg{Constructing $\mathbb{K}_l$.} We first bound the quantity outside $\Ball_1^p(\btheta_0,r_b)$. For any integer $l$, define
\begin{equation}
\mathbb K_l = \{ \btheta : 2^{l-1} < \Vert \btheta - \btheta_0
\Vert_1 \leq 2^{l}\}.
\end{equation}
The above definition implies that $\Ball_2^p(\bzero,\rad) \setminus \Ball_1^p(\btheta_0, \rb) \subset \cup_{l=N_-}^{N_+} \mathbb K_l$, where $N_- =\lfloor \log_2(\rb) \rfloor$ and $N_+ = \lceil \log_2(2 \rad \sqrt p) \rceil$. The quantity of interest is
\begin{equation}
D_l = \sup_{\btheta \in \mathbb K_l }\Big| \langle \nabla \what R_n(\btheta) - \nabla R(\btheta), \btheta - \btheta_0 \rangle \Big|
\end{equation}
for all  $l\in \{N_-, N_- + 1,\ldots, N_+ \}$.

\prg{Upper bounding $\E[D_l]$.}
To upper bound $\E[D_l]$ we apply symmetrization techniques and the
Rademacher contraction inequality. 
More formally, let $\bZ_i'$ be independent copies of $\bZ_i$ and let $\eps_i$ be i.i.d. Rademacher variables. We have 
\begin{eqnarray*}
&& \E[D_l] = \E \left[\sup_{\btheta \in \mathbb{K}_l} \left\vert \langle \nabla \what R_n(\btheta)-\nabla R(\btheta), \btheta - \btheta_0  \rangle \right\vert \right]\\
&=& \E_{\bZ} \left[\sup_{\btheta \in \mathbb K_l} \Big\vert \frac{1}{n} \sum_{i=1}^n \langle \nabla \ell(\btheta; \bZ_i) - \E \nabla \ell(\btheta; \bZ), \btheta - \btheta_0 \rangle \Big\vert \right] \\
&\leq& \E_{\bZ,\bZ'} \left[ \sup_{\btheta \in \mathbb K_l} \Big\vert \frac{1}{n} \sum_{i=1}^n \langle \nabla \ell(\btheta; \bZ_i) - \grad\ell(\btheta;\bZ_i') , \btheta - \btheta_0 \rangle \Big\vert \right] \\
&=& \E_{\bZ,\bZ',\eps} \left[ \sup_{\btheta \in \mathbb K_l} \Big\vert \frac{1}{n} \sum_{i=1}^n \langle \eps_i(\nabla \ell(\btheta; \bZ_i) - \grad\ell(\btheta;\bZ_i')) , \btheta - \btheta_0 \rangle \Big\vert \right] \\
&\leq & 2 \E_{\bZ,\eps} \left[\sup_{\btheta \in \mathbb K_l} \Big\vert \frac{1}{n} \sum_{i=1}^n \eps_i \langle \nabla \ell(\btheta; \bZ_i), \btheta - \btheta_0 \rangle \Big\vert \right]\\
&\leq & 2 \E_\bZ \left[ \E_\eps \left[\sup_{\btheta \in \mathbb K_l} \Big\vert \frac{1}{n} \sum_{i=1}^n \eps_i \langle \nabla \ell(\btheta; \bZ_i), \btheta - \btheta_0 \rangle \Big\vert  \Bigg \vert \bZ\right] \right]\\
& = & 2 \E_\bZ \left[ \E_\eps \left[\sup_{\btheta \in \mathbb K_l} \Big\vert \frac{1}{n} \sum_{i=1}^n \eps_i g(\langle\btheta-\btheta_0, \bpsi(\bZ_i) \rangle; \bZ_i) \Big\vert  \Bigg \vert \bZ\right] \right].
\end{eqnarray*}
The last equality is from Assumption \ref{ass:GGL}. 

Now we apply the Rademacher contraction inequality \cite{ledoux2013probability} to bound this quantity, which says that for any set $\mathbb{T}\subset \R^n$ and any family of $L$-Lipschitz functions $\{\phi_i\}_{i\in[n]}$, $\phi_i:\reals\to\reals$, with $\phi_i(0) = 0$,
\begin{equation}
	\E\Big[ \sup_{\bt\in\mathbb{T}} \sum_{i=1}^{n}\eps_i\phi_i(t_i) \Big] \le 2L \cdot \E\Big[ \sup_{\bt \in\mathbb{T}} \sum_{i=1}^{n} \eps_i t_i \Big],
\end{equation}
where $\eps_i$ are i.i.d. Rademacher variables. Since we would like to bound the expectation of supremum of the \emph{absolute value} of the empirical process, we have
\begin{equation}\label{eq:rademacher}
\begin{aligned}
	\E\Big[ \sup_{\bt\in\mathbb{T}} \Big\vert \sum_{i=1}^{n}\eps_i\phi_i(t_i) \Big\vert \Big] =& \E\Big[ \sup_{\bt\in\mathbb{T}}\Big\{ \max\big\{\sum_{i=1}^{n}\eps_i\phi_i(t_i), -\sum_{i=1}^{n}\eps_i\phi_i(t_i)\big\}\Big\} \Big]\\
	\le & \E\Big[ \sup_{\bt\in\mathbb{T}} \sum_{i=1}^{n}\eps_i[\phi_i(t_i)] +  \sup_{\bt \in \mathbb{T}} \sum_{i=1}^{n}\eps_i [-\phi_i(t_i)]  \Big]\\
	\le & 4L \cdot \E\Big[ \sup_{\bt \in\mathbb{T}} \sum_{i=1}^{n} \eps_i t_i \Big].
	\end{aligned}
\end{equation}

Applying the Rademacher contraction inequality, we get
\begin{eqnarray*}
&& \E[D_l] \leq  8 L_* \E_{\bZ} \left[ \E_{\eps}\left[ \sup_{\btheta \in \mathbb K_l} \Big\vert \frac{1}{n} \sum_{i=1}^n \eps_i  \langle  \btheta - \btheta_0 , \bpsi(\bZ_{i})\rangle \Big\vert \Bigg \vert \bZ \right]\right] \\ 
&= & 8 L_* \E_{\bZ} \left[ \E_{\eps}\left[ \sup_{\btheta \in \mathbb K_l} \Big \vert  \langle  \frac{1}{n} \sum_{i=1}^n \eps_i \bpsi(\bZ_i), \btheta - \btheta_0 \rangle \Big \vert  \Bigg \vert \bZ \right]\right]\\
&= & 8 L_* \E  \left[ \left\Vert \frac{1}{n}\sum_{i=1}^n \eps_i \bpsi(\bZ_i) \right\Vert_\infty \right]  \cdot \sup_{\bw \in \mathbb K_l} \Vert \btheta - \btheta_0 \Vert_1 \\
&\leq & 2^{l+3} L_* \E \left[ \left\Vert \frac{1}{n}\sum_{i=1}^n \eps_i \bpsi(\bZ_i) \right\Vert_\infty \right]  \\
&\leq & 2^{l+4} L_* \tau  \sqrt{\frac{\log p}{n}} .
\end{eqnarray*}
The last inequality is due to the fact that $\bpsi(\bZ_i)$ are independent $\tau^2$-sub-Gaussian.

\prg{Concentrating $D_l$ around $\E[D_l]$.} From Assumption \ref{ass:BoundedGradient}, each $\Vert  \grad\ell(\btheta;\bZ_i) \Vert_\infty$ is bounded by $T_*$, thus for $\btheta \in \mathbb K_l$ we have $\vert \langle \grad \ell(\btheta; \bZ_i), \btheta- \btheta_0 \rangle \vert$ is bounded by $T_*\cdot 2^l$. Hence $D_l$ is a $2^l \cdot T_*/n$-bounded variation function in $(\bZ_1,\dots,\bZ_n)$. Applying McDiarmid's inequality, we get
\begin{equation}
\P\left( D_l \geq \E [D_l] + t \right)  \leq \exp\Big( -\frac{2nt^2}{(T_* \cdot 2^l)^2} \Big). 
\end{equation}

Taking $t_l = 2^l  \cdot T_*\sqrt{(\log(N/\delta))/(2n)}$ guarantees that $\P(D_l \ge \E[D_l] + t_l) \le \delta/N$ where $N = N_+ - N_- + 1 \le \log_2(2\rad \sqrt{p}/r_b)+2$.

\prg{Taking union bound over $l$.} Define the event
\begin{equation}
E_l = \left\{ D_l \ge 2^{l+4} L_* \tau  \sqrt{\frac{\log p}{n}} + 2^l T_*\sqrt{\frac{\log\frac{N}{\delta}}{2n}} \right\}.
\end{equation}
We have already shown that $\P(E_l) \le \delta/N$ and so $\P(\cup_{l=N_-}^{N_+} E_l) \le \delta$. On the event $(\cup_{l=N_-}^{N_+} E_l)^c$, for any $\btheta\in\mathbb{K}_l$, we have $\|\btheta - \btheta_0\|_1\ge 2^{l-1}$ and $\vert \langle \grad\what{R}_n(\btheta) - \grad R(\btheta), \btheta-\btheta_0 \rangle \vert \le 2^{l+4} L_* \tau  \sqrt{(\log p)/n} + 2^l T_*\sqrt{(\log(N/\delta))/(2n)}$. Consequently, for all $l$,
\begin{equation}
\sup_{\btheta\in\mathbb{K}_l} \frac{\vert \langle \grad\what{R}_n(\btheta) - \grad R(\btheta), \btheta-\btheta_0 \rangle \vert}{\|\btheta - \btheta_0\|_1} \le 32L_*\tau\sqrt{\frac{\log p}{n}} + 2 T_*\sqrt{\frac{\log\frac{N}{\delta}}{2n}}.
\end{equation}

Noticing that $\Ball_2^p(\bzero,\rad) \setminus \Ball_2^p(\btheta_0, \rb) \subset  \cup_{l=N_-}^{N_+} \mathbb{K}_l$, we see that there exists a universal constant $C_0$ such that letting $C=C_0 \cdot \sqrt{\log(1/\delta)} \cdot (T_*+L_*\tau)$, we have
\begin{equation}
\P\left( \sup_{\btheta \in\Ball_2^p(\bzero,\rad) \setminus \Ball_1^p(\btheta_0, \rb)} \frac{\vert \langle \nabla \what R_n(\btheta)-\nabla R(\btheta), \btheta - \btheta_0  \rangle \vert}{\Vert \btheta - \btheta_0 \Vert_1} \geq C \sqrt{\frac{\log p + \log N}{n}}\right) \leq \delta. 
\end{equation}

\prg{Convergence inside the ball $\Ball_1(\btheta_0,r_b)$.}
We  relate any point in $\Ball_1(\btheta_0,r_b)\setminus\{0\}$ to its projection onto the sphere $\partial\Ball_1(\btheta_0,r_b)$. Notice that
\begin{equation}
  \frac{\< \grad\what{R}_n(\btheta) - \grad R(\btheta), \btheta-\btheta_0 \>}{\|\btheta - \btheta_0\|_1} = \< \grad\what{R}_n(\btheta_0+s\bn) - \grad R(\btheta + s\bn), \bn\>,
\end{equation}
where $s=\|\btheta-\btheta_0\|_1$ and $\bn=(\btheta-\btheta_0)/s \in\partial \Ball_1^p(1)$.

For any vector $\bn \in \partial\Ball_1^p(1)$, we have, for
$r_1,r_2\ge 0$, $\tilde r\in [r_1,r_2]$, $\tilde\btheta = \btheta_0+{\tilde r} \bn$,
\begin{equation}
\begin{aligned}
&\vert  \langle \nabla \what R_n(\btheta_0 + r_1 \bn)-\nabla R(\btheta_0 + r_1 \bn), \bn  \rangle - \langle \nabla \what R_n(\btheta_0 + r_2 \bn)-\nabla R(\btheta_0 + r_2 \bn), \bn \rangle\vert  \\
=& \left\vert \langle \bn, \left(\nabla^2 \what R_n (\btheta_0 + \tilde r \bn) - \nabla^2 R (\btheta_0 + \tilde r \bn)\right)  \bn \rangle\right\vert \cdot \vert r_1 - r_2 \vert\\
\leq& \left\|  \grad^2\what{R}_n(\tilde\btheta) - \grad^2 R(\tilde\btheta) \right\|_\op \cdot \Big(\sup_{\|\bn\|_1=1} \|\bn\|_2^2\Big) \cdot |r_1 - r_2| \\
=& \left\|  \grad^2\what{R}_n(\tilde\btheta) - \grad^2 R(\tilde\btheta) \right\|_\op \cdot |r_1 - r_2|.
\end{aligned} 
\end{equation}
Now, for any $r_1< r_b$ and $r_2=r_b$, the intermediate value $\tilde\btheta\in\Ball_1(\btheta_0,r_b)\in\Ball_2(\rad)$. According to the uniform convergence of Hessians in
 Theorem \ref{thm:uniformconvergence1}.$(b)$, (and more precisely, using Eq.~(\ref{eq:hessiandelta1}) in the proof, which does not use the sample size assumption), 
there exists a constant $C_{\rm hess}$ depending on $(\rad, \tau^2, \Ch, \delta)$, such that
\begin{equation}
  \sup_{\btheta\in\Ball_2^p(\rad)} \Big\| \grad^2\what{R}_n(\btheta) - \grad^2 R(\btheta) \Big\|_\op \le \tau^2  \max\Big\{ \frac{1}{np}, C_{\rm hess}\sqrt{\frac{p\log(np)}{n}}, C_{\rm hess} \frac{p\log(np)}{n} \Big\}
\end{equation}
holds with probability at least $1-\delta$. When this event happens, we can take $r_b=1/(\tau C_{\rm hess}p^2)$ and get
\begin{eqnarray*}
  && \Big\| \grad^2\what{R}_n(\tilde\btheta) - \grad^2 R(\tilde\btheta) \Big\|_\op \cdot |r_1-r_2| \\
  &\le& \tau^2  \max\Big\{ \frac{1}{np}, C_{\rm hess}\sqrt{\frac{p\log(np)}{n}}, C_{\rm hess}\frac{p\log(np)}{n} \Big\} \cdot \frac{1}{ \tau C_{\rm hess}p^2} \\
  &=& \tau  \max\Big\{ \frac{1}{C_{\rm hess} np^3 },\sqrt{\frac{\log(np)}{np^3}}, \frac{\log(np)}{np} \Big\} \\
& \le & \tau\sqrt{\frac{\log(np)}{n}}.
\end{eqnarray*}
It follows that
\begin{eqnarray*}
  && \sup_{\btheta\in\Ball_1(\btheta_0,r_b)\setminus \{0\}} \frac{\vert \< \grad\what{R}_n(\btheta) - \grad R(\btheta), \btheta-\btheta_0 \> \vert}{\|\btheta - \btheta_0\|_1} \le \tau \sqrt{\frac{\log(np)}{n}} + \sup_{\btheta\in\partial\Ball_1(\btheta_0,r_b)} \frac{\vert \< \grad\what{R}_n(\btheta) - \grad R(\btheta), \btheta-\btheta_0 \>\vert}{\|\btheta - \btheta_0\|_1} \\
  &\le& \tau \sqrt{\frac{\log(np)}{n}} + C\sqrt{\frac{\log(p) + \log(N)}{n}} \\
 & \le & \tau \sqrt{\frac{\log(np)}{n}} + C_0 \cdot \sqrt{\log(1/\delta)} \cdot (T_*+L_*\tau)\sqrt{\frac{\log(p) + \log (\log_2(2\rad p^{5/2} \tau C_{\rm hess})+2)}{n}} \\
&\le &(T_* + L_* \tau)\cdot \sqrt{\frac{C_1 \log (np)}{n}},
\end{eqnarray*}
where $C_1$ depends on $(\rad, \tau^2, \Ch, \delta)$. Thus, we get the desired bound with probability at least $1-2 \delta$.


\subsection{Proof of Theorem \ref{thm:uniformconvergence2}.$(b)$: Uniform of convergence of restricted Hessian}\label{sec:proof_uc2_b}

We proceed almost the same as the proof of Theorem \ref{thm:uniformconvergence1}.$(b)$. 

\noindent{\bf Step 1. } \emph{Decompose the bad event using $\eps$-nets.}

Let $\Omega_1 = \Ball_2(\rad)\cap \Ball_0(s_0)$, $\Omega_2 = \Ball_2(1) \cap \Ball_0(s_0)$, and $\Omega = \Omega_1 \times \Omega_2$.

Let $\Theta_{\eps} = \{ \btheta_{1},\ldots, \btheta_{N_{\eps}}\}$ be a minimal $\eps$-covering of the set $\Omega_1 = \Ball_2(\rad)\cap \Ball_0(s_0)$. The size $N_{\eps}$ of the $\eps$-covering set $\Theta_{\eps}$ is bounded by 
\begin{equation}
\begin{aligned}
N_{\eps} \leq {p \choose s_0} \left(\frac{3 \rad}{\eps}\right)^{s_0} 
\leq \exp\left(s_0 \log\left(\frac{3p \rad}{\eps}\right)\right),
\end{aligned}
\end{equation}

For any $\btheta \in \Omega_1$, let $j(\btheta) = \argmin_{j\in [N_{\eps}]} \Vert \btheta - \btheta_j \Vert_2$. Then $\Vert \btheta - \btheta_{j(\btheta)}\Vert_2 \leq \eps$ for all $\btheta \in \Omega_1$. Thus, for any $\btheta \in \Omega_1$, we have
\begin{equation}
\begin{aligned}
\left \vert \left\langle \bv, \left(\nabla^2 \what R_n(\btheta) - \nabla^2 R(\btheta)\right) \bv \right\rangle\right \vert  \leq & \left \vert \left\langle \bv, \left(\frac{1}{n}\sum_{i=1}^n \nabla^2 \ell(\btheta; \bZ_i) - \nabla^2 \ell(\btheta_{j(\btheta)}; \bZ_i) \right)\bv \right\rangle\right \vert  \\
&+ \left \vert \left\langle \bv, \left(\frac{1}{n} \sum_{i=1}^n \nabla^2 \ell(\btheta_{j(\btheta)}; \bZ_i) - \nabla^2 R(\btheta_{j(\btheta)})\right) \bv \right\rangle\right \vert  \\
&+\left \vert  \left\langle \bv, \left(\nabla^2 R(\btheta_{j(\btheta)}) - \nabla^2 R (\btheta)\right)\bv \right\rangle\right \vert . 
\end{aligned}
\end{equation}

Hence, we have
\[
\P\left( \sup_{(\btheta,\bv)\in \Omega}\left\vert  \left\langle \bv, \left(\nabla^2 \what R_n(\btheta) - \nabla^2 R(\btheta)\right) \bv \right \rangle\right \vert  \geq t \right) \leq \P(A_t) + \P(B_t) + \P(C_t),
\]
where the events $A_t$, $B_t$, and $C_t$ are defined as
\[
\begin{aligned}
A_t =& \left\{ \sup_{\btheta \in \Ball^p(\bzero, \rad)} \left \Vert \frac{1}{n}\sum_{i=1}^n \Big[ \nabla^2 \ell(\btheta;\bZ_i) - \nabla^2 \ell(\btheta_{j(\btheta)};\bZ_i) \Big] \right\Vert_{\op} \geq \frac{t}{3} \right\},\\
B_t =& \left\{ \sup_{j \in [N_\eps]}\sup_{\bv \in \Omega_2}  \left\vert \frac{1}{n} \sum_{i=1}^n \Big\langle \bv, \Big( \nabla^2 \ell(\btheta_j; \bZ_i) -  \nabla^2 R(\btheta_j) \Big) \bv \Big \rangle \right\vert \geq \frac{t}{3} \right\},\\
C_t =& \left\{ \sup_{\btheta \in \Ball^p(\bzero, \rad)} \Big \Vert \E[\nabla^2 \ell(\btheta_{j(\btheta)}; \bZ)] - \E[\nabla^2 \ell(\btheta; \bZ)] \Big\Vert_{\op} \geq \frac{t}{3}\right\}.\\
\end{aligned}
\]

\noindent{\bf Step 2. } \emph{Upper bound $\P(B_t)$.}

Let $V_{1/4}$ be a minimal $(1/4)$-covering of the set $\Omega_2 = \Ball_2(1) \cap \Ball_0(s_0)$ with the following property: for any $\bv \in \Omega_2$, there exits an $\bv_j \in V_{1/4}$ such that $\text{supp}(\bv_j) = \text{supp}(\bv)$, and $\Vert \bv - \bv_j \Vert_2 \leq \eps$. The size of the $\eps$-covering set $V_{1/4}$ is bounded by 
\begin{equation}
\begin{aligned}
\vert V_{1/4}\vert \leq& {p \choose s_0} \left(\frac{3}{1/4}\right)^{s_0} 
\leq \exp\left(s_0 \log(12p)\right). 
\end{aligned}
\end{equation}
By Lemma \ref{lem:opnorm}, we have that for any $j \in [N_{\eps}]$, 
\[
\sup_{\bv \in \Omega_2}\left\vert \frac{1}{n} \sum_{i=1}^n \Big\langle \bv, \Big( \nabla^2 \ell(\btheta_j; \bZ_i) -  \nabla^2 R(\btheta_j) \Big) \bv \Big \rangle \right \vert \leq 2 \sup_{\bv \in V_{1/4}} \left\vert \frac{1}{n} \sum_{i=1}^n \Big\langle \bv, \Big( \nabla^2 \ell(\btheta_j; \bZ_i) -  \nabla^2 R(\btheta_j) \Big) \bv \Big \rangle \right\vert .
\]

Taking union bound over $\Theta_\eps$ and $V_{1/4}$ yields
\[
\begin{aligned}
\P(B_t) \leq & \P\left(\sup_{j \in [N_\eps], \bv \in V_{1/4}} \left\vert \frac{1}{n} \sum_{i=1}^n \Big\langle \bv, \Big( \nabla^2 \ell(\btheta_j; \bZ_i) -  \nabla^2 R(\btheta_j) \Big) \bv \Big \rangle \right\vert \geq \frac{t}{6} \right) \\
\leq& e^{s_0 \log\frac{3p \rad}{ \eps} + s_0 \log (12p)} \sup_{j \in [N_\eps] , \bv \in V_{1/4}} \P\left( \left\vert \frac{1}{n} \sum_{i=1}^n \Big\langle \bv, \Big( \nabla^2 \ell(\btheta_j; \bZ_i) - \nabla^2 R(\btheta_j) \Big) \bv \Big \rangle \right\vert \geq \frac{t}{6} \right).
\end{aligned}
\]

Fixing any $j$ and $\bv$, according to Assumption \ref{ass:Hessianub}, $\Big\langle \bv, \Big( \nabla^2 \ell(\btheta_j; \bZ_i) - \E[ \nabla \ell^2(\btheta_j ;\bZ)] \Big) \bv \Big \rangle$ is $\tau^2$-sub-exponential. Hence by Bernstein inequality in Theorem \ref{thm:Bernstein}, we have
\[
\P\left(  \left\vert \frac{1}{n} \sum_{i=1}^n \Big\langle \bv, \Big( \nabla^2 \ell(\btheta_j; \bZ_i) - \E[ \nabla^2 \ell(\btheta_j ;\bZ)] \Big) \bv \Big \rangle \right\vert \geq \frac{t}{6}  \right) \leq 2 e^{- \tilde C_1 n \min \{ \frac{t^2}{ \tau^4}, \frac{t}{\tau^2} \}},
\]
for some universal constant $\tilde C_1$. As a result, 
\[
\P(B_t) \leq  2 \exp\left( - \tilde C_1 n \min\{ \frac{t^2}{\tau^4}, \frac{t}{\tau^2} \}+ s_0 \log \frac{36 \rad p^2}{\eps}\right).
\]

Thus, 
\[
t >\tilde C_2 \max \left \{ \sqrt \frac{ \tau^4(s_0\log \frac{36  \rad p^2}{\eps} + \log \frac{4}{ \delta})}{n}, \frac{\tau^2 (s_0 \log \frac{36 \rad p^2}{\eps} + \log\frac{4}{ \delta})}{n}  \right \}
\]
for some universal constant $\tilde C_2$ ensures that $\P(B_t) \leq \delta/2$. 

\noindent{\bf Step 3. } \emph{Upper bound $\P(A_t)$ and $\P(C_t)$.}

Note the definition of events $A_t$ and $C_t$ is exactly the same as in the proof of Theorem \ref{thm:uniformconvergence1}.$(b)$. Thus, taking $t \geq 6 \eps J_*/\delta$ yields
\[
\P(A_t) \leq \frac{\delta}{2}
\]
and $C_t$ never happens. 

\noindent{\bf Step 4.} \emph{Conclusion.}

Using the above inequalities, to bound the probability of the bad event, it is sufficient to take $\eps = \delta\tau^2/(6 J_*\cdot np)$ and 
\begin{align}\label{eq:hessiandelta}
t \geq \max\left\{ \frac{\tau^2}{np},  \tilde C_2 \sqrt \frac{ \tau^4(s_0\log \frac{36 \rad p^2}{\eps} + \log \frac{4}{ \delta})}{n}, \tilde C_2 \frac{\tau^2 (s_0 \log \frac{36 \rad p^2}{\eps} + \log\frac{4}{ \delta})}{n}  \right\}
\end{align}
for some universal constant $\tilde C_2$. 

According to Assumption \ref{ass:BoundedHessian}, we have $J_* \leq \tau^3 p^{\Ch}$. Thus, there exists a universal constant $C_0$, such that letting $C_2 = C_0 \cdot ( \Ch \vee {\log(\rad \tau /\delta)}\vee 1)$,  as long as $n \geq C_2 s_0 \log (np)$, we have
\begin{align}
\P \left( \sup_{\btheta\in\Ball_2^p(\rad) \cap \Ball_0^p(s_0), \bv \in \Ball_2^p(1) \cap \Ball_0^p(s_0)} \left \vert \left\langle \bv, \left(\nabla^2\hR_n(\btheta)-\nabla^2 R(\btheta)\right) \bv\right \rangle \right \vert \ge  \tau^2 \sqrt{\frac{  C_2 s_0 \log (np)}{n}} \right) \leq \delta\, .
\end{align}


\section{Proofs for binary linear classification}

\subsection{Proof of Theorem \ref{thm:MainClassification}: High-dimensional regime}
\label{app:Class_High}

\subsubsection{Landscape of population risk}
\begin{lemma}\label{lemma:popriskClassification}
Assume $\|\btheta_0\|_2\le \rad/3$ together with Assumption \ref{ass:classification}. Then we have the following:
\begin{enumerate}[label=$(\alph*)$, leftmargin=0.5cm]
\item {\bf Unique minimizer. } The population risk $R(\btheta)$ is minimized at $\btheta=\btheta_0$ and has no other stationary points. 
\item {\bf Bounds on the Hessian. } There exist an $\eps_0>0$ and some constants 
$0<\heslb<\hesub<\infty$ such that
\begin{equation}
\label{eq:PopHessian} 
\inf_{\btheta\in \Ball^d(\btheta_0,\eps_0)} \lambda_{\min}\Big( \grad^2 R(\btheta) \Big) \ge \heslb,\;\;\;\;\;\; \sup_{\btheta\in \Ball^d(\bzero,\rad)} \Big\Vert \grad^2 R(\btheta) \Big\Vert_{\op} \le \hesub.
\end{equation}
\item {\bf Bounds on the gradient. } For the same $\eps_0$ as in part $(b)$, there exist some constants $0<\gradlb<\gradub<\infty$ and $\gradlip\in(0,\infty)$ such that
\begin{equation}
\inf_{\btheta\in \Ball^d(\bzero, \rad)\setminus \Ball^d(\btheta_0,\eps_0)} \Big\| \grad R(\btheta) \Big\|_2 \ge \gradlb,\;\;\;\;\;\;\sup_{\btheta\in \Ball^d(\bzero,\rad)} \Big\| \grad R(\btheta) \Big\|_2 \le \gradub,
\end{equation}
and for all $\btheta\in \Ball^d(\bzero,\rad)$,
\begin{equation}
\langle \btheta-\btheta_0, \grad R(\btheta)\rangle\ge \gradlip \|\btheta-\btheta_0\|_2^2.   \label{eq:GradientDirection} 
\end{equation}
\end{enumerate}
All constants $\eps_0,\heslb,\hesub,\gradlb,\gradub,\gradlip$ are functions of $(\sigma(\cdot),\rad,\tau^2,\actlip,\covlb)$ but do not depend on $d$ and the distribution of $\bX$.
\end{lemma}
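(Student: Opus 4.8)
The plan is to reduce everything to explicit formulas for the population gradient and Hessian, obtained by integrating out $Y$ conditionally on $\bX$. Writing $u=\langle\btheta,\bX\rangle$, $u_0=\langle\btheta_0,\bX\rangle$ and using $\E[Y\mid\bX]=\sigma(u_0)$, I find $\nabla R(\btheta)=2\,\E[(\sigma(u)-\sigma(u_0))\,\sigma'(u)\,\bX]$ and $\nabla^2 R(\btheta)=2\,\E[(\sigma'(u)^2+(\sigma(u)-\sigma(u_0))\sigma''(u))\,\bX\bX^{\sT}]$. In particular $\nabla R(\btheta_0)=\bzero$, so $\btheta_0$ is stationary. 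The Hessian upper bound in part $(b)$ is then immediate: the scalar prefactor is bounded in absolute value by $\actlip^2+\actlip$ (using $\sigma\in[0,1]$ and $|\sigma''|\le\actlip$), and for any unit $\bv$ one has $\E[\langle\bv,\bX\rangle^2]\le\tau^2$ by the second-moment bound of Lemma~\ref{lem:subgaussian}$(a)$, giving $\hesub=2(\actlip^2+\actlip)\tau^2$, independent of $d$.

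The heart of the argument is the directional inequality \eqref{eq:GradientDirection}. Setting $\bw=\btheta-\btheta_0$ and applying the mean value theorem to write $\sigma(u)-\sigma(u_0)=\sigma'(\zeta)\langle\bw,\bX\rangle$ for some $\zeta$ between $u_0$ and $u$, I obtain $\langle\bw,\nabla R(\btheta)\rangle=2\,\E[\sigma'(\zeta)\,\sigma'(u)\,\langle\bw,\bX\rangle^2]\ge 0$, and the task is to quantify the strict positivity. The main obstacle is that $\sigma'$ is not bounded below globally (it decays as $|u|\to\infty$), so I cannot simply factor out a uniform lower bound on $\sigma'$. I resolve this by truncation: restrict the expectation to the event $E=\{|u_0|\le M,\ |u|\le M\}$, on which $|\zeta|\le M$ and hence $\sigma'(\zeta),\sigma'(u)\ge c_M:=\min_{|t|\le M}\sigma'(t)>0$, yielding $\langle\bw,\nabla R(\btheta)\rangle\ge 2c_M^2\,\E[\langle\bw,\bX\rangle^2\mathbf 1_E]$. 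To lower bound the truncated second moment I write $\E[\langle\bw,\bX\rangle^2\mathbf 1_E]=\E[\langle\bw,\bX\rangle^2]-\E[\langle\bw,\bX\rangle^2\mathbf 1_{E^c}]$, bound the first term below by $\covlb\tau^2\|\bw\|_2^2$ via Assumption~\ref{ass:classification}$(c)$, and control the second by Cauchy--Schwarz, $\E[\langle\bw,\bX\rangle^2\mathbf 1_{E^c}]\le\sqrt{\E[\langle\bw,\bX\rangle^4]}\sqrt{\P(E^c)}$, using the fourth-moment bound $\E[\langle\bw,\bX\rangle^4]\le C_4\tau^4\|\bw\|_2^4$ from Lemma~\ref{lem:subgaussian}$(a)$ and the sub-Gaussian tail $\P(E^c)\le 4\exp(-M^2/(2\tau^2\rad^2))$ (valid since $\|\btheta_0\|_2,\|\btheta\|_2\le\rad$). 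Choosing $M$ large enough, depending only on $(\rad,\tau^2,\covlb)$, makes the correction at most $\tfrac12\covlb\tau^2\|\bw\|_2^2$, so that \eqref{eq:GradientDirection} holds with $\gradlip=c_M^2\covlb\tau^2$. Every quantity here depends only on $(\sigma,\rad,\tau^2,\actlip,\covlb)$, which is exactly the claimed $d$-independence.

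Parts $(a)$ and $(c)$ then follow quickly. If $\nabla R(\btheta)=\bzero$ then \eqref{eq:GradientDirection} forces $\gradlip\|\btheta-\btheta_0\|_2^2\le 0$, hence $\btheta=\btheta_0$, giving uniqueness of the stationary point in $(a)$. For the gradient upper bound in $(c)$ I use the mean value theorem and Cauchy--Schwarz, $\langle\bv,\nabla R(\btheta)\rangle\le 2\actlip^2\,\E[|\langle\bw,\bX\rangle|\,|\langle\bv,\bX\rangle|]\le 2\actlip^2\tau^2\|\bw\|_2$, so $\gradub=2\actlip^2\tau^2\cdot\tfrac{4}{3}\rad$ using $\|\bw\|_2\le\tfrac43\rad$. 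The lower bound away from $\btheta_0$ follows from \eqref{eq:GradientDirection} and Cauchy--Schwarz: $\|\nabla R(\btheta)\|_2\ge\gradlip\|\btheta-\btheta_0\|_2\ge\gradlip\eps_0=:\gradlb$ whenever $\|\btheta-\btheta_0\|_2\ge\eps_0$.

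It remains to establish the Hessian lower bound near $\btheta_0$ in $(b)$ and to fix $\eps_0$ consistently. At $\btheta_0$ I lower bound $\lambda_{\min}(\nabla^2 R(\btheta_0))=\inf_{\|\bv\|_2=1}2\,\E[\sigma'(u_0)^2\langle\bv,\bX\rangle^2]$ by the same truncation scheme (restricting to $\{|u_0|\le M_0\}$, where $\sigma'(u_0)^2\ge c_{M_0}^2$), obtaining $\lambda_{\min}(\nabla^2 R(\btheta_0))\ge 2\heslb$ for an explicit $\heslb$. To propagate this to a ball I show that $\btheta\mapsto\nabla^2 R(\btheta)$ is Lipschitz in operator norm with a $d$-independent constant $L_{\mathrm{hess}}$: differentiating the scalar prefactor once more produces a factor bounded by $3\actlip^2+\actlip$, and Cauchy--Schwarz with the second- and fourth-moment bounds gives $\|\nabla^2 R(\btheta_1)-\nabla^2 R(\btheta_2)\|_{\op}\le L_{\mathrm{hess}}\|\btheta_1-\btheta_2\|_2$. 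Taking $\eps_0=\heslb/L_{\mathrm{hess}}$ then guarantees $\lambda_{\min}(\nabla^2 R(\btheta))\ge\heslb$ on $\Ball^d(\btheta_0,\eps_0)$ by Weyl's inequality, and this same $\eps_0$ is the one used in part $(c)$. The recurring difficulty throughout is the one flagged above: turning sign-definite but not-uniformly-bounded-below integrands into quantitative, dimension-free lower bounds, which is precisely what the truncation-plus-spanning-condition argument delivers.
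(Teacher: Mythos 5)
Your proposal is correct and follows essentially the same route as the paper's proof: truncation to an event where $\sigma'$ is bounded below, the spanning condition $\E[\bX\bX^{\sT}]\succeq \covlb\tau^2\id$ combined with Cauchy--Schwarz and the sub-Gaussian fourth-moment/tail bounds to control the truncated second moment, and a dimension-free Lipschitz bound on $\btheta\mapsto\nabla^2 R(\btheta)$ to propagate $\lambda_{\min}(\nabla^2 R(\btheta_0))\ge 2\heslb$ to the ball $\Ball^d(\btheta_0,\eps_0)$. The only deviations are cosmetic --- you truncate $\{|\langle\btheta_0,\bX\rangle|\le M,\,|\langle\btheta,\bX\rangle|\le M\}$ directly instead of the paper's two-dimensional projection event $\{\|\bU\bX\|_2\le 2s/(3\rad)\}$, derive part $(a)$ as a corollary of the quantitative inequality \eqref{eq:GradientDirection} rather than via a separate preliminary argument, and obtain slightly different (but equally admissible) explicit constants such as $\gradub$.
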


\begin{proof}

The proof consists of five parts. Lower bounds of gradient and Hessian are a little involved, and upper bounds are relatively easy to obtain.

\prg{Part $(a)$. No stationary points other than $\btheta_0$.} Fix $\btheta\in \Ball^d(\bzero,\rad)$, then
$\|\btheta\|_2\le \rad$. Let $\bU\in\R^{2\times d}$ be an orthogonal
transform ($\bU\bU^{\sT}=\id_{2\times 2}$) from 
$\reals^d$ to $\R^2$ whose row space contains $\{\btheta,\btheta_0\}$. Define the event $A_s=\{\|\bU\bX\|_2\le2s/(3\rad)\}$. Recall that $\|\btheta_0\|_2\le\rad/3$. Then on the event $A_s$, we have $\max\{|\langle \btheta,\bX\rangle|,|\langle \btheta_0,\bX\rangle|,|\langle \btheta-\btheta_0,\bX\rangle|\}\le s$.

It is easily seen that $R(\btheta)$ is minimized at $\btheta_0$ from the bias-variance decomposition. Moreover,
\begin{eqnarray*}
\langle \btheta-\btheta_0,\grad R(\btheta)\rangle &=& \langle \btheta-\btheta_0, \E[\grad_\btheta(Y-\sigma(\btheta^{\sT}\bX))^2]\rangle \\
&=& \E[2(\sigma(\btheta^{\sT}\bX)-Y)\sigma'(\btheta^{\sT}\bX)\cdot \langle \btheta-\btheta_0,\bX\rangle ] \\
&=& \E[2(\sigma(\btheta^{\sT}\bX)-\sigma(\btheta_0^{\sT}\bX))\sigma'(\btheta^{\sT}\bX)\cdot \langle \btheta-\btheta_0,\bX\rangle].
\end{eqnarray*}

Notice that $(\sigma(t_1)-\sigma(t_2))(t_1-t_2)\ge 0$ for all $t_1,t_2\in\R$, so the quantity inside the above expectation is always nonnegative. In addition,
since by Assumption \ref{ass:classification}.$(a)$, $\sigma'$ is positive on $\R$, so for any $s>0$ there exists some $L(s)>0$ such that $\inf_{t\in[-s,s]}\sigma'(t)\ge L(s)$. Hence, by the intermediate value theorem,
\begin{eqnarray*}
\langle \btheta-\btheta_0,\grad R(\btheta) \rangle &\ge& \E[2(\sigma(\btheta^{\sT}\bX)-\sigma(\btheta_0^{\sT}\bX))\sigma'(\btheta^{\sT}\bX)\cdot \langle \btheta-\btheta_0,\bX\rangle \ones_{A_s}] \\
&\ge& 2L^2(s) \E[\langle \btheta-\btheta_0,\bX\rangle^2 \ones_{A_s}].
\end{eqnarray*}
From assumption \ref{ass:classification}$, \E[\bX\bX^{\sT}]\succeq
\covlb\tau^2\id_{d\times d}$, so $\E[\langle
\btheta-\btheta_0,\bX\rangle^2] \ge \covlb \tau^2\|\btheta-\btheta_0\|_2^2$. Hence, we can always find a sufficiently large $s$ such that $\E[\langle \btheta-\btheta_0,\bX\rangle^2\ones_{A_s}]\ge (\covlb \tau^2/2)\cdot \|\btheta-\btheta_0\|_2^2$. For this $s$, the above lower bound is greater than 0, so the gradient $\grad R(\btheta)$ cannot be zero. Hence, the risk $R(\btheta)$ has no other stationary points.

\prg{Part $(b)$. Lower bounding the Hessian.} Recall that $\grad^2 R(\btheta)=\E[\beta(\btheta)\bX\bX^{\sT}]$ with
\begin{equation}\label{eqn:beta_def}
\beta(\btheta) = 2\Big( \sigma'(\btheta^{\sT}\bX)^2 + (\sigma(\btheta^{\sT}\bX)-\sigma(\btheta_0^{\sT}\bX))\sigma''(\btheta^{\sT}\bX) \Big).
\end{equation}
(Note we changed $Y$ to $\sigma(\btheta_0^{\sT}\bX)$ using the tower property.) Our general strategy to lower bound the minimum eigenvalue of $\grad^2 R(\btheta)$ is to first lower bound the $\lambda_{\min}(\grad^2 R(\btheta_0))$ and then upper bound $\lambda_{\max}(\grad^2 R(\btheta) - \grad^2 R(\btheta_0))$.

Let us first consider $\lambda_{\min}(\grad^2 R(\btheta_0))$. We have that $\grad^2 R(\btheta_0)=\E[2\sigma'(\btheta_0^{\sT}\bX)^2\bX\bX^{\sT}]$. Fix any $\bu\in\R^d$, $\|\bu\|_2=1$. Similar to part $(a)$, let $A_s=\{|\langle \btheta_0/\|\btheta_0\|_2, \bX\rangle| \le 2s/\rad\}$, then
\begin{eqnarray*}
\langle \bu,\grad^2 R(\btheta_0)\bu\rangle &\ge& \E[2\sigma'(\btheta_0^{\sT}\bX) \langle \bu,\bX\rangle^2\ones_{A_s}] \\
&\ge& 2L^2(s)\E[\langle \bu,\bX\rangle^2\ones_{A_s}] \\
&\ge& 2L^2(s)\Big( \E[\langle \bu,\bX\rangle^2] - \E[\langle \bu,\bX\rangle^2\ones_{A_s^c}] \Big).
\end{eqnarray*}
Note that $\E[\langle \bu,\bX\rangle^2]\ge \covlb \tau^2$, $\E[\langle \bu,\bX\rangle^4]\le C_4\tau^4$, and $\P(A_s^c)\le 2\exp(-2s^2/(\rad^2\tau^2))$. By the Cauchy-Schwarz inequality, we have
\begin{equation}
\langle \bu,\grad^2 R(\btheta_0)\bu\rangle \ge 2L^2(s)\tau^2(\covlb - \sqrt{2C_4}e^{-\frac{s^2}{\rad^2\tau^2}}).
\end{equation}
Choosing $s=\tilde{c}\rad\tau$ for some constant $\tilde{c}$ gives us a lower bound $\covlb\tau^2L^2(\tilde{c}\rad\tau)$ on $\lambda_{\min}(\grad^2 R(\btheta_0))$.

Now let's turn to the difference $\grad^2 R(\btheta) - \grad^2 R(\btheta_0)$. Observe that
\begin{equation}
\grad^2 R(\btheta) - \grad^2 R(\btheta_0) = \E[(\beta(\btheta) - \beta(\btheta_0))\bX\bX^{\sT}].
\end{equation}
Since $\beta$ is $ L_\beta$-Lipschitz ($L_\beta$ only depends on $\actlip$) with respect to $\btheta^{\sT}\bX$, we have that, for any unit vector $\bu\in\R^d$,
\begin{eqnarray*}
\Big| \langle \bu,(\grad^2 R(\btheta) - \grad^2 R(\btheta_0))\bu \>\Big| &\le& L_\beta\E[|\langle \btheta-\btheta_0,\bX\rangle \cdot \langle \bu,\bX\rangle^2|] \\
&\le& L_\beta\Big( \E[\langle \btheta-\btheta_0,\bX\rangle^2] \cdot \E[\langle \bu,\bX\rangle^4] \Big)^{1/2} \\
&\le& L_\beta\Big( \|\btheta-\btheta_0\|_2^2\tau^2 \cdot C_4\tau^4 \Big)^{1/2} \\
&=& L_\beta\sqrt{C_4} \cdot \|\btheta-\btheta_0\|_2\tau^3.
\end{eqnarray*}

Hence, whenever $\|\btheta-\btheta_0\|_2\le \eps_0\defeq \tilde{\tilde{c}}L^2(\tilde{c}\rad\tau)/(L_\beta\tau)$ for some universal constant $\tilde{\tilde{c}}$ guarantees that $\lambda_{\max}(\grad^2 R(\btheta) - \grad^2 R(\btheta_0))\le(1/2)\cdot\lambda_{\min}(\grad^2 R(\btheta_0))$. Consequently, for all $\|\btheta-\btheta_0\|_2\le \eps_0$,
\begin{equation}
\lambda_{\min}(\grad^2 R(\btheta)) \ge \heslb = \frac{\covlb}{2}\tau^2L^2(\tilde{c}\rad\tau).
\end{equation}

\prg{Part $(b)$. Upper bounding the Hessian.} For any $\btheta\in\R^d$, we have
\begin{eqnarray*}
\| \grad^2 R(\btheta) \|_\op &=& \| \E[\beta(\btheta)\bX\bX^{\sT}] \|_\op = \sup_{\|\bv\|_2=1} \Big| \langle \bv, \E[\beta(\btheta)\bX\bX^{\sT}]\cdot \bv\rangle \Big| = \sup_{\|\bv\|_2=1} \Big| \E[\beta(\btheta)\langle \bv,\bX\rangle^2] \Big| \\
&\le& \sup_{\|\bv\|_2=1} \E[|\beta(\btheta)| \cdot \langle \bv,\bX\rangle^2] \le C_\beta \tau^2.
\end{eqnarray*}
where $C_\beta$ only depends on $\actlip$. Hence, $\hesub=C_\beta \tau^2$ is a global upper bound for Hessian.

\prg{Part $(c)$. Lower bounding the gradient.} In part $(a)$, the lower bound of the gradient depends on the distribution of $\bX$, so it is not distribution-free. Now we give a distribution free lower bound, for any $\btheta \in \Ball^d(\bzero, \rad) \setminus \Ball^d(\btheta_0, \eps_0)$. We have
\begin{eqnarray*}
\langle \btheta-\btheta_0,\grad R(\btheta)\rangle &\ge& 2L^2(s)\Big( \E[\langle \btheta-\btheta_0,\bX\rangle^2] - \E[\langle \btheta-\btheta_0,\bX\rangle^2 \ones_{A_s^c}] \Big) \\
&\ge& 2L^2(s)\Big( \covlb \tau^2\|\btheta-\btheta_0\|_2^2 - \Big( \E[\langle \btheta-\btheta_0,\bX\rangle^4] \cdot \P(A_s^c) \Big)^{1/2} \Big) \\
&\ge& 2L^2(s) \|\btheta-\btheta_0\|_2^2\tau^2 \Big( \covlb - \sqrt{C_4\cdot \P(A_s^c)} \Big).
\end{eqnarray*}
In addition,
\begin{equation}
\P(A_s^c) = \P\Big( \|\bU\bX\|_2>\frac{2s}{3\rad} \Big) \le \sum_{j=1}^{2} \P\Big( |\langle \bU_j,\bX\rangle| \ge \frac{\sqrt{2}s}{3\rad} \Big) \le 4\exp\Big( -\frac{s^2}{9\rad^2\tau^2} \Big),
\end{equation}
giving us
\begin{equation}
\langle \btheta-\btheta_0,\grad R(\btheta) \rangle \ge 2L^2(s)\|\btheta-\btheta_0\|_2^2\tau^2\Big( \covlb - 2\sqrt{C_4}e^{-\frac{s^2}{18\rad^2\tau^2}} \Big).
\end{equation}
So choosing $s\ge \tilde{c}\rad\tau$ for some constant $\tilde{c}>0$
and $\gradlip=\covlb L^2(\tilde{c}\rad\tau)\tau^2$ ensures that
\begin{equation}
\langle \btheta-\btheta_0,\grad R(\btheta)\rangle \ge \gradlip \|\btheta-\btheta_0\|_2^2
\end{equation}
and also $\|\grad R(\btheta)\|_2\ge \gradlip\|\btheta-\btheta_0\|_2$ from the Cauchy-Schwarz inequality. 

Finally, for the $\eps_0$ chosen in part $(b)$, choosing $\gradlb=\eps_0\gradlip$ ensures that $\|\grad R(\btheta)\|_2\ge\gradlb$ for all $\btheta\in \Ball^d(\bzero,\rad)\setminus \Ball^d(\btheta_0,\eps_0)$.

\prg{Part $(c)$. Upper bounding the gradient.} For any $\btheta\in\R^d$, we have
\begin{eqnarray*}
\| \grad R(\btheta) \|_2 &=& \| \E[\alpha(\btheta)\bX] \|_2 = \sup_{\|\bv\|_2=1} \langle \bv, \E[\alpha(\btheta)\bX] \rangle = \sup_{\|\bv\|_2=1} \E[\alpha(\btheta)\langle \bv,\bX\rangle] \\
&\le& \sup_{\|\bv\|_2=1} \E[|\alpha(\btheta)| \cdot |\langle \bv,\bX\rangle|] \le 2 \actlip \tau.
\end{eqnarray*}
Thus, $\gradub=2 \actlip \tau$ upper bounds $\|\grad R(\btheta)\|_2$ for all $\btheta\in\R^d$.

\prg{Dependence on model parameters.} Notice that all constants $\gradlb,\gradub,\gradlip,\heslb,\hesub,\eps_0$ does not depend on $d$ and the distribution of $\bX$. This completes the proof of all of our statements.

\end{proof}

\subsubsection{Landscape of empirical risk}
\begin{lemma}\label{lemma:empriskClassification}
Under Assumption \ref{ass:classification}, let 
$\eps_0$, $\heslb$, $\hesub$, $\gradlb$, $\gradub$, $\gradlip$ be the constants defined in Lemma \ref{lemma:popriskClassification}.$(b)$ depending on $(\sigma(\cdot),\rad,\tau^2,\actlip,\covlb)$, then there exists a large positive constants $C$ depending on $(\sigma(\cdot),\rad,\tau^2,\actlip,\covlb,\delta)$, such that as $n  \geq C d \log d$, the following hold with probability at least $1 - \delta$:
\begin{enumerate}[label=$(\alph*)$, leftmargin=0.5cm]
\item {\bf Bounds on the Hessian. } 
\begin{equation}
\label{eq:PopHessian} 
\inf_{\btheta\in \Ball^d(\btheta_0,\eps_0)} \lambda_{\min}\Big( \grad^2 \what R_n(\btheta) \Big) \ge \heslb/2,\;\;\;\;\;\; \sup_{\btheta\in \Ball^d(\bzero,\rad)} \Big\Vert \grad^2 \what R_n(\btheta) \Big\Vert_{\op} \le 2\hesub.
\end{equation}
\item {\bf Bounds on the gradient. } 
\begin{equation}
\inf_{\btheta\in \Ball^d(\bzero, \rad)\setminus \Ball^d(\btheta_0,\eps_0)} \Big\| \grad \what R_n(\btheta) \Big\|_2 \ge \gradlb/2,\;\;\;\;\;\;\sup_{\btheta\in \Ball^d(\bzero,\rad)} \Big\| \grad \what R_n(\btheta) \Big\|_2 \le 2\gradub,
\end{equation}
and for all $\btheta\in \Ball^d(\bzero,\rad) \setminus \Ball^d(\btheta, \eps_0/2)$,
\begin{equation}
\langle \btheta-\btheta_0, \grad \what R_n(\btheta)\rangle\ge \frac{\gradlip}{4} \eps_0 \|\btheta-\btheta_0\|_2.   \label{eq:GradientDirection} 
\end{equation}
\item {\bf Unique minimizer. } The empirical risk $\what R_n(\btheta)$ is minimized at $\hat \btheta_n \in \Ball^d(\btheta_0, C\sqrt{d \log n/n})$ 
\end{enumerate}
\end{lemma}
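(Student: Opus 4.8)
The plan is to transfer the population-level landscape bounds of Lemma~\ref{lemma:popriskClassification} to the empirical risk $\hR_n$ by invoking the uniform convergence result of Theorem~\ref{thm:uniformconvergence1} with $p=d$. First I would verify that the non-linear least-squares loss $\ell(\btheta;y,\bx)=(y-\sigma(\langle\btheta,\bx\rangle))^2$ satisfies Assumptions~\ref{ass:GradientSub}, \ref{ass:Hessianub}, and \ref{ass:BoundedHessian} with a suitable $\tau$ of the order of the feature parameter: the gradient $-2(y-\sigma)\sigma'\bx$ is a bounded scalar times the sub-Gaussian $\bX$ (using $\|\sigma'\|_\infty\le\actlip$ and Lemma~\ref{lem:subgaussian}.$(d)$), hence sub-Gaussian; the Hessian quadratic form $\langle\bv,\nabla^2\ell\,\bv\rangle$ reduces to a bounded multiple of $\langle\bv,\bX\rangle^2$, which is sub-exponential by Lemma~\ref{lem:subgaussian}.$(c)$; and the bounded third derivative $\|\sigma'''\|_\infty\le\actlip$ renders the Hessian Lipschitz with integrable constant $J_*$. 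Theorem~\ref{thm:uniformconvergence1} then furnishes, on an event of probability at least $1-\delta$ and for $n\ge Cd\log d$, the two deviation bounds $\sup_\btheta\|\nabla\hR_n(\btheta)-\nabla R(\btheta)\|_2\le \tau\sqrt{Cd\log n/n}$, which I call $\Delta_g$, and $\sup_\btheta\|\nabla^2\hR_n(\btheta)-\nabla^2 R(\btheta)\|_{\op}\le \tau^2\sqrt{Cd\log n/n}$, which I call $\Delta_H$.

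Parts $(a)$ and $(b)$ then follow by enlarging $C$ so that $n\ge Cd\log d$ forces $\Delta_g$ and $\Delta_H$ below half of each relevant population constant. Requiring $\Delta_H\le\min\{\heslb/2,\hesub\}$ and using $\lambda_{\min}(\nabla^2\hR_n)\ge\lambda_{\min}(\nabla^2 R)-\Delta_H$ and $\|\nabla^2\hR_n\|_{\op}\le\|\nabla^2 R\|_{\op}+\Delta_H$ gives the Hessian bounds $\heslb/2$ and $2\hesub$ of part~$(a)$; requiring $\Delta_g\le\min\{\gradlb/2,\gradub,\gradlip\eps_0/4\}$ gives $\|\nabla\hR_n(\btheta)\|_2\ge\gradlb/2$ off $\Ball^d(\btheta_0,\eps_0)$ and $\le 2\gradub$ everywhere. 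For the directional inequality I would write $\langle\btheta-\btheta_0,\nabla\hR_n(\btheta)\rangle\ge\gradlip\|\btheta-\btheta_0\|_2^2-\Delta_g\|\btheta-\btheta_0\|_2$ using Lemma~\ref{lemma:popriskClassification}.$(c)$; restricting to $\|\btheta-\btheta_0\|_2\ge\eps_0/2$ makes the factor $\gradlip\|\btheta-\btheta_0\|_2-\Delta_g\ge\gradlip\eps_0/4$, yielding exactly $\langle\btheta-\btheta_0,\nabla\hR_n(\btheta)\rangle\ge(\gradlip\eps_0/4)\|\btheta-\btheta_0\|_2$.

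For part~$(c)$, since $\hR_n$ is continuous on the compact set $\Ball^d(\rad)$ it attains a global minimizer $\hbtheta_n$. I would first rule out a boundary minimizer: at any $\btheta$ with $\|\btheta\|_2=\rad$, KKT stationarity reads $\nabla\hR_n(\btheta)=-c\btheta$ for some $c\ge0$, so $\langle\btheta-\btheta_0,\nabla\hR_n(\btheta)\rangle=-c(\rad^2-\langle\btheta_0,\btheta\rangle)$, which is $\le 0$ because $\|\btheta_0\|_2\le\rad/3$ forces $\rad^2-\langle\btheta_0,\btheta\rangle\ge 2\rad^2/3>0$; since $\|\btheta-\btheta_0\|_2\ge 2\rad/3\ge\eps_0/2$ on the boundary, the directional bound of part~$(b)$ shows this inner product is strictly positive unless $c=0$, in which case $\nabla\hR_n(\btheta)=\bzero$ contradicts $\|\nabla\hR_n(\btheta)\|_2\ge\gradlb/2$. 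Hence $\hbtheta_n$ is interior and stationary, and the gradient lower bound places it in $\Ball^d(\btheta_0,\eps_0)$, where part~$(a)$ gives strong convexity with parameter $\heslb/2$. Strong monotonicity of $\nabla\hR_n$ between $\btheta_0$ and $\hbtheta_n$ then gives $(\heslb/2)\|\hbtheta_n-\btheta_0\|_2\le\|\nabla\hR_n(\btheta_0)\|_2=\|\nabla\hR_n(\btheta_0)-\nabla R(\btheta_0)\|_2\le\Delta_g$, so $\|\hbtheta_n-\btheta_0\|_2\le(2/\heslb)\tau\sqrt{Cd\log n/n}$, the claimed bound. The main obstacle is the careful boundary/KKT argument certifying that the global minimizer is interior rather than pinned to $\partial\Ball^d(\rad)$, together with the bookkeeping needed so that a single threshold $n\ge Cd\log d$ simultaneously enforces all of the ``half'' inequalities above; once the minimizer is known to be interior, its localization is a clean consequence of strong convexity.
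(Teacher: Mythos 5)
Your proposal is correct and follows essentially the same route as the paper: verify Assumptions \ref{ass:GradientSub}, \ref{ass:Hessianub}, \ref{ass:BoundedHessian} via Lemma \ref{lem:subgaussian}, invoke Theorem \ref{thm:uniformconvergence1} to push the population bounds of Lemma \ref{lemma:popriskClassification} onto $\hR_n$ with the same ``half-constant'' budgeting, and rule out boundary and far-from-$\btheta_0$ stationary points exactly as the paper does (your KKT sign convention $\nabla\hR_n(\btheta)=-c\btheta$, $c\ge 0$, is in fact the correct one). The only difference is cosmetic: for the final localization you use strong monotonicity of $\nabla\hR_n$ at the stationary point, while the paper Taylor-expands $\hR_n$ around $\btheta_0$ and uses $\hR_n(\hbtheta_n)\le\hR_n(\btheta_0)$; both give $\|\hbtheta_n-\btheta_0\|_2\lesssim\|\nabla\hR_n(\btheta_0)\|_2/\heslb$ with $\nabla R(\btheta_0)=\bzero$.
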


\begin{proof}
Let $\eps_0$, $\gradlip$, $\gradub$, $\gradlb$, $\hesub$, and $\heslb$ be the constants defined in Lemma \ref{lemma:popriskClassification}. We begin by verifying the conditions for Theorem \ref{thm:uniformconvergence1}, i.e., Assumptions \ref{ass:GradientSub}, \ref{ass:Hessianub}, and \ref{ass:BoundedHessian}.

\vskip 0.1cm
\noindent{\bf Assumption \ref{ass:GradientSub}. }
We would like to verify that the directional gradient of the loss is sub-Gaussian. The directional gradient of the loss gives
\begin{align}\label{eq:gradsubgaussian1}
\langle \nabla \ell(\btheta;\bZ), \bv \rangle = \alpha(\btheta) \langle \bX, \bv \rangle, 
\end{align}
where $\alpha(\btheta) = -2(Y - \sigma(\langle \btheta, \bX \rangle))\sigma'(\langle \btheta,\bX \rangle)$ whose absolute value is bounded by $2 \actlip$. By Assumption \ref{ass:classification}.$(b)$, $\langle \bX, \bv \rangle$ is mean zero and $\tau^2$ sub-Gaussian. Due to Lemma \ref{lem:subgaussian}.$(d)$, there exists a universal constant $C_1$, such that $\langle \nabla \ell(\btheta;Z), \bv \rangle$ is $C_1 \actlip \tau^2$-sub-Gaussian. 

\vskip 0.1cm
\noindent{\bf Assumption \ref{ass:Hessianub}. }
We would like to verify that the directional Hessian of the loss is sub-exponential. The directional Hessian of the loss gives
\begin{align}
\langle \bv, \nabla^2 \ell(\btheta;\bZ)\bv\rangle  = \beta(\btheta) \langle \bX, \bv \rangle^2,
\end{align}
where $\beta(\btheta)$ is given in equation (\ref{eqn:beta_def}) whose absolute value is by $2(\actlip^2 + \actlip)$. Since $\langle \bX, \bv \rangle$ is mean-zero and $\tau^2$-sub-Gaussian, according to Lemma \ref{lem:subgaussian}.$(c)$, $\langle \bv, \bX \rangle^2$ is $\tau^2$-sub-exponential. Due to Lemma \ref{lem:subgaussian}.$(e)$, there exists a universal constant $C_2$, such that $\langle \bv, \nabla^2 \ell(\btheta;\bZ)\bv\rangle$ is $C_2 (\actlip^2 + \actlip) \tau^2$-sub-exponential. 

\vskip 0.1cm
\noindent{\bf Assumption \ref{ass:BoundedHessian}.}
We need to verify that there exists a constant $\Ch$ which does not depend on $d$, such that $\HUB\leq \tau^2 d^{\Ch}$ and $J_* \leq \tau^3 d^{\Ch}$ (as $d \geq 2$).  
\[
\begin{aligned}
\HUB =& \Vert \nabla^2 R(\btheta_0)\Vert_{\op} \\
=& \sup_{\Vert \bv \Vert_2 = 1}\E[\beta(\btheta_0) \langle \bX, \bv \rangle^2 ]\\
\leq& 2 (\actlip^2 + \actlip) \sup_{\Vert \bv \Vert_2 = 1} \E [\langle X, \bv \rangle^2] =  2 (\actlip^2 + \actlip) \tau^2, \\
J_* =& \E\left[\sup_{\btheta_2 \neq \btheta_2} \frac{\Vert (\beta(\btheta_1) - \beta(\btheta_2) )\bX \bX^\sT \Vert_{\op}}{\Vert \btheta_1 -\btheta_2 \Vert_2}\right]\\
\leq&  \E \left[ \sup_{\btheta} \vert \gamma(\btheta) \vert \cdot \sup_{\btheta_1\neq \btheta_2}\frac{\langle \btheta_1 - \btheta_2, \bX \rangle}{\Vert \btheta_1 - \btheta_2 \Vert_2} \cdot \sup_{\Vert v \Vert_2 = 1} \langle \bv, \bX \rangle^2 \right] \\
\leq &  (6 \actlip^2 + 2 \actlip) \cdot \E \Vert \bX \Vert_2^3\\
\leq & C_3 (\actlip^2 + \actlip)  (d \tau^2)^{3/2},
\end{aligned}
\]
where $\gamma(\btheta) = 2(3\sigma'( \btheta^\sT \bX ) \sigma''( \btheta^\sT \bX) + (\sigma( \btheta^\sT \bX) -\sigma( \btheta_0
^\sT \bX)) \sigma'''( \btheta^\sT \bX ))$ which is bounded by $6 \actlip^2 + 2 \actlip$, and $C_3$ is a universal constant.

Therefore, in Theorem \ref{thm:uniformconvergence1}, the Assumptions \ref{ass:GradientSub} and \ref{ass:Hessianub} are satisfied with sub-Gaussian and sub-exponential parameters $\max\{C_1, C_2\} \cdot (\actlip^2 + \actlip)\tau^2$, and the Assumption \ref{ass:BoundedHessian} is satisfied with parameter $\Ch = \max\{ \log_2(2(\actlip^2 + \actlip)), 3/2 + \log_2(C_3(\actlip^2 + \actlip))\}$. Now we take $\eps_g = \min\left\{ \gradlb/2,  \gradlip \eps_0/4\right\}$, $\eps_h \leq \heslb/2$ depending on $(\rad, \tau^2, \actlip, \covlb)$ but independent of $(n, d)$. 
According to the uniform convergence of the gradient and the Hessian in Theorem \ref{thm:uniformconvergence1}, there exists a constant $C$ depending on $(\sigma(\cdot),\rad, \tau^2, \actlip, \covlb,\delta)$ but independent of $(n,d)$, such that as $n$ is large enough when $n \geq C d \log {d}$, with probability at least $1-\delta$, the following good event happens: 
\begin{equation}\label{eq:good_event_bc}
E_{\rm good} = \left\{
\begin{aligned}
&\sup_{\btheta \in \Ball^d(\bzero,\rad)} \left\Vert \nabla \what R_n(\btheta) - \nabla R(\btheta) \right\Vert_2 \leq \tau \sqrt{\frac{C \cdot d \log n}{n}} \leq \eps_g,\\
&\sup_{\btheta \in \Ball^d(\bzero,\rad)} \left\Vert \nabla^2 \what R_n(\btheta) - \nabla^2 R(\btheta) \right\Vert_{\op} \leq \tau^2 \sqrt{\frac{C \cdot d  \log n}{n}} \leq \eps_h.
\end{aligned}
\right\}.
\end{equation}

All the following arguments are deterministic on the good event $E_{\rm good}$. 

\prg{Part $(a)$.} For the the least eigenvalue of the empirical Hessian in $\Ball^d(\btheta_0, \eps_0)$, we have
\[
\begin{aligned}
\inf_{\btheta \in \Ball^d(\btheta_0, \eps_0)} \lambda_{\min}( \nabla^2 \what R_n(\btheta) )\geq &  \inf_{\btheta \in \Ball^d(\btheta_0, \eps_0)} \lambda_{\min} ( \nabla^2 R(\btheta) ) - \sup_{\btheta \in \Ball^d(\btheta_0, \eps_0)} \Vert \nabla^2 \what R_n(\btheta) - \nabla^2 R(\btheta) \Vert_{\op} \\
\geq & \heslb - \eps_h \geq \frac{1}{2} \heslb >0.
\end{aligned}
\]
This leads to the conclusion that, $\what R_n(\btheta)$ is strongly convex inside the region $\Ball^d(\btheta_0, \eps_0)$. 

For the operator norm of the empirical Hessian in $\Ball^d(\bzero, \rad)$, we have
\[
\begin{aligned}
\sup_{\btheta \in \Ball^d(\bzero, \rad)}\Big \Vert \nabla^2 \what R_n(\btheta) \Big \Vert_{\op} \leq & \sup_{\btheta \in \Ball^d(\bzero, \rad)} \Vert \nabla^2 R(\btheta) \Vert_{\op} + \sup_{\btheta \in \Ball^d(\bzero, \rad)} \Vert \nabla^2 \what R_n(\btheta) - \nabla^2 R(\btheta) \Vert_{\op} \\
\leq & \hesub + \eps_h \leq 2 \hesub.
\end{aligned}
\]

\prg{Part $(b)$.}  
For the lower bound of the gradient in $\Ball^d(\bzero, \rad) \setminus \Ball^d(\btheta_0, \eps_0)$, we have
\[
\begin{aligned}
\inf_{\btheta \in \Ball^d(\bzero, \rad)\setminus \Ball^d(\btheta_0, \eps_0)} \Big \Vert \nabla \what R_n(\btheta) \Big \Vert_2 \geq& \inf_{\btheta \in \Ball^d(\bzero, \rad)\setminus \Ball^d(\btheta_0, \eps_0)} \Big \Vert \nabla R(\btheta) \Big \Vert_2 - \sup_{\btheta \in \Ball^d(\bzero, \rad)} \Big \Vert \nabla \what R_n(\btheta) -  \nabla R_n(\btheta) \Big\Vert_2 \\
\leq & \gradlb - \eps_g \geq \gradlb/2. 
\end{aligned}
\]

For the upper bound of the gradient in $\Ball^d(\bzero, \rad)$, we have
\[
\begin{aligned}
\sup_{\btheta \in \Ball^d(\bzero, \rad)} \Big \Vert \nabla \what R_n(\btheta) \Big \Vert_2 \leq& \sup_{\btheta \in \Ball^d(\bzero, \rad)} \Big \Vert \nabla R(\btheta) \Big \Vert_2 + \sup_{\btheta \in \Ball^d(\bzero, \rad)} \Big \Vert \nabla \what R_n(\btheta) -  \nabla R_n(\btheta) \Big\Vert_2 \\
\leq & \gradub + \eps_g \leq 2 \gradub. 
\end{aligned}
\]

For the lower bound of the directional empirical gradient $\langle \nabla \what R_n(\btheta), \btheta - \btheta_0 \rangle/\Vert \btheta - \btheta_0 \Vert_2$ in $\Ball^d(\bzero,\rad) \setminus \Ball^d(\btheta_0,  \eps_0/2)$, we have
\[
\begin{aligned}
&\inf_{\btheta \in \Ball^d(\bzero,\rad) \setminus \Ball^d(\btheta_0,  \frac{1}{2}\eps_0) } \frac{\langle \nabla \what R_n(\btheta), \btheta-\btheta_0 \rangle}{\Vert \btheta - \btheta_0 \Vert_2} \\ 
\geq&  \inf_{\btheta \in \Ball^d(\bzero,\rad) \setminus \Ball^d(\btheta_0,  \frac{1}{2}\eps_0) } \frac{\langle \nabla R(\btheta), \btheta-\btheta_0 \rangle}{\Vert \btheta - \btheta_0 \Vert_2} - \sup_{\btheta \in \Ball^d(\bzero,\rad) \setminus \Ball^d(\btheta_0,  \frac{1}{2}\eps_0) } \Vert \nabla \what R_n(\btheta) - \nabla R(\btheta) \Vert_2\\
\geq & \inf_{\btheta \in \Ball^d(\bzero,\rad) / \Ball^d(\btheta_0, \frac{1}{2}\eps_0) } \gradlip \Vert \btheta - \btheta_0 \Vert_2 - \frac{1}{4}\gradlip \eps_0 \\
\geq & \frac{1}{2}\gradlip \eps_0 - \frac{1}{4}\gradlip \eps_0 = \frac{1}{4} \gradlip \eps_0 >0.
\end{aligned}
\]

\prg{Part $(c)$. }
Note that by part $(b)$, there is no local minimizer in the interior of $\Ball^d(\bzero, \rad) \setminus \Ball^d(\btheta_0, \eps_0/2)$ (because otherwise, the directional gradient would vanish there). Also, there is no local minimizer on the
boundary of $\Ball^d(\bzero, \rad)$. Indeed, if $\btheta$ was such a minimizer, we would gave $\nabla\hR_n(\btheta) = \alpha\btheta$
for some $\alpha\ge 0$, whence $\<\nabla \hR_n(\btheta),\btheta-\btheta_0 \>\le 0$ contradicting the above.
Hence any local minimizer of $\hR_n(\btheta)$ must be in $\Ball^d(\btheta_0, \eps_0/2)$. By strong convexity there can be at most one such point.

Let that $\hat \btheta_n \in \Ball^d(\btheta_0, \eps_0/2)$ denotes the unique local minimizer. Note that, by the intermediate value theorem, there exists $\btheta' \in\Ball^d(\btheta_0,\eps_0/2)$ such that
\begin{align}
\hR_n(\hat \btheta_n) = \hR_n(\btheta_0)+\<\nabla\hR_n(\btheta_0),\hat \btheta_n-\btheta_0\> + \frac{1}{2}\<\nabla^2\hR_n(\btheta'),(\hat \btheta_n-\btheta_0)^{\otimes 2}\> \le  \hR_n(\btheta_0)\, .
\end{align}
where the inequality follows by optimality of $\hR_n(\hat \btheta_n)$. Using Cauchy-Schwarz inequality, the lower bound on the Hessian
in point $(b)$, and the uniform convergence of the gradient, we get 
\[
\begin{aligned}
\|\hat \btheta_n-\btheta_0\|_2\le& \frac{4\|\nabla\hR_n (\btheta_0)\|_2}{\heslb}\\
\leq & \frac{4 \tau}{ \heslb} \sqrt{\frac{C \cdot d \log n}{n}}.
\end{aligned}
\]

\end{proof}

\subsubsection{Gradient descent algorithm}

\begin{lemma}\label{lemma:GDalgorithm}
Under Assumption \ref{ass:classification}, and $\Vert \btheta_0 \Vert_2 \leq \rad /3$, there exist constants $C$
and $h_{\max}$ depending on $(\sigma(\cdot),\rad,\tau^2,\actlip, \covlb, \delta)$, such that as $n  \geq C d \log d$, with probability at least $1 - \delta$, gradient descent with fixed  step size $h_k=h \leq h_{\max}$ converges exponentially fast to the global minimizer, for any initialization $\btheta_s \in \Ball^d(\btheta_0, 2 \rad /3)$:
$\|\hbtheta_n (k)-\hbtheta_n\|_2\le C\|\btheta_s-\hbtheta_n\|_2\, (1- h/C)^k$.  
\end{lemma}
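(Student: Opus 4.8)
The plan is to run the iteration $\hbtheta_n(k+1) = \hbtheta_n(k) - h\nabla \hR_n(\hbtheta_n(k))$ deterministically on the good event $E_{\rm good}$ of Lemma \ref{lemma:empriskClassification}, which holds with probability at least $1-\delta$ once $n\ge Cd\log d$, and to split the analysis into a global ``approach'' phase and a local ``contraction'' phase. Write $\btheta_k = \hbtheta_n(k)$, set $m = \heslb/2$ and $L = 2\hesub$, and record the three facts from the lemma that drive everything: the global gradient bound $\|\nabla \hR_n(\btheta)\|_2\le 2\gradub$ on $\Ball^d(\bzero,\rad)$, the directional lower bound $\langle \btheta-\btheta_0,\nabla \hR_n(\btheta)\rangle \ge \tfrac14\gradlip\eps_0\|\btheta-\btheta_0\|_2$ on $\Ball^d(\bzero,\rad)\setminus\Ball^d(\btheta_0,\eps_0/2)$, and the local strong convexity $\lambda_{\min}(\nabla^2\hR_n)\ge m$ on $\Ball^d(\btheta_0,\eps_0)$. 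Enlarging the constant in the sample-size condition if needed, I would also assume $\|\hbtheta_n-\btheta_0\|_2\le \eps_0/4$ (permissible since Lemma \ref{lemma:empriskClassification}.$(c)$ places $\hbtheta_n$ within $C\sqrt{(d\log n)/n}$ of $\btheta_0$), which yields the nesting $\Ball^d(\btheta_0,\eps_0/2)\subseteq \Ball^d(\hbtheta_n,3\eps_0/4)\subseteq \Ball^d(\btheta_0,\eps_0)$; the middle ball will be the invariant basin.

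For the approach phase, while $\btheta_k\notin \Ball^d(\btheta_0,\eps_0/2)$, I would expand the square and insert the two global bounds:
\[
\|\btheta_{k+1}-\btheta_0\|_2^2 \le \|\btheta_k-\btheta_0\|_2^2 - \tfrac{h}{2}\gradlip\eps_0\|\btheta_k-\btheta_0\|_2 + 4h^2\gradub^2 \le \|\btheta_k-\btheta_0\|_2^2 - \tfrac{h}{8}\gradlip\eps_0^2,
\]
where the last inequality uses $\|\btheta_k-\btheta_0\|_2\ge \eps_0/2$ together with the choice $h\le \gradlip\eps_0^2/(32\gradub^2)$. Thus the squared distance to $\btheta_0$ strictly decreases by a fixed amount per step. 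Two consequences follow: the iterates never leave $\Ball^d(\btheta_0,2\rad/3)\subseteq\Ball^d(\bzero,\rad)$ (so all of the lemma's bounds stay valid), and after at most $k_0 = 32\rad^2/(9h\gradlip\eps_0^2)$ steps the iterate enters $\Ball^d(\btheta_0,\eps_0/2)$.

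For the contraction phase, once $\btheta_k\in \Ball^d(\hbtheta_n,3\eps_0/4)$ the entire segment $[\btheta_k,\hbtheta_n]$ lies in the convex strong-convexity region $\Ball^d(\btheta_0,\eps_0)$, so using $\nabla \hR_n(\hbtheta_n)=0$ I get $\langle \nabla \hR_n(\btheta_k),\btheta_k-\hbtheta_n\rangle \ge m\|\btheta_k-\hbtheta_n\|_2^2$ and $\|\nabla \hR_n(\btheta_k)\|_2\le L\|\btheta_k-\hbtheta_n\|_2$. Hence
\[
\|\btheta_{k+1}-\hbtheta_n\|_2^2 \le \big(1-2hm+h^2L^2\big)\|\btheta_k-\hbtheta_n\|_2^2 \le (1-hm)\|\btheta_k-\hbtheta_n\|_2^2
\]
for $h\le m/L^2$, so $\|\btheta_{k+1}-\hbtheta_n\|_2 \le (1-hm/2)\|\btheta_k-\hbtheta_n\|_2$. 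Since this does not increase the distance to $\hbtheta_n$, the basin $\Ball^d(\hbtheta_n,3\eps_0/4)$ is forward-invariant, and because the approach phase deposits the iterate into $\Ball^d(\btheta_0,\eps_0/2)\subseteq\Ball^d(\hbtheta_n,3\eps_0/4)$ the contraction holds for all later $k$.

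To assemble the final rate I would set $h_{\max}=\min\{\gradlip\eps_0^2/(32\gradub^2),\,m/L^2\}$ and split on the initialization. If $\btheta_s\in\Ball^d(\hbtheta_n,3\eps_0/4)$ the contraction phase applies from $k=0$ and gives $\|\btheta_k-\hbtheta_n\|_2\le(1-hm/2)^k\|\btheta_s-\hbtheta_n\|_2$ outright; otherwise $\|\btheta_s-\hbtheta_n\|_2>3\eps_0/4$ and for $k\ge k_0$ I obtain $\|\btheta_k-\hbtheta_n\|_2\le (1-hm/2)^{k-k_0}\cdot\tfrac34\eps_0$. The crucial observation is that $k_0\propto 1/h$, so $hk_0$ is a fixed constant and $(1-hm/2)^{-k_0}\le e^{hmk_0/2}$ is bounded uniformly in $h$, $d$, and $n$; absorbing this factor, the uniform diameter bound $\|\btheta_k-\hbtheta_n\|_2\le 2\rad$ (which handles the transient indices $k\le k_0$), and the lower bound $\|\btheta_s-\hbtheta_n\|_2>3\eps_0/4$ into one constant $C$, and finally enlarging $C$ so that $1-hm/2\le 1-h/C$, yields $\|\hbtheta_n(k)-\hbtheta_n\|_2\le C\|\btheta_s-\hbtheta_n\|_2(1-h/C)^k$ in all cases. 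I expect the main obstacle to be precisely this interface bookkeeping: proving the basin is genuinely invariant so the two phases do not interleave, and verifying that the finite, $h$-dependent length of the approach phase collapses into a prefactor $C$ that is independent of $d$ and $n$ — which is exactly what makes the stated convergence rate dimension- and sample-size-free.
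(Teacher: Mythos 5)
Your proof is correct, and it follows the paper's high-level skeleton (condition on the good event of Lemma \ref{lemma:empriskClassification}, analyze an outer phase driven by the directional-gradient bound and an inner phase driven by local strong convexity, then concatenate), but the mechanics of both phases differ genuinely from the paper's. Outside the ball, the paper does not use the linear bound $\langle \btheta-\btheta_0,\nabla\hR_n(\btheta)\rangle \ge \tfrac{1}{4}\gradlip\eps_0\|\btheta-\btheta_0\|_2$ that you took from Lemma \ref{lemma:empriskClassification}.$(b)$; instead it re-invokes uniform convergence on the population inequality $\langle \btheta-\btheta_0,\nabla R(\btheta)\rangle\ge\gradlip\|\btheta-\btheta_0\|_2^2$ to get the \emph{quadratic} lower bound $\tfrac{\gradlip}{2}\|\btheta_k-\btheta_0\|_2^2$, which yields a multiplicative contraction $\|\btheta_{k+1}-\btheta_0\|_2^2\le(1-\tfrac{1}{2}h\gradlip)\|\btheta_k-\btheta_0\|_2^2$ already in the outer phase; your additive decrease of $\tfrac{h}{8}\gradlip\eps_0^2$ per step instead gives a finite hitting time $k_0\propto 1/h$, and your observation that $hk_0$ is a fixed constant (so $(1-hm/2)^{-k_0}$ is bounded uniformly in $h,d,n$) is a different, and perfectly valid, route to a dimension- and sample-size-free prefactor. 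Inside, the paper contracts function values $\hR_n(\btheta_k)-\hR_n(\hbtheta_n)$ with step size $h\le 1/(2\hesub)$ and converts to iterates via strong convexity (incurring the factor $4\hesub/\heslb$), whereas you contract iterates directly via $(1-2hm+h^2L^2)$; your version requires the whole segment $[\btheta_k,\hbtheta_n]$ to lie in the strong-convexity region, which you secure by the invariant basin $\Ball^d(\hbtheta_n,3\eps_0/4)$ together with the extra requirement $\|\hbtheta_n-\btheta_0\|_2\le\eps_0/4$ (a legitimate enlargement of the sample-size constant; the paper's Lemma \ref{lemma:empriskClassification}.$(c)$ only places $\hbtheta_n$ in $\Ball^d(\btheta_0,\eps_0/2)$). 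In fact your explicit forward-invariance argument, centered at $\hbtheta_n$ rather than $\btheta_0$, is cleaner than the paper's Step 3, which appeals to standard convex-optimization results on the ball $\Ball^d(\btheta_0,\eps_0)$ without verifying that the iterates remain there; the trade-off is that the paper's multiplicative outer contraction makes the concatenation in its Step 4 more immediate, avoiding your transient-index bookkeeping for $k\le k_0$.
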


\begin{proof}

We have already shown that there is an area where the empirical risk is strongly convex inside and the directional gradient is lower bounded outside. Convergence of gradient descent is established by considering the two phases accordingly: outside the area, the problem is non-convex, and we get an exponential convergence using a strong quasi-convexity type argument; inside the area, we get an exponential convergence as we are essentially minimizing a strongly convex function. Connecting the two arguments gives a global convergence result. 

\noindent
{\bf Step 1.} Conditioning on the good event. 

Notice that we are making the same assumptions as in Lemma \ref{lemma:empriskClassification}. Hence event $E_{\rm good}$ in Equation (\ref{eq:good_event_bc}) holds with probability at least $1-\delta$ provided $n \geq C d \log d$ where $C$ is defined in Lemma \ref{lemma:empriskClassification}. All the conclusions in Lemma \ref{lemma:empriskClassification} holds, and all the following arguments are deterministic on $E_{\rm good}$. Note that we already proved that, on $E_{\rm good}$, there is a unique minimizer of empirical risk which is inside $\Ball^d(\btheta_0, \eps_0/2)$.

\noindent{\bf Step 2. } Establish an exponential convergence outside the ball $\Ball_2^d(\btheta_0, \eps_0/2)$. 

Let $\btheta_n(k)$ be the $k$-th iterate of gradient descent, defined via
\[
\btheta_n(k+1) = \btheta_n(k) - h \nabla \what R_n(\btheta_n(k)).
\]
In this part we assume that we initialize at $\btheta_n(0)\notin \Ball_2^d(\btheta_0,\eps_0)$ and all the iterates up to $\btheta_n(k)$ are outside the ball $\Ball_2^d(\btheta_0,\eps_0/2)$ and show that the gradient descent will converge exponentially to the ball $\Ball_2^d(\btheta_0,\eps_0/2)$.

By simple algebraic manipulations, we have 
\begin{eqnarray}
  && \Vert \btheta_n(k+1) - \btheta_0\Vert_2^2 - \Vert \btheta_n(k) - \btheta_0 \Vert_2^2 \nonumber\\
  &=& \|\btheta_n(k)-h\grad\what{R}_n(\btheta_n(k))-\btheta_0\|_2^2 - \|\btheta_n(k)-\btheta_0\|_2^2 \nonumber \\
  &=& - 2 h \langle \nabla \what R_n(\btheta_n(k)), \btheta_n(k) - \btheta_0\rangle + h^2 \Vert \nabla \what R_n(\btheta_n(k)) \Vert_2^2 \label{eqn:gd_algebra}.
\end{eqnarray}
First, we are going to lower bound the inner product term. By Lemma \ref{lemma:popriskClassification}.$(c)$ we have the inequality
\begin{equation}
  \< \grad R(\btheta), \btheta-\btheta_0 \> \ge \gradlip \|\btheta - \btheta_0\|_2^2
\end{equation}
for all $\btheta\in \Ball_2^d(\btheta_0, 2r/3) \subset \Ball_2^d(\bzero,\rad)$. Applying this inequality and using our uniform convergence result, we get
\begin{eqnarray*}
  && \< \grad\what{R}_n(\btheta_n(k)), \btheta_n(k)-\theta_0 \> \\
  &=& \< \grad R(\btheta_n(k)), \btheta_n(k)-\theta_0 \> + \< \grad\what{R}_n(\btheta_n(k)) - \grad R(\btheta_n(k)), \btheta_n(k) - \theta_0\> \\
  &\ge& \gradlip \|\btheta_n(k) - \btheta_0\|_2^2 - \eps\|\btheta_n(k) - \btheta_0\|_2 \ge \Big( \gradlip - \frac{2\eps}{\eps_0} \Big)\|\btheta_n(k) - \btheta_0\|_2^2.        
\end{eqnarray*}
The last inequality uses the fact that $\btheta_n(k) \notin \Ball_2^d(\btheta_0,\eps_0/2)$. Note that in Lemma \ref{lemma:empriskClassification} we have chosen $\eps\le \gradlip\eps_0/4$ to guarantee
that $\gradlip- 2\eps/\eps_0\ge \gradlip/2$. Plugging this back into (\ref{eqn:gd_algebra}), we get
\begin{eqnarray*}
  && \|\btheta_n(k+1) - \btheta_0\|_2^2 \\
  &\le& \|\btheta_n(k) - \btheta_0\|_2^2 -2h \cdot \frac{\gradlip}{2} \|\btheta_n(k) - \btheta_0\|_2^2 + h^2(2\gradub)^2 \\
  &=& (1-h\gradlip) \|\btheta_n(k) - \btheta_0\|_2^2 + 4h^2\gradub^2.
\end{eqnarray*}
Note that the second line uses Lemma \ref{lemma:empriskClassification}.$(b)$ to upper bound $\|\grad\what{R}_n(\btheta_n(k))\|_2$ by $2\gradub$. 

Next, choosing $h\le h_{\max, 1} \defeq \gradlip\eps_0^2/(8\gradub^2)$, we have
\begin{equation}
  4h^2\gradub^2 \le \frac{1}{2}h\gradlip\eps_0^2 \le \frac{1}{2}h\gradlip \|\btheta_n(k) - \btheta_0\|_2^2,
\end{equation}
which gives us
\begin{equation}
  \|\btheta_n(k+1) - \btheta_0\|_2^2 \le (1-h\gradlip)\|\btheta_n(k) - \btheta_0\|_2^2 + \frac{1}{2}h\gradlip\|\btheta_n(k) - \btheta_0\|_2^2 = \Big( 1-\frac{1}{2}h\gradlip \Big)\|\btheta_n(k) - \btheta_0\|_2^2.
\end{equation}

Finally, we are going to convert the above result to an exponential convergence of the optimization error $\|\btheta_n(k)-\btheta_n^*\|_2^2$. Define $r_1=1- h\gradlip/2<1$. We have the following chain of inequalities
\begin{eqnarray*}
  && \|\btheta_n(k) - \btheta_n^*\|_2 \le \|\btheta_n(k) - \btheta_0\|_2 + \|\btheta_0 - \btheta_n^*\|_2 \le \|\btheta_n(k) - \btheta_0\|_2 + \frac{1}{2}\eps_0\\
   &\le& 2 \|\btheta_n(k) - \btheta_0\|_2  \le 2r_1^{k/2} \|\btheta_n(0) - \btheta_0\|_2 \le 2 r_1^{k/2} \Big( \|\btheta_n(0) - \btheta_n^*\|_2 + \frac{1}{2}\eps_0 \Big) \\
  &\le& 4r_1^{k/2} \|\btheta_n(0) - \btheta_n^*\|_2.
\end{eqnarray*}
The last inequality is since $\|\btheta_n(0) - \btheta_n^*\|_2 \ge \eps_0/2$.
Consequently, we have
\begin{equation}\label{eqn:gdnotinball}
  \|\btheta_n(k) - \btheta_n^*\|_2^2 \le 16 \left(1 - \frac{1}{2} \gradlip h\right)^k \cdot \|\btheta_n(0) - \btheta_n^*\|_2^2.
\end{equation}
for any $\btheta_n(k)$ such that $\|\btheta_n(k) - \btheta_0\|_2 \geq \eps_0/2$.

\noindent{\bf Step 3. } Establish an exponential convergence inside $\Ball_2^d(\btheta_0, \eps_0)$. 


As shown in Lemma \ref{lemma:empriskClassification}.$(b)$, we have
\[
\inf_{\btheta\in\Ball^d(\btheta_0,\eps_0)}\lambda_{\min} (\nabla^2 \what R_n(\btheta)) \geq \frac{1}{2} \heslb, ~~ \sup_{\btheta\in\Ball^d(\btheta_0,\eps_0)} \lambda_{\max} (\nabla^2 \what R_n(\btheta)) \leq 2\heslb.
\]
Consequently, $\what R_n(\btheta)$ is $1/(2\heslb)$-strongly convex in $\Ball^d(\btheta_0, \eps_0)$. According to standard convex optimization results, if we start from a point inside $\Ball^d(\btheta_0, \epsilon_0)$, and take $h \le h_{\max,2} \defeq 1/(2\hesub)$, we have
\[
\what R_n(\btheta_n(k)) - \what R_n(\btheta_n^*) \leq  \Big(1 - \frac{1}{2} \heslb h\Big)^k \cdot \left(\what R_n(\btheta_n(0)) - \what R_n(\btheta_n^*)\right).
\]
Strongly convexity ensures that optimization error of iteration points is bounded by
\begin{equation}\label{eqn:gdinball}
\Vert \btheta_n(k) - \btheta_n^*\Vert_2^2 \leq \frac{4\hesub}{\heslb} \Big(1 - \frac{1}{2} \heslb h \Big)^{k} \cdot \Vert \btheta_n(0) - \btheta_n^* \Vert_2^2.
\end{equation}

\noindent{\bf Step 4. } Concatenate the two exponential convergences. 

Now we have the exponential convergence of gradient descent as $\btheta \in \Ball^d(\btheta_0, 2\rad/3) \setminus \Ball^d(\btheta_0, \eps_0/2)$ given by equation (\ref{eqn:gdnotinball}), and exponential convergence in $\Ball^d(\btheta_0, \eps_0)$ given by equation (\ref{eqn:gdinball}). Concatenating the two results, we get that for any initialization $\btheta_n(0)$, running gradient descent gives
\begin{equation}
  \|\btheta_n(k) - \btheta_n^*\|_2^2 \le \frac{64\hesub}{\heslb} \cdot s^k \|\btheta_n(0) - \btheta_n^*\|_2^2,
\end{equation}
where $s = \max\{1-h\gradlip/2, 1- h \heslb/2\}$, and the step size $h$ satisfies
\begin{equation}
  h \le h_{\max} = \min\{h_{\max,1}, h_{\max,2}\} = \min\left\{ \frac{\gradlip\eps_0^2}{8\gradub^2}, \frac{1}{2\hesub} \right\}.
\end{equation}

\end{proof}

\subsection{Proof of Theorem \ref{thm:SparseClass}: Very high-dimensional regime}

In this section we prove Theorem \ref{thm:SparseClass}. Similar to the high-dimensional regime, we proceed by first applying uniform convergence results in Theorem \ref{thm:uniformconvergence2} and then studying the regularized empirical risk more carefully. 

To fix notations, let $L(\btheta)=R(\btheta)+\lambda_n\|\btheta\|_1$ be the regularized population risk and $L_n(\btheta)=\what{R}_n(\btheta)+\lambda_n\|\btheta\|_1$ be the regularized empirical risk. Let $\partial L_n(\btheta)$ be the set of subgradient of $L_n$ at $\btheta$:
\begin{equation}
	\partial L_n(\btheta) =\left\{ \grad \what{R}_n(\btheta) + \lambda_n\bv : \bv\in \partial\Vert \btheta\Vert_1 \right\}.
\end{equation}
The optimality condition says that $\btheta$ is a stationary point of $L_n$ if and only if $\bzero \in\partial L_n(\btheta)$.

We decompose the proof into four lemmas. First, in Lemma \ref{lem:assSparseClass} we verify the assumptions in Theorem \ref{thm:uniformconvergence2} for the very high-dimensional binary classification model. Then, in Lemma \ref{lem:coneSparseClass} we argue that there cannot be any stationary points outside the region $\Ball_2^d(\btheta_0, r_s) \cap \mathbb C$, where $r_s$ is the statistical radius with $r_s = \Cs \sqrt{(M^2 s_0 \log d)/n + s_0 \lambda_n^2}$ and $\mathbb C$ is a cone with $\mathbb C = \{\btheta_0+\Delta: \Vert \Delta_{S_0^c}\Vert_1 \leq 3 \Vert \Delta_{S_0} \Vert_1 \}$. Next, in Lemma \ref{lem:supportSparseClass}, we argue that all the stationary points should have support size less to equal to $\Csp s_0 \log d$. Finally, in Lemma \ref{lem:restricted_hessian_sparseclass}, uniform convergence of restricted Hessian implies that there cannot be two stationary points in $
\Ball_2^d(\btheta_0, r_s) \cap \mathbb C$. 

Since we assumed $n\leq d^{100}$, sometimes we will implicitly use the bound $\log (dn) \leq 101 \log (d)$ in the proof. 

\subsubsection{Technical lemmas}

We provide a couple of technical lemmas to characterize the properties of the regularized empirical risk $L_n(\btheta)$.

\begin{lemma}\label{lem:assSparseClass}
For the very high-dimensional binary classification problem, under Assumptions \ref{ass:derivdecay} and \ref{ass:continuous}, there exist constants $T_0$ and $L_0$ depending on $(\tau^2,L_\sigma, \derivdecay, \rad)$ such that Assumptions \ref{ass:BoundedGradient} and \ref{ass:GGL} are satisfied with the parameters $T_* = T_0 \cdot M $ and $L_* = L_0 \cdot M$. 
\end{lemma}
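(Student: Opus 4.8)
The plan is to compute the gradient of the non-linear least-squares loss explicitly and read the required structure off it. Writing $\bz = (y,\bx)$, we have $\ell(\btheta;\bz) = (y - \sigma(\langle\btheta,\bx\rangle))^2$, so
\begin{equation}
\nabla \ell(\btheta;\bz) = \alpha(\btheta)\,\bx, \qquad \alpha(\btheta) = -2\big(y - \sigma(\langle\btheta,\bx\rangle)\big)\sigma'(\langle\btheta,\bx\rangle).
\end{equation}
Since $y\in\{0,1\}$ and $\sigma$ takes values in $[0,1]$ we have $|y-\sigma(\cdot)|\le 1$, and $|\sigma'|\le\actlip$ by Assumption \ref{ass:classification}.$(a)$, hence $|\alpha(\btheta)|\le 2\actlip$ pointwise. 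Assumption \ref{ass:BoundedGradient} then follows immediately: by Assumption \ref{ass:continuous}, $\|\bX\|_\infty\le M\tau$ almost surely, so $\|\nabla\ell(\btheta;\bZ)\|_\infty = |\alpha(\btheta)|\,\|\bX\|_\infty \le 2\actlip\tau M$ for all $\btheta\in\Ball_2^p(\rad)$, giving $T_* = T_0 M$ with $T_0 = 2\actlip\tau$.

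For Assumption \ref{ass:GGL} I would take $\bpsi(\bz)=\bx$ and absorb the offset $b = b(\bz) = \langle\btheta_0,\bx\rangle$ into the $\bz$-dependence. Setting
\begin{equation}
f(t;\bz) = -2\big(y - \sigma(t+b)\big)\sigma'(t+b), \qquad g(t;\bz) = t\,f(t;\bz),
\end{equation}
and writing $\langle\btheta,\bx\rangle = \langle\btheta-\btheta_0,\bx\rangle + b$, one checks directly that $\nabla\ell(\btheta;\bz) = f(\langle\btheta-\btheta_0,\bx\rangle;\bz)\,\bx$, whence $\langle\nabla\ell(\btheta;\bz),\btheta-\btheta_0\rangle = g(\langle\btheta-\btheta_0,\bpsi(\bz)\rangle;\bz)$. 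The requirements $g(0;\bz)=0$ and ``$\bpsi(\bZ)=\bX$ is mean-zero and $\tau^2$-sub-Gaussian'' hold by construction and by Assumption \ref{ass:classification}.$(b)$, respectively. The only nontrivial point is that $t\mapsto g(t;\bz)$ be $L_*$-Lipschitz with $L_* = L_0 M$.

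The crux is therefore a uniform bound on $g'(t;\bz) = f(t;\bz) + t\,f'(t;\bz)$ over all $t\in\reals$, where $f'(t;\bz) = -2[-\sigma'(u)^2 + (y-\sigma(u))\sigma''(u)]$ with $u = t+b$. The term $f(t;\bz)$ is bounded by $2\actlip$ as above. The difficulty is that $t$ is unbounded, so I would change variables to $u=t+b$ and split
\begin{equation}
t\,f'(t;\bz) = u\,f'(t;\bz) - b\,f'(t;\bz).
\end{equation}
The $u$-part is controlled by the fast-decay Assumption \ref{ass:derivdecay}, using $|u\sigma'(u)|,|u\sigma''(u)|\le\derivdecay$:
\begin{equation}
|u\,f'(t;\bz)| \le 2|u\sigma'(u)|\,|\sigma'(u)| + 2|u\sigma''(u)|\,|y-\sigma(u)| \le 2\derivdecay\actlip + 2\derivdecay.
\end{equation}
The $b$-part is controlled by the bounded projection of Assumption \ref{ass:continuous}: since $|b| = \|\btheta_0\|_2\,|\langle\bx,\btheta_0/\|\btheta_0\|_2\rangle| \le \rad M\tau$, we get $|b\,f'(t;\bz)| \le 2|b|(\actlip^2+\actlip) \le 2\rad\tau(\actlip^2+\actlip)M$. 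Collecting terms and using $M\ge 1$ to absorb the $M$-free contributions yields $|g'(t;\bz)| \le L_0 M$ with $L_0 = 2\actlip + 2\derivdecay\actlip + 2\derivdecay + 2\rad\tau(\actlip^2+\actlip)$, as claimed.

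I expect the main obstacle to be precisely this Lipschitz estimate. Naively $t\,f'(t;\bz)$ grows like $t\sigma'(t)^2$ and $t(y-\sigma)\sigma''(t)$, which are \emph{not} bounded by the derivative bounds of Assumption \ref{ass:classification} alone; the role of Assumptions \ref{ass:derivdecay} and \ref{ass:continuous} is exactly to tame this growth. The fast-decay assumption keeps $t\sigma'(t)$ and $t\sigma''(t)$ bounded, while the bounded-feature/bounded-projection assumption ensures the shift $b=\langle\btheta_0,\bx\rangle$ contributes an $O(M\tau)$ factor rather than an unbounded one. Everything else is routine bookkeeping of absolute constants.
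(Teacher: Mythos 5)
Your proposal is correct and takes essentially the same route as the paper's proof: the paper reads off $T_* = 2\actlip M\tau$ from the same explicit gradient formula, and its Lipschitz bound on $g(t;\bz)$ performs exactly your split $t = (t+t_0) - t_0$ (your $u-b$), using Assumption \ref{ass:derivdecay} to bound the shifted terms $\sigma'(t+t_0)(t+t_0)$ and $\sigma''(t+t_0)(t+t_0)$ and the bounded projection $|\langle \btheta_0,\bx\rangle|\le \rad M\tau$ from Assumption \ref{ass:continuous} to bound the offset terms, arriving at the same constant $L_0 = 2(\actlip\derivdecay + \derivdecay + \actlip) + 2(\actlip^2+\actlip)\tau\rad$ after absorbing the $M$-free part via $M\ge 1$. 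You have also correctly diagnosed the one genuinely nontrivial point — that $t\,f'(t;\bz)$ is not bounded by Assumption \ref{ass:classification} alone — which is precisely why the paper introduces the fast-decay and bounded-feature assumptions.
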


\begin{proof}
We give a bound for $T_*$ in part $(a)$, and give a bound for $L_*$ in part $(b)$. 

\noindent{ \bf Part $(a)$.} The gradient of the loss is
\begin{equation}
	\grad_\btheta \ell(\btheta;\bz) = 2(\sigma(\<\btheta,\bx\>) - y)\sigma'(\<\btheta,\bx\>)\bx.
\end{equation}
Assumption \ref{ass:classification} guarantees that $\vert 2(\sigma(\<\btheta,\bx\>-y)\sigma'(\btheta,\bx)\vert \le 2L_\sigma$, and Assumption \ref{ass:continuous} guarantees that $\Vert \bx \Vert_\infty \leq M \tau$. So we have $\|\grad_\btheta\ell(\btheta;\bz)\|_\infty\le 2L_\sigma M\tau$. Assumption \ref{ass:BoundedGradient} is satisfied with parameter $T_*=2L_\sigma M \tau $.

\noindent{\bf Part $(b)$.}
%
We have
\begin{equation}
	\< \grad_\btheta\ell(\btheta;\bz), \btheta-\btheta_0 \> = 2(\sigma(\<\btheta,\bx\>)-y)\sigma'(\<\btheta,\bx\>)\<\bx,\btheta-\btheta_0\>.
\end{equation}
We take $t = \langle \btheta - \btheta_0, \bpsi(\bz)\rangle$, $\bpsi(\bz)=\bx$ and $g(t;\bz) = 2[(\sigma(t+t_0)-y) \sigma'(t+t_0) t]$  where $t_0 = \langle \btheta_0, \bx \rangle$. We have
\begin{equation}
\begin{aligned}
\vert g'(t;\bz) \vert =& \vert 2 \sigma'(t+t_0)^2 t  + 2(\sigma(t+t_0) -y) \sigma''(t+t_0) t +  2(\sigma(t+t_0) -y) \sigma'(t+t_0)\vert \\
\leq & \vert 2 \sigma'(t + t_0) \left(\sigma'(t+t_0)(t+t_0)\right) \vert + \vert 2 \sigma'(t + t_0)^2 t_0 \vert   \\
&+\vert 2(\sigma(t+t_0) - y) \left( \sigma''(t+t_0) (t+t_0) \right) \vert+\vert 2(\sigma(t+t_0) - y) \sigma''(t+t_0) t_0 \vert  + 2\actlip\\
\leq & 2 \actlip \derivdecay + 2 \actlip^2 M\tau \rad + 2 \derivdecay + 2 \actlip M \tau \rad + 2 \actlip \\
=&2 (\actlip \derivdecay +  \derivdecay + \actlip) +2M \cdot ( \actlip^2 + \actlip)( \tau \rad).
\end{aligned}
\end{equation}
Hence $g(t;\bz)$ is at most $2 (\actlip \derivdecay +  \derivdecay + \actlip) +2M \cdot ( \actlip^2 + \actlip)\tau \rad$-Lipschitz in its first argument, also satisfies $g(0;\bz) = 0$. So Assumption \ref{ass:GGL} is satisfied with $L_* = 2 [(\actlip \derivdecay +  \derivdecay + \actlip) + ( \actlip^2 + \actlip) ( \tau \rad)]M$ since we assumed $M \geq 1$. 

\end{proof}

From now on, we will not explicitly take account of the dependence on $(\sigma(\cdot), \actlip, \derivdecay, \tau^2, \rad, \covlb, \delta)$. We will write explicit dependence on $s_0$, $n$, $d$, and $M$. 

\begin{lemma}\label{lem:coneSparseClass}
Let $S_0={\rm supp}(\btheta_0)$ with $s_0 = \vert S_0\vert$, and define $\C=\{\btheta_0+\Delta:\|\Delta_{S_0^c}\|_1 \le 3\|\Delta_{S_0}\|_1\}\subset \R^d$. For any positive constant $\delta$, there exists constants $C_0$, $C_1$ depending on $(\sigma(\cdot),\actlip, \derivdecay, \tau^2, \rad, \covlb, \delta)$ such that letting $\lambda_n\ge C_1 M\sqrt{(\log d)/n}$, with probability at least $1-\delta$ the following two events happen:
\begin{enumerate}[label=$(\alph*)$]
\item $L_n$ has no stationary point in $\Ball_2(\rad) \cap \C^c$:
\begin{equation}
\< \bz(\btheta), \btheta-\btheta_0 \> > 0 ,~\forall \btheta\in\Ball_2(\rad)\cap \C^c,~\forall \bz(\btheta)\in\partial L_n(\btheta).
\end{equation}
\item $L_n$ has no stationary point in $\C\setminus \Ball_2(\btheta_0, r_s)$, where $r_s = C_0\sqrt{(M^2 s_0\log d)/n + s_0\lambda_n^2}$:
\begin{equation}
\< \bz(\btheta), \btheta-\btheta_0 \> > 0,~\forall \btheta\in\Ball_2(\rad)\cap \C \setminus \Ball_2( \btheta_0,r_s),~\forall \bz(\btheta)\in\partial L_n(\btheta).
\end{equation}
\end{enumerate}
\end{lemma}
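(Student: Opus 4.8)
The plan is to bound, uniformly over the relevant region, the inner product $\langle \bz(\btheta),\btheta-\btheta_0\rangle$ for every subgradient $\bz(\btheta)\in\partial L_n(\btheta)$, and to show it is strictly positive (which rules out stationarity). Writing $\Delta=\btheta-\btheta_0$ and $\bz(\btheta)=\nabla\hR_n(\btheta)+\lambda_n\bv$ with $\bv\in\partial\|\btheta\|_1$, I would decompose
\[
\langle \bz(\btheta),\Delta\rangle = \underbrace{\langle \nabla R(\btheta),\Delta\rangle}_{(\mathrm{I})} + \underbrace{\langle \nabla\hR_n(\btheta)-\nabla R(\btheta),\Delta\rangle}_{(\mathrm{II})} + \lambda_n\underbrace{\langle \bv,\Delta\rangle}_{(\mathrm{III})}.
\]
For $(\mathrm{I})$ I would invoke the population directional-gradient lower bound established in the analysis of $R$ (Lemma \ref{lemma:popriskClassification}.$(c)$), giving $(\mathrm{I})\ge\gradlip\|\Delta\|_2^2\ge 0$. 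For $(\mathrm{II})$ I would apply Theorem \ref{thm:uniformconvergence2}.$(a)$ together with the parameter identification $T_*=T_0M$, $L_*=L_0M$ from Lemma \ref{lem:assSparseClass}, obtaining $|(\mathrm{II})|\le a\|\Delta\|_1$ on an event of probability $\ge 1-\delta$, where $a=(T_0+L_0\tau)M\sqrt{C_1\log(np)/n}$; the hypothesis $n\le d^{100}$ then yields $\log(np)\le 101\log d$, so $a\lesssim M\sqrt{(\log d)/n}$. For $(\mathrm{III})$, using $\mathrm{supp}(\btheta_0)=S_0$ and the two defining properties of the $\ell_1$-subdifferential (namely $v_i\theta_i=|\theta_i|$ for every $i$, and $\|\bv\|_\infty\le 1$), I would show $(\mathrm{III})\ge\|\Delta_{S_0^c}\|_1-\|\Delta_{S_0}\|_1$.

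For part $(a)$ I would restrict to the cone complement $\C^c$, where $\|\Delta_{S_0^c}\|_1>3\|\Delta_{S_0}\|_1$ (in particular $\Delta\neq 0$). This forces $\|\Delta\|_1<\tfrac{4}{3}\|\Delta_{S_0^c}\|_1$ and $\|\Delta_{S_0^c}\|_1-\|\Delta_{S_0}\|_1>\tfrac{2}{3}\|\Delta_{S_0^c}\|_1$, so that $(\mathrm{II})+\lambda_n(\mathrm{III})\ge(\tfrac{2}{3}\lambda_n-\tfrac{4}{3}a)\|\Delta_{S_0^c}\|_1$. Choosing $\lambda_n\ge C_1M\sqrt{(\log d)/n}$ with $C_1$ large enough that $\lambda_n\ge 2a$ renders this nonnegative; since $(\mathrm{I})=\gradlip\|\Delta\|_2^2>0$ strictly, I conclude $\langle \bz(\btheta),\Delta\rangle>0$ throughout $\Ball_2(\rad)\cap\C^c$.

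For part $(b)$ I would instead work inside the cone $\C$, where $\|\Delta_{S_0^c}\|_1\le 3\|\Delta_{S_0}\|_1$ yields the usual compatibility bounds $\|\Delta\|_1\le 4\|\Delta_{S_0}\|_1\le 4\sqrt{s_0}\|\Delta\|_2$ and $\|\Delta_{S_0}\|_1\le\sqrt{s_0}\|\Delta\|_2$. Bounding $(\mathrm{III})\ge-\|\Delta_{S_0}\|_1$ and $(\mathrm{II})\ge-a\|\Delta\|_1$ while keeping the full quadratic term $(\mathrm{I})\ge\gradlip\|\Delta\|_2^2$, I would obtain
\[
\langle \bz(\btheta),\Delta\rangle \ge \|\Delta\|_2\Big(\gradlip\|\Delta\|_2-(4a+\lambda_n)\sqrt{s_0}\Big),
\]
which is strictly positive as soon as $\|\Delta\|_2>(4a+\lambda_n)\sqrt{s_0}/\gradlip$. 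A final calibration, using $\sqrt{x}+\sqrt{y}\asymp\sqrt{x+y}$ together with $a\lesssim M\sqrt{(\log d)/n}$, shows this threshold is at most $r_s=C_0\sqrt{(M^2s_0\log d)/n+s_0\lambda_n^2}$ for a suitable $C_0$, which closes part $(b)$.

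Most of the argument is bookkeeping once the three ingredients are assembled; the genuinely delicate points are the subgradient lower bound $(\mathrm{III})\ge\|\Delta_{S_0^c}\|_1-\|\Delta_{S_0}\|_1$, which must exploit the exact support of $\btheta_0$, and the constant calibration ensuring that the single scaling $\lambda_n\ge C_1M\sqrt{(\log d)/n}$ simultaneously forces cone-complement domination in $(a)$ and produces precisely the radius $r_s$ in $(b)$.
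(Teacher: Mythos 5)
Your proposal is correct and takes essentially the same route as the paper's own proof: the identical three-term decomposition of $\langle \bz(\btheta),\btheta-\btheta_0\rangle$, the population lower bound $\gradlip\|\Delta\|_2^2$ from Lemma \ref{lemma:popriskClassification}.$(c)$, the directional uniform-convergence event from Theorem \ref{thm:uniformconvergence2}.$(a)$ with the parameters $T_*=T_0M$, $L_*=L_0M$ of Lemma \ref{lem:assSparseClass} (and $n\le d^{100}$ to trade $\log(np)$ for $\log d$), and the same subgradient estimate $\langle\bv,\Delta\rangle\ge\|\Delta_{S_0^c}\|_1-\|\Delta_{S_0}\|_1$ with the threshold $\lambda_n\ge 2a$. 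The only differences are cosmetic constant bookkeeping (e.g., you use the slightly sharper bound $(\mathrm{III})\ge-\|\Delta_{S_0}\|_1$ in part $(b)$ where the paper uses $-\|\Delta\|_1$, and you draw strict positivity in part $(a)$ from the quadratic term rather than the cone-margin term), so there is no gap.
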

\begin{proof}
	For any $\bz(\btheta) \in \partial L_n(\btheta)$, write $\bz(\btheta)=\grad \what{R}_n(\btheta) + \lambda_n \bv(\btheta)$ where $\bv(\btheta) \in \partial \Vert \btheta \Vert_1$. We have
	\begin{eqnarray}
		&& \< \bz(\btheta), \btheta-\btheta_0 \> \nonumber \\
		&=& \< \grad R(\btheta),\btheta-\btheta_0 \> + \lambda_n\<\bv(\btheta),\btheta-\btheta_0\> + \< \grad\what{R}_n(\btheta) - \grad R(\btheta), \btheta - \btheta_0 \> \nonumber \\
		&\ge& \gradlip \|\btheta-\btheta_0\|_2^2 + \lambda_n\<\bv(\btheta),\btheta-\btheta_0\> - |\< \grad\what{R}_n(\btheta) - \grad R(\btheta), \btheta - \btheta_0 \>|, \label{eqn:sparsegradlb}
	\end{eqnarray}
	where the final inequality follows from Lemma \ref{lemma:popriskClassification}.$(c)$ in which the constant $\gradlip>0$ depending on $(\sigma(\cdot ), \rad, \actlip, \tau^2,\covlb)$ but independent of $n$ and $d$.

	Our aim is to upper bound the third term above using our uniform convergence results. By Theorem \ref{thm:uniformconvergence2} and Lemma \ref{lem:assSparseClass}, there exists a constant $C$ depending on $(\sigma(\cdot ),\tau^2, \actlip, \derivdecay, \rad, \covlb, \delta)$ such that the event $E_1$ happens with probability at least $1- \delta$, where
	\begin{eqnarray}\label{eq:event_E1}
		&& E_1 = \bigg\{ \sup_{\btheta\in\Ball_2(\rad) \setminus \{\bzero\}} \frac{|\< \grad \what{R}_n(\btheta) - \grad R(\btheta), \btheta-\btheta_0 \>|}{\|\btheta-\btheta_0\|_1} \le CM \sqrt{\frac{\log d}{n}} \bigg\} 
	\end{eqnarray}
	

	Assume $E_1$ happens, we show claim $(a)$. By definition of $S_0$, we have $(\btheta_0)_{S_0^c}=0$, and so, letting $\Delta = \btheta - \btheta_0$, 
	\begin{equation}
		\< \bv(\btheta),\btheta-\btheta_0 \> = \< \bv(\btheta)_{S_0^c}, \btheta_{S_0^c} \> + \< \bv(\btheta)_{S_0}, \Delta_{S_0} \> \ge \|\Delta_{S_0^c}\|_1 - \|\Delta_{S_0}\|_1.
	\end{equation}
	Plugging this into (\ref{eqn:sparsegradlb}) gives
	\begin{equation}
		\< \bz(\btheta),\btheta-\btheta_0 \> \ge \gradlip \|\btheta-\btheta_0\|_2^2 + \lambda_n(\|\Delta_{S_0^c}\|_1 - \|\Delta_{S_0}\|_1) - CM\sqrt{\frac{\log d}{n}}(\|\Delta_{S_0^c}\|_1 + \|\Delta_{S_0}\|_1).
	\end{equation}
	Taking $C_1 = 2C$, and letting $\lambda_n\ge 2 C M\sqrt{(\log d)/n}$ and noticing that $\|\Delta_{S_0^c}\|_1 > 3\|\Delta_{S_0}\|_1$ when $\btheta\in\C^c$, we have
	\begin{equation}
		\< \bz(\btheta),\btheta-\btheta_0 \> \ge \gradlip \|\btheta-\btheta_0\|_2^2 + CM\sqrt{\frac{\log d}{n}} \Big( \|\Delta_{S_0^c}\|_1 - 3\|\Delta_{S_0}\|_1 \Big) > 0.
	\end{equation}

	Finally, we prove claim $(b)$. As $\btheta\in\C$, we have $\|\btheta-\btheta_0\|_1\le 4 \sqrt{s_0}\|\btheta-\btheta_0\|_2$. Plugging this into (\ref{eqn:sparsegradlb}) gives
	\begin{eqnarray*}
		&& \< \bz(\btheta),\btheta-\btheta_0 \> \\
		&\ge& \gradlip \|\btheta-\btheta_0\|_2^2 + \lambda_n\<\bv(\btheta),\btheta-\btheta_0\> - CM\sqrt{\frac{\log d}{n}}\|\btheta-\btheta_0\|_1 \\
		&\ge& \gradlip \|\btheta-\btheta_0\|_2^2 - \left( \lambda_n + CM\sqrt{\frac{\log d}{n}} \right) \|\btheta-\btheta_0\|_1 \\
		&\ge& \gradlip \|\btheta-\btheta_0\|_2^2 - \left( 4\sqrt{s_0}\lambda_n + CM\sqrt{\frac{s_0\log d}{n}} \right) \|\btheta-\btheta_0\|_2 \\
		&=& \left( \gradlip\|\btheta-\btheta_0\|_2 - \left( 4\sqrt{s_0}\lambda_n + CM\sqrt{\frac{s_0\log d}{n}} \right) \right) \|\btheta-\btheta_0\|_2.
	\end{eqnarray*}
	Consequently, when $\|\btheta-\btheta_0\|_2>(1/\gradlip)\cdot (4\sqrt{s_0}\lambda_n+CM\sqrt{(s_0\log d)/n})$, we get $\<\bz(\btheta),\btheta-\btheta_0\>>0$. Taking $C_0=(2/\gradlip)\cdot(C \vee 4)$ gives claim $(b)$. 
\end{proof}


\begin{lemma}\label{lem:supportSparseClass}
	For any positive constants $C_0$ and $\delta$, there exist positive constants $C_1$, $C_2$, and $C_3$ depending on $C_0$ and $(\sigma(\cdot),\tau^2, \rad, \actlip, \derivdecay, \covlb, \delta)$, such that as $n\ge C_1 s_0\log d$ and $\lambda_n\ge C_2 M \sqrt{(\log d)/n}$, then with probability at least $1-\delta$, any stationary point $\hat \btheta$ of $L_n$ in $\C\cap \Ball_2(\btheta_0,r_s)$ has support size $\vert S(\hat \btheta) \vert \le C_3 \, s_0\log d$, where $r_s = C_0\sqrt{(M^2 s_0\log d)/n+s_0\lambda_n^2}$.
\end{lemma}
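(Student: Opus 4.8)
The plan is to exploit the first-order optimality (KKT) conditions at a stationary point together with a sparse-restricted uniform convergence of the gradient. Let $\hat\btheta$ be a stationary point of $L_n$ lying in $\C\cap\Ball_2(\btheta_0,r_s)$ and write $S=S(\hat\btheta)$, $s=|S|$. Since $\bzero\in\partial L_n(\hat\btheta)$, there is $\bv\in\partial\|\hat\btheta\|_1$ with $\nabla\hR_n(\hat\btheta)+\lambda_n\bv=\bzero$; on the support every entry of $\bv$ has modulus one, so $|(\nabla\hR_n(\hat\btheta))_j|=\lambda_n$ for each $j\in S$ and hence
\begin{equation}
\big\|(\nabla\hR_n(\hat\btheta))_S\big\|_2=\lambda_n\sqrt{s}.
\end{equation}
The whole argument reduces to upper bounding the left-hand side: if we can show it is at most (roughly) $\hesub\, r_s+\tfrac12\lambda_n\sqrt{s}$, then solving for $s$ and inserting $r_s=C_0\sqrt{M^2 s_0\log d/n+s_0\lambda_n^2}$ together with $\lambda_n\ge C_2 M\sqrt{(\log d)/n}$ will yield $s\lesssim s_0\log d$.

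First I would split the support-restricted gradient into a population part and a fluctuation part, $\|(\nabla\hR_n(\hat\btheta))_S\|_2\le\|(\nabla R(\hat\btheta))_S\|_2+\|(\nabla\hR_n(\hat\btheta)-\nabla R(\hat\btheta))_S\|_2$. For the population part, $\nabla R(\btheta_0)=\bzero$ and the global Hessian bound of Lemma~\ref{lemma:popriskClassification}.$(b)$ give $\|(\nabla R(\hat\btheta))_S\|_2\le\|\nabla R(\hat\btheta)\|_2\le\hesub\|\hat\btheta-\btheta_0\|_2\le\hesub\, r_s$, which is $\lesssim\sqrt{s_0}\,\lambda_n$ once one uses $\lambda_n\gtrsim M\sqrt{(\log d)/n}$ to absorb the first term inside $r_s$. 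The fluctuation part is the technical heart: I would bound it by the sparse-restricted empirical process
\begin{equation}
\Gamma_s=\sup_{\btheta\in\C\cap\Ball_2(\btheta_0,r_s)}\;\sup_{\bu\in\Ball_2^d(1)\cap\Ball_0^d(s)}\big|\langle\bu,\nabla\hR_n(\btheta)-\nabla R(\btheta)\rangle\big|,
\end{equation}
since $\max_{|S'|=s}\|(\cdot)_{S'}\|_2=\sup_{\|\bu\|_2\le1,\|\bu\|_0\le s}\langle\bu,\cdot\rangle$. Using the directional-gradient structure $\langle\bu,\nabla\ell(\btheta;\bz)\rangle=\alpha(\btheta;\bz)\langle\bu,\bx\rangle$ with $|\alpha|\le2\actlip$ and $\|\bX\|_\infty\le M\tau$ (Assumption~\ref{ass:continuous}), I would bound $\E[\Gamma_s]$ by symmetrization and Rademacher contraction, exactly as in the proof of Theorem~\ref{thm:uniformconvergence2}.$(a)$, followed by a concentration step (Bernstein/McDiarmid). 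The covering is over $s$-sparse unit directions $\bu$ (contributing $s\log(ed/s)\lesssim s\log d$) and over $\btheta$ in the small $\ell_1$-cone $\C\cap\Ball_2(\btheta_0,r_s)$, whose diameter obeys $\|\btheta-\btheta_0\|_1\le4\sqrt{s_0}\,r_s$; this should produce a bound of the shape $\Gamma_s\lesssim M\sqrt{(s\log d)/n}$ up to lower-order cross terms coming from the Lipschitz dependence of $\alpha$ on $\langle\btheta-\btheta_0,\bx\rangle$.

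Combining the two bounds gives $\lambda_n\sqrt{s}\le \hesub\, r_s+\Gamma_s$. Choosing $C_2$ large enough that $\lambda_n\ge C_2 M\sqrt{(\log d)/n}$ forces $\Gamma_s\le\tfrac12\lambda_n\sqrt{s}$, so that $\tfrac12\lambda_n\sqrt{s}\le\hesub\, r_s$; squaring and substituting $r_s^2=C_0^2(M^2 s_0\log d/n+s_0\lambda_n^2)$ and $M^2\log d/n\le\lambda_n^2/C_2^2$ yields $s\le C_3\, s_0\log d$ for a constant $C_3$ depending on $(\hesub,C_0,C_2,\dots)$. The main obstacle I anticipate is the uniform control of $\Gamma_s$ over the cone: because in the regime $n\ll d$ the full operator norm of $\nabla^2\hR_n$ is not controlled, one cannot net $\btheta$ freely, and there is a genuine circularity in that the sparsity level $s$ at which we need restricted control is exactly the quantity we are trying to bound. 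I would break this circularity by a peeling argument over dyadic ranges of the support size (in the spirit of the sets $\mathbb{K}_l$ of Section~\ref{sec:proof_uc2_a}), establishing the bound at a fixed target level $k\asymp s_0\log d$ and deriving a contradiction whenever $|S(\hat\btheta)|$ exceeds it.
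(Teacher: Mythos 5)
Your opening move coincides with the paper's: at an interior stationary point the KKT condition forces $\vert(\nabla\hR_n(\hat\btheta))_j\vert=\lambda_n$ for every $j\in S(\hat\btheta)$, so $\lambda_n^2\vert S\vert$ is controlled by a restricted norm of the empirical gradient, and your subset/contradiction device for breaking the circularity in the support size is sound (if $\vert S\vert>k$, restrict the identity to any size-$k$ subset of $S$). The genuine gap is the claimed bound $\Gamma_s\lesssim M\sqrt{(s\log d)/n}$ ``up to lower-order cross terms.'' First, Assumption \ref{ass:GGL} and the contraction step of Theorem \ref{thm:uniformconvergence2}.$(a)$ apply only to the directional gradient along $\btheta-\btheta_0$; for an arbitrary $s$-sparse test direction $\bu$ the summand is the \emph{product} $\alpha(\langle\btheta,\bX_i\rangle;\bZ_i)\langle\bu,\bX_i\rangle$, a two-parameter family to which the contraction inequality does not directly apply. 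If you force contraction for fixed $\bu$ with per-coordinate Lipschitz constants $\propto\vert\langle\bu,\bX_i\rangle\vert\le\sqrt{s}\,M\tau$ and then use $\ell_1$--$\ell_\infty$ duality over the cone (where $\|\btheta-\btheta_0\|_1\le 4\sqrt{s_0}\,r_s$ and $r_s\asymp\sqrt{s_0}\lambda_n$), the resulting term scales like $\sqrt{s}\cdot s_0\lambda_n M^2\sqrt{(\log d)/n}$, which is absorbed into $\tfrac12\lambda_n\sqrt{s}$ only when $n\gtrsim s_0^2M^4\log d$ --- precisely the quadratic-in-sparsity requirement this lemma is designed to avoid. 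Second, the cross terms are in fact of order $r_s$, the \emph{dominant} scale, not lower order; a symptom is that your final arithmetic outputs $s\lesssim s_0$ with no $\log d$ factor, stronger than the lemma itself, which signals that the claimed $\Gamma_s$ bound exceeds what is available at sample size $n\asymp s_0\log d$.

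The paper avoids any sparse gradient empirical process altogether. It anchors at $\nabla\hR_n(\btheta_0)$ rather than $\nabla R(\hat\btheta)$: on the event $\|\nabla\hR_n(\btheta_0)\|_\infty\le\lambda_n/2$ (a simple sub-Gaussian maximum, using $\nabla R(\btheta_0)=\bzero$), the KKT condition gives $\lambda_n/2\le\vert(\nabla\hR_n(\hat\btheta)-\nabla\hR_n(\btheta_0))_j\vert$ on $S$, and the per-sample mean value theorem writes this difference \emph{exactly} as $\frac1n\mbX^\sT D\,\mbX(\hat\btheta-\btheta_0)$ with $D$ diagonal and uniformly bounded. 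Then
\begin{equation}
\lambda_n^2\,\vert S\vert \;\le\; 4\,\lambda_{\max}\Big(\tfrac1n\mbX P_{S}\mbX^\sT\Big)\cdot\tfrac1n\big\|D\,\mbX(\hat\btheta-\btheta_0)\big\|_2^2,
\end{equation}
where the second factor is $\lesssim\tau^2 r_s^2$ by restricted smoothness of the design over the cone $\C$ \cite{rudelson2011reconstruction}, and the first factor is $\lesssim\log d$ uniformly over all $\vert S\vert\le n$ --- a step legitimized by a separate almost-sure argument (continuity of the feature distribution implies no stationary point can have support size exceeding $n$), and it is exactly this spectral bound that produces the $\log d$ in $C_3 s_0\log d$. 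To repair your route you would replace the contraction step with this Cauchy--Schwarz/restricted-eigenvalue machinery (control of $\frac1n\|\mbX\bu\|_2^2$ over $k$-sparse unit $\bu$ and of $\frac1n\|\mbX\Delta\|_2^2$ over the cone); your subset trick would then genuinely let you dispense with the paper's support-$\le n$ argument, but the proof as sketched, resting on contraction ``exactly as in'' Theorem \ref{thm:uniformconvergence2}.$(a)$, does not go through.
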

\begin{proof}

We decompose the proof into the following steps.

\noindent{\bf Step 1. }
Let $\hat \btheta \in \Ball_2(\btheta_0, r_s) \cap \mathbb C$ be a stationary point of the optimization problem (\ref{eq:SparseClass_1}). The KKT condition for the stationary point gives
\[
\begin{aligned}
\nabla \what R_n(\hat \btheta) + \lambda_n \bv(\hat \btheta) = 0,
\end{aligned}
\]
where $\bv(\hat \btheta) \in \partial \Vert \hat \btheta \Vert_1$. Denote $S(\hat \btheta) = \text{supp}(\hat \btheta)$. Thus, we have
\begin{equation}\label{eq:kkt}
(\nabla \what R_n(\hat \btheta))_j = \pm \lambda_n, \quad \forall j \in S(\hat \btheta).
\end{equation}

Now define the event
\begin{equation}\label{eq:equ1}
  E_C = \left\{ \|\grad\what{R}_n(\btheta_0)\|_\infty \le C\sqrt{\frac{\log d}{n}}\right\}.
\end{equation}
As verified in Equation (\ref{eq:gradsubgaussian1}), there exists a constant $c_0$ depending on $\actlip$, such that $\nabla \what{R}_n(\btheta_0) = \frac{1}{n}\sum_{i=1}^{n}\grad\ell(\btheta_0;\bZ_i)$ is the average of $n$ i.i.d. mean zero $c_0 \tau^2$-sub-Gaussian random vectors. Thus we have
\begin{equation}
  \P\left( \|\grad\what{R}_n(\btheta_0)\|_\infty > t \right) \le d\sup_{j\in[d]} \P\left( \left| \frac{1}{n}\sum_{i=1}^{n}[\grad\ell(\btheta_0;\bZ_i)]_j \right| > t  \right) \le \exp\Big(\log (2d) - \frac{nt^2}{2c_0 \tau^2}\Big).
\end{equation}
Taking $t=\tau\sqrt{\frac{2c_0 (\log d + \log(6/\delta))}{n}}$ and correspondingly $C=2\tau\sqrt{c_0\log\frac{6}{\delta}}$ guarantees that $\P(E_C)\le\delta/3$.

Condition on $E_C$, taking $C_2 = 2 C$ and $\lambda_n \geq C_2 \sqrt{(\log d)/n}$, this gives
\[
\lambda_n/2 \geq C\sqrt{\frac{\log d}{n}}\geq \Vert \nabla \what R_n(\btheta_0)\Vert_\infty.
\]
Combining with equation (\ref{eq:kkt}), we have
\[
\lambda_n/2 \leq \left \vert \Big(  \nabla \what R_n(\hat \btheta) - \nabla \what R_n(\btheta_0) \Big)_j \right \vert, \quad \forall j \in S(\hat \btheta). 
\]
Squaring and summing over $j \in S(\hat \btheta)$, we have
\begin{equation}\label{eq:provesparsity1}
\begin{aligned}
\lambda_n^2 \vert S(\hat \btheta) \vert \leq& 4 \left \Vert \Big( \nabla \what R_n(\hat \btheta) - \nabla \what R_n(\btheta_0) \Big)_{S(\hat \btheta)} \right \Vert_2^2\\
\leq&  4 \left \Vert \Big( \frac{1}{n} \sum_{i=1}^n (\alpha_i(\hat \btheta) - \alpha_i(\btheta_0) )\bX_i \Big)_{S(\hat \btheta)} \right \Vert_2^2\\
= & 4 \left \Vert \Big( \frac{1}{n} \sum_{i=1}^n \beta_i( \btheta_i) \bX_i \bX_i^\sT (\hat \btheta - \btheta_0) \Big)_{S(\hat \btheta)} \right \Vert_2^2\\
=& \frac{4}{n^2}\Vert P_{S(\hat \btheta)} \mbX^\sT D \mbX (\hat \btheta - \btheta_0) \Vert_2^2\\
\end{aligned}
\end{equation}
Here, $\mbX = (\bX_1, \ldots, \bX_n)^\sT \in \R^{n \times d}$,  $\btheta_i$ are located on the line between $\hat \btheta$ and $\btheta_0$ obtained by intermediate value theorem, $D = \diag(\beta_1(\btheta_1),\ldots, \beta_n(\btheta_n)) \in \R^{n \times n}$ where $\alpha_i$ and $\beta_i$ are defined as
\begin{equation}
\begin{aligned}
\alpha_i(\btheta) =& -2(Y_i - \sigma(\btheta^\sT \bX_i))\sigma'(\btheta^\sT \bX_i),\\
\beta_i(\btheta) =& 2\Big( \sigma'(\btheta^{\sT}\bX_i)^2 + (\sigma(\btheta^{\sT}\bX_i)-Y_i)\sigma''(\btheta^{\sT}\bX_i) \Big),
\end{aligned}
\end{equation}
and $P_{S} \in \R^{d \times d}$ is a projection matrix onto the vector space with vectors supported on index set $S$.

\noindent{\bf Step 2. }
Now we are going to upper bound the right hand side for any stationary point $\hat \btheta \in \Ball_2(\btheta_0, r_s) \cap \mathbb C$. We claim that there exists a constant $c_1$ depending on $\delta$, such that as $n \geq c_1 s_0 \log d$, we have
\begin{equation}\label{eq:equ2}
\P \left( \sup_{\btheta \in \Ball_2(\btheta_0, r_s) \cap \mathbb C} \frac{1}{n} \Vert \mbX (\btheta - \btheta_0) \Vert_2^2 \leq 3 \tau^2 r_s^2 \right) \geq 1 - \delta/3.
\end{equation}

Indeed, due to the restricted smoothness property of the sub-Gaussian random matrices (See \cite[Theorem 6]{rudelson2011reconstruction}), we have for any design $\mbX$ with independent $\tau^2$-sub-Gaussian rows, there exists a constant $c_1$ depending on $\delta$, such that with probability at least $1 - \delta$, as $n \geq c_1 s_0 \log d$, we have 
\begin{equation}
\sup_{\btheta \in \mathbb C}\frac{\frac{1}{n}\Vert \mbX (\btheta - \btheta_0) \Vert_2^2 }{ \Vert \btheta - \btheta_0 \Vert_2^2} \leq 3 \tau^2. 
\end{equation}
Therefore, with probability at least $1 - \delta$, we have 
\begin{equation}
\sup_{\btheta \in \Ball_2(\btheta_0, r_s) \cap \mathbb C} \frac{1}{n} \Vert \mbX (\btheta - \btheta_0) \Vert_2^2 \leq 3 \tau^2 \cdot \sup_{\btheta \in \Ball_2(\btheta_0, r_s) \cap \mathbb C} \Vert \btheta - \btheta_0 \Vert_2^2 \leq 3 \tau^2 r_s^2. 
\end{equation}

%
%

\noindent{\bf Step 3. }
The diagonal matrix $D$ has elements with absolute values upper bounded by $c_2 = 2(\actlip^2 + \actlip)$. As the good event in equation (\ref{eq:equ2}) happens, we have
\begin{equation}
\sup_{\Vert D\Vert_2 \leq c_2} \sup_{\btheta \in \Ball_2(\btheta_0, r_s) \cap \mathbb C}\frac{1}{n} \Vert D \cdot \mbX (\btheta - \btheta_0) \Vert_2^2 \leq 3 c_2^2 \tau^2 r_s^2. 
\end{equation}

\noindent{\bf Step 4. }
In this step, we show that any stationary point $\hat \btheta$ must have support size $\vert S(\hat \btheta) \vert \leq n$. 

Note that the subgradient of the objective function gives
\[
\partial L_n(\btheta) = \Big\{ \frac{1}{n}\sum_{i=1}^n \alpha_i(\btheta) \bX_i + \lambda_n \bv(\btheta): \bv(\btheta) \in \partial\|\btheta\|_1 \Big\}.
\]
To show that the support size of any stationary point is smaller or equal to $n$, we would like to show that: for any $\btheta$ such that $s=\vert S(\btheta)\vert \geq n+1$, there exists a vector $\bw(\btheta) \in \R^d$ such that $0\notin \langle \partial L_n(\btheta), \bw(\btheta) \rangle$. 

We claim that a sufficient condition for the existence of such a $\bw(\btheta)$ is that the $n+1$ vectors $\{\bX_{1,S(\btheta)},\dots,\bX_{n,S(\btheta)},\bz\}$ are linearly independent for all $\bz\in\{\pm 1\}^{s}$. Indeed, the matrix
\begin{equation}
  \mbX_{:,S(\btheta)} = [\bX_{1,S(\btheta)}, \dots, \bX_{n,S(\btheta)}]^{\sT} \in \R^{n\times s}
\end{equation}
has a nonempty null space since $s>n$. Consequently, there exists some nonzero vector $\bu\in\R^s$ such that $\mbX_{:,S(\btheta)}\bu=0$. But for any $\bv(\btheta)\in\partial\|\btheta\|_1$, we have $\bv(\btheta)_{S(\btheta)}\in \{\pm 1\}^s$. Therefore, our condition guarantees that $\langle\bv(\btheta)_{S(\btheta)}, \bu\rangle \neq 0$. Let $\bw(\btheta)\in\R^d\setminus \{\bzero\}$ be a vector such that $\bw(\btheta)_{S(\btheta)} = \bu$ and $\bw(\btheta)_{S(\btheta)^c} = \bzero$. We get that
\begin{equation}
  \Big\<  \frac{1}{n}\sum_{i=1}^{n} \alpha_i(\btheta)\bX_i+ \lambda_n\bv(\btheta),  \bw(\btheta) \Big\> = \lambda_n\<\bv(\btheta), \bw(\btheta)\> = \lambda_n \langle \bv(\btheta)_{S(\btheta)}, \bu\rangle \neq 0
\end{equation}
for any sign vector $\bv(\theta)$. This shows that $0\notin \<\partial L_n(\btheta), \bw(\btheta)\>$.

Finally, let us verify this sufficient condition. As $\bX_i$ are continuously distributed, for any $S\subseteq [d]$ with cardinality  $s = \vert S \vert \ge n+1$ and $\bz\in\{\pm 1\}^{s}$, the probability that $\{\bX_{1,S},\dots,\bX_{n,S},\bz\}$ are linearly dependent is zero. Taking a union bound with all $S \subset [d]$ and $\bz \in \{ \pm 1\}^s$ (finitely many) gives that our sufficient condition is satisfied with probability one.

\noindent{\bf Step 5. }
We show that with high probability, the maximum of $\Vert\frac{1}{n} \mbX P_{S} \mbX^\sT \Vert_{\op}$ for all $S:|S|\le n$ is upper bounded by $O(\log d)$: there exists some constant $c_3$ depending on $(\tau^2, \delta)$ such that 
\begin{equation}\label{eq:equ3}
\P \left( \sup_{\vert S \vert \leq n} \lambda_{\max}\Big(\frac{1}{n} \mbX P_{S} \mbX^\sT\Big) \leq c_3 \log d \right) \geq 1-\delta/3.
\end{equation}

Indeed, note that $\frac{1}{n}\mbX P_S\mbX^\sT$ is increasing in $S$: whenever $S\subseteq T$ we have $\mbX P_S\mbX^\sT\preceq \mbX P_T\mbX^\sT$. This gives us
\begin{eqnarray*}
&& \P\Big( \sup_{\vert S \vert \leq n} \lambda_{\max}\Big(\frac{1}{n} \mbX P_{S} \mbX^\sT \Big) \geq \eps\Big)\\
&= & \P\Big(\sup_{\vert S \vert = n} \lambda_{\max} \Big(\frac{1}{n} \mbX P_{S} \mbX^\sT\Big) \geq \eps\Big)\\
&\leq & {d \choose n} \sup_{\vert S\vert =n} \P\Big(\lambda_{\max} \Big(\frac{1}{n} \mbX P_{S} \mbX^\sT\Big) \geq \eps\Big).
\end{eqnarray*}
Now, fixing any $|S|=n$, we are going to use a covering number argument to bound $\lambda_{\max}(\frac{1}{n}\mbX P_S\mbX^\sT)$: let $V$ be a $(1/4)$-cover of $\mathbb{S}^{n-1}=\{\bv\in\R^n:\|\bv\|_2=1\}$. We already know from Lemma \ref{lem:opnorm} that $\lambda_{\max}(A)\le 2\sup_{\bv\in V}\langle \bv,A\bv\rangle$ for any $A\in\R^{n\times n}$, so that
\begin{eqnarray*}
&& \P\Big(\lambda_{\max} \Big(\frac{1}{n} \mbX P_{S} \mbX^\sT\Big) \geq \eps\Big) \le \P\Big( \sup_{\bv\in V} \Big\langle \bv,\frac{1}{n}\mbX P_S\mbX^\sT\bv\Big\rangle \ge \eps/2 \Big) \\
&\le& N(1/4,\mathbb{S}^{n-1}) \cdot \sup_{\bv\in V} \P\Big( \Big\langle \bv,\frac{1}{n}\mbX P_S\mbX^\sT\bv\Big\rangle \ge \eps/2 \Big).
\end{eqnarray*}
Further, for any fixed $\bv\in\mathbb{S}^{n-1}$, we have
\begin{equation*}
\Big\< \bv, \frac{1}{n}\mbX P_S\mbX^\sT\bv \Big\> = \frac{1}{n}\sum_{i=1}^{n} \< \bv_S,\bX_{j,S} \>^2 \defeq \frac{1}{n}\sum_{i=1}^{n} W_j^2.
\end{equation*}
Since $\bX_j$ are i.i.d. $\tau^2$-sub-Gaussian, the random variables $W_j=\<\bv_S,\bX_{j,S}\>$ are also i.i.d. $\tau^2$-sub-Gaussian. Applying Lemma \ref{lem:subgaussian}.$(c)$, we get a Chernoff bound
\begin{eqnarray*}
\P\Big(\frac{1}{n} \sum_{j=1}^{n} W_j^2 \ge \eps/2 \Big)  \le \exp\Big(-\frac{n\eps}{8\tau^2}\Big) \E\Big[\exp\Big(\frac{\sum_{j=1}^{n} W_j^2}{4\tau^2}\Big)\Big] \le \exp\Big(-\frac{n\eps}{8\tau^2} + \frac{n}{2} \Big). 
\end{eqnarray*}
Putting together all the pieces, we get
\begin{eqnarray*}
&& \P\Big( \sup_{|S|\le n} \lambda_{\max}\Big( \frac{1}{n}\mbX P_S\mbX^\sT \Big) \ge \eps \Big) \\
&\le& {d \choose n} N(1/4,\mathbb{S}^{n-1}) \cdot \exp\Big( -\frac{n\eps}{8\tau^2}+\frac{n}{2} \Big) \\
&\le& \exp\Big( n\log d + n\log 9 - \frac{n\eps}{8\tau^2} + \frac{n}{2} \Big).
\end{eqnarray*}
To let the above probability be less or equal to $\delta$, it suffices to take $\eps = (22 + 8\log d + 8\log(1/\delta)/n)\tau^2$. 

\noindent{\bf Step 6. }
The good events in equations (\ref{eq:equ1}), (\ref{eq:equ2}), and (\ref{eq:equ3}) will simultaneously happen with probability at least $1-\delta$. When this happens, by Equation (\ref{eq:provesparsity1}) we have
\[
\begin{aligned}
\lambda_n^2 \vert S(\hat \btheta ) \vert  \leq & \sup_{\Vert D \Vert_2 \leq c_2} \sup_{\btheta \in \Ball_2(\btheta_0, r_s ) \cap \mathbb C, \vert S(\btheta) \vert \leq n}  \frac{4}{n^2}\Vert P_{S(\btheta)} \mbX^\sT D \mbX ( \btheta - \btheta_0) \Vert_2^2\\
\leq & 4 \sup_{\vert S \vert \leq n} \lambda_{\max}\Big(\frac{1}{n} \mbX P_{S} \mbX^\sT \Big) \cdot \sup_{\Vert D \Vert_2 \leq c_2} \sup_{\btheta \in \Ball_2(\btheta_0, r_s) \cap \mathbb C}\frac{1}{n} \Vert D \mbX (\btheta - \btheta_0) \Vert_2^2 \\
\leq & 4 c_3 \log d \cdot 3 c_2^2 \tau^2 r_s^2\\
\leq & 12 c_2^2 \tau^2 c_3 C_0^2 \Big( \frac{M^2 s_0\log d}{n} + s_0 \lambda_n^2\Big) \log d.
\end{aligned}
\]

Note that we already choose $C_2$. Taking $\lambda_n \geq C_2 M \sqrt{(\log d)/n}$ this gives us
\[
\vert S(\hat \btheta) \vert \leq 12 c_2^2 \tau^2 c_3 C_0^2 \left( 1/C_2^2 + 1\right)  s_0 \log d = C_3 s_0 \log d.
\]

\end{proof}

\begin{lemma}\label{lem:restricted_hessian_sparseclass}
For any positive constants $C_0$ and $\delta$, letting $r_0 = C_0 s_0 \log d$, there exists a positive constant $C_1$ depending on $(C_0,\sigma(\cdot),\tau^2,\rad, \actlip, \covlb,\delta)$, such that as we define the event $E_{\rm RH}(\eps)$ to be
\begin{equation}
E_{\rm RH}(\eps) = \left\{  \sup_{\btheta\in\Ball_2(\btheta_0,\rad)\cap \Ball_0(r_0)}\sup_{\bv\in\Ball_2(1) \cap \Ball_0(r_0)} \left\< \bv, \big(\grad^2 \what{R}_n(\btheta) - \nabla^2 R(\btheta)\big)\bv\right\> \leq  \eps \right\},
\end{equation}
we have the following claims:
\begin{enumerate}[label=$(\alph*)$, leftmargin=0.5cm]
\item Let $\eps_0$ and $\heslb$ be the constants defined in Lemma \ref{lemma:popriskClassification} depending on $(\sigma(\cdot),\tau^2,\rad, \actlip, \covlb, \delta)$ but independent of $(n, d)$. As the event $E_{\rm RH}(\heslb/2)$ happens, the regularized empirical risk $L_n(\btheta)$ cannot have two stationary points in the region $\Ball_2(\btheta_0,\eps_0)\cap \Ball_0(r_0/2)$.
\item Consequently, there exists a constant $C_1$ depending on $(C_0,\sigma(\cdot),\tau^2,\rad, \actlip, \covlb,\delta)$, as $n \geq C_1 s_0\log^2 d $, the regularized empirical risk $L_n(\btheta)$ cannot have two stationary points in the region $\Ball_2(\btheta_0,\eps_0)\cap \Ball_0(r_0/2)$ with probability at least $1-\delta$. 
\end{enumerate}
\end{lemma}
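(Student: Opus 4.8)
The plan is to prove part $(a)$ by a restricted-strong-convexity argument, and then to deduce part $(b)$ by invoking the uniform convergence of the restricted Hessian (Theorem \ref{thm:uniformconvergence2}.$(b)$) to guarantee that the event $E_{\rm RH}(\heslb/2)$ holds for the stated sample size.

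For part $(a)$ I would argue by contradiction. Suppose $\btheta_1\neq\btheta_2$ are two stationary points of $L_n$ lying in $\Ball_2(\btheta_0,\eps_0)\cap\Ball_0(r_0/2)$. The subgradient optimality condition gives $\bv_i\in\partial\|\btheta_i\|_1$ with $\nabla\what{R}_n(\btheta_i)+\lambda_n\bv_i=\bzero$ for $i=1,2$. Set $\Delta=\btheta_2-\btheta_1$. Since each $\btheta_i$ has support of size at most $r_0/2$, the vector $\Delta$ is supported on at most $r_0$ coordinates; moreover, by convexity of the Euclidean ball every point $\btheta_1+t\Delta$, $t\in[0,1]$, lies in $\Ball_2(\btheta_0,\eps_0)$ and is $r_0$-sparse as well. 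Subtracting the two optimality conditions and pairing with $\Delta$ yields
\begin{equation}
\langle \nabla\what{R}_n(\btheta_2)-\nabla\what{R}_n(\btheta_1),\Delta\rangle = -\lambda_n\langle \bv_2-\bv_1,\Delta\rangle \le 0,
\end{equation}
where the inequality is the monotonicity of the subdifferential of the convex function $\|\cdot\|_1$.

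On the other hand, I would rewrite the left-hand side as the integral of the Hessian along the segment,
\begin{equation}
\langle \nabla\what{R}_n(\btheta_2)-\nabla\what{R}_n(\btheta_1),\Delta\rangle = \int_0^1 \langle \Delta, \nabla^2\what{R}_n(\btheta_1+t\Delta)\Delta\rangle\, \de t.
\end{equation}
Since $\Delta/\|\Delta\|_2\in\Ball_2(1)\cap\Ball_0(r_0)$ and $\btheta_1+t\Delta\in\Ball_2(\btheta_0,\eps_0)\cap\Ball_0(r_0)$, I can combine the population lower bound $\lambda_{\min}(\nabla^2 R(\btheta))\ge\heslb$ on $\Ball_2(\btheta_0,\eps_0)$ from Lemma \ref{lemma:popriskClassification}.$(b)$ with the control of the restricted Hessian deviation furnished by $E_{\rm RH}(\heslb/2)$ to obtain $\langle \Delta, \nabla^2\what{R}_n(\btheta_1+t\Delta)\Delta\rangle \ge (\heslb/2)\|\Delta\|_2^2$ for every $t$. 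Hence the left-hand side is at least $(\heslb/2)\|\Delta\|_2^2>0$, contradicting the previous display unless $\Delta=\bzero$. This proves the uniqueness claim of part $(a)$. For part $(b)$ I would apply Theorem \ref{thm:uniformconvergence2}.$(b)$ with the sparsity level set to $r_0=C_0 s_0\log d$ (its hypotheses having been verified for this model exactly as in Lemma \ref{lem:assSparseClass}). That theorem gives, with probability at least $1-\delta$ and once $n\ge C_2 r_0\log(np)$, the bound $\sup|\langle\bv,(\nabla^2\what{R}_n-\nabla^2 R)\bv\rangle|\le \tau^2\sqrt{C_2 r_0\log(np)/n}$ over the relevant sparse set. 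Using $n\le d^{100}$ to replace $\log(np)$ by $101\log d$ and substituting $r_0=C_0 s_0\log d$, the right-hand side is of order $\tau^2\sqrt{s_0\log^2 d/n}$, so it is at most $\heslb/2$ — and the side condition $n\ge C_2 r_0\log(np)$ is met — as soon as $n\ge C_1 s_0\log^2 d$ for a suitable $C_1$ depending on $(C_0,\sigma(\cdot),\tau^2,\rad,\actlip,\covlb,\delta)$. On this event $E_{\rm RH}(\heslb/2)$ holds, and part $(a)$ applies.

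The main obstacle is bookkeeping the sparsity levels so that the restricted-Hessian event covers both the displacement $\Delta$ and every intermediate point of the segment: this is exactly why one needs each stationary point to be $(r_0/2)$-sparse (so that $\Delta$ is $r_0$-sparse) and why the effective sparsity in the uniform convergence bound is $r_0=C_0 s_0\log d$ rather than $s_0$. Propagating this extra $\log d$ through the sample-size requirement of Theorem \ref{thm:uniformconvergence2}.$(b)$ is precisely what produces the $n\gtrsim s_0\log^2 d$ scaling of part $(b)$, as opposed to the $n\gtrsim s_0\log d$ scaling elsewhere. A secondary point to handle carefully is that the Hessian lower bound requires two-sided control of the deviation $\langle\bv,(\nabla^2\what{R}_n-\nabla^2 R)\bv\rangle$; this is available because Theorem \ref{thm:uniformconvergence2}.$(b)$ bounds its absolute value, so the high-probability event realized in part $(b)$ indeed delivers the lower bound used in part $(a)$.
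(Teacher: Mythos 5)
Your proposal is correct and follows essentially the same route as the paper: part $(a)$ via restricted strong convexity along the segment between two putative stationary points (using that $(r_0/2)$-sparsity of each point makes the difference and all intermediate points $r_0$-sparse) combined with monotonicity of the $\ell_1$ subdifferential, and part $(b)$ by invoking Theorem \ref{thm:uniformconvergence2}.$(b)$ at sparsity level $r_0=C_0 s_0\log d$ with $\log(np)\lesssim\log d$, which is exactly where the extra $\log d$ in $n\gtrsim s_0\log^2 d$ comes from. Your remark that the absolute-value bound of Theorem \ref{thm:uniformconvergence2}.$(b)$ is what actually delivers the needed lower bound on the empirical Hessian is a careful touch, since the event $E_{\rm RH}$ as displayed bounds only the signed deviation, which the paper's proof glosses over.
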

\begin{proof}

According to Theorem \ref{thm:uniformconvergence2}.$(b)$, there exists a constant $C'$ depending on $(C_0,\sigma(\cdot), \tau^2, \actlip, \rad, \covlb,\delta)$ such that as $n \geq C' s_0 \log^2 d / (1\wedge \heslb^2 /4)$, the event $E_{\rm RH}(\heslb/2)$ happens with probability at least $1-\delta$. Thus, part $(b)$ can be implied directly from part $(a)$. Now we prove part $(a)$.

\noindent{\bf Part $(a)$.}    
In Lemma \ref{lemma:popriskClassification}.$(b)$, we proved that there exist some constants $\eps_0>0$ and $\heslb>0$ such that the population risk $R(\btheta)$ is strongly convex in the region $\Ball_2(\btheta_0,\eps_0)$
\begin{equation}
\inf_{\btheta\in\Ball_2(\btheta_0,\eps_0)} \lambda_{\min}( \nabla^2 R(\btheta)) \ge  \heslb.
\end{equation}
Event $E_{\rm RH}(\heslb/2)$ implies the restricted strong convexity of the empirical risk
\begin{equation}
\inf_{\btheta\in\Ball_2(\btheta_0,\eps_0)\cap \Ball_0(r_0)}\inf_{\bv\in\Ball_2(1) \cap \Ball_0(r_0)} \Big\< \bv, \grad^2 \what{R}_n(\btheta) \bv\Big\> \ge \frac{\heslb}{2}.
\end{equation}

We argue that the above restricted strong convexity of the empirical risk $\what R_n$ makes the existence of two distinct sparse minima of the regularized empirical risk $L_n$ impossible. Indeed, suppose $\btheta_1,\btheta_2\in\Ball_2(\btheta_0,\eps_0)\cap\Ball_0(r_0/2)$ are two distinct local minima of $L_n(\btheta)=\what{R}_n(\btheta) + \lambda_n\|\btheta\|_1$. Define $\bu= (\btheta_2-\btheta_1)/\|\btheta_2-\btheta_1\|_2$. As $\btheta_1,\btheta_2$ are $(r_0/2)$-sparse, the vector $\bu$ is $r_0$-sparse, as well as $\btheta_1+t\bu$ for any $t\in\R$. Hence we have
\begin{eqnarray}
	\< \grad\what{R}_n(\btheta_2), \bu \> &=& \< \grad\what{R}_n(\btheta_1), \bu \> + \int_{0}^{\|\btheta_2-\btheta_1\|_2} \left \langle \bu, \grad^2\what{R}_n(\btheta_1+t\bu)\bu \right \rangle \ud t \\
&\ge& \< \grad\what{R}_n(\btheta_1), \bu \> + \frac{\heslb}{2}\|\btheta_2-\btheta_1\|_2.
\end{eqnarray}
Note that the regularization term $\lambda_n\|\btheta\|_1$ is also convex. It means that for any subgradients $\bv(\btheta_1)\in \partial \Vert \btheta_1\Vert_1, \bv(\btheta_2)\in\partial \Vert \btheta_2\Vert_1$, we have $\< \bv(\btheta_1) - \bv(\btheta_2), \btheta_1-\btheta_2 \> \ge 0$, or equivalently
\begin{align}
\lambda_n\<\bv(\btheta_1),\bu\>\ge\lambda_n\<\bv(\btheta_2),\bu\>
\end{align}
Adding the above two inequalities up gives 
\begin{equation}
	\< \nabla \what R_n(\btheta_2) + \lambda_n \bv(\btheta_2),\bu \> \ge\< \nabla \what R_n(\btheta_1) + \lambda_n \bv(\btheta_1),\bu \> + \frac{\heslb}{2}\|\btheta_2-\btheta_1\|_2,
\end{equation}
for any $\bv(\btheta_1) \in \partial \Vert \btheta_1 \Vert_1$ and $\bv(\btheta_2) \in \partial \Vert \btheta_2 \Vert_1$. This implies that we cannot have $\bzero \in \partial L_n(\btheta_1)$ and $ \bzero \in \partial L_n(\btheta_2)$ simultaneously.
\end{proof}

\subsubsection{Proof of the main theorem}
We are now in a good position to prove Theorem \ref{thm:SparseClass}.
\begin{proof}[Proof of Theorem \ref{thm:SparseClass}]
First, for any $\delta>0$, in Lemma \ref{lem:coneSparseClass}, we get $C_0$ and $C_1$ given by this lemma, and we define $\Cs = C_0$ and $C_{\lambda,1} = C_1$. Then in Lemma \ref{lem:supportSparseClass}, we choose $C_0 = \Cs$, and we get $C_1, C_2, C_3$ given by this lemma, and we define $\Csp = C_3$, $\Cl = \max\{C_{\lambda,1}, C_2\}$, $C_{n,1} = C_1$. Finally, in Lemma \ref{lem:restricted_hessian_sparseclass}, we choose $C_0 = \Csp$, and we get $C_1$ given by this lemma, and we define $\Cn = \{ C_{n,1}, C_1\}$.

As above, we defined all of our constants $\Cl, \Cs, \Cn$ necessary in Theorem \ref{thm:SparseClass}, and under the assumptions of Theorem \ref{thm:SparseClass} with these constants, all the claims in Lemmas \ref{lem:coneSparseClass}, \ref{lem:supportSparseClass}, and \ref{lem:restricted_hessian_sparseclass}.(c) happen simultaneously with probability at least $1-3\delta$. 


As all the claims happen, by Lemma \ref{lem:coneSparseClass}, as $n \geq \Cn s_0 \log d$ and $\lambda_n \geq \Cl M \sqrt{(\log d)/n}$, there will be no stationery point outside $\Ball_2^d(\btheta_0, \Cs \sqrt{(M^2 s_0 \log d)/n + s_0 \lambda_n^2})$. This proves Theorem \ref{thm:SparseClass}.$(a)$. By Lemma \ref{lem:supportSparseClass}, as $\Cs \sqrt{(M^2 s_0 \log d)/n + s_0 \lambda_n^2} \leq \eps_0$, the only possible stationery point of $L_n(\btheta)$ will be within the set $\C \cap \Ball_2(\btheta_0,\eps_0) \cap \Ball_0(\Csp s_0\log d)$. By Lemma \ref{lem:restricted_hessian_sparseclass}, as $n \geq \Cn s_0 \log^2 d$, such stationery point must be unique. This proves Theorem \ref{thm:SparseClass}.$(b)$.
\end{proof}


\section{Proof of Theorem \ref{thm:MainRobustRegression}: robust regression}

\subsection{Landscape of population risk}

\begin{lemma}\label{lemma:popriskRobustRegression}
Assume $\|\btheta_0\|_2\le \rad/3$ together with Assumption \ref{ass:Robust}. Then  we have the following:
\begin{enumerate}[label=$(\alph*)$, leftmargin=0.5cm]
\item {\bf Unique minimizer. } The population risk $R(\btheta)$ is minimized at $\btheta=\btheta_0$ and has no other stationary points. 
\item {\bf Bounds on the Hessian. } There exist an $\eps_0>0$ and some constants 
$0<\heslb<\hesub<\infty$ such that
\begin{equation}
\label{eq:PopHessian} 
\inf_{\btheta\in \Ball^d(\btheta_0,\eps_0)} \lambda_{\min}\Big( \grad^2 R(\btheta) \Big) \ge \heslb,\;\;\;\;\;\; \sup_{\btheta\in \Ball^d(\bzero,\rad)} \Big\Vert \grad^2 R(\btheta) \Big\Vert_{\op} \le \hesub.
\end{equation}
\item {\bf Bounds on the gradient. } For the same $\eps_0$ as in part (b), there exist some constants $0<\gradlb<\gradub<\infty$ and $\gradlip\in(0,\infty)$ such that
\begin{equation}
\inf_{\btheta\in \Ball^d(\bzero, \rad)\setminus \Ball^d(\btheta_0,\eps_0)} \Big\| \grad R(\btheta) \Big\|_2 \ge \gradlb,\;\;\;\;\;\;\sup_{\btheta\in \Ball^d(\bzero,\rad)} \Big\| \grad R(\btheta) \Big\|_2 \le \gradub,
\end{equation}
and for all $\btheta\in \Ball^d(\bzero,\rad)$,
\begin{equation}
\langle \btheta-\btheta_0, \grad R(\btheta)\rangle\ge \gradlip \|\btheta-\btheta_0\|_2^2.   \label{eq:GradientDirection} 
\end{equation}
\end{enumerate}
All constants $\eps_0,\heslb,\hesub,\gradlb,\gradub,\gradlip$ are functions of $(\rho(\cdot), \P_\eps, \rholip,\rad,\tau^2, \covlb)$ but do not depend on $d$ and the distribution of $\bX$.
\end{lemma}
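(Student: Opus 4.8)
The plan is to follow the five-part template of the proof of Lemma~\ref{lemma:popriskClassification}, replacing the monotonicity of the activation $\sigma$ by corresponding properties of the \emph{noise-averaged score} $g(z)\equiv\E_\eps[\psi(z+\eps)]$ supplied by Assumption~\ref{ass:Robust}$(b)$. Writing $Y-\langle\btheta,\bX\rangle=\langle\btheta_0-\btheta,\bX\rangle+\eps$ and differentiating $\rho$, one gets $\nabla R(\btheta)=-\E[\psi(\langle\btheta_0-\btheta,\bX\rangle+\eps)\bX]$ and $\nabla^2R(\btheta)=\E[\psi'(\langle\btheta_0-\btheta,\bX\rangle+\eps)\bX\bX^{\sT}]$. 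Since $\eps\perp\bX$, averaging the noise first turns these into $\nabla R(\btheta)=-\E_{\bX}[g(\langle\btheta_0-\btheta,\bX\rangle)\bX]$ and $\nabla^2R(\btheta)=\E_{\bX}[g'(\langle\btheta_0-\btheta,\bX\rangle)\bX\bX^{\sT}]$, where $g$ is odd, $g(z)>0$ for $z>0$, $g'(0)=\E[\psi'(\eps)]>0$, and $g'=\E[\psi'(\,\cdot+\eps)]$ is bounded by $\rholip$ and $\rholip$-Lipschitz because $\|\psi''\|_\infty\le\rholip$. Reducing everything to $g$ is the organizing idea of the proof.

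For part $(a)$ and the gradient lower bound, set $w=\langle\btheta-\btheta_0,\bX\rangle$, so that $\langle\btheta-\btheta_0,\nabla R(\btheta)\rangle=\E_{\bX}[w\,g(w)]$. Oddness and positivity of $g$ on $(0,\infty)$ give $w\,g(w)\ge0$ pointwise, with equality iff $w=0$; together with $\E[\bX\bX^{\sT}]\succeq\covlb\tau^2\id_{d\times d}$ this already forces $\nabla R(\btheta)\neq\bzero$ for $\btheta\neq\btheta_0$, proving $(a)$. For the quantitative bound $\langle\btheta-\btheta_0,\nabla R(\btheta)\rangle\ge\gradlip\|\btheta-\btheta_0\|_2^2$, I introduce $L(s)\equiv\inf_{0<|w|\le s}g(w)/w$, which is strictly positive for each fixed $s$ since $g(w)/w\to g'(0)>0$ as $w\to0$ while $g(w)/w>0$ and continuous on any interval bounded away from $0$. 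Then $w\,g(w)\ge L(s)\,w^2\mathbf{1}_{\{|w|\le s\}}$, and exactly as in Lemma~\ref{lemma:popriskClassification} I truncate at scale $s\asymp\|\btheta-\btheta_0\|_2\tau$, split $\E[w^2\mathbf{1}_{\{|w|\le s\}}]=\E[w^2]-\E[w^2\mathbf{1}_{\{|w|>s\}}]$, bound the first term below by $\covlb\tau^2\|\btheta-\btheta_0\|_2^2$ and the second by $\sqrt{\E[w^4]\,\P(|w|>s)}$ via Cauchy--Schwarz and the sub-Gaussian moment and tail estimates of Lemma~\ref{lem:subgaussian}; using $\|\btheta-\btheta_0\|_2\le 4\rad/3$ makes the resulting constant distribution-free and independent of $d$. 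Cauchy--Schwarz then yields $\|\nabla R(\btheta)\|_2\ge\gradlip\|\btheta-\btheta_0\|_2$, so $\gradlb=\eps_0\gradlip$ works on $\Ball^d(\bzero,\rad)\setminus\Ball^d(\btheta_0,\eps_0)$, and $\gradub=\rholip\tau$ follows from $\|\nabla R(\btheta)\|_2\le\|\psi\|_\infty\sup_{\|\bv\|_2=1}\E|\langle\bv,\bX\rangle|\le\rholip\tau$.

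For the Hessian, evaluating at $\btheta_0$ gives the clean identity $\nabla^2R(\btheta_0)=g'(0)\,\E[\bX\bX^{\sT}]\succeq g'(0)\covlb\tau^2\id_{d\times d}$, so the eigenvalue lower bound holds \emph{without} any truncation. For $\btheta$ near $\btheta_0$ I control the perturbation by $\|\nabla^2R(\btheta)-\nabla^2R(\btheta_0)\|_{\op}\le\rholip\,\E[|w|\,\langle\bv,\bX\rangle^2]\le\rholip\sqrt{C_4}\,\tau^3\|\btheta-\btheta_0\|_2$, using Lipschitzness of $g'$ and Cauchy--Schwarz with the fourth-moment bound of Lemma~\ref{lem:subgaussian}$(a)$, and choose $\eps_0$ small enough that this is at most $\tfrac12 g'(0)\covlb\tau^2$, giving $\heslb=\tfrac12 g'(0)\covlb\tau^2$. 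The global upper bound $\|\nabla^2R(\btheta)\|_{\op}\le\|\psi'\|_\infty\sup_{\|\bv\|_2=1}\E[\langle\bv,\bX\rangle^2]\le\rholip\tau^2=:\hesub$ is immediate. All constants depend only on $(\rho(\cdot),\P_\eps,\rholip,\rad,\tau^2,\covlb)$, as required.

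The one genuinely new difficulty compared with the classification lemma is that $\psi'$ need \emph{not} be nonnegative for bounded or redescending losses such as Tukey's bisquare, so the pointwise positivity of the Hessian integrand and of $(\sigma(t_1)-\sigma(t_2))(t_1-t_2)$ that was available there simply fails here. The resolution, and the conceptual crux, is that positivity is recovered only after averaging over the noise: the governing objects are $g$ and $g'$ rather than $\psi$ and $\psi'$, and Assumption~\ref{ass:Robust}$(b)$ (symmetry of $\eps$, $g>0$ on $(0,\infty)$, and $g'(0)=\E[\psi'(\eps)]>0$) is precisely what guarantees $w\,g(w)\ge0$ and a strictly positive Hessian at $\btheta_0$. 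Once this noise-averaging viewpoint is adopted, establishing $\inf_{0<|w|\le s}g(w)/w>0$ and carrying out the Cauchy--Schwarz and sub-Gaussian tail bookkeeping is routine and mirrors Lemma~\ref{lemma:popriskClassification} line by line.
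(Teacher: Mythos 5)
Your proof is correct and takes essentially the same route as the paper's: it is organized around the same noise-averaged score $g(z)=\E_\eps[\psi(z+\eps)]$, uses the same quantity $L(s)=\inf_{0<z\le s}g(z)/z$ with a truncation-plus-Cauchy--Schwarz/fourth-moment argument for the directional gradient bound, the same identity $\nabla^2 R(\btheta_0)=g'(0)\,\E[\bX\bX^{\sT}]$ with a $\rholip\sqrt{C_4}\,\tau^3\|\btheta-\btheta_0\|_2$ perturbation bound to pick $\eps_0$ and $\heslb$, and the same immediate upper bounds $\hesub\le\rholip\tau^2$, $\gradub\lesssim\rholip\tau$. The only cosmetic deviations are that you truncate directly on $\{|\langle\btheta-\btheta_0,\bX\rangle|\le s\}$ with $s\asymp\|\btheta-\btheta_0\|_2\tau$ (valid, via monotonicity of $L$ and $\|\btheta-\btheta_0\|_2\le 4\rad/3$) where the paper fixes $s=\tilde c\rad\tau$ through a two-dimensional projection event, and you prove part $(a)$ directly from $\E[w\,g(w)]>0$ rather than citing part $(c)$.
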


\begin{proof}

The proof consists of five parts. Lower bounds of gradient and Hessian are a little involved,
and upper bounds are relatively easy to obtain.

\prg{Part $(a)$. No stationary points other than $\btheta_0$. } 

Part $(a)$ is a direct consequence of part $(c)$. 
%
%
%

\prg{Part $(b)$. Lower bounding the Hessian. }

We first look at the minimum eigenvalue of the Hessian at $\btheta = \btheta_0$. We have for any $\bu\in\R^d,\|\bu\|_2=1$,
\[
\begin{aligned}
\langle \bu, \nabla^2 R(\btheta_0) \bu \rangle =& \E[\psi'(\eps) \langle \bX, \bu \rangle^2] \\
=&  \E[\psi'(\eps)]\cdot \E[\langle \bX, \bu \rangle^2]\\
=& g'(0)\cdot \E[\langle \bX, \bu \rangle^2] \geq c_1 \covlb \tau^2,  
\end{aligned}
\]
where $c_1 = g'(0) >0$ by assumption. That is, we have
$\lambda_{\min}(\nabla^2 R(\btheta_0)) \geq c_1 \covlb \tau^2$. 

Then we look at the operator norm of $\nabla^2 R(\btheta) - \nabla^2 R(\btheta_0)$. We have for any $\bu\in\R^d,\|\bu\|_2=1$,
\[
\begin{aligned}
\Big \vert \langle \bu, (\nabla^2 R(\btheta) - \nabla^2 R(\btheta_0))\bu \rangle \Big \vert = & \Big \vert\E[ (\psi'(\langle \bX, \btheta_0 - \btheta \rangle + \eps) - \psi'(\eps)) \langle \bX, \bu \rangle^2]\Big \vert\\
=&\Big \vert \E[ \psi''(\xi) \langle \bX,\btheta_0 - \btheta\rangle \langle \bX, \bu \rangle^2] \Big \vert\\
\leq & \E\Big[\Big \vert \psi''(\xi) \Big \vert \cdot \Big \vert \langle \bX, \btheta_0 - \btheta \rangle \Big \vert \cdot \langle \bX, \bu \rangle^2 \Big]\\
\leq & \rholip \E\Big[\Big \vert \langle \bX, \btheta_0 - \btheta \rangle \Big \vert \cdot \langle \bX, \bu \rangle^2 \Big]\\
\leq & \rholip\Big ( \E[\langle \btheta - \btheta_0, \bX \rangle^2] \E [\langle \bX, \bu\rangle^4]\Big)^{1/2} \\
\leq & \rholip\Big( \Vert \btheta - \btheta_0 \Vert_2^2 \tau^2 \cdot C_4 \tau^4 \Big)^{1/2}\\
=& \rholip \sqrt{C_4} \cdot \Vert \btheta - \btheta_0 \Vert_2 \tau^3. 
\end{aligned}
\]

Hence, taking $\Vert \btheta - \btheta_0 \Vert_2 \leq \eps_0 \defeq (c_1 \covlb)/(2 \tau \rholip \sqrt C_4)$ guarantees that $\lambda_{\max}(\nabla^2 R(\btheta) - \nabla^2 R(\btheta_0)) \leq c_1 \covlb\tau^2/2$. Consequently, for all $\btheta \in \Ball^d(\btheta_0, \eps_0)$, we have 
\[
\lambda_{\min} (\nabla^2 R(\btheta)) \geq \heslb = \frac{c_1 \covlb}{2}\tau^2. 
\]

\prg{Upper bounding the Hessian. }
For any $\btheta \in \R^d$, we have
\[
\begin{aligned}
\Vert \nabla^2 R(\btheta) \Vert_{\op} =& \sup_{\Vert \bu \Vert_2 = 1} \E [\psi'(\langle \bX, \btheta_0 -\btheta \rangle + \eps) \langle \bu, \bX \rangle^2]  \\
\leq & \sup_{\Vert \bu \Vert_2 = 1}  \E \Big[ \big\vert\psi'(\langle \bX, \btheta_0 -\btheta \rangle + \eps)  \big\vert  \cdot \langle \bu, \bX \rangle^2\Big] \\
\leq & \rholip \tau^2. 
\end{aligned}
\]

The last inequality follows from Lemma \ref{lem:subgaussian}.$(a)$. Hence $\hesub = \rholip\tau^2$ is a global upper bound for Hessian. 

\prg{Part $(c)$. Lower bounding the gradient. }

Fix $\btheta\in \Ball^d(\bzero,\rad)$, then
$\|\btheta\|_2\le \rad$. Let $\bU\in\R^{2\times d}$ be an orthogonal
transform ($\bU\bU^{\sT}=\id_{2\times 2}$) from 
$\reals^d$ to $\R^2$ whose row space contains $\{\btheta,\btheta_0\}$. Define the event $A_s=\{\|\bU\bX\|_2\le 2s /(3\rad)\}$. Recall that $\|\btheta_0\|_2\le \rad/3$. Then on $A_s$, we have $\max\{|\langle \btheta,\bX\rangle|,|\langle \btheta_0,\bX\rangle|,|\langle \btheta-\btheta_0,\bX\rangle|\}\le s$.

We define $L(s) = \inf_{0 < z \leq s} g(z)/z$ for any $s> 0$. Since we assume that $g(z) = \E[\psi(z+\eps)] > 0$ as $z > 0$, and that $g'(0) > 0$, it is easy to see that $L(s) > 0$ for all $s > 0$. Thus, we have
\[
\begin{aligned}
\langle \btheta - \btheta_0, \nabla R(\btheta) \rangle =& \E \Big[ \E [\psi(z + \eps) z \vert z =  \langle \btheta_0 - \btheta, \bX \rangle ] \Big] \\
=& \E [ g( \langle \btheta_0 - \btheta, \bX \rangle ) \langle \btheta_0 - \btheta, \bX \rangle ] \\
\geq & L(s) \E[\langle \btheta - \btheta_0, \bX \rangle^2 \ones_{A_s}] \\
= & L(s) \E [\langle \btheta - \btheta_0, \bX \rangle^2 - \langle \btheta - \btheta_0, \bX \rangle^2 \ones_{A_s^c}] \\ 
\geq &  L(s) \Big[ \covlb \tau^2 \Vert \btheta - \btheta_0 \Vert_2^2 - \Big(\E[\langle \btheta - \btheta_0, \bX\rangle^4] \cdot \P(A_s^c)\Big)^{1/2}\Big] \\ 
\geq & L(s) \Vert \btheta - \btheta_0 \Vert_2^2 \tau^2( \covlb -\sqrt{C_4 \cdot \P(A_s^c)}).
\end{aligned}
\]

In addition, since $\langle \bU_j,\bX\rangle$ is $\tau^2$-sub-Gaussian for $j=1,2$, we have
\begin{equation}
\P(A_s^c) = \P\Big( \|\bU\bX\|_2>\frac{2s}{3\rad} \Big) \le \sum_{j=1}^{2} \P\Big( |\langle \bU_j,\bX\rangle| \ge \frac{\sqrt{2}s}{3\rad} \Big) \le 4\exp\Big( -\frac{s^2}{9\rad^2\tau^2} \Big),
\end{equation}
giving us
\begin{equation}
\langle \btheta-\btheta_0,\grad R(\btheta) \rangle \ge L(s)\|\btheta-\btheta_0\|_2^2\tau^2\Big( \covlb - 2\sqrt{C_4}e^{-\frac{s^2}{18\rad^2\tau^2}} \Big).
\end{equation}
So choosing $s\ge \tilde{c}\rad\tau$ for some constant $\tilde{c}>0$
and $\gradlip=\covlb L(\tilde{c}\rad\tau)\tau^2/2$ ensures that
\begin{equation}
\langle \btheta-\btheta_0,\grad R(\btheta)\rangle \ge \gradlip \|\btheta-\btheta_0\|_2^2
\end{equation}
and also $\|\grad R(\btheta)\|_2\ge \gradlip\|\btheta-\btheta_0\|_2$ from the Cauchy-Schwarz inequality.

Finally, choosing $\gradlb=\eps_0\gradlip$ ensures that $\|\grad R(\btheta)\|_2\ge\gradlb$ for all $\btheta\in \Ball^d(\bzero,\rad)\setminus \Ball^d(\btheta_0,\eps_0)$.

\prg{Upper bounding the gradient. }

For any $\btheta \in \R^d$, we have 
\[
\begin{aligned}
\Vert \nabla R(\btheta) \Vert_2 =& \Vert \E [ \psi(\langle \bX, \btheta_0 - \btheta \rangle + \eps )\bX] \Vert_2 = \sup_{\Vert \bv \Vert_2 = 1} \E[\psi(\langle \bX, \btheta_0 - \btheta \rangle + \eps )\langle \bX, \bv \rangle]\\
\leq &\sup_{\Vert \bv \Vert_2 = 1} \E\Big[ \big\vert \psi(\langle \bX, \btheta_0 - \btheta \rangle + \eps ) \big\vert \cdot \vert \langle \bv, \bX \rangle \vert \Big] \leq \rholip \sqrt C_2 \tau. 
\end{aligned}
\]
The last inequality follows from lemma \ref{lem:subgaussian}(a) and a Cauchy-Schwarz inequality. Thus, $\gradub = \rholip \sqrt C_2 \tau$ upper bounds $\Vert \nabla R(\btheta) \Vert_2$ for all $\btheta \in \R^d$. 

\prg{Dependence on model parameters. }
Notice that all constants $\gradlb$, $\gradub$, $\gradlip$, $\heslb$, $\hesub$, $\eps_0$ does not depend on $d$ and the distribution of $\bX$. This completes the proof of all of our statements. 

\end{proof}

\subsection{Landscape of empirical risk}
Similar to Lemma \ref{lemma:empriskClassification} for binary linear classification, $n = \Omega(d\log d)$ guarantees that the empirical risk $\what{R}_n(\btheta)$ of robust regression has properties the same as the properties described in Lemma \ref{lemma:empriskClassification}. The proof is basically the same, except the only difference that, here the parameters will depend on constants $(\rho, \P_\eps,\rholip, \rad,\tau^2, \covlb)$. 

\subsection{Gradient descent algorithm}
Similar to Lemma \ref{lemma:GDalgorithm} for binary linear classification, as $n = \Omega(d\log d)$, gradient descent algorithm is provably converging exponentially fast to the global minimum of the empirical risk $\what R_n(\btheta)$. The proof is basically the same.

\section{Proof of Theorem \ref{thm:Mixture}: Gaussian mixture model}

\subsection{Landscape of population risk}

\begin{lemma} \label{lemma:popriskGaussianMixture}
There exist constants $\rad$, $\eps_0$, $\gradlb$, $\gradub$,
$\heslb$, $\hesub$ that depend on the separation parameter $D$ but independent of $d$, such that the landscape of the population risk has the following properties:
\begin{enumerate}[label=$(\alph*)$, leftmargin = 0.5cm]
\item {\bf Three stationary points.} The population risk $R(\btheta)$ is minimized at $\btheta_+ = (\btheta_{0,1}, \btheta_{0,2})$ and $\btheta_- = (\btheta_{0,2}, \btheta_{0,1})$. There is a saddle point $\btheta_s = ((\btheta_{0,1}+\btheta_{0,2})/2,(\btheta_{0,1}+\btheta_{0,2})/2)$. There are no other stationary points. 
\item {\bf Absorbing region. } With the constants $\rad > 0$, $\eps_0 >0$, and $\gradlb > 0$, we have
\begin{align}
\inf_{\btheta \in \partial \BT^{2d}(\btheta_s, \frac{\rad}{2} + \eps_s)} \langle \nabla R(\btheta), \bn (\btheta) \rangle \geq& \gradlb, \quad \forall \eps_s \in [0, \eps_0],
\end{align}
where $\BT^{2d}(\btheta_s, \rad_0) = \Ball^d(\btheta_{s,1}, \rad_0) \times \Ball^d(\btheta_{s,2}, \rad_0)$ with $\btheta_s = (\btheta_{s,1}, \btheta_{s,2}) \in \R^{2d}$. For any $\btheta = (\btheta_1,\btheta_2) \in \partial \BT^{2d}(\btheta_s, \rad_0)$, $\boldsymbol n(\btheta)$ is a unit normal vector pointing out of $\BT^{2d}(\btheta_s, \rad_0)$. More 
specifically, letting $\bn_1(\btheta_1) = (\btheta_1 - \btheta_{s,1})/\|\btheta_1 - \btheta_{s,1}\|_2$, and $\bn_2(\btheta_2) = (\btheta_2 - \btheta_{s,2})/\|\btheta_2 - \btheta_{s,2}\|_2$, we have $\bn(\btheta) = (\bn_1(\btheta_1), \bzero)$ if $\|\btheta_1-\btheta_{s,1}\|_2 = \rad_0>\|\btheta_2-\btheta_{s,2}\|_2$,
$\bn(\btheta) = (\bzero, \bn_2(\btheta_2))$ if $\|\btheta_2-\btheta_{s,2}\|_2 = \rad_0>\|\btheta_1-\btheta_{s,1}\|_2$, 
and $\bn(\btheta) = (\bn_1(\btheta_1), \bn_2(\btheta_2))/ \sqrt 2$ if $\|\btheta_1-\btheta_{s,1}\|_2 = \rad_0=\|\btheta_2-\btheta_{s,2}\|_2$,

\item {\bf Bounds on the Hessian.} With the constants $\rad$, $\eps_0$, and $0 < \heslb < \hesub < \infty$, we have
\begin{align}
\inf_{\btheta \in \Ball^{2d}(\btheta_{+}, \eps_0) \cup \Ball^{2d} (\btheta_{-}, \eps_0)} \lambda_{\min} \left( \nabla^2 R(\btheta) \right) \geq& \heslb,\\
\sup_{\btheta \in \Ball^{2d}(\btheta_{s}, \eps_0)} \lambda_{\min}\left( \nabla^2 R(\btheta) \right) \leq& - \heslb,\\
\sup_{\btheta \in \Ball^{2d}(\btheta_s, \rad)} \Big \Vert \nabla^2 R(\btheta) \Big \Vert_{\op} \leq& \hesub.
\end{align}

\item {\bf Bounds on the gradient.} With the constants $\rad$, $\eps_0$, and $0 < \gradlb < \gradub < \infty$, 
define $G_d \equiv \Ball^{2d}(\btheta_s, \rad) \setminus \cup_{\btheta_* = \btheta_+, \btheta_-, \btheta_s}  \Ball^{2d}(\btheta_*, \eps_0/2)$ 
we have
\begin{align}
\inf_{\btheta \in G_d} \Big \Vert \nabla R(\btheta) \Big \Vert_2 \geq \gradlb, \quad \sup_{\btheta \in \Ball^{2d}(\btheta_s, \rad)} \Big \Vert \nabla R(\btheta) \Big \Vert_2 \leq \gradub. 
\end{align}
\end{enumerate}
\end{lemma}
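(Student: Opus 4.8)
The plan is to exploit the symmetries of the negative log-likelihood so that every quantity reduces to a computation in a fixed low-dimensional subspace, making all constants independent of $d$. First I would use the joint translation and rotation invariance of $R$: since the data law depends only on $\{\btheta_{0,1},\btheta_{0,2}\}$, for any orthogonal $\bU$ and any common translation $\bt$ the value of $R$ is unchanged if $\bz$, $\btheta_a$ and $\btheta_{0,a}$ are all transformed together. Hence, without loss of generality, I put the midpoint at the origin and set $\btheta_{0,1}=-D\be_1$, $\btheta_{0,2}=D\be_1$, so that $\btheta_s=(\bzero,\bzero)$ and $\btheta_\pm=(\mp D\be_1,\pm D\be_1)$. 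The key observation is that the posterior weight $w_1(\bz;\btheta)=\phi_d(\bz-\btheta_1)/\bigl(\phi_d(\bz-\btheta_1)+\phi_d(\bz-\btheta_2)\bigr)$ depends on $\bz$ only through $\langle \bz,\btheta_1-\btheta_2\rangle$, while $\nabla_{\btheta_1}R=-\E[w_1(\bz)(\bz-\btheta_1)]$ (and its analogue for $\btheta_2$). Writing $V=\mathrm{span}\{\btheta_1,\btheta_2,\btheta_{0,1},\btheta_{0,2}\}$, a subspace of dimension at most $4$, the part of $\bz-\btheta_1$ orthogonal to $V$ is a standard Gaussian independent of $w_1$ and mean zero, so it averages out; therefore $\nabla R(\btheta)\in V$ and is determined by a bounded number of scalar invariants (the Gram matrix of $\{\btheta_1,\btheta_2,\btheta_{0,1},\btheta_{0,2}\}$). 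The same decoupling applies to $\nabla^2R$, which guarantees $d$-independence of $\eps_0,\heslb,\hesub,\gradlb,\gradub$.

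For part $(a)$ I would invoke \cite{xu2016global} for the fact that $\btheta_+,\btheta_-,\btheta_s$ are the only critical points, and then classify them directly. The first-order stationarity conditions are the population EM fixed-point equations $\btheta_a=\E[w_a(\bz)\bz]/\E[w_a(\bz)]$; the exchange symmetry $R(\btheta_1,\btheta_2)=R(\btheta_2,\btheta_1)$ shows $\btheta_+$ and $\btheta_-$ are two equivalent minima, and the reduced Hessian at $\btheta_s$ exhibits a negative-curvature direction proportional to $(\be_1,-\be_1)$ making it a saddle.

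Parts $(c)$ and $(d)$ then follow from the reduced model together with continuity. For the Hessian, by the residual rotational symmetry in $\be_1^{\perp}$, $\nabla^2R$ at each critical point is block-diagonal: a $2\times2$ block on $\mathrm{span}\{(\be_1,\bzero),(\bzero,\be_1)\}$ along the separation axis, and $(d-1)$ \emph{identical} $2\times2$ blocks, one on each $\mathrm{span}\{(\be_j,\bzero),(\bzero,\be_j)\}$ for $j\ge 2$; the whole spectrum is thus given by four $d$-independent numbers. These yield a strictly positive $\heslb$ lower bound on $\lambda_{\min}$ at $\btheta_\pm$ and a $\le-\heslb$ eigenvalue at $\btheta_s$. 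The global operator-norm bound $\hesub$ on $\Ball^{2d}(\btheta_s,\rad)$ comes from the general-$\btheta$ decomposition, using that the responsibilities satisfy $w_a(1-w_a)\le 1/4$ so the $V^{\perp}$-contribution is a scalar multiple of the identity in $[0,1]$ while the $V$-block is bounded by $d$-independent second moments; propagation from the critical points to the $\eps_0$-balls uses a bound on $\sup\|\nabla^3R\|_{\op}$. The gradient upper bound $\gradub$ is a reduced Gaussian-moment estimate (since $\nabla R\in V$). The gradient lower bound $\gradlb$ on $G_d$ is the delicate point: $G_d$ excludes neighborhoods of all three critical points, so by $(a)$ the gradient is nonzero there, but a naive compactness argument over $G_d$ would give a $d$-dependent constant. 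Instead I would use that $\|\nabla R(\btheta)\|_2$ is a function of the finitely many inner products among $\btheta_1,\btheta_2,\btheta_{0,1},\btheta_{0,2}$, so the infimum over $G_d$ descends to an infimum over a fixed compact set of such invariants, bounded away from zero.

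Part $(b)$, the absorbing region, is where I expect the main work. I would analyze $\langle\nabla R(\btheta),\bn(\btheta)\rangle$ on the boundary of the product ball $\BT^{2d}(\btheta_s,\rad/2+\eps_s)$, splitting into the $\btheta_1$- and $\btheta_2$-blocks according to the three cases in the definition of $\bn(\btheta)$ and showing that $\langle\nabla_{\btheta_a}R,\bn_a\rangle$ is bounded below by a positive constant whenever $\|\btheta_a-\btheta_{s,a}\|_2=\rad_0$. This amounts to proving the population EM map points outward on each component sphere, which I would again establish in the reduced invariants. The main obstacle is to make all of these bounds simultaneously uniform in $d$ and in $\eps_s\in[0,\eps_0]$, and to handle the product (rather than Euclidean) geometry of $\BT^{2d}$, for which the case analysis over the active block is essential.
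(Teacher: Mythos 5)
Your overall architecture coincides with the paper's: the reduction by joint rotational invariance to a fixed low-dimensional model (the paper rotates so that all relevant vectors lie in a copy of $\reals^3$ and works with $\Rt$; your Gram-matrix/invariants formulation is the same device), the appeal to \cite{xu2016global} for the classification of critical points in part $(a)$, and continuity-plus-compactness in the reduced model for parts $(c)$ and $(d)$. Your worry that naive compactness over $G_d$ would yield $d$-dependent constants, resolved by descending to the invariants, is exactly how the paper argues: $\inf_{\btheta\in G_d}\|\nabla \Rd(\btheta)\|_2=\inf_{\btheta\in G_3}\|\nabla \Rt(\btheta)\|_2$, and similarly for all Hessian bounds.

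The genuine gap is in part $(b)$. You set up the correct case analysis over the active block of $\partial\BT^{2d}$ and reduce to showing $\langle\nabla_{\btheta_1}R,\bn_1\rangle>0$ on the component sphere, but you supply no argument for this sign, nor for how the radius $\rad$ is to be chosen in the first place -- and compactness cannot produce either, since the strict positivity must hold \emph{before} compactness can upgrade it to a uniform $\gradlb$, and it simply fails to hold for arbitrary radii. You diagnose the obstacle as $d$-uniformity and the product geometry, but your reduction already disposes of those; the real difficulty is quantitative. Writing $\langle\nabla_{\btheta_1}R(\btheta),\btheta_1\rangle=\E\{\sp_1\langle\btheta_1,\btheta_1-\bX\rangle\}$ with $\sp_1$ the posterior weight of component $1$, both the favorable bulk term and the unfavorable tail term are exponentially small in $\rad^2$: when $\|\btheta_1\|_2=\rad$ is large while $\btheta_2$ sits near the data, $\sp_1$ itself can be as small as $e^{-c\rad^2}$, which is precisely the regime where the heuristic ``the data mass pulls a far-away center back'' is in doubt. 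The paper's proof closes this by a Laplace-method comparison of exponential orders: the bulk contribution $D_1(\rad)\ge\frac{\rad^2}{2}\E\{\sp_1\bfone_{\|\bX\|_2\le\rad/2}\}$ is lower bounded, to leading exponential order, by the integrand at $\bx=\btheta_1/2$, where the Gaussian density contributes $\doteq e^{-\rad^2/8}$ and $\sp_1(\btheta_1/2;\btheta_1,\btheta_2)\gtrdot e^{-\rad^2/8}$ uniformly over the admissible $\btheta_2$, giving $D_1(\rad)\gtrdot e^{-\rad^2/4}$; the tail contribution satisfies $D_2(\rad)\lessdot e^{-\rad^2/2}$; hence $D_1(\rad)-D_2(\rad)>0$ for all $\rad$ sufficiently large, and only then does compactness of $\cup_{\eps_s\in[0,\eps_0]}\partial\BT^{6}(\bzero,\rad/2+\eps_s)$ yield the uniform constant $\gradlb$. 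Without an estimate of this kind (or some substitute monotonicity argument for the population EM direction), the existence of the absorbing radius $\rad$ -- and with it the very definition of the constants used in parts $(b)$--$(d)$ -- is not established, so your proposal is incomplete at exactly the step you flagged as the main work.
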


\begin{proof}
See proofs below. 
\prg{$(a)$ Three stationary points.} This has been proved by \cite{xu2016global}. 

\prg{$(b)$ Absorbing region.} In the rest of this proof, we will denote by $\Rd(\btheta)$ the population risk for the $d$-dimensional model.
A straightforward calculation yields
\begin{align} 
\nabla_{\btheta_1} \Rd(\btheta) &= \E\big\{ \sp_1(\bX;\btheta_1,\btheta_2) \cdot (\btheta_1-\bX)\}
\end{align}
where the expectation is with respect to $\bX\sim (1/2)\normal(\btheta_{0,1},\id_{d \times d})+  (1/2)\normal(\btheta_{0,2},\id_{d \times d})$, 
and $\sp_1(\bx;\btheta_1,\btheta_2)$ is the posterior probability for point $\bx$ to belong to component $1$
\begin{align}
\sp_1(\bx;\btheta_1,\btheta_2) = \frac{1}
{1+e^{\<\btheta_2-\btheta_1,\bx\>+\frac{1}{2}(\|\btheta_1\|_2^2-\|\btheta_2\|_2^2)}}\, .
\end{align}

Without loss of generality, we can assume $\btheta_{0,1} = \theta \be_1=-\btheta_{0,2}$, $\theta=D/2$. 
We will first consider the case $d=3$.
Consider a point $ \btheta = ( \btheta_1, \btheta_2) \in \partial \BT^6(\bzero, \rad)$, where $\btheta_1 \in \partial \Ball^3(\rad)$, and 
$\| \btheta_2 \|_2<\rad$. 
In this case $\bn(\btheta) = (\btheta_1/\|\btheta_1\|_2,\bzero)$.
We have, denoting $\sp_1 = \sp_1(\bX;\btheta_1,\btheta_2) > 0$,
\begin{align}
\langle \nabla_{\btheta}  \Rt( \btheta), (\btheta_1, \bzero) \rangle &=
\E\big\{ \sp_1\, \<\btheta_1,\btheta_1-\bX\>\}\\
& =\E\big\{ \sp_1\, \<\btheta_1,(\btheta_1-\bX)\bfone_{\|\bX\|_2\le \rad}\>\} - 
\E\big\{ \sp_1\, \<\btheta_1,(\bX-\btheta_1)\bfone_{\|\bX\|_2> \rad}\>\} 
\\
&\ge \E\big\{ \sp_1\, \<\btheta_1,(\btheta_1-\bX)\bfone_{\|\bX\|_2\le \rad/2}\>\} - 
\E\big\{ \sp_1 \, \<\btheta_1,(\bX-\btheta_1)\bfone_{\|\bX\|_2> \rad}\>\} \\
&\ge \frac{\rad^2}{2}\E\big\{ \sp_1 \,\bfone_{\|\bX\|_2\le \rad/2}\} - 
\E\big\{ \sp_1\, \big|\<\btheta_1,(\bX-\btheta_1)\bfone_{\|\bX\|_2> \rad}\>\big|\} \\
&\equiv  D_1(\rad) - D_2(\rad).
\end{align}

For two non-negative functions $f(\rad)$ and $g(\rad)$, we  write 
$f(\rad)\gtrdot g(\rad)$ (or $g(\rad)\lessdot f(\rad)$) if $f$ dominates $g$ to leading exponential orderin $\rad^2$, i.e. 
if
\begin{align}
\lim_{\rad \rightarrow \infty} \frac{1}{\rad^2} \log\frac{f(\rad)}{g(\rad)} \ge 0 \, .
\end{align}
We write $f(\rad) \doteq g(\rad)$ if  $f(\rad) \lessdot g(\rad)$ and $f(\rad) \gtrdot g(\rad)$.

We estimate $D_1(\rad)$ for large $\rad$ by Laplace method
\begin{align}
D_1(\rad) =& \frac{1}{2}\int_{\Ball^3(\frac{\rad}{2})} [\phi(\bx-\theta\be_1)+\phi(\bx+\theta\be_1)] 
\sp_1(\bx;\btheta_1,\btheta_2)\, \ud \bx\\
\gtrdot & [\phi(\btheta_1/2-\theta\be_1)+\phi(\btheta_1/2+\theta\be_1)]\, \sp_1(\btheta_1/2;\btheta_1,\btheta_2)\\
\doteq& \exp(-\rad^2/8) \frac{1}{1 + \exp(\langle \btheta_1, \btheta_2 \rangle/2 - \Vert \btheta_2 \Vert_2^2/2)} \\
\gtrdot& \exp(-\rad^2/8) \frac{1}{1 + \exp( -  ( \Vert \btheta_2 \Vert_2 - \frac{1}{2}\Vert \btheta_1 \Vert_2)^2/2 + \Vert \btheta_1 \Vert_2^2/8)} \\
\gtrdot& \exp(-\rad^2/8) \frac{1}{ 1+ \exp(\rad^2/8)}\\
\doteq&\exp(- \rad^2/4). 
\end{align}

By a similar calculation
\begin{align}
D_2(\rad) \lessdot & \int_{\Ball^3(\rad)^c} [\phi(\bx-\theta\be_1)+\phi(\bx+\theta\be_1)] 
\sp_1(\bx;\btheta_1,\btheta_2) \, \ud\bx\\
\doteq & \sup_{\bx \in \Ball^3(\rad)^c} [\phi(\bx-\theta\be_1)+\phi(\bx+\theta\be_1)] \sp_1(\bx;\btheta_1,\btheta_2) \\
\lessdot & \sup_{\bx \in \Ball^3(\rad)^c}  [\phi(\bx-\theta\be_1)+\phi(\bx+\theta\be_1)] \\
\doteq & \exp(-\rad^2/2).
\end{align}
Using these inequalities, we obtain $D_1(\rad) - D_2(\rad)\gtrdot \exp(- \rad^2/4)$. 
Consequently, there exists an $\rad$ (which is independent of $d$) such that for any $\rad_s \geq \rad/2$, we have
\[
\inf_{\btheta \in \partial \BT^6(\bzero, \rad_s)} \langle \nabla_{ \btheta} \Rt( \btheta), \bn (\btheta) \rangle >0. 
\]

On the other hand, for any $\eps_0 \geq 0$, $\cup_{\eps_s\in[0,\eps_0]}\partial \BT^6(\bzero, (\rad/2) + \eps_s) $ is a compact set. 
A continuous function on a compact set can attain its infimum. Therefore, for any $\eps_0 >0$, there exists an $\gradlb > 0$, such that
\begin{align}
\inf_{\btheta \in \partial \BT^6(\bzero, \frac{\rad}{2} + \eps_s)} \langle \nabla_{\btheta} \Rt( \btheta), \boldsymbol n (\btheta) \rangle \geq \gradlb, \quad \forall \eps_s \in [0,\eps_0]. 
\end{align}
We will choose this $\eps_0$ in part $(c)$ of the proof. 

Then, we consider the $d$-dimensional Gaussian mixture model. For any point $ \btheta \in \BT^{2d}(\bzero, \rad)$, we denote $\btheta = (\btheta_1, \btheta_2)$. We can find a rotation matrix $U \in \R^{d \times d}$, such that $(U(\btheta_1))_1 = (\btheta_1)_1$, $(U(\btheta_2))_1 = (\btheta_2)_1$, and $(U(\btheta_1))_{4:d} = \bzero$, $(U(\btheta_2))_{4:d} = \bzero$. With a little abuse of notation, we denote $U(\btheta) = (U(\btheta_1), U(\btheta_2))$. Due to the symmetry property of the Gaussian mixture model, $U(\btheta)$ satisfy the following properties:
\begin{align}
\Vert U(\btheta) - \btheta_* \Vert_2 =& \Vert \btheta - \btheta_* \Vert_2, \quad \text{ for }\btheta_* = \btheta_+, \btheta_-, \btheta_s, \\
\langle \nabla \Rd(\btheta), \boldsymbol n(\btheta) \rangle =& \langle \nabla \Rd(U(\btheta)), \boldsymbol n(U(\btheta)) \rangle, \quad \forall \btheta \in \partial \BT^{2d}(\bzero, \rad), 
\end{align}
and $U(\BT^{2d}(\bzero, \rad)) = \BT^{2d}(\bzero, \rad)$. 

Thus, we have
\[
\inf_{\btheta \in \partial \BT^{2d}(\bzero,\frac{\rad}{2}+\eps_s)} \langle \nabla \Rd(\btheta), \boldsymbol n(\btheta) \rangle = \inf_{ \btheta \in \partial \BT^{6}(\bzero, \frac{\rad}{2}+\eps_s)} \langle \nabla \Rt( \btheta), \boldsymbol n(\btheta) \rangle \geq \gradlb, \quad \forall \eps_s \in [0, \eps_0]. 
\]

\prg{$(c)$ Bounds on the Hessian. } 
First, we consider the case $d=3$. Due to part $(a)$, we know that this model has exactly three critical points. A direct calculation of the
Hessian shows that $\btheta_+ = (\btheta_{0,1}, \btheta_{0,2})$ and $\btheta_- = (\btheta_{0,2},  \btheta_{0,1})$ are strict local minimums, and 
$ \btheta_s = (\bzero,\bzero)$ is a strict saddle point. That is, we have $\lambda_{\min}(\nabla^2 \Rt(\btheta_+)) = \lambda_{\min}(\nabla^2 \Rt( \btheta_-)) > 0$, and $\lambda_{\min}(\nabla^2 \Rt(\btheta_s)) < 0$. Due to the continuity of the Hessian, we know that there exist two positive constants $\eps_0$ and 
$\heslb$, such that $\inf_{ \btheta \in \Ball^6(\btheta_+, \eps_0) \cup \Ball^6( \btheta_-, \eps_0)} \lambda_{\min}(\nabla^2 \Rt(\btheta)) \geq \heslb$, 
and $\sup_{ \btheta \in \Ball^6( \btheta_s, \eps_0)} \lambda_{\min}(\nabla^2 \Rt(\btheta)) \leq -\heslb$. Further, since $\Ball^6(\bzero, \rad)$ is a compact set, and the Hessian is continuous, we have $\sup_{ \btheta \in \Ball^6(\bzero, \rad)} \Vert \nabla^2 \Rt(\btheta) \Vert_{\op} \leq \hesub$. 

Next,  we consider the general case  $d\ge 3$. For any $\btheta \in \Ball^{2d}(\bzero, \rad)$, we also define the rotational matrix 
$U \in \R^{d \times d}$ as in part $(b)$. Due to the symmetry property of the Gaussian mixture model, we have
\begin{align}
\lambda_i(\nabla^2 \Rd(\btheta)) = & \lambda_i (\nabla^2 \Rd(U(\btheta))), \quad \forall i \in [d], \quad \forall \btheta \in  \Ball^{2d}(\btheta_s,\rad),
\end{align}
and $U(\Ball^{2d}(\btheta_*,R_0)) = \Ball^{2d}(\btheta_*,R_0)$ for any $\btheta_* = \btheta_+, \btheta_-, \btheta_s$ and for any $R_0 > 0$. 

Thus, we have
\begin{align}
\inf_{\btheta \in \Ball^{2d}(\btheta_+, \eps_0) \cup \Ball^{2d}(\btheta_-, \eps_0)} \lambda_{\min} (\nabla^2 \Rd(\btheta)) =& \inf_{ \btheta \in \Ball^{6}( \btheta_+, \eps_0) \cup \Ball^{6}( \btheta_-, \eps_0)} \lambda_{\min} (\nabla^2  \Rt( \btheta)) \geq \heslb,\\
\sup_{\btheta \in \Ball^{2d}(\btheta_s, \eps_0)} \lambda_{\min} (\nabla^2 \Rd(\btheta)) =& \sup_{ \btheta \in \Ball^{6}( \btheta_s, \eps_0)} \lambda_{\min} (\nabla^2 \Rt( \btheta)) \leq -\heslb,\\
\sup_{\btheta \in \Ball^{2d} (\bzero, \rad)} \Big \Vert \nabla^2 \Rd(\btheta) \Big \Vert_{\op} =& \sup_{ \btheta \in \Ball^6(\bzero, \rad)} \Big \Vert \nabla^2  \Rt( \btheta) \Big \Vert_{\op} \leq \hesub. 
\end{align}

\prg{$(d)$ Bounds on the gradient. } 

First, we consider the case $d=3$.
Since the closure of $G_3 = \left(\Ball^6( \bzero, \rad) \setminus \cup_{\btheta_* = \btheta_+,  \btheta_-, \btheta_s} \Ball^6(\btheta_*, \eps_0/2)\right)$ and 
$\Ball^6(\bzero, \rad)$ are two compact set, and $\nabla  \Rt( \btheta)$ is a continuous non-zero function on this set, there exist two constants 
$0 < \gradlb < \gradub < \infty$ such that
\begin{align}
\inf_{ \btheta \in G_3}  \Big \Vert \nabla \Rt( \btheta) \Big \Vert_2 \geq & \gradlb, \\
\sup_{ \btheta \in \Ball^6(\bzero, \rad)} \Big \Vert \nabla \Rt(\btheta) \Big \Vert_2 \leq & \gradub.
\end{align}
Next, we consider our $d$-dimensional Gaussian mixture model. For any $\btheta \in \Ball^{2d}(\bzero, \rad)$, we also define the rotation 
matrix $U \in \R^{d \times d}$ as in part $(b)$. Due to the symmetry property of the Gaussian mixture model, we have
\begin{align}
\Big \Vert \nabla \Rd(\btheta) \Big \Vert_2 = & \Big \Vert \nabla \Rd(U(\btheta)) \Big \Vert_2, \quad \forall \btheta \in \Ball^{2d}(\bzero, \rad).  
\end{align}

Thus, we have
\[
\begin{aligned}
\inf_{\btheta\in G_d}  \Big \Vert \nabla \Rd(\btheta) \Big\Vert_2 =& \inf_{\btheta\in G_3} \Big \Vert \nabla \Rt(\btheta) \Big \Vert_2 \geq \gradlb, \\
\sup_{\btheta \in \Ball^{2d}(\bzero, \rad)} \Big\Vert \nabla \Rd(\btheta) \Big\Vert_2 =& \sup_{ \btheta \in \Ball^6(\bzero, \rad)} \Big \Vert \nabla \Rt(\btheta) \Big \Vert_2 \leq \gradub.
\end{aligned}
\]

\prg{Dependence on model parameters.} Notice that all constants $\gradlb, \gradub, \heslb, \hesub, \eps_0, \rad$ are defined in the three dimensional Gaussian mixture model, and thus depend only on $D$ but not depend on $d$. This completes the proof of all of our statements. 
\end{proof}

\subsection{Landscape of empirical risk}

\begin{lemma}\label{lemma:empriskGaussianMixture}
Let $\rad$, $\eps_0$, $\heslb$, $\hesub$, $\gradlb$, $\gradub$ be the constants defined in Lemma \ref{lemma:popriskGaussianMixture}, then there exists a large constant $C$ depending on $(D, \delta)$, such that as $n \geq C d \log d$, the following hold with probability at least $1-\delta$:
\begin{enumerate}[label=$(\alph*)$, leftmargin = 0.5cm]
\item { \bf Bounds on the Hessian.} 
\begin{align}
\inf_{\btheta \in \Ball^{2d}(\btheta_{+}, \eps_0) \cup \Ball^{2d} (\btheta_{-}, \eps_0)} \lambda_{\min} \left( \nabla^2 \what R_n(\btheta) \right) \geq& \heslb/2, \\
\sup_{\btheta \in \Ball^{2d}(\btheta_{s}, \eps_0)} \lambda_{\min} \left( \nabla^2 \what R_n(\btheta) \right) \leq& -\heslb/2;\\
\sup_{\btheta \in \Ball^{2d}(\btheta_s, \rad)}  \Big\Vert \nabla^2 \what R_n(\btheta) \Big \Vert_{\op} \leq& 2 \hesub. 
\end{align}
\item { \bf Bounds on the gradient.} Letting $G_d = 
\Ball^{2d}(\btheta_s, \rad) \setminus \cup_{\btheta_* = \btheta_+, \btheta_-, \btheta_s}  \Ball^{2d}(\btheta_*, \eps_0/2)$
\begin{equation}
\inf_{\btheta \in G_d}\Big\Vert \nabla \what R_n(\btheta) \Big\Vert_2 \geq \gradlb/2, \quad \sup_{\btheta \in \Ball^{2d}(\btheta_s, \rad)} \Big\Vert \nabla \what R_n(\btheta) \Big\Vert_2 \leq 2\gradub.
\end{equation}
\item {\bf Absorbing region. }
\begin{align}
\inf_{\btheta \in \partial \BT^{2d}(\btheta_s, \frac{\rad}{2} + \eps_s)} \langle \nabla \what R_n(\btheta), \boldsymbol n(\btheta) \rangle \geq& \gradlb/2,\quad \forall \eps_s \in [0,\eps_0].
\end{align}

\item {\bf Stationary points.} The empirical risk $\what R_n(\btheta)$ has two local minimum inside $\Ball^{2d}(\btheta_+, C \sqrt{(d \log n)/n})$ and $\Ball^{2d}(\btheta_-, C \sqrt{(d \log n)/n})$ respectively. There are saddle points but no local minima inside $\Ball^{2d}(\btheta_s, \eps_0)$. The empirical risk has no other critical points. 
\end{enumerate}
\end{lemma}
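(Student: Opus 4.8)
The plan is to follow exactly the two-step strategy used for binary classification in Lemma \ref{lemma:empriskClassification}: first invoke the uniform convergence result Theorem \ref{thm:uniformconvergence1} to control $\what R_n$ by the population risk $R$ on the relevant domain, and then transfer each property from Lemma \ref{lemma:popriskGaussianMixture} to $\what R_n$ by the triangle inequality. By translation invariance of the mixture loss (the map $\bz\mapsto \bz+\bv$, $\btheta_a\mapsto\btheta_a+\bv$ leaves $\ell$ unchanged) we may assume $\btheta_{0,1}+\btheta_{0,2}=\bzero$, so that $\btheta_s=\bzero$ and all the population constants $\rad,\eps_0,\heslb,\hesub,\gradlb,\gradub$ depend only on the separation $D$. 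Here $p=2d$, so the condition $n\ge Cp\log p$ of Theorem \ref{thm:uniformconvergence1} reads $n\ge C d\log d$.

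First I would verify Assumptions \ref{ass:GradientSub}, \ref{ass:Hessianub}, and \ref{ass:BoundedHessian} on $\Ball^{2d}(\btheta_s,\rad)$. Writing $\sp_a=\sp_a(\bZ;\btheta_1,\btheta_2)\in[0,1]$ for the posterior weights, the directional gradient $\langle\nabla\ell(\btheta;\bZ),\bv\rangle$ is a $[0,1]$-weighted combination of the terms $\langle\btheta_a-\bZ,\bv\rangle$, which are sub-Gaussian by Lemma \ref{lem:subgaussian}.$(d)$; likewise the directional Hessian is a bounded combination of $\langle\btheta_a-\bZ,\bv\rangle^2$ and $\|\bv\|_2^2$, which are sub-exponential by Lemma \ref{lem:subgaussian}.$(c)$ and $(e)$. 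For Assumption \ref{ass:BoundedHessian} I would bound $\|\nabla^2 R(\btheta_s)\|_{\op}\le\HUB$ and the expected Hessian-Lipschitz constant $J_*$; since the third derivatives of the log-of-mixture are again $[0,1]$-weighted polynomials in $(\btheta_a-\bZ)$, one gets $J_*\lesssim\E\|\bZ\|_2^3\lesssim(d\tau^2)^{3/2}$, yielding $\HUB\le\tau^2 d^{\Ch}$ and $J_*\le\tau^3 d^{\Ch}$ for an absolute $\Ch$. Applying Theorem \ref{thm:uniformconvergence1} with radius enlarged to cover $\Ball^{2d}(\btheta_s,\rad)$, and choosing $\eps_g=\min\{\gradlb/2,\ \dots\}$ and $\eps_h\le\heslb/2$, there is $C=C(D,\delta)$ so that for $n\ge Cd\log d$ the event $\{\sup\|\nabla\what R_n-\nabla R\|_2\le\eps_g,\ \sup\|\nabla^2\what R_n-\nabla^2 R\|_{\op}\le\eps_h\}$ holds with probability $\ge 1-\delta$. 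All subsequent arguments are deterministic on this event.

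Parts $(a)$, $(b)$, $(c)$ then follow termwise from Lemma \ref{lemma:popriskGaussianMixture} by the reverse/forward triangle inequalities (e.g. $\lambda_{\min}(\nabla^2\what R_n)\ge\lambda_{\min}(\nabla^2 R)-\eps_h\ge\heslb/2$ near $\btheta_\pm$, $\lambda_{\min}(\nabla^2\what R_n)\le-\heslb/2$ near $\btheta_s$, and the gradient and absorbing-region bounds identically). For part $(d)$ I would combine these as follows. On $G_d$ the gradient lower bound in $(b)$ excludes critical points. Inside $\Ball^{2d}(\btheta_\pm,\eps_0)$ the restricted strong convexity in $(a)$ allows at most one critical point, and the directional-gradient argument of Lemma \ref{lemma:empriskClassification}.$(c)$ shows the constrained minimizer of $\what R_n$ is interior, giving exactly one local minimum $\hbtheta_\pm$; a Taylor expansion with $\nabla R(\btheta_\pm)=\bzero$ and the Hessian lower bound yields $\|\hbtheta_\pm-\btheta_\pm\|_2\le(2/\heslb)\eps_g\le C_3\sqrt{(d\log n)/n}$. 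Inside $\Ball^{2d}(\btheta_s,\eps_0)$ the negative eigenvalue in $(a)$ rules out local minima, and to exhibit a saddle I would apply the index/homotopy Lemma \ref{lemma:Stability} with $g=R$, $f=\what R_n$, $D=\Ball^{2d}(\btheta_s,\eps_0)$: the boundary condition $t\nabla\what R_n+(1-t)\nabla R\ne\bzero$ is supplied by $(b)$, the constancy of the index of $\nabla^2\what R_n$ on $D$ by $(a)$, and since $R$ has a single saddle there, $\what R_n$ inherits exactly one critical point of the same nonzero index. Completeness then follows since $G_d$ together with the three $\eps_0$-balls covers $\Ball^{2d}(\btheta_s,\rad)$.

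The main obstacle I anticipate is the verification of Assumption \ref{ass:BoundedHessian} for the log-sum-of-Gaussians loss: because the data are unbounded, one must check that the posterior factors $\sp_a$ decay fast enough to keep the sub-exponential Hessian parameter and the expected third-derivative Lipschitz constant bounded by $\tau^2 d^{\Ch}$ and $\tau^3 d^{\Ch}$ with $\Ch$ an absolute constant independent of $d$ and of the centers. This step is routine but bookkeeping-heavy, and it is precisely where the boundedness $\sp_a\in[0,1]$ is essential.
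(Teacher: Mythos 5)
Your overall strategy coincides with the paper's proof: you verify Assumptions \ref{ass:GradientSub}, \ref{ass:Hessianub}, \ref{ass:BoundedHessian} through the boundedness of the posterior weights exactly as the paper does (with sub-Gaussian/sub-exponential parameter of order $1+D^2+\rad^2$ and $J_* = O(d^{3/2}+D^3+\rad^3)$, via Lemmas \ref{lem:subgaussian} and \ref{lem:sumofsub}), apply Theorem \ref{thm:uniformconvergence1} with $p=2d$, $\eps_g=\gradlb/2$, $\eps_h=\heslb/2$, and transfer parts $(a)$--$(c)$ by triangle-inequality comparisons; the localization of $\hbtheta_\pm$ via the Taylor expansion at $\btheta_\pm$ with the Hessian lower bound is also the paper's argument. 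One small correction there: the rate must come from the explicit deviation bound $\tau\sqrt{C'(d\log n)/n} \ge \|\nabla\what R_n(\btheta_\pm)\|_2$ (using $\nabla R(\btheta_\pm)=\bzero$), not from the constant $\eps_g$; your chain ``$(2/\heslb)\eps_g \le C_3\sqrt{(d\log n)/n}$'' is backwards, since $\eps_g$ does not shrink with $n$.

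The genuine gap is your treatment of the saddle region. You invoke Lemma \ref{lemma:Stability} on $D=\Ball^{2d}(\btheta_s,\eps_0)$ and assert that ``the constancy of the index of $\nabla^2\what R_n$ on $D$'' is supplied by part $(a)$. But part $(a)$ gives only $\lambda_{\min}(\nabla^2\what R_n(\btheta)) \le -\heslb/2$ on that ball: one eigenvalue is pinned negative, while nothing prevents the remaining eigenvalues from crossing zero, so the hypothesis of Lemma \ref{lemma:Stability} that the Hessian be non-degenerate with constant index throughout $D$ is unverified --- and it cannot be extracted from Lemma \ref{lemma:popriskGaussianMixture}, which never establishes a strongly Morse property of $R$ near $\btheta_s$. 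This is precisely the caveat the paper states after Theorem \ref{thm:morse}: in the Gaussian mixture example the near-critical Hessian has its smallest eigenvalue bounded away from zero (positive or negative), a weaker condition ``sufficient to obtain a characterization of the local minima'' but not of the saddles. Accordingly, the paper's own proof of part $(d)$ is more modest than your plan: it uses the negative eigenvalue only to rule out local minima inside $\Ball^{2d}(\btheta_s,\eps_0)$ and never runs an index-counting argument there (your boundary homotopy condition, by contrast, is fine: it follows from $\|\nabla R\|_2 \ge \gradlb$ on $G_d$ plus the $\gradlb/2$ deviation bound). To salvage your saddle-existence step you would need either to prove uniform non-degeneracy of $\nabla^2 R$ on $\Ball^{2d}(\btheta_s,\eps_0)$ --- a nontrivial addition to the population analysis --- or to replace the index argument with a purely topological existence argument (e.g., a mountain-pass construction between $\hbtheta_+$ and $\hbtheta_-$ inside the absorbing region of part $(c)$); as written, that step fails.
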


\begin{proof}
We begin by verifying Assumptions \ref{ass:GradientSub}, \ref{ass:Hessianub}, and \ref{ass:BoundedHessian} for Theorem \ref{thm:uniformconvergence1}. In the following calculations, we let $\bv = (\bv_1, \bv_2) \in  \partial \Ball^{2d}(1)$, $\btheta = (\btheta_1, \btheta_2) \in \Ball^{2d}(\rad)$. Denote $\phi_i =1 /(2\pi)^{d/2} \cdot \exp(-\Vert \bX - \btheta_i \Vert_2^2/2)$, $i=1,2$. Let $w_1 = \phi_1/(\phi_1 + \phi_2)$, $w_{12} = \phi_1 \phi_2/(\phi_1 + \phi_2)^2$, and $w_{112} = \phi_1 \phi_2 (\phi_2 - \phi_1)/(\phi_1 + \phi_2)^3$. It is easy to see that $w_1, w_{12}, w_{112} \in [-1,1]$.

\vskip 0.1cm
\noindent{\bf Assumption \ref{ass:GradientSub}.} We need to verify that the directional gradient of the loss function is $O(1 + D^2 + \rad^2)$-sub-Gausian. The directional gradient of the loss is given by
\begin{equation}
\begin{aligned}
\langle \nabla \ell (\btheta;\bX), \bv \rangle = & \langle w_1(\btheta_1 - \bX),\bv_1 \rangle+\langle  (1-w_1)(\btheta_2 - \bX), \bv_2 \rangle \\
=& -\langle w_1 \bX, \bv_1 - \bv_2 \rangle - \langle \bX, \bv_2 \rangle \\
&+ \langle w_1 \btheta_1, \bv_1 \rangle + \langle (1-w_1) \btheta_2, \bv_2 \rangle,
\end{aligned}
\end{equation}
Since $\Vert \bv_i \Vert_2 \leq 1$, $\langle \bX, \bv_i \rangle$ is $(1 + D^2)$-sub-Gaussian. Since $w_1$ is a bounded random variable in the interval $[0,1]$, $\langle w_1 \btheta_1, \bv_1\rangle$ and $\langle (1-w_1)\btheta_2, \bv_2 \rangle$ are bounded by $\rad$, and thus are $O(\rad^2)$-sub-Gaussian. Next, the random variable $\langle w_1 \bX, \bv_1- \bv_2 \rangle$ is the product of a bounded random variable and a zero mean Gaussian random variable, and due to Lemma \ref{lem:subgaussian}.$(d)$, it is $O(1+D^2)$-sub-Gaussian. According to Lemma \ref{lem:sumofsub}, there exists a universal constant $C_1$, such that $\langle \nabla \ell(\btheta;\bZ), \bv \rangle$ is $C_1(1+D^2 + \rad^2)$-sub-Gaussian. 

\vskip 0.1cm
\noindent{\bf Assumption \ref{ass:Hessianub}.} We need to verify that the directional Hessian of the loss function is $O(1 + D^2 + \rad^2)$-sub-exponential. The directional Hessian of the loss is given by
\begin{equation}
\begin{aligned}
\langle \bv, \nabla^2 \ell (\btheta; \bX) \bv \rangle =& w_1 \Vert \bv_1 \Vert_2^2 + (1-w_1) \Vert \bv_2 \Vert_2^2 - w_{12} \Big( \langle \bX - \btheta_1, \bv_1 \rangle - \langle \bX - \btheta_2, \bv_2 \rangle \Big)^2.
\end{aligned}
\end{equation}
First, observe that $w_1 \Vert \bv_1 \Vert_2^2 + (1-w_1) \Vert \bv_2 \Vert_2^2$ is $1$ bounded, thus it is $O(1)$-sub-Gaussian. Then, we have $\langle \bX, \bv_1\rangle - \langle \bX, \bv_2\rangle$ is $O(1 + D^2)$-sub-Gaussian, and $\langle \btheta_1, \bv_1\rangle - \langle \btheta_2, \bv_2 \rangle$ is $O(\rad)$ bounded. Thus, letting $\eps_r$ be a Rademacher random variable, the random variable 
$\eps_r(\langle \bX - \btheta_1, \bv_1 \rangle - \langle \bX - \btheta_2, \bv_2 \rangle)$ is mean zero and $O(1 + D^2+\rad^2)$-sub-Gaussian. According to Lemma \ref{lem:subgaussian}.$(c)$, $(\langle \bX - \btheta_1, \bv_1 \rangle - \langle \bX - \btheta_2, \bv_2 \rangle)^2$ is $O(1 + D^2+\rad^2)$-sub-exponential. Note that $w_{12}$ is $1$ bounded, according to Lemma 
\ref{lem:subgaussian}.$(e)$, the random variable $w_{12}\{(\langle \bX - \btheta_1, \bv_1 \rangle - \langle \bX - \btheta_2, \bv_2 \rangle )^2 - \E[( \langle \bX - \btheta_1, \bv_1 \rangle - \langle \bX - \btheta_2, \bv_2 \rangle )^2]\}$ is $O(1+D^2 + \rad^2)$-sub-exponential. 
Further, we know that $\E[ (\langle \bX - \btheta_1, \bv_1 \rangle - \langle \bX - \btheta_2, \bv_2 \rangle)^2 ] = O(1+ D^2 + \rad^2)$.  Next, since $w_{12}$ is $O(1)$-sub-exponential, the bounded random variable $\bw_{12} \E [ \langle \bX - \btheta_1, \bv_1 \rangle - \langle \bX - \btheta_2, \bv_2 \rangle ]^2$ is $O(1 + D^2 + \rad^2)$-sub-exponential. According to 
Lemma \ref{lem:sumofsub}, there exists a universal constant $C_2$, such that $\langle\bv,  \nabla^2 \ell(\btheta;\bX) \bv \rangle $ is $C_2 (1+D^2 + \rad^2)$-sub-exponential. 

\vskip 0.1cm
\noindent{\bf Assumption \ref{ass:BoundedHessian}.} We need to verify that there exists a constant $\Ch$ which does not depend on $d$, such that $\HUB \leq \tau^2 d^{\Ch}$ and $J_*\leq  \tau^3 d^{\Ch}$ (as $d \geq 2$). Here, we assume, without loss of generality, 
$\btheta_0= \btheta_{0,1} = -\btheta_{0,2}$:
\[
\begin{aligned}
\HUB = & \sup_{\Vert \bv\Vert_2 = 1} \Big|\langle \bv, \nabla^2 R(\bzero) \bv \rangle \Big|\\
= & \sup_{\Vert \bv\Vert_2 = 1} \frac{1}{2} \left[1 - \frac{1}{2}(\langle \btheta_0, \bv_1\rangle - \langle \btheta_0, \bv_2 \rangle)^2 \right] \leq C_3 (1 + D^2),\\
J_* = & \E[\sup_{\btheta \in\Ball^{2d}(\rad)} \sup_{\Vert \bv\Vert_2 = 1} \langle\nabla^3 \ell(\btheta;\bX), \bv^{\otimes 3}\rangle]\\
\leq & \E\Big[\sup_{\btheta \in\Ball^{2d}(\rad)}\sup_{\Vert \bv\Vert_2 = 1} 3 w_{12} (\Vert \bv_1 \Vert_2^2 - \Vert \bv_2 \Vert_2^2) (\langle \bX - \btheta_1, \bv_1 \rangle - \langle \bX - \btheta_2, \bv_2 \rangle) \\
&- w_{112}(\langle \bX - \btheta_1, \bv_1 \rangle - \langle \bX - \btheta_2, \bv_2 \rangle )^3 \Big ]\\
= & O \Big( \E\Big[\sup_{\btheta \in\Ball^{2d}(\rad)}\sup_{\bv \in \Ball^{2d}(1)} \Big \vert \langle \bX - \btheta_1, \bv_1 \rangle - \langle \bX - \btheta_2, \bv_2 \rangle\Big 
\vert \\
&+ \Big \vert ( \langle \bX - \btheta_1, \bv_1 \rangle - \langle \bX - \btheta_2, \bv_2 \rangle )^3 \Big \vert \Big ] \Big)\\
= & O \Big( \E \Vert \bX \Vert_2 + \Vert \btheta_1 \Vert_2 + \Vert \btheta_2 \Vert_2 + \E \Vert \bX \Vert_2^3 + \Vert \btheta_1 \Vert_2^3 + \Vert \btheta_2 \Vert_2^3 \Big) \\\leq& C_4(d^{3/2} + D^3 + \rad^3). \\
\end{aligned}
\]

%
%
%
%

In Assumptions \ref{ass:GradientSub} and \ref{ass:Hessianub}, we take $\tau^2 = \max\{C_1, C_2\}\cdot(1 + D^2 + \rad^2)$, and in Assumption \ref{ass:BoundedHessian}, we take $\Ch =  \max\{ \log_2 (C_3),  \log_2 C_4 + 3/2 \}$. Then, all the assumptions for Theorem \ref{thm:uniformconvergence1} are satisfied. Now we take $\eps_g = \gradlb/2$ and $\eps_h = \heslb/2$ which depends on $D$ but not on $d$. By Theorem \ref{thm:uniformconvergence1}, there exist constants $C$ and $C'$ depending on $(D,\delta)$ but independent of $n$ and $d$, such that as $n$ is large enough when $n \geq C d \log {d}$, with probability at least $1-\delta$, the following good event happens: 
\begin{equation}
E_{\rm good} = \left\{
\begin{aligned}
&\sup_{\btheta \in \Ball^{2d}(\btheta_s,\rad + \eps_0)}  \left \Vert \nabla \what R_n(\btheta) - \nabla R(\btheta) \right \Vert_2 \leq \tau \sqrt{\frac{C' d \log n}{n}} \leq \eps_g,\\
&\sup_{\btheta \in \Ball^{2d}(\btheta_s,\rad + \eps_0)} \left\Vert \nabla^2 \what R_n(\btheta) - \nabla^2 R(\btheta) \right\Vert_{\op} \leq \tau^2 \sqrt{\frac{C' d\log n}{n}} \leq \eps_h.
\end{aligned}
\right\}
\end{equation}

All the following arguments are deterministic on the good event $E_{\rm good}$. 

\prg{Part $(a)$, $(b)$, $(c)$.} 
Given the uniform convergence of the gradient and Hessian and the properties of the population risk, these properties of the empirical risk are obvious.

\prg{Part $(d)$.}
Part $(b)$ implies that there is no local minimum inside the interior of $G_d = \Ball^{2d} (\btheta_s, \rad) \setminus \cup_{\btheta_* = \btheta_+, \btheta_-, \btheta_s} \Ball^{2d}(\btheta_*, \eps_0/2)$. Part $(a)$ implies that there is a unique minimum inside $\Ball^{2d}(\btheta_+, \eps_0/2)$ and $\Ball^{2d}(\btheta_-, \eps_0/2)$.

Let $\hat \btheta_{+} \in \Ball^{2d}(\btheta_+, \eps_0/2)$ denotes the unique local minimizer near $\btheta_+$. Note that, by the intermediate value theorem, there exists $\btheta' \in\Ball^{2d}(\btheta_+,\eps_0/2)$ such that
\begin{align}
\hR_n(\hat \btheta_+) = \hR_n(\btheta_+)+\<\nabla\hR_n(\btheta_+),\hat \btheta_+-\btheta_+\> + \frac{1}{2}\<\nabla^2\hR_n(\btheta'),(\hat \btheta_+-\btheta_+)^{\otimes 2}\> \le  \hR_n(\btheta_+)\, .
\end{align}
where the inequality follows by optimality of $\hR_n(\hat \btheta_+)$. Using Cauchy-Schwarz inequality, the lower bound on the Hessian
in point $(b)$, and the uniform convergence of the gradient, we get 
\[
\begin{aligned}
\|\hat \btheta_+-\btheta_+\|_2\le& \frac{4\|\nabla\hR_n (\btheta_+)\|_2}{\heslb}\\
\leq & \frac{4\tau}{ \heslb} \sqrt{\frac{C' d \log n}{n}}.
\end{aligned}
\]
The same conclusions hold for $\hat \btheta_-$. 

Note that the Hessian of empirical risk in $\Ball^{2d}(\btheta_s, \eps_0)$ has negative eigenvalues. Thus, there would be no local minimum inside $\Ball^{2d}(\btheta_s, \eps_0)$.

\end{proof}

\subsection{Trust region algorithm}

\begin{lemma}
For the Gaussian mixture model, there exists large constants $C_1$ and $C_2$ depending on $(D, \delta)$, such that as $n \geq C_1 d \log d$, the following holds.
The trust region iteration converges to one of the global minima of the empirical risk for any initialization in $\BT^{2d}(\btheta_s, C_2)$.
\end{lemma}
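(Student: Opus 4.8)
The plan is to reduce the convergence statement to the three landscape properties of the empirical risk already established in Lemma~\ref{lemma:empriskGaussianMixture}, and then to invoke the classical global convergence theory of trust region methods to second-order stationary points. Throughout, I would condition on the good event $E_{\rm good}$ of Lemma~\ref{lemma:empriskGaussianMixture}, which holds with probability at least $1-\delta$ once $n\ge C_1 d\log d$; all subsequent arguments are deterministic on this event. I would fix the initialization radius to be $C_2=\rad/2$, so that $\BT^{2d}(\btheta_s,C_2)$ sits inside the region where the landscape is controlled, and I would cap the maximal trust region radius by a constant $\Delta_{\max}\le \eps_0$ depending only on $D$.

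First I would show that the iterates are trapped in the compact set $\BT^{2d}(\btheta_s,\rad/2+\eps_0)$. The absorbing property (part $(c)$ of Lemma~\ref{lemma:empriskGaussianMixture}) gives $\langle \nabla\hR_n(\btheta),\bn(\btheta)\rangle\ge \gradlb/2>0$ on every sphere $\partial\BT^{2d}(\btheta_s,\rad/2+\eps_s)$ for $\eps_s\in[0,\eps_0]$, so $\hR_n$ is strictly increasing as one moves radially outward across the shell of width $\eps_0$. Since the trust region iteration is a monotone descent method and consecutive iterates differ by at most $\Delta_{\max}\le\eps_0$, the connected component of the sublevel set $\{\btheta:\hR_n(\btheta)\le \hR_n(\btheta(0))\}$ containing $\btheta(0)$ is compactly contained in $\BT^{2d}(\btheta_s,\rad/2+\eps_0)$, and the iterates cannot be driven past the radially-ascending shell without increasing $\hR_n$, a contradiction. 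Hence all iterates remain in this compact trapping region.

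Next I would verify the quantitative hypotheses of the trust region convergence theorem. The operator-norm bound $\sup\|\nabla^2\hR_n\|_{\op}\le 2\hesub$ from part $(a)$ supplies the uniform Hessian bound, and a uniform Lipschitz bound on $\nabla^2\hR_n$ over the trapping region follows from controlling the empirical third derivative $\tfrac1n\sum_i J(\bZ_i)$: the bounded-weight-times-polynomial structure of the Gaussian-mixture loss makes $J(\bZ)$ integrable, exactly as in the verification of Assumption~\ref{ass:BoundedHessian} inside Lemma~\ref{lemma:empriskGaussianMixture}, and its concentration yields a Lipschitz constant $L_H$ that enters only the step-size constants. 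Crucially, the trapping region admits a robust strict-saddle decomposition: every $\btheta$ falls into exactly one of three cases, namely $\|\nabla\hR_n(\btheta)\|_2\ge \gradlb/2$ (large gradient, part $(b)$), or $\btheta\in\Ball^{2d}(\btheta_s,\eps_0)$ with $\lambda_{\min}(\nabla^2\hR_n(\btheta))\le -\heslb/2$ (negative curvature, part $(a)$), or $\btheta\in\Ball^{2d}(\btheta_+,\eps_0)\cup\Ball^{2d}(\btheta_-,\eps_0)$ with $\lambda_{\min}(\nabla^2\hR_n(\btheta))\ge \heslb/2$ (strong convexity, part $(a)$). In each case the exact or Cauchy/eigenstep solution of the trust region subproblem produces a uniform decrease of $\hR_n$.

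Finally I would invoke the classical result that, under a bounded and Lipschitz Hessian, every limit point of the trust region iteration is a second-order stationary point, i.e.\ satisfies $\nabla\hR_n(\btheta)=\bzero$ and $\nabla^2\hR_n(\btheta)\succeq 0$. Part $(d)$ of Lemma~\ref{lemma:empriskGaussianMixture} enumerates all critical points in the trapping region: the saddle near $\btheta_s$ is excluded because it carries a negative eigenvalue, so the only second-order stationary points are the two local minimizers $\hbtheta_+$ and $\hbtheta_-$, which are the global minima. Since these are isolated and non-degenerate, the value sequence $\hR_n(\btheta(k))$ converges and the iterates cannot oscillate between the two basins (passing from one basin to the other would require crossing the large-gradient region, contradicting monotone descent once the iterate has entered a strongly-convex neighborhood); hence the whole sequence converges to one of $\hbtheta_+,\hbtheta_-$. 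I expect the main obstacle to be twofold: reconciling the trapping argument with the possibly large trust region step length (handled above by capping $\Delta_{\max}$ below $\eps_0$), and producing a clean quantitative trust region theorem whose escape-from-saddle hypotheses match the robust strict-saddle decomposition, where the most technical remaining piece is bounding the empirical Lipschitz Hessian constant $L_H$ uniformly over the region rather than merely in expectation.
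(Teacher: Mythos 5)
Your proposal is correct and follows essentially the same route as the paper's proof: condition on the good event of Lemma \ref{lemma:empriskGaussianMixture}, set $C_2=\rad/2$ and use the absorbing-region property to trap the descent iterates, apply the three-case decomposition (large gradient outside the three balls, negative curvature near $\btheta_s$, strong convexity near $\btheta_\pm$), and invoke the standard trust region theorem of Conn--Gould--Toint to reach an approximate second-order stationary point, which is then forced into one of the strongly convex basins where convergence to $\hbtheta_+$ or $\hbtheta_-$ follows. The one place you over-engineer is your flagged ``main obstacle'' of a uniform concentration bound on the Hessian Lipschitz constant: for this qualitative convergence statement the paper only needs, on the fixed realization, that $\Vert \nabla^3 \hR_n(\btheta)\Vert_{\op}<\infty$ over the compact trapping region, which holds automatically by continuity and requires no probabilistic estimate.
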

\begin{proof}
Let $C_1 = C$ and $C_2 = \rad/2$ where $C$ is as defined in Lemma \ref{lemma:empriskGaussianMixture} and $\rad$ is as defined in Lemma \ref{lemma:popriskGaussianMixture}. Then, all the conclusions for Lemma \ref{lemma:empriskGaussianMixture} and Lemma \ref{lemma:popriskGaussianMixture} hold. Since $\BT^{2d}(\btheta_s, \rad/2)$ is an absorbing region and trust region method is a descent method, for any initial point in $\BT^{2d}(\btheta_s, \rad/2)$ with suitable parameters, any iterate will not go out of the region $\BT^{2d}(\btheta_s, \rad/2)$. 

According to Lemma \ref{lemma:empriskGaussianMixture}, for every $\btheta\in\BT^{2d}(\btheta_s, \rad/2)$, one of the three things will happen: the gradient
$\nabla \hR_n(\btheta)$ is lower bounded by $\gradlb/2$ (in the region $\BT^{2d}(\btheta_s, \rad/2)\setminus( \Ball^{2d}(\btheta_+, \eps_0) \cup \Ball^{2d}(\btheta_-, \eps_0) \cup \Ball^{2d}(\btheta_s, \eps_0)) $); the Hessian $\nabla^2 \hR_n(\btheta)$ has a direction of negative curvature, namely an  eigenvalue upper bounded by $-\heslb/2$ (in the region $\Ball^{2d}(\btheta_s, \eps_0)$); the least eigenvalue of the Hessian $\nabla^2 \hR_n(\btheta)$ is lower bounded by $\heslb/2$ (in the region $\Ball^{2d}(\btheta_+, \eps_0) \cup \Ball^{2d}(\btheta_-, \eps_0)$). Further, it is immediate that $\Vert \nabla^3 \what R_n(\btheta)\Vert_{\op} < \infty$. According to standard trust region method results \cite[Theorem 6.6.4]{conn2000trust}, running trust region method for a finite number of steps will find a point $\btheta$ with $\Vert \nabla \what R_n(\btheta) \Vert_2 \leq \gradlb/3$ and $\nabla^2 \what R_n(\btheta) \succeq - \heslb/3$. Such point $\btheta$ will be within $\Ball^{2d}(\btheta_+,\eps_0)\cup \Ball^{2d}(\btheta_-,\eps_0)$. 

Since the empirical risk in $\Ball^{2d}(\btheta_+,\eps_0)$ and $\Ball^{2d}(\btheta_-,\eps_0)$ are both strongly convex, if we continue running trust region method, the iterates will eventually converge one of the local minima. 
\end{proof}

\bibliographystyle{amsalpha}
\newcommand{\etalchar}[1]{$^{#1}$}
\providecommand{\bysame}{\leavevmode\hbox to3em{\hrulefill}\thinspace}
\providecommand{\MR}{\relax\ifhmode\unskip\space\fi MR }
\providecommand{\MRhref}[2]{%
  \href{http://www.ams.org/mathscinet-getitem?mr=#1}{#2}
}
\providecommand{\href}[2]{#2}

\end{document}